\documentclass[11pt]{article}
\usepackage{authblk}
\makeatletter
\def\input@path{{sections/}{appendices/}}
\makeatother
\usepackage{amsmath}
\usepackage{amssymb}
\usepackage{amsfonts}
\usepackage{amsthm}
\usepackage{booktabs}
\usepackage{bbm}
\usepackage{graphicx}
\usepackage{subcaption}
\usepackage{adjustbox}
\usepackage{multirow }
\usepackage{natbib}
\bibpunct[, ]{(}{)}{,}{a}{}{,}

\usepackage{xcolor}
\usepackage{float}
\usepackage{hyperref}       % hyperlinks
\hypersetup{
    colorlinks = true,
    linkcolor = {red!50!black},
    anchorcolor = blue,
    citecolor = {blue!50!black},
    filecolor = blue,
    urlcolor = {blue!80!black}
    }
\numberwithin{equation}{section}

\floatstyle{ruled}
\newfloat{algorithm}{htbp}{loa}
\floatname{algorithm}{Algorithm}
\newenvironment{algorithmic}
  {\begin{list}{}{\setlength{\leftmargin}{1.25em}\setlength{\itemsep}{0.4em}%
  \setlength{\parsep}{0pt}\setlength{\topsep}{0.25em}}}
  {\end{list}}
\newcommand{\STATE}{\item[]}

\theoremstyle{plain}
\newtheorem{theorem}{Theorem}
\newtheorem{lemma}{Lemma}
\newtheorem{proposition}{Proposition}
\newtheorem{assumption}{Assumption}
\newtheorem{corollary}{Corollary}

\theoremstyle{definition}
\newtheorem{definition}{Definition}

\theoremstyle{remark}
\newtheorem*{remark}{Remark}

\newcommand{\set}[1]{\left\{#1\right\}}

\newcommand{\linftynorm}[1]{\left\|#1\right\|_{\infty}}

\newcommand{\bracket}[1]{\left( #1 \right)}

\newcommand{\cA}{\mathcal{A}}

\newcommand{\cS}{\mathcal{S}}
\newcommand{\cH}{\mathcal{H}}
\newcommand{\cP}{\mathcal{P}}
\newcommand{\cM}{\mathcal{M}}
\newcommand{\cT}{\mathcal{T}}

\allowdisplaybreaks
\title{Distributionally Robust Average-Reward Reinforcement Learning: Finite-Sample Guarantees under Weak Communication}
\author[1]{Chenyu Lu}
\author[1]{Zijun Chen}
\author[1]{Nian Si}
\affil[1]{The Hong Kong University of Science and Technology}
\date{}

\begin{document}
\maketitle

\begin{abstract}
We study distributionally robust reinforcement learning (DR-RL) in the average-reward setting under weak communication. Our main result provides finite-sample guarantees for estimating the robust optimal average reward and learning a near-optimal policy, covering both SA-rectangular and S-rectangular structures with divergence-based and distance-based uncertainty sets. Specifically, for Kullback--Leibler and $f_k$-divergence balls, we establish explicit radius conditions under which the robust average-reward Bellman equation admits a constant-gain solution, while for total variation and Wasserstein balls, any positive radius suffices without requiring the nominal MDP is weakly communicating. Our algorithm is prior-knowledge-free and achieves sample complexities of $\widetilde O(|\mathcal{S}||\mathcal{A}|p_{\wedge}^{-1}\operatorname{Span}^{2}(u_{\delta}^{\ast})\epsilon^{-2})$ for estimating the robust optimal average reward and $\widetilde O(|\mathcal{S}||\mathcal{A}|p_{\wedge}^{-2}\operatorname{Span}^{2}(u_{\delta}^{\ast})\epsilon^{-2})$ for learning an $\epsilon$-optimal policy. Here, $p_{\wedge}$ is the smallest positive nominal transition probability and $u_{\delta}^{\ast}$ is a robust optimal bias function. We further provide an almost-tight explicit upper bound on \(\operatorname{Span}(u_{\delta}^{\ast})\).  Finally, we validate the predicted $n^{-1/2}$ convergence rate through numerical experiments.
\end{abstract}

\section{Introduction}\label{sec:introduction}

Reinforcement learning (RL)~\citep{sutton2018reinforcement} is a core machine learning framework in which agents learn to maximize long-term rewards by interacting with their environments. RL has achieved substantial success across a wide range of applications, from robotics and control~\citep{Kober2013RLinRoboticsSurvey, Sergey2017DeepRLinRobotics} to recent advances in reasoning with large language models (LLMs)~\citep{wei2022chain, guo2025deepseek}.

Despite these successes, RL faces a fundamental challenge when the transition dynamics encountered during deployment differ from those observed during training.
Such discrepancies can arise from changes in operating conditions, unmodeled disturbances, or adversarial perturbations, potentially degrading the performance of policies learned under a nominal model.

Distributionally robust reinforcement learning (DR-RL) addresses this challenge by explicitly accounting for uncertainty in transition dynamics.
Built on the robust Markov decision process framework~\citep{iyengar2005robust, nilim2005robust, wiesemann2013robust}, DR-RL seeks policies that perform well under the worst-case transition model within a prescribed uncertainty set.
Subsequent work has developed learning algorithms and theoretical guarantees in both model-based~\citep{panaganti_sample_2022, xu2023improved, NEURIPS2024_0346c8a5, shi2024curiouspricedistributionalrobustness} and model-free~\citep{liu22DRQ, wang_finite_2023, wang_sample_2024} settings.

Two widely studied uncertainty structures are SA-rectangularity and S-rectangularity, which specify how the adversary's choices are coupled across states and actions~\citep{le2007robust, wiesemann2013robust}. Under SA-rectangularity, the adversary selects transition distributions independently for each state-action pair, and SA-rectangular models always admit a deterministic and stationary optimal policy. Under $S$-rectangularity, these choices remain independent across states but may be coupled across actions within each state, the optimal policy may be history-dependent and randomized~\citep{grand-clement_beyond_2025}.

Much of the existing finite-sample distributionally robust reinforcement learning theory concerns discounted problems~\citep{li_wang_si_2026_s_rectangular, shi2024curiouspricedistributionalrobustness, NEURIPS2024_0346c8a5, clavier2024towards, panaganti_sample_2022, wang_sample_2024, wang_finite_2023}, in which future rewards are geometrically discounted. For continuing tasks evaluated by long-run performance, the average-reward criterion provides a natural alternative.
Existing sample complexity guarantees for average-reward DR-RL primarily concern SA-rectangular uncertainty under strong ergodic assumptions~\citep{chen2025sample, roch2026modelfreerobustaveragerewardreinforcement, xu2026efficientqlearningactorcriticmethods, xu2025finitesampleanalysispolicyevaluation}.
Despite progress on structural results under weaker assumptions~\citep{wang2025bellmanoptimalityaveragerewardrobust}, finite-sample guarantees under weak communication assumptions remain, to the best of our knowledge, unavailable for both SA- and S-rectangular uncertainty.

In this work, our main contribution are summarized as follows.
\begin{itemize}
    \item \textbf{Structural guarantees under weak communication.} We establish sufficient conditions for the existence of a constant-gain robust average-reward Bellman solution under both SA- and S-rectangular uncertainty. For $\mathrm{KL}$- and $f_k$-divergence balls, nominal weak communication and suitable radius restrictions (Assumption~\ref{ass:divergence-uncertainty}) preserve the nominal transition support, ensuring that every feasible adversarial kernel remains weakly communicating (Proposition~\ref{prop:divergence-constant-gain}). For TV and Wasserstein balls, any positive radius suffices to guarantee a constant-gain Bellman solution without requiring nominal weak communication (Proposition~\ref{prop:metric-constant-gain}). Under the same conditions, a robust average-reward optimal policy can be chosen stationary: deterministic under SA-rectangularity and randomized under S-rectangularity (Corollary~\ref{cor:stationary-average-optimal-policy}).
    \item \textbf{Unified finite-sample guarantees for gain estimation and policy learning.} We develop a prior knowledge free algorithm via discounted reduction (Algorithm~\ref{alg:dr-amdp-reduction}) covering all eight combinations of rectangularity structure and uncertainty families (KL, $f_k$, TV, and Wasserstein). Our analysis establishes Bellman-operator perturbation bounds that hold uniformly over stationary randomized policies and value functions, and controls the discounted-to-average-reward approximation through the robust optimal bias span. We obtain an $\varepsilon$-accurate estimate of the robust optimal average reward and optimal policy using
    \[
    \widetilde O(|\cS||\cA|p_\wedge^{-1}\operatorname{Span}(u_\delta^*)^2
        \varepsilon^{-2}),\quad \widetilde O(|\cS||\cA|p_\wedge^{-2}\operatorname{Span}(u_\delta^*)^2
        \varepsilon^{-2})
    \]
    samples respectively (Theorem~\ref{thm:robust-average-reward-error}), where $p_{\wedge}$ is the smallest positive nominal transition probability and $u_{\delta}^{\ast}$ is a robust optimal bias function, and
$
\operatorname{Span}(u_{\delta}^{\ast})
:= \max_{s\in\mathcal S} u_{\delta}^{\ast}(s)
-\min_{s\in\mathcal S} u_{\delta}^{\ast}(s).
$
    \item \textbf{Explicit bias-span bounds and sharpness.} We derive explicit bounds on $\operatorname{Span}(u_\delta^*)$ for KL, $f_k$, TV, and Wasserstein uncertainty under both rectangularity structures, making our span-dependent policy guarantee explicit (Theorem~\ref{thm:divergence-bias-span-bounds}). The TV and Wasserstein bounds are attained by two-state instances; for KL and $f_k$, our construction shows that the exponent of $p_\wedge^{-1}$ cannot be improved (Theorem~\ref{thm:divergence-bias-span-lower-bound}).
\end{itemize}

The remainder of this paper is organized as follows.
Section~\ref{sec:literature_review} reviews related work, with an emphasis on average-reward DR-RL.
Section~\ref{sec:preliminaries} introduces the notation and problem formulation.
Section~\ref{sec:structural-properties-learning-guarantees} presents our main theoretical results, including the algorithm and its sample complexity analysis.
Section~\ref{sec:numberical_experiments} presents numerical experiments illustrating our theoretical findings.

\section{Literature Review}\label{sec:literature_review}

\textbf{Average-reward RL.}
Average-reward RL provides a classical framework for sequential decision-making under long-run performance criteria~\citep{puterman_markov_2009}.
In the non-robust setting, model-based approaches have been extensively studied from the perspectives of regret minimization and exploration~\citep{JMLR:v11:jaksch10a, pmlr-v80-fruit18a}, as well as through structural complexity measures such as mixing time, minorization time, and bias span~\citep{jin_towards_2021, wang_optimal_2024, zurek_span-based_2024, zurek_plug-approach_2024, zurek2025span}.
Beyond model-based approaches, a growing body of work has investigated model-free methods, including Q-learning and its variants~\citep{jin2020efficientlysolvingmdpsstochastic, NEURIPS2021_096ffc29, NEURIPS2025_d8557df7, chen2026achievingvarepsilon2dependenceaveragereward, zhang2023sharpermodelfreereinforcementlearning, pmlr-v291-agrawal25a, lee2025nearoptimalsamplecomplexitymdps, lee2026learningpolicysingletrajectory}.

\textbf{Distributionally Robust Average-Reward RL:} \citet{grand-clement_beyond_2025} establish conditions under which a stationary optimal policy exists.
On the algorithmic side, \citet{wang_model-free_2023, wang_robust_2023, wang2024robust} develop robust relative value iteration and TD/Q-learning algorithms and establish their convergence in the distributionally robust average-reward setting.~\citet{roch2026distributionallyrobustmarkovgames} study average-reward distributionally robust Markov games, establishing Nash equilibrium existence under irreducibility and weak communication conditions and developing algorithms with convergence guarantees.
Moving beyond convergence analysis, subsequent work provides sample complexity guarantees under SA-rectangular uncertainty, with assumptions ranging from irreducibility and aperiodicity to the unichain condition~\citep{xu2025finitesampleanalysispolicyevaluation, xu2026efficientqlearningactorcriticmethods, roch2026modelfreerobustaveragerewardreinforcement, chen2025sample, yang2026robustaveragerewardmarkovdecision}.
For $s$-rectangular uncertainty, \citet{wang2025bellmanoptimalityaveragerewardrobust} extend the structural theory by providing weak communication conditions ensuring the existence of solutions to the constant-gain robust Bellman equation.
Despite these advances, a gap remains between structural results and finite-sample analysis under weak communication assumptions: to the best of our knowledge, non-asymptotic sample complexity guarantees in this setting remain open for both SA and S-rectangular uncertainty.

We provide a summary of state-of-the-art sample complexity results in the literature in Table~\ref{tab:summary_rate}.
\begin{table}[htbp]
  \caption{Summary of DR-AMDP sample complexity bounds under a generative model. Here, $\kappa\in(0,1)$  denotes the robust Bellman contraction factor under the specific seminorm defined in \citet{xu2025finitesampleanalysispolicyevaluation},  $\cH$ is the maximum of bias-span over all optimal policies and corresponding worst-case kernel which is at least $\operatorname{Span}(u_{\delta}^*)$. $t_{\mathrm{mix}}$ denotes the mixing-time bound used in the corresponding cited result.}
  \label{tab:summary_rate}
  \centering
  \begin{adjustbox}{max width=\linewidth}
  \begin{tabular}{lllll}
    \toprule
    Setting & Uncertainty Type & Sample Complexity & Assumption & Origin \\
    \midrule

    \multirow{6}{*}{SA-rectangular}
    & \text{TV, Wasserstein}
    & $\widetilde O\bracket{|\cS||\cA|t_{\mathrm{mix}}^2(1-\kappa)^{-2}\varepsilon^{-2}}$
    & Irreducible and Aperiodic
    & \citet{xu2025finitesampleanalysispolicyevaluation} \\
    
    & \text{TV, Wasserstein}
    & $\widetilde O\bracket{|\cS|^2|\cA|^2\varepsilon^{-2}}$
    & Irreducible and Aperiodic
    & \citet{xu2026efficientqlearningactorcriticmethods} \\

    & $\mathrm{KL}$, $\chi^2$
    & $\widetilde O\bracket{|\cS||\cA|\cH^2\varepsilon^{-(2+o(1))}}$
    & Irreducible
    & \citet{roch2026modelfreerobustaveragerewardreinforcement} \\

    & KL, $f_k$
    & $\widetilde O(|\cS||\cA|t_\mathrm{mix}^2p_{\wedge}^{-1}\varepsilon^{-2})$
    & Uniformly ergodic
    & \citet{chen2025sample} \\

    & TV
    & $\widetilde O(|\cS||\cA| (\operatorname{Span}(u_{\delta}^*)+\delta \operatorname{Span}(u_{\delta}^*)^2)\varepsilon^{-2})$
    & Unichain
    & \citet{yang2026robustaveragerewardmarkovdecision} \\

    & KL, $f_k$, TV, Wasserstein
    & $\widetilde O(|\cS||\cA|p_{\wedge}^{-2}\operatorname{Span}(u_{\delta}^*)^2\varepsilon^{-2})$
    & Weakly communicating
    & \textbf{This work} \\

    \midrule

    \multirow{1}{*}{S-rectangular}
    & KL, $f_k$, TV, Wasserstein
    & $\widetilde O\bracket{|\cS||\cA|p_{\wedge}^{-2}\operatorname{Span}(u_{\delta}^*)^2\varepsilon^{-2}}$
    & Weakly communicating
    & \textbf{This work} \\

    \bottomrule
  \end{tabular}
  \end{adjustbox}
\end{table}

\section{Preliminaries and Problem Formulation}\label{sec:preliminaries}

\subsection{Markov Decision Process}
\label{subsec:controlled-dynamics}

We briefly review the classical finite tabular MDP models and
introduce the notation used throughout the paper. Let $\Delta(\mathcal{S})$, $\Delta(\mathcal{A})$ denote the probability simplex over the finite state space $\mathcal{S}$ and action space $\mathcal{A}$ respectively. $P:=\{p_{s,a}\in\Delta(\mathcal S):(s,a)\in\mathcal S\times\mathcal A\}$ is the controlled transition kernel, where $p_{s,a}(s')$ denotes the probability of transitioning to state $s'$ after action $a$ is selected in state $s$. $r:\cS\times \cA\times \cS\to [0,1]$ is the reward function. A finite discounted MDP (DMDP) is specified by the tuple $\cM_{\gamma}=(\cS,\cA,P,r,\gamma)$ where $\gamma\in (0,1)$ is the discount factor, and an average-reward MDP
(AMDP) is specified as $\cM=(\mathcal S,\mathcal A,P,r)$.

Let $\Pi_{\mathrm{HD}}$ denote the class of history-dependent randomized policies $\pi=(\pi_t)_{t\ge0}$, where $\pi_t(\cdot\mid h_t)\in\Delta(\mathcal A)$ and $h_t=(s_0,a_0,\ldots,s_t)$ is the observed history. Stationary randomized policies depend only on the current state and form $\Pi_{\mathrm{SR}}:=\{\pi:\mathcal S\to\Delta(\mathcal A)\}$. Stationary deterministic policies select a single action in each state and form $\Pi_{\mathrm{SD}}:=\{\pi:\mathcal S\to\mathcal A\}$. For ordinary finite DMDPs and AMDPs, it suffices to consider stationary deterministic policies~\citep[Chapters~6 and~8]{puterman_markov_2009}. Identifying deterministic policies with point-mass action distributions gives $\Pi_{\mathrm{SD}}\subseteq\Pi_{\mathrm{SR}}\subseteq\Pi_{\mathrm{HD}}$. For $\pi\in\Pi_{\mathrm{SR}}$, the induced nominal transition kernel is $P_\pi(x\mid s):=\sum_{a\in\mathcal A}\pi(a\mid s)p_{s,a}(x)$.

\textbf{Discounted criterion.} For a given discounted MDP (DMDP) instance $\cM_{\gamma}=(\cS, \cA, P, r, \gamma)$ and policy $\pi\in \Pi_{\mathrm{SD}}$, the \textit{discounted value function} $V_\gamma^{\pi}(s):={\operatorname E}_P^{\pi}\left[\sum_{t=0}^{\infty}\gamma^t r(S_t,A_t,S_{t+1})\mid S_0=s\right]$. An optimal policy $\pi^*\in \Pi_{\mathrm{SD}}$ achieves the optimal value $V_{\gamma}^*(s):=\max_{\pi \in \Pi_{\mathrm{SD}}}V_{\gamma}^{\pi}(s)$.

\textbf{Average-reward criterion.} An average-reward MDP (AMDP) instance $\cM=(\cS, \cA, P, r)$ and policy $\pi\in \Pi_{\mathrm{SD}}$, the \textit{long-term average-reward function} (gain) $g^{\pi}:\cS\to \mathbb{R}$ is defined as $g^{\pi}(s):=\limsup_{T\to\infty}T^{-1}{\operatorname E}_P^{\pi}[\sum_{t=0}^{T-1}r(S_t,A_t,S_{t+1})\mid S_0=s]$ and optimal gain $g^*(s):=\sup_{\pi\in \Pi_{\mathrm{SD}}}g^{\pi}(s)$ is a constant function when $P$ is weakly communicating (Definition~\ref{def:weakly-communicating-mdp}).

For average-reward MDPs, the following weak communication condition ensures
a state-independent optimal gain without requiring every stationary policy
to induce an ergodic chain~\citep{puterman_markov_2009}.
\begin{definition}[Weakly Communicating]
\label{def:weakly-communicating-mdp}
An MDP is weakly communicating if there exists a partition $\mathcal S=C\cup T$ such that every pair of states in $C$ communicates under some stationary policy $\pi\in\Pi_{\mathrm{SR}}$, and every state in $T$ is transient under every stationary policy $\pi\in\Pi_{\mathrm{SR}}$.
\end{definition}

% The dependence on $s$ allows the long-run gain to vary with the initial state. We return to the conditions for a common optimal gain after introducing transition uncertainty.

\subsection{Robust MDPs and Rectangularity}
\label{subsec:robust-mdps-and-rectangularity}

We study robust MDPs under both SA- and $S$-rectangular
uncertainty, where an adversary perturbs the nominal transition
probabilities within a prescribed uncertainty set. We consider
divergence-based sets defined by Kullback--Leibler (KL) and
$f_k$ divergences, and distance-based sets defined by total
variation (TV) and Wasserstein distances. We use $U\in \{D_{\mathrm{KL}}, D_{f_k}, d_{\mathrm{TV}}, W_{\ell}\}$ to specify the uncertainty measure and $\delta$ for its radius. For
$q,p\in\Delta(\mathcal S)$ with $q\ll p$, define
\[
D_{\mathrm{KL}}(q\|p)
:= \sum_{s\in\operatorname{supp}(p)}
q(s)\log\frac{q(s)}{p(s)},
\qquad
D_{f_k}(q\|p)
:= \sum_{s\in\operatorname{supp}(p)}
p(s)f_k\!\left(\frac{q(s)}{p(s)}\right),
\]
where $0\log 0:=0$ and
$f_k(t):=(t^k-kt+k-1)/(k(k-1))$ for $t\ge0$ and
$k\in(1,\infty)$. For arbitrary $q,p\in\Delta(\mathcal S)$,
define 
\[
d_{\mathrm{TV}}(q,p):=\frac12\|q-p\|_1,
\qquad
W_\ell(q,p)
:= \left(
\inf_{\xi\in\Gamma(q,p)}
\sum_{x,y\in\mathcal S}\rho(x,y)^\ell\xi(x,y)
\right)^{1/\ell},
\]
where $\Gamma(q,p)$ denotes the set of couplings of $q$ and $p$,
$\rho$ is a metric on $\mathcal S$, and $\ell\in[1,\infty)$.
Unlike the divergence-based sets considered here, the
distance-based sets do not require absolute continuity with
respect to the nominal transition distribution.
When $U$ is a divergence, $U(q,p)=D(q\|p)$.

The statistical complexity of policy learning in robust MDPs are primarily studied under SA- and S-rectangular uncertainty sets. SA-rectangularity allows the adversary to choose each state-action pair’s transition distribution in-
dependently. Under S-rectangularity, the constraints separate only across states, while the transition
distributions for different actions at the same state may be jointly constrained. From this point forward, we introduce SA- and S-rectangular respectively.
\begin{definition}[SA-rectangularity]
    The uncertainty set $\cP$ is called an SA-rectangular set if it satisfies $\cP =\prod_{(s,a)\in\mathcal S\times\mathcal A}\mathcal P_{s,a}(U,\delta)$ where $\cP_{s,a}\subseteq \Delta(\cS)$ for all $(s,a)\in \cS\times \cA$.
\end{definition}
Specifically, for divergence-based uncertainty set $U\in\{D_{\mathrm{KL}},D_{f_k}\}$, we set
\[
\mathcal P_{s,a}(U,\delta)
:=\left\{q_{s,a}\in\Delta(\mathcal S):q_{s,a}\ll p_{s,a},\ U(q_{s,a}\|p_{s,a})\le\delta\right\}.
\]
For distance-based uncertainty $U\in\{d_{\mathrm{TV}},W_\ell\}$, $\ll$ is not required and
\[
\mathcal P_{s,a}(U,\delta)
:=\left\{q_{s,a}\in\Delta(\mathcal S):U(q_{s,a},p_{s,a})\le\delta\right\}.
\]

In addition,~\citet{wiesemann2013robust} propose the extensive version S-rectangular set which is detained in Definition~\ref{defn:s_rectangularity}.

\begin{definition}[S-rectangularity]\label{defn:s_rectangularity}
    The uncertainty set $\cP$ is called an $S$-rectangular set if it satisfies $\cP = \prod_{s\in \cS}\cP_{s}(U, \delta)$ where $\cP_{s}\subseteq \Delta(\cS)^{|\cA|}$ and $\Delta(\cS)^{|\cA|}:=\{q_s=(q_{s,a})_{a\in \cA}|q_{s,a}\in \Delta(\cS),\text{ for all } a\in \cA\}$.
\end{definition}
where $\mathcal P_s(U,\delta)$ jointly constrains the collection $(q_{s,a})_{a\in\mathcal A}$ at state $s$. In this paper, we consider the sum-form of S-rectangularity, which is commonly studied in the literature; see, for example, \citet{yang2022toward,li_wang_si_2026_s_rectangular}. In particular, for divergence-based uncertainty, $U\in\{D_{\mathrm{KL}},D_{f_k}\}$, we set
\[
\cP_{s}(U, \delta) := \set{q_{s}=(q_{s,a})_{a\in \cA}:q_{s,a}\ll p_{s,a}, \sum_{a\in \cA}U(q_{s,a}\| p_{s,a})\leq |\cA|\delta}
\]
For distance-based uncertainty $\mathcal P_s(U,\delta)$ where $U\in \{d_{\mathrm{TV}}, W_{\ell}\}$, the budget constraint is
\[
\sum_{a\in\mathcal A}d_{\mathrm{TV}}(q_{s,a},p_{s,a})\le|\mathcal A|\delta,\qquad \sum_{a\in\mathcal A}W_\ell(q_{s,a},p_{s,a})^\ell\le|\mathcal A|\delta^\ell.
\]

For SA-rectangularity, take $\mathcal P_s=\prod_{a\in\mathcal A}\mathcal P_{s,a}$ in the following definition.
\begin{definition}
\label{defn:bellman-operators}
    The discounted and average-reward optimal Bellman operators $\cT_{\gamma}^{*}, \cT^{*}:\mathbb{R}^{|\cS|}\to \mathbb{R}^{|\cS|}$ are defined by
    \begin{align}
        \mathcal T_\gamma^*(v)(s):=&\max_{\phi\in\Delta(\mathcal A)}\inf_{q_s\in\mathcal P_s}\sum_{a\in\mathcal A}\sum_{s'\in\mathcal S}\phi(a)q_{s,a}(s')\bigl(r(s,a,s')+\gamma v(s')\bigr), \label{eq:discounted-bellman-operator}\\
        \mathcal T^*(v)(s):=&\max_{\phi\in\Delta(\mathcal A)}\inf_{q_s\in\mathcal P_s}
\sum_{a\in\mathcal A}\sum_{s'\in\mathcal S}\phi(a)q_{s,a}(s')\bigl(r(s,a,s')+v(s')\bigr).\label{eq:average-bellman-operator}
    \end{align}
Under SA-rectangularity, the adversary minimizes each action independently, so these operators reduce to $\mathcal T_\gamma^*(v)(s)=\max_{a\in\mathcal A}\inf_{q_{s,a}\in\mathcal P_{s,a}}\sum_{s'\in\mathcal S}q_{s,a}(s')\bigl(r(s,a,s')+\gamma v(s')\bigr)$ and $\mathcal T^*(v)(s)=\max_{a\in\mathcal A}\inf_{q_{s,a}\in\mathcal P_{s,a}}\sum_{s'\in\mathcal S}q_{s,a}(s')\bigl(r(s,a,s')+v(s')\bigr)$~\citep{iyengar2005robust,nilim2005robust}. Under S-rectangularity, randomization may be necessary~\citep{wiesemann2013robust,wang2025bellmanoptimalityaveragerewardrobust}.
\end{definition}

It is well known that, under either SA- or S-rectangularity, the discounted Bellman equation associated with $\mathcal T_\gamma^*$ in \eqref{eq:discounted-bellman-operator} admits a unique fixed point $v_\gamma^*$, i.e., $v_\gamma^*=\mathcal T_\gamma^*(v_\gamma^*)$ which equals the robust optimal discounted value~\citep{iyengar2005robust,wiesemann2013robust}.

\subsection{Long-Run Structure: Gain and Bias Span}
\label{subsec:long-run-structure}

We now turn to long-run performance under transition uncertainty. In the
average-reward setting, the controller uses a history-dependent policy,
while the adversary may choose admissible transition kernels based on the
observed history. For a fixed $\pi\in\Pi_{\mathrm{HD}}$, let
$\mathbf Q=(Q_t)_{t\ge0}$ denote an admissible sequence of transition
kernels, where $Q_t\in\mathcal P$ may depend on the history up to time $t$.
The robust average reward of $\pi$ is defined by
\[
g_\delta^\pi(s)
:=\inf_{\mathbf Q}\limsup_{T\to\infty}
{\operatorname E}_{\mathbf Q}^{\pi}
\left[\frac{1}{T}\sum_{t=0}^{T-1}r(S_t,A_t,S_{t+1})\,\middle|\,S_0=s\right],
\qquad s\in\mathcal S.
\]
The robust optimal gain is then
$g_\delta^*(s):=\sup_{\pi\in\Pi_{\mathrm{HD}}}g_\delta^\pi(s)$.
When $g_\delta^*(s)$ is independent of the initial state, we denote the
common value by $g_\delta^*$.

\citet{wang2025bellmanoptimalityaveragerewardrobust} establish the existence
of a constant-gain robust Bellman solution under adversary-side weak
communication of the uncertainty set, together with suitable compactness
and minimax conditions. For divergence-based uncertainty sets, this
requirement on the admissible kernels $Q\in\mathcal P$ does not directly
translate into conditions on the nominal kernel $P$ and radius $\delta$.
We therefore seek such conditions for the same constant-gain Bellman
equation, written with our average-reward optimal Bellman operator
$\mathcal T^*$ as
\begin{equation}
\label{eq:constant-gain-bellman}
    \mathcal{T}^*(u_\delta^*) = u_\delta^* + g\mathbf{1}.
\end{equation}
We assume $\operatorname{Span}(u_\delta^*)\ge1$ in the finite-sample analysis, where $\operatorname{Span}(v):=\max_{s\in\mathcal S}v(s)-\min_{s\in\mathcal S}v(s)$ for all $v\in \mathbb{R}^{|\cS|}$. Note that the solution to \eqref{eq:constant-gain-bellman} need not be unique. However, our main complexity result (Theorem \ref{thm:robust-average-reward-error} below) holds for any $u_\delta^*$ satisfying \eqref{eq:constant-gain-bellman}.

If the robust Bellman equation \eqref{eq:constant-gain-bellman} admits a solution $(u_\delta^*,g)$,
then $g_\delta^*(s)=g$ for every $s\in\mathcal S$, hence $g=g_\delta^*$.
By \citet[Theorem~1]{wang2025bellmanoptimalityaveragerewardrobust}, this property
holds for $r(s,a)$; the same verification argument extends directly to our
setting with bounded transition-dependent rewards $r(s,a,s')$.
% This identity connects the Bellman gain estimated in
% Theorem~\ref{thm:robust-average-reward-error}
% to the original long-run average-reward objective.

% Without a solution to \eqref{eq:constant-gain-bellman}, there is
% no common scalar Bellman target for this comparison, and the subsequent
% finite-sample analysis cannot be established through our approach.

\section{Structural Properties and Learning Guarantees}
\label{sec:structural-properties-learning-guarantees}

\newcommand{\BasicPropertiesReference}{Appendix~\ref{sec:basic-properties}}
\newcommand{\ConstantGainUniquenessReference}{Lemma~\ref{lem:constant-gain-uniqueness}}
\newcommand{\MainResultsReference}{Appendix~\ref{sec:main-results}}

This section presents our main structural and learning results. We first establish conditions for the robust average-reward Bellman equation, and then derive the learning algorithm and its finite-sample guarantees.

A key step in our finite-sample analysis is to bound the span of the optimal discounted value uniformly as $\gamma$ approaches one. \citet{chen2025sample} obtain such control through uniform ergodicity and mixing-time bounds. Our result instead uses a weaker sufficient condition: the existence of a constant-gain solution to the optimal average-reward Bellman equation. Specifically, a solution $(u_\delta^*,g_\delta^*)$ satisfying $\mathcal T^*(u_\delta^*)=u_\delta^*+g_\delta^*\mathbf 1$ yields $\operatorname{Span}(v_\gamma^*)\leq 2\operatorname{Span}(u_\delta^*)$ for every $\gamma\in(0,1)$ (Lemma~\ref{lem:discounted-value-span-bound}). This comparison, building on \citet{wang2025bellmanoptimalityaveragerewardrobust}, allows us to control the discounted value span directly through the robust optimal bias span and express the resulting estimation bounds in terms of this quantity. We therefore first establish conditions ensuring a constant-gain Bellman solution for the uncertainty sets considered here, and then use this span comparison in the finite-sample analysis of Section~\ref{subsec:robust_average_rewrad_learning}.

\subsection{Bellman optimality existence}
\label{subsec:assumptions-and-properties}

In addressing different construction sources for uncertainty sets, this section presents two conclusions: for divergence-based uncertainty sets $D_\mathrm{KL}$ and $D_{f_k}$, when nominal kernel $P$ satisfies weakly communicating and the uncertainty radius condition, then all kernels $Q\in \mathcal{P}$ within the SA-rect and S-rect sets possess the weakly communicating property; this guarantees the existence of an optimal Bellman equation and ensures the existence of a constant optimal robust gain. Meanwhile, since distance-based uncertainty sets $D_{\mathrm{TV}}$ and $W_{\ell}$ do not require absolute continuity, a positive uncertainty radius suffices to ensure the existence of a constant optimal gain function as a solution to the Bellman equation.

% \begin{assumption}
% \label{ass:divergence-uncertainty}
% The nominal MDP $M=(\mathcal{S},\mathcal{A},P,r)$ is weakly communicating.
% The uncertainty set is based on either KL divergence or $f_k$ divergence and
% has either SA-rectangular or $S$-rectangular structure. For the $f_k$
% divergence, let $k\in(1,\infty)$. In all cases, assume that the radius satisfies
% \[
% \delta<
% \begin{cases}
% \displaystyle
% \log\!\left(\frac{1}{1-p_{\wedge}}\right),
% & \mathcal P=\prod_{(s,a)\in\mathcal S\times\mathcal A}\mathcal P_{s,a}(D_{\mathrm{KL}},\delta),\\[1.2ex]
% \displaystyle
% \frac{p_{\wedge}}{k},
% & \mathcal P=\prod_{(s,a)\in\mathcal S\times\mathcal A}\mathcal P_{s,a}(D_{f_k},\delta),\\[1.2ex]
% \displaystyle
% \frac{1}{|\mathcal{A}|}
% \log\!\left(\frac{1}{1-p_{\wedge}}\right),
% & \mathcal P=\prod_{s\in\mathcal S}\mathcal P_s(D_{\mathrm{KL}},\delta),\\[1.2ex]
% \displaystyle
% \frac{p_{\wedge}}{|\mathcal{A}|k},
% & \mathcal P=\prod_{s\in\mathcal S}\mathcal P_s(D_{f_k},\delta).
% \end{cases}
% \]
% \end{assumption}

\begin{definition}
\label{def:minimum-support}
Define the minimum support as:
\begin{equation}
\label{eq:minimum-support}
    p_\wedge := \min_{(s,a,s')\in\cS\times\cA\times\cS}\set{p_{s,a}(s'):p_{s,a}(s')>0}.
\end{equation}
\end{definition}

\begin{assumption}
\label{ass:divergence-uncertainty}
The nominal MDP $M=(\mathcal S,\mathcal A,P,r)$ is weakly
communicating. Under either SA- or $S$-rectangularity, for $U\in \{D_{\mathrm{KL}}, D_{f_k}\}$, suppose the uncertainty radius satisfies
\[
\delta <
\begin{cases}
\frac{1}{\zeta}\log\bracket{\frac{1}{1-p_{\wedge}}}, & U=D_{\mathrm{KL}},\\[0.5ex]
\frac{p_\wedge}{k\zeta}, & U=D_{f_k}.
\end{cases} \;\text{where}\;\;\zeta :=
\begin{cases}
1, & \text{SA-rectangularity},\\
|\mathcal A|, & \text{$S$-rectangularity}.
\end{cases}
\]
\end{assumption}

Inspired by \citeauthor{wang2025bellmanoptimalityaveragerewardrobust}'s \citeyearpar{wang2025bellmanoptimalityaveragerewardrobust} adversarial weak-communication framework, we derive explicit conditions on the nominal kernel $P$ and radius $\delta$ that ensure every $Q\in\mathcal P$ is weakly communicating. We impose radius restrictions for KL- and $f_k$-divergence balls that preserve nominal transition supports, transferring nominal weak communication to every $Q\in\mathcal P$. Together with compactness and convexity, this establishes Bellman existence under both rectangularity structures through explicit nominal-model and radius conditions.

\begin{proposition}
\label{prop:divergence-constant-gain}
Under Assumption~\ref{ass:divergence-uncertainty}, the optimal average-reward
Bellman equation~\eqref{eq:constant-gain-bellman}\ admits a solution $(u_\delta^*,g_\delta^*)$ with a unique
constant gain:
\[
\mathcal T^*(u_\delta^*)=u_\delta^*+g_\delta^*\mathbf 1.
\]
\end{proposition}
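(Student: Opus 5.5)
The plan is to verify the hypotheses of \citet[Theorem~1 / existence result]{wang2025bellmanoptimalityaveragerewardrobust}: compactness and convexity of each $\mathcal P_s$, a minimax interchange, and weak communication of every admissible kernel $Q\in\mathcal P$. Once these hold, the existence of a solution $(u_\delta^*,g)$ to \eqref{eq:constant-gain-bellman} follows from their theorem. The identification $g=g_\delta^*$, and hence uniqueness of the constant gain, then follows from the verification argument recalled after \eqref{eq:constant-gain-bellman}. Uniqueness can also be seen directly: if two solutions existed with gains $g_1<g_2$, iterating $\mathcal T^*$ $n$ times gives $(\mathcal T^*)^n u_i = u_i+ng_i\mathbf 1$. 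Since $\mathcal T^*$ is monotone and translation-equivariant, it is nonexpansive in $\|\cdot\|_\infty$, so the two sequences stay within $\|u_1-u_2\|_\infty$ of each other. Dividing by $n$ then forces $g_1=g_2$.

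\textbf{Step 1 (support preservation).} This is the key step and the only one that uses the radius condition. I claim that every $q_{s,a}$ in the feasible set satisfies $\operatorname{supp}(q_{s,a})=\operatorname{supp}(p_{s,a})$. The inclusion $\subseteq$ is immediate from $q\ll p$. For the reverse inclusion, suppose $q_{s,a}(x)=0$ for some $x$ with $p_{s,a}(x)\ge p_\wedge$. We lower-bound the divergence.
\begin{itemize}
\item For KL, the data-processing inequality applied to the partition $\{x\},\{x\}^c$ gives $D_{\mathrm{KL}}(q\|p)\ge 1\cdot\log\frac{1}{1-p(x)}\ge\log\frac{1}{1-p_\wedge}$.
\item For $f_k$, the coordinate $x$ alone contributes $p(x)f_k(0)=p(x)/k\ge p_\wedge/k$, and every other term is nonnegative since $f_k\ge0$.
\end{itemize}
Under SA-rectangularity this contradicts $\delta$ below the stated threshold. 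Under S-rectangularity, a single action's divergence is at most the sum over actions, which is bounded by $|\mathcal A|\delta$. With $\zeta=|\mathcal A|$, the assumption gives $|\mathcal A|\delta<\log\frac1{1-p_\wedge}$ (resp.\ $p_\wedge/k$), so the same contradiction holds.

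\textbf{Step 2 (weak communication transfers).} Weak communication, as given in Definition~\ref{def:weakly-communicating-mdp}, depends only on the directed graph $\{(s,a,s'):q_{s,a}(s')>0\}$. Whether a state is transient or recurrent under a stationary policy, and whether two states communicate, is determined by the support pattern of $Q_\pi$, and that pattern depends only on $\operatorname{supp}(\pi(\cdot|s))$ and the supports of the $q_{s,a}$. By Step 1 every $Q\in\mathcal P$ has exactly the nominal support graph. Hence the partition $C\cup T$ that works for $P$ also works for every $Q$, and every $Q\in\mathcal P$ is weakly communicating.

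\textbf{Step 3 (remaining hypotheses).} Each $\mathcal P_{s,a}$, and each S-rectangular $\mathcal P_s$, is closed and convex. Convexity comes from joint convexity of $D_{\mathrm{KL}}$ and $D_{f_k}$ and of their sums over actions. Closedness requires lower semicontinuity on the simplex restricted to $q\ll p$, and the constraint $q\ll p$ is itself a closed condition. Both sets are bounded in the simplex, hence compact. The objective in \eqref{eq:average-bellman-operator} is bilinear in $(\phi,q_s)$ over a product of compact convex sets, so Sion's minimax theorem justifies swapping $\max_\phi$ and $\inf_{q_s}$ as their framework requires. Invoking \citet{wang2025bellmanoptimalityaveragerewardrobust} then yields the solution.

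I expect the main obstacle to be Step 2 together with matching their adversarial weak-communication hypothesis exactly. Their condition may be phrased for nonstationary or time-varying adversarial kernels, or may need uniformity over $Q$. The fact that all $Q$ share one support graph, and hence one common partition, should give that uniformity, but I would check their precise definition carefully. A smaller point is the transition-dependent reward $r(s,a,s')$, which I would handle by the remark that their argument extends directly.
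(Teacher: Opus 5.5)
Your proposal is correct and follows essentially the same route as the paper. The shared steps are: support preservation via the two-cell (log-sum/data-processing) bound for KL and the term $p_{s,a}(x)f_k(0)=p_{s,a}(x)/k$ for $f_k$; transfer of the nominal weak-communication partition through the common support graph (Lemma~\ref{prop:adversarial-weak-communication}); and compactness, convexity, and Sion's theorem to pass from the inf--sup existence result of \citet{wang2025bellmanoptimalityaveragerewardrobust}, adapted to $r(s,a,s')$, to $\mathcal T^*$. Your iterated-nonexpansiveness argument for uniqueness of the gain is a minor variant of the paper's one-step comparison in Lemma~\ref{lem:constant-gain-uniqueness}; both rest on monotonicity and translation equivariance of $\mathcal T^*$.
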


\begin{assumption}
\label{ass:metric-uncertainty}
For distance-based uncertainty where $U\in \{d_{\mathrm{TV}}, W_\ell\}$, under both SA- and S-rectangularity suppose the uncertainty radius satisfies $\delta > 0$.
\end{assumption}

Under Assumption~\ref{ass:metric-uncertainty}, following \citet[Corollary~7.1]{wang2025bellmanoptimalityaveragerewardrobust}, any radius $\delta>0$ admits a feasible full-support perturbation. This yields a uniform bound on $\operatorname{Span}(v_\gamma^*)$ and, through a limit of normalized discounted values, a constant-gain Bellman solution for transition-dependent rewards under both rectangularity structures.

\begin{proposition}
\label{prop:metric-constant-gain}
Under Assumption~\ref{ass:metric-uncertainty}, the optimal average-reward
Bellman equation~\eqref{eq:constant-gain-bellman}\ admits a solution $(u_\delta^*,g_\delta^*)$ with a unique
constant gain:
\[
\mathcal T^*(u_\delta^*)=u_\delta^*+g_\delta^*\mathbf 1.
\]
\end{proposition}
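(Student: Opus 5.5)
We prove Proposition~\ref{prop:metric-constant-gain} by the vanishing-discount method. The plan is to show that $\operatorname{Span}(v_\gamma^*)$ stays bounded as $\gamma\uparrow1$, and then to extract a limit of normalized discounted values.

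\textbf{Step 1: a full-support kernel in every $\mathcal P_s$.} Fix any $\eta\in(0,1]$ and set
\[
\tilde q_{s,a}:=(1-\eta)p_{s,a}+\eta\,\mathrm{Unif}(\mathcal S).
\]
For TV we have $d_{\mathrm{TV}}(\tilde q_{s,a},p_{s,a})\le\eta$. For Wasserstein we use the coupling that leaves mass $1-\eta$ in place, which gives $W_\ell(\tilde q_{s,a},p_{s,a})^\ell\le\eta\,\rho_{\max}^\ell$, where $\rho_{\max}:=\max_{x,y}\rho(x,y)$. Choosing $\eta$ small relative to $\delta$ places $\tilde q_{s,a}$ in $\mathcal P_{s,a}$. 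The per-action bounds then also satisfy the sum budgets $|\mathcal A|\delta$ and $|\mathcal A|\delta^\ell$, so $\tilde q_s\in\mathcal P_s$ under both rectangularities. Every entry of $\tilde q_{s,a}$ is at least $\beta:=\eta/|\mathcal S|>0$.

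\textbf{Step 2: a uniform span bound.} Let $s^-$ be a minimizer of $v_\gamma^*$. Since $\mathcal P_s$ is convex, for any $q_s\in\mathcal P_s$ the mixture $(1-\beta')q_s+\beta'\tilde q_s$ also lies in $\mathcal P_s$. Plugging this mixture into the inner infimum of the fixed-point equation $v_\gamma^*=\mathcal T_\gamma^*(v_\gamma^*)$ gives, for every policy choice $\phi$, an upper bound that combines the $q_s$-value with a $\tilde q_s$-value. The $\tilde q_s$ term puts probability at least $\beta$ on $s^-$.

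This is the step I expect to be the main obstacle, because the adversary's infimum fights against an \emph{upper} bound. The cleaner route is a coupling argument for the minimizing player, applied to the game value. Suppose the adversary always plays $\tilde q$. Then from any start, the chain hits $s^-$ at each step with probability at least $\beta$, regardless of the controller's policy. Hence
\[
v_\gamma^*(s)\le\mathbb E\bigl[\textstyle\sum_{t<\tau}\gamma^t\bigr]+\gamma^{\tau}v_\gamma^*(s^-)\le \beta^{-1}+v_\gamma^*(s^-),
\]
where $\tau$ is the hitting time of $s^-$. For S-rectangular sets I would justify this by induction on the Bellman iterates, using the max-min form with $q_s=\tilde q_s$ fixed, which upper-bounds the $\inf$. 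It yields
\[
v_\gamma^*(s)\le 1+\gamma\bigl[\beta v(s^-)+(1-\beta)\max v\bigr].
\]
Taking $s$ to be a maximizer then gives $\operatorname{Span}(v_\gamma^*)\le 1/\beta$, uniformly in $\gamma$.

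\textbf{Step 3: vanishing discount.} Define
\[
h_\gamma:=v_\gamma^*-v_\gamma^*(s_0)\mathbf 1,\qquad c_\gamma:=(1-\gamma)v_\gamma^*(s_0).
\]
Both are bounded: $\|h_\gamma\|_\infty\le1/\beta$ and $c_\gamma\in[0,1]$. Take a subsequence $\gamma_n\uparrow1$ along which $h_{\gamma_n}\to u$ and $c_{\gamma_n}\to g$. The fixed-point equation rewrites as
\[
\mathcal T^*_{\gamma}(h_\gamma)=h_\gamma+c_\gamma\mathbf 1,
\]
using the translation property $\mathcal T_\gamma^*(v+c\mathbf1)=\mathcal T_\gamma^*(v)+\gamma c\mathbf1$. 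Moreover $\mathcal T^*_\gamma(h)-\mathcal T^*(h)$ is bounded by $(1-\gamma)\|h\|_\infty\to0$, and $\mathcal T^*$ is $1$-Lipschitz in sup norm. Passing to the limit gives $\mathcal T^*(u)=u+g\mathbf1$. No compactness of $\mathcal P_s$ or minimax exchange is needed here, since the operators are defined by $\max\inf$ directly.

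\textbf{Step 4: uniqueness of the gain.} If $(u,g)$ and $(u',g')$ both solve the equation, iterating gives $(\mathcal T^*)^n u=u+ng\mathbf1$. The nonexpansiveness $\|(\mathcal T^*)^nu-(\mathcal T^*)^nu'\|_\infty\le\|u-u'\|_\infty$ then forces $|g-g'|\le\|u-u'\|_\infty/n\to0$. Combined with the verification result quoted in Section~\ref{subsec:long-run-structure}, the common value is $g=g_\delta^*$.
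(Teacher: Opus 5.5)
Your proposal is correct and follows essentially the paper's route. You use a feasible full-support mixture $(1-\varepsilon)p_{s,a}+\varepsilon\nu$, evaluate it inside the inner infimum of $v_\gamma^*=\mathcal T_\gamma^*(v_\gamma^*)$ to get $\operatorname{Span}(v_\gamma^*)\le 1/\bar q_\wedge$ uniformly in $\gamma$, normalize, extract a convergent subsequence, pass to the limit via the $1$-Lipschitz property of the max--inf, and obtain uniqueness of the gain from monotonicity and translation invariance.

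The ``obstacle'' you anticipate in Step 2 is not one. Evaluating an infimum at any feasible point gives an upper bound, which is exactly your final one-step inequality, so the hitting-time detour is unnecessary. To close the span bound you should also note $(1-\gamma)\min_s v_\gamma^*(s)\ge0$, which follows from $r\ge0$. In Step 4, nonexpansiveness actually gives $|g-g'|\le 2\|u-u'\|_\infty/n$ rather than $\|u-u'\|_\infty/n$, which is equally sufficient.
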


\begin{corollary}
\label{cor:stationary-average-optimal-policy}
Under Assumption~\ref{ass:divergence-uncertainty} or
Assumption~\ref{ass:metric-uncertainty}, there exists a stationary policy
$\pi^*$ attaining the robust optimal gain $g_\delta^*$.
In particular, $\pi^*\in\Pi_{\mathrm{SD}}$ under
SA-rectangularity, while $\pi^*\in\Pi_{\mathrm{SR}}$ under
$S$-rectangularity.
\end{corollary}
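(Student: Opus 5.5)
The plan is to extract a stationary policy from the constant-gain Bellman solution given by Proposition~\ref{prop:divergence-constant-gain} (under Assumption~\ref{ass:divergence-uncertainty}) or Proposition~\ref{prop:metric-constant-gain} (under Assumption~\ref{ass:metric-uncertainty}), and then verify that it achieves $g_\delta^*$. Let $(u_\delta^*,g_\delta^*)$ satisfy $\mathcal T^*(u_\delta^*)=u_\delta^*+g_\delta^*\mathbf 1$.

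\textbf{Constructing $\pi^*$.} For each state $s$, consider
\[
F_s(\phi):=\inf_{q_s\in\mathcal P_s}\sum_{a}\sum_{s'}\phi(a)q_{s,a}(s')\bigl(r(s,a,s')+u_\delta^*(s')\bigr).
\]
This is an infimum of functions linear in $\phi$, so it is concave and upper semicontinuous on the compact simplex $\Delta(\mathcal A)$. Hence the maximum defining $\mathcal T^*(u_\delta^*)(s)$ is attained at some $\phi_s$, and we set $\pi^*(\cdot\mid s):=\phi_s$.

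Under SA-rectangularity, the operator reduces to a maximum over actions (Definition~\ref{defn:bellman-operators}). We can therefore take $\phi_s$ to be a point mass on a maximizing action, which gives $\pi^*\in\Pi_{\mathrm{SD}}$. Under S-rectangularity we only obtain $\pi^*\in\Pi_{\mathrm{SR}}$.

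\textbf{Verifying the gain.} By construction, for every admissible kernel $Q\in\mathcal P$ and every $s$,
\[
\sum_a\pi^*(a\mid s)\sum_{s'}q_{s,a}(s')\bigl(r(s,a,s')+u_\delta^*(s')\bigr)\ \ge\ u_\delta^*(s)+g_\delta^*.
\]
Fix an adversary sequence $\mathbf Q=(Q_t)$, possibly history dependent. Since each $Q_t$ lies in $\mathcal P$ and $\pi^*$ is stationary, the displayed inequality holds conditionally at every step:
\[
\operatorname E\bigl[r(S_t,A_t,S_{t+1})+u_\delta^*(S_{t+1})\mid h_t\bigr]\ \ge\ u_\delta^*(S_t)+g_\delta^*.
\]
Summing over $t=0,\dots,T-1$, taking expectations, and telescoping gives
\[
\operatorname E^{\pi^*}_{\mathbf Q}\Bigl[\textstyle\sum_{t<T} r\Bigr]\ \ge\ Tg_\delta^*-\operatorname{Span}(u_\delta^*).
\]
Dividing by $T$ and letting $T\to\infty$ yields $\limsup\ge g_\delta^*$ for every $\mathbf Q$. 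Taking the infimum over $\mathbf Q$ gives $g_\delta^{\pi^*}(s)\ge g_\delta^*$.

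The reverse inequality $g_\delta^{\pi^*}\le \sup_{\pi}g_\delta^\pi=g_\delta^*$ is immediate, where the identification $g_\delta^*(s)=g_\delta^*$ is the verification result quoted after \eqref{eq:constant-gain-bellman}.

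\textbf{Main obstacle.} The delicate point is that the infimum in $F_s$ need not be attained in a way that couples correctly with the history-dependent adversary. The argument above avoids this difficulty: it only uses the inequality for every $Q\in\mathcal P$, so no attainment is needed. Compactness of $\mathcal P_s$ (closed divergence or distance balls) is only needed for the semicontinuity in $\phi$, and even that holds automatically because an infimum of continuous functions is upper semicontinuous.

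One must also confirm that the S-rectangular adversary cannot exploit the randomization. This follows because the controller's action distribution at time $t$ is fixed before the kernel is applied, so $Q_t$ acts on the mixture exactly as in the definition of $\mathcal T^*$.
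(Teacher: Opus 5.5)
Your proposal is correct and matches the paper's proof. It uses the same argmax construction of $\pi^*$ (a point mass under SA-rectangularity, a maximizing $\phi\in\Delta(\mathcal A)$ under $S$-rectangularity) and the same telescoping lower bound $Tg_\delta^*-\operatorname{Span}(u_\delta^*)$ against every admissible $\mathbf Q$. The one difference is the matching upper bound $\sup_{\pi\in\Pi_{\mathrm{HD}}}g_\delta^\pi(s)\le g_\delta^*$: you take it from the identification quoted after \eqref{eq:constant-gain-bellman}, whereas the paper re-derives it by constructing a single kernel $Q\in\mathcal P$ with $q_{s,a}[r(s,a,\cdot)+u_\delta^*]\le u_\delta^*(s)+g_\delta^*$ for all $(s,a)$ and playing the constant sequence $(Q,Q,\ldots)$. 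Under $S$-rectangularity that construction relies on the minimax interchange of Lemma~\ref{lem:inf-sup-to-optimal-bellman}, so the structure of $\mathcal P$ (convexity, plus compactness for attainment) is not unnecessary, as your closing remark suggests; it is used inside the result you cite.
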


\begin{remark}
Propositions~\ref{prop:divergence-constant-gain}--\ref{prop:metric-constant-gain} and Corollary~\ref{cor:stationary-average-optimal-policy} strengthen the structural theory in two directions. First, compared with \citet{wang2025bellmanoptimalityaveragerewardrobust}, we give explicit uncertainty-set-specific conditions ensuring a constant-gain Bellman solution. Second, under the same conditions, we establish stationary average-reward optimal policies for both SA- and S-rectangular uncertainty, providing explicit sufficient conditions for when S-rectangular average-reward optimality can be achieved by a stationary policy~\citep{grand-clement_beyond_2025}.
\end{remark}

The proofs are given in \BasicPropertiesReference. These results yield uniform control of discounted value spans through the robust optimal bias span, providing the basis for the finite-sample guarantees in the next section without assuming uniform ergodicity.

\subsection{Robust Average-Reward Learning}
\label{subsec:robust_average_rewrad_learning}
With the constant-gain Bellman solution established, we now study how to
estimate the robust optimal average reward and learn a near-optimal policy
from samples of the nominal transition kernel. Following the reduction
approach of \citet{chen2025sample},
Algorithm~\ref{alg:dr-amdp-reduction} constructs an empirical robust
discounted MDP with $\gamma_n=1-n^{-1/2}$ and solves its Bellman equation.
It returns the empirical optimal policy $\widehat\pi^*$ and uses
$(1-\gamma_n)\widehat v_{\gamma_n}^*$ to estimate $g_\delta^*\mathbf1$.
We then establish finite-sample bounds for both the estimation error and
the average-reward suboptimality of the returned policy under the true
uncertainty set $\mathcal P$.

\textbf{Generative model.} We assume the accessability to a simulator, i.e., a \textit{generative model} that allows the agent to sample independently from the nominal transition kernel $p_{s,a}$ for all $(s,a)\in \cS\times \cA$. Given a $n$, we sample i.i.d. $\{S_{s,a}^{(1)}, \ldots, S_{s,a}^{(n)}\}$ from $p_{s,a}$ and construct the empirical kernel transition $\widehat{P}=\{\widehat p_{s,a},(s,a)\in \cS\times \cA\}$ where
\begin{equation}
\label{equ:empirical_kernel}
    \widehat{p}_{s,a}(s'):=\frac{1}{n}\sum_{i=1}^{n}\mathbbm{1}\set{S_{s,a}^{(i)}=s'}
\end{equation}

\begin{algorithm}[!htbp]
\caption{Distributionally Robust Average-Reward Learning (DR-ARL)}
\label{alg:dr-amdp-reduction}
\begin{algorithmic}
\STATE \textbf{Input:} sample size $n\ge1$,
$U\in \{D_\mathrm{KL}, D_{f_k}, d_{\mathrm{TV}}, W_{\ell}\}$, uncertainty radius $\delta$.
\STATE {Set $\gamma_n:=1-n^{-1/2}$, for every $(s,a)\in\mathcal S\times\mathcal A$, construct an $n$-sample empirical transition kernel $\widehat p_{s,a}$ according to~\eqref{equ:empirical_kernel}.}
\STATE \textit{SA-rectangular:} For each $(s,a)$, set $\widehat{\mathcal P}=\prod_{(s,a)\in\mathcal S\times\mathcal A}\widehat{\mathcal P}_{s,a}(U,\delta)$ where $\widehat{\mathcal P}_{s,a}(U,\delta):=\{q_{s,a}\in\Delta(\mathcal S):U(q_{s,a},\widehat p_{s,a})\le\delta\}$,
% with $q_{s,a}\ll\widehat p_{s,a}$ when $U\in\{D_{\mathrm{KL}},D_{f_k}\}$.
\STATE Solve the empirical optimal discounted Bellman equation
\begin{equation}
\label{equ:empirical_robust_discounted_bellman_equation_sa_rect}
    \widehat v_{\gamma_n}^*(s)=\max_{a\in\mathcal A}\inf_{q_{s,a}\in\widehat{\mathcal P}_{s,a}}\sum_{s'\in\mathcal S}q_{s,a}(s')\bigl(r(s,a,s')+\gamma_n\widehat v_{\gamma_n}^*(s')\bigr),
\end{equation}
and for each $s\in\mathcal S$, let $\widehat\pi^*(s)$ be any action that maximize the right-hand side of~\eqref{equ:empirical_robust_discounted_bellman_equation_sa_rect}.
\STATE \textit{$S$-rectangular:} Construct $\widehat{\mathcal P}=\prod_{s\in\mathcal S}\widehat{\mathcal P}_s$ where $\widehat{\mathcal P}_s$ is jointly
around $(\widehat p_{s,a})_{a\in\mathcal A}$ with radius $\delta$.
\STATE Solve the empirical optimal discounted Bellman equation
\begin{equation}
\label{equ:empirial_robust_discounted_bellman_equation_s_rect}
    \widehat v_{\gamma_n}^*(s)=\max_{\phi\in\Delta(\mathcal A)}\inf_{q_s\in\widehat{\mathcal P}_s}\sum_{a\in\mathcal A}\sum_{s'\in\mathcal S}\phi(a)q_{s,a}(s')\bigl(r(s,a,s')+\gamma_n\widehat v_{\gamma_n}^*(s')\bigr),
\end{equation}
and for each $s\in\mathcal S$, let $\widehat\pi^*(\cdot|s)$ be any action distribution that maximize the right-hand side of~\eqref{equ:empirial_robust_discounted_bellman_equation_s_rect}.
\STATE \textbf{return} $\widehat\pi^*$ and $\widehat v_{\gamma_n}^*/\sqrt n$.
\end{algorithmic}
\end{algorithm}

We now state the gain-estimation and policy guarantees for
Algorithm~\ref{alg:dr-amdp-reduction}.
\begin{theorem}
\label{thm:robust-average-reward-error}
Suppose  either Assumption~\ref{ass:divergence-uncertainty} or
Assumption~\ref{ass:metric-uncertainty} holds. Let $(u_\delta^*,g_\delta^*)$ be
any solution of the corresponding optimal average-reward Bellman equation~\eqref{eq:constant-gain-bellman}. Fix $\beta\in(0,1)$. If
\[
n
\ge
\frac{16}{p_{\wedge}}
\log\left(\frac{2|\mathcal{S}|^2|\mathcal{A}|}{\beta}\right),
\]
then the output of
Algorithm~\ref{alg:dr-amdp-reduction}
satisfies both of the following inequalities simultaneously with probability at least
$1-\beta$:
\begin{align}
    \left\|\frac{\widehat{v}_{\gamma_n}^{*}}{\sqrt{n}} - g_{\delta}^*\mathbf{1}\right\|_{\infty} \leq&\frac{16\operatorname{Span}(u_{\delta}^*)}{\sqrt{np_\wedge}}\sqrt{\log\bracket{\frac{2|\cS|^2|\cA|}{\beta}}}\label{equ:value_error}\\
    \linftynorm{g_{\delta}^*\mathbf{1} - g_{\delta}^{\widehat{\pi}^*}}\leq&\frac{151\operatorname{Span}(u_{\delta}^*)}{\sqrt{n}p_\wedge}\log\bracket{\frac{2|\cS|^2|\cA|}{\beta}}\label{equ:policy_error}
\end{align}
where $g_\delta^{\widehat\pi^*}(s)$ is the robust average reward of the
returned policy from initial state $s$, evaluated over the true uncertainty
set $\mathcal P$ according to
Section~\ref{subsec:long-run-structure}.
\end{theorem}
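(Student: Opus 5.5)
The proof is a reduction to the robust discounted problem with $\gamma_n=1-n^{-1/2}$. It has three error sources: a deterministic discount-approximation bias, a statistical error from replacing $\mathcal P$ by $\widehat{\mathcal P}$, and a policy-evaluation transfer for $\widehat\pi^*$.

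\textbf{Step 1: bias term.} From $\mathcal T^*(u_\delta^*)=u_\delta^*+g_\delta^*\mathbf 1$ (Propositions~\ref{prop:divergence-constant-gain}--\ref{prop:metric-constant-gain}), I would show
\[
\|(1-\gamma)v_\gamma^*-g_\delta^*\mathbf 1\|_\infty\le(1-\gamma)\operatorname{Span}(u_\delta^*).
\]
Set $w=u_\delta^*-\min u_\delta^*$. Then
\[
\mathcal T_\gamma^*\bigl(w+g_\delta^*(1-\gamma)^{-1}\mathbf 1\bigr)\le w+g_\delta^*(1-\gamma)^{-1}\mathbf 1,
\]
because $\gamma w\le w$ pointwise and $\mathcal T_\gamma^*$ is monotone. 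A symmetric argument with $w-\operatorname{Span}(u_\delta^*)\mathbf 1$ gives the matching lower comparison. Iterating sandwiches $v_\gamma^*$, and Lemma~\ref{lem:discounted-value-span-bound} gives $\operatorname{Span}(v_\gamma^*)\le2\operatorname{Span}(u_\delta^*)$. With $1-\gamma_n=n^{-1/2}$, the bias is $\operatorname{Span}(u_\delta^*)/\sqrt n$, which is dominated by the right-hand side of \eqref{equ:value_error}.

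\textbf{Step 2: statistical error for the value.} The quantity to bound is
\[
\|\widehat v_{\gamma}^*-v_\gamma^*\|_\infty\le(1-\gamma)^{-1}\|\widehat{\mathcal T}_\gamma^*v_\gamma^*-\mathcal T_\gamma^*v_\gamma^*\|_\infty.
\]
This follows from the standard fixed-point contraction argument, applied at the true fixed point so that only $v_\gamma^*$, a deterministic vector, appears and no uniform-over-$v$ union bound is needed. The Bellman difference is bounded by the difference of robust inner worst-case values $\inf_{q\in\mathcal P_s}$ versus $\inf_{q\in\widehat{\mathcal P}_s}$ of $\langle q, r+\gamma v\rangle$, uniformly over $\phi$ and shifted to the centered $v$. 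I would handle each family by its dual:
\begin{itemize}
\item KL: $\sup_{\alpha}\{-\alpha\log E_p e^{-V/\alpha}-\alpha\delta\}$.
\item $f_k$: the Cressie--Read dual.
\item TV and Wasserstein: dual forms that are Lipschitz in $p$ in $\ell_1$, weighted by $\operatorname{Span}(V)$.
\end{itemize}
For the divergences, I would use multiplicative concentration. The condition $n\ge16p_\wedge^{-1}\log(2|\cS|^2|\cA|/\beta)$ with a Bernstein/Chernoff bound and a union bound over $(s,a,s')$ gives, with probability $1-\beta$, $\operatorname{supp}\widehat p_{s,a}=\operatorname{supp}p_{s,a}$ and
\[
|\widehat p_{s,a}(s')/p_{s,a}(s')-1|\le\sqrt{3\log(\cdot)/(np_{s,a}(s'))}\le \tfrac12.
\]
A likelihood-ratio perturbation of the dual then gives an error of at most $c\operatorname{Span}(V)\sqrt{\log(\cdot)/(np_\wedge)}$, uniformly in $V$ and $\phi$. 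Combining with the $(1-\gamma_n)^{-1}=\sqrt n$ factor, $\operatorname{Span}(V)\le1+2\operatorname{Span}(u_\delta^*)\le3\operatorname{Span}(u_\delta^*)$, and multiplying by $1-\gamma_n$, gives a rate of order $\operatorname{Span}(u_\delta^*)\sqrt{\log/(np_\wedge)}$. This yields \eqref{equ:value_error} once the constants are tracked.

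\textbf{Step 3: policy error.} I would use the fact that the concentration event gives the operator bound uniformly over all $v$ and stationary $\pi$. Then
\[
\|v_\gamma^{\widehat\pi^*}-v_\gamma^*\|\le\|v_\gamma^{\widehat\pi^*}-\widehat v_\gamma^{\widehat\pi^*}\|+\|\widehat v_\gamma^*-v_\gamma^*\|,
\]
where the first term is controlled by the policy-evaluation operator bound applied at $\widehat v^{\widehat\pi^*}$. The span of that function is not a priori controlled, and I expect this to be the main obstacle. My first attempt is to bound $\operatorname{Span}(\widehat v_\gamma^*)\lesssim\operatorname{Span}(u_\delta^*)/p_\wedge$, comparing via the likelihood ratio (the empirical kernels are within factor $\tfrac12$). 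This extra $p_\wedge^{-1}$ is exactly what produces $p_\wedge^{-1}$ rather than $p_\wedge^{-1/2}$ in \eqref{equ:policy_error}, and the $\log$ rather than its square root comes from the crude version of the bound.

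Finally, the discounted suboptimality must be converted to the average-reward gap. Since $\widehat\pi^*$ is stationary, its robust average gain satisfies $g^{\widehat\pi^*}\ge\limsup$-type bounds from the discounted values: any function $h$ with $\mathcal T^{\widehat\pi^*}_\gamma h\ge h-\eta$ implies $g^{\widehat\pi^*}\ge(1-\gamma)\min h-\ldots$. Applying this with $h=v_\gamma^*$ and the Bellman residual bounded by the Step-2 error gives $g_\delta^*-g_\delta^{\widehat\pi^*}\lesssim(1-\gamma_n)\operatorname{Span}+\text{residual}$. Summing these contributions gives the constant $151$.
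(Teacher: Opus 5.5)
Your Steps 1 and 2 follow the paper's route (Lemmas~\ref{lem:discounted-value-span-bound}, \ref{lem:discounted-average-reward-bridge} and \ref{lem:empirical-discounted-value-perturbation}). Your sandwich even gives the sharper bias bound $(1-\gamma)\operatorname{Span}(u_\delta^*)$. The genuine gap is in Step 3, in two places.

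\emph{The choice $h=v^*_{\gamma_n}$ in the final conversion fails.} In the telescoping bound $g_\delta^{\widehat\pi^*}\ge(1-\gamma_n)\min h-\epsilon$, the residual $\epsilon$ enters unscaled. Since $\widehat\pi^*$ is greedy at $\widehat v^*_{\gamma_n}$, not at $v^*_{\gamma_n}$, for $h=v^*_{\gamma_n}$ you only get $\epsilon\le(1+\gamma_n)\|\widehat v^*_{\gamma_n}-v^*_{\gamma_n}\|_\infty+3\eta(1+\operatorname{Span}(v^*_{\gamma_n}))$. Step 2 makes only $(1-\gamma_n)\|\widehat v^*_{\gamma_n}-v^*_{\gamma_n}\|_\infty$ small. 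The unscaled error is only bounded by $9\sqrt n\,\eta\operatorname{Span}(u_\delta^*)\asymp\operatorname{Span}(u_\delta^*)\sqrt{L/p_\wedge}$, with $L:=\log(2|\mathcal S|^2|\mathcal A|/\beta)$, and this does not decay in $n$.

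The paper takes $h=\widehat v^*_{\gamma_n}$ instead. Since $\widehat{\mathcal T}^{\widehat\pi^*}_{\gamma_n}(\widehat v^*_{\gamma_n})=\widehat v^*_{\gamma_n}$ exactly, the uniform perturbation bound gives $\mathcal T^{\widehat\pi^*}_{\gamma_n}(\widehat v^*_{\gamma_n})\ge\widehat v^*_{\gamma_n}-3\eta(1+\operatorname{Span}(\widehat v^*_{\gamma_n}))\mathbf 1$. Telescoping under an arbitrary $\mathbf Q\in(\mathcal P)^{\mathbb N}$ then yields $g_\delta^*-g_\delta^{\widehat\pi^*}(s)\le\|g_\delta^*\mathbf 1-\widehat v^*_{\gamma_n}/\sqrt n\|_\infty+3\eta(1+\operatorname{Span}(\widehat v^*_{\gamma_n}))$. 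Your discounted decomposition can also be rescued: take $h=v^{\widehat\pi^*}_{\gamma_n}$, the fixed point of $\mathcal T^{\widehat\pi^*}_{\gamma_n}$, which has zero residual. Then bound $(1-\gamma_n)\|v^{\widehat\pi^*}_{\gamma_n}-v^*_{\gamma_n}\|_\infty\le 9\eta\operatorname{Span}(u_\delta^*)+3\eta(1+\operatorname{Span}(\widehat v^*_{\gamma_n}))$, which gives the same final bound.

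\emph{Your span control is unjustified and gives the wrong rate.} Nothing in the proposal yields $\operatorname{Span}(\widehat v^*_{\gamma_n})\lesssim\operatorname{Span}(u_\delta^*)/p_\wedge$. That would need a bias-span bound for the empirical robust MDP, whose constant-gain structure you have not established; Assumption~\ref{ass:divergence-uncertainty} is stated for $p_\wedge$, not $\widehat p_\wedge$. Even granting it, $\eta\operatorname{Span}(\widehat v^*_{\gamma_n})$ would be of order $\operatorname{Span}(u_\delta^*)\sqrt{L}\,p_\wedge^{-3/2}n^{-1/2}$, not $\operatorname{Span}(u_\delta^*)L\,p_\wedge^{-1}n^{-1/2}$.

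The missing step is elementary and uses your Step 2 directly: $\operatorname{Span}(\widehat v^*_{\gamma_n})\le\operatorname{Span}(v^*_{\gamma_n})+2\|\widehat v^*_{\gamma_n}-v^*_{\gamma_n}\|_\infty\le(2+18\sqrt n\,\eta)\operatorname{Span}(u_\delta^*)$. This is of order $\operatorname{Span}(u_\delta^*)\sqrt{L/p_\wedge}$. Substituting gives $(2/\sqrt n+18\eta+54\sqrt n\,\eta^2)\operatorname{Span}(u_\delta^*)$. The $\sqrt n\,\eta^2$ term is exactly the source of both the $p_\wedge^{-1}$ and the full $L$. With $\eta\le\tfrac32\sqrt{L/(np_\wedge)}$ you get $2+27+243/2\le151$.

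A minor point: your Chernoff radius $\eta=\sqrt{3L/(np_\wedge)}$ would push the constants above $16$ and $151$. The paper's Bernstein radius $\eta\le(1/12+\sqrt2)\sqrt{L/(np_\wedge)}$ is what makes them work.
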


We develop a unified Bellman-operator perturbation analysis for discounted
reduction, establishing finite-sample guarantees for both gain estimation
and policy learning. Across all eight uncertainty configurations, our
algorithm attains $\widetilde O(n^{-1/2})$ error bounds without prior knowledge
of $p_\wedge$ or the robust optimal bias span.

\begin{remark}
Our upper bounds hold for any $u_\delta^*$ satisfying the robust Bellman equation~\eqref{eq:constant-gain-bellman}. While \citet[Remark~3.3]{roch2026modelfreerobustaveragerewardreinforcement} claim that their results can be extended to weakly communicating settings, their complexity measure depends on the maximum of $\operatorname{Span}(u_\delta^*)$ over all solutions $u_\delta^*$ to the robust Bellman equation.
\end{remark}
% \textbf{Proof sketch of Theorem~\ref{thm:robust-average-reward-error}.}
% For gain estimation, we bound the sampling and discounting errors by
% $\|\widehat v_{\gamma_n}^*/\sqrt n-g_\delta^*\mathbf1\|_\infty
% \le n^{-1/2}\bigl(\|\widehat v_{\gamma_n}^*-v_{\gamma_n}^*\|_\infty
% +\operatorname{Span}(v_{\gamma_n}^*)\bigr)$.
% With $1-\gamma_n=n^{-1/2}$, Bellman contraction and concentration give
% $\|\widehat v_{\gamma_n}^*-v_{\gamma_n}^*\|_\infty
% =\widetilde O(\operatorname{Span}(u_\delta^*)/\sqrt{p_\wedge})$.
% Together with $\operatorname{Span}(v_{\gamma_n}^*)\le
% 2\operatorname{Span}(u_\delta^*)$, this yields the rate in~\eqref{equ:value_error}.

% For policy learning, empirical Bellman optimality and summation along
% trajectories give
% $\|g_\delta^*\mathbf1-g_\delta^{\widehat\pi^*}\|_\infty
% \le\|g_\delta^*\mathbf1-\widehat v_{\gamma_n}^*/\sqrt n\|_\infty
% +\widetilde O\bigl((1+\operatorname{Span}(\widehat v_{\gamma_n}^*))/\sqrt{np_\wedge}\bigr)$.
% Since $\operatorname{Span}(\widehat v_{\gamma_n}^*)
% \le\operatorname{Span}(v_{\gamma_n}^*)
% +2\|\widehat v_{\gamma_n}^*-v_{\gamma_n}^*\|_\infty
% =\widetilde O(\operatorname{Span}(u_\delta^*)/\sqrt{p_\wedge})$,
% the additional term is
% $\widetilde O(\operatorname{Span}(u_\delta^*)/(p_\wedge\sqrt n))$,
% giving the rate in~\eqref{equ:policy_error}.

The next theorem makes the bias-span dependence in
Theorem~\ref{thm:robust-average-reward-error} explicit for all four uncertainty
sets under both rectangularity structures.

\begin{theorem}
\label{thm:divergence-bias-span-bounds}
Suppose Assumption~\ref{ass:divergence-uncertainty} or
Assumption~\ref{ass:metric-uncertainty} holds. Let \(c=\max_{x,y\in\mathcal S}\rho(x,y)\). If
\[
\delta\le
\begin{cases}
\displaystyle\frac{p_{\wedge}}{8\zeta}, & U=D_{\mathrm{KL}},\\[1ex]
\displaystyle\frac{p_{\wedge}}{\zeta\max\{8,4k\}}, & U=D_{f_k},\\[1ex]
1, & U=d_{\mathrm{TV}},\\
c, & U=W_\ell,
\end{cases}
\quad\text{where}\quad
\zeta:=
\begin{cases}
1, & \text{SA-rectangularity},\\
|\mathcal A|, & \text{$S$-rectangularity},
\end{cases}
\]
then the corresponding Bellman equation~\eqref{eq:constant-gain-bellman}
admits a solution $(u_\delta^*,g_\delta^*)$ satisfying
\[
\operatorname{Span}(u_\delta^*)\le
\begin{cases}
\displaystyle\frac{2}{2-p_{\wedge}}
\left[\left(\frac{2}{p_{\wedge}}\right)^{|\mathcal S|-1}-1\right],
& U\in\{D_{\mathrm{KL}},D_{f_k}\},\\[2ex]
1/\delta, & U=d_{\mathrm{TV}},\\
(c/\delta)^\ell, & U=W_\ell.
\end{cases}
\]
\end{theorem}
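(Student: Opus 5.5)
We would bound $\operatorname{Span}(u_\delta^*)$ through the discounted values. We aim to show a uniform bound $\operatorname{Span}(v_\gamma^*)\le B$ for all $\gamma\in(0,1)$, with $B$ the claimed quantity. Then we would repeat the vanishing-discount construction used for Propositions~\ref{prop:divergence-constant-gain}--\ref{prop:metric-constant-gain}. Take $\gamma_m\uparrow1$ and a subsequence along which $v_{\gamma_m}^*-v_{\gamma_m}^*(s_0)\mathbf 1\to u$ and $(1-\gamma_m)v_{\gamma_m}^*\to g\mathbf 1$. By continuity of $\mathcal T^*$, $(u,g)$ solves \eqref{eq:constant-gain-bellman}. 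Since span is continuous, $\operatorname{Span}(u)\le B$, so this particular solution has the stated bound. The whole task thus reduces to a one-step argument on $v_\gamma^*$. The argument compares the value at the maximizing state $\bar s$ and the minimizing state $\underline s$ by steering from $\underline s$ toward $\bar s$ regardless of the adversary.

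\textbf{Distance-based case.} Let $M=\max v_\gamma^*$ and $m=\min v_\gamma^*$. Because rewards lie in $[0,1]$, the adversary at $\bar s$ can shift mass toward $\underline s$. Under TV with $\delta\le1$, for any controller mixture $\phi$ it may move mass $\delta$ of each $q_{s,a}$ onto $\underline s$. This is admissible under SA and also under S-rectangularity, since the sum budget is $|\mathcal A|\delta$. It gives
\[
M\le 1+\gamma\bigl[(1-\delta)M+\delta m\bigr],
\qquad\text{hence}\qquad
\gamma\delta (M-m)\le 1-(1-\gamma)M\le 1 .
\]
A sharper form replaces $1$ by $r$ minus the gain. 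We would use $(1-\gamma)M\ge$ the lower bound on the average reward, combined with the reward range, to remove the $\gamma$ and obtain $\operatorname{Span}\le1/\delta$ in the limit $\gamma\to1$; only the limit matters.

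For $W_\ell$ the same step moves mass $(\delta/c)^\ell$ to $\underline s$. This costs at most $(\delta/c)^\ell c^\ell=\delta^\ell$ in $W_\ell^\ell$, which is feasible since $\delta\le c$, and it yields $(c/\delta)^\ell$.

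\textbf{Divergence case.} Here the adversary cannot create new support, so we argue from the controller side using nominal weak communication. Order the recurrent class by BFS distance from $\bar s$ under the nominal support graph. Within $|\mathcal S|-1$ steps every state in $C$ reaches $\bar s$, and transient states reach $C$. Under $\delta\le p_\wedge/(8\zeta)$ (KL) or $p_\wedge/(\zeta\max\{8,4k\})$ ($f_k$), we would show that every feasible $q_{s,a}$ satisfies $q_{s,a}(s')\ge p_{s,a}(s')/2\ge p_\wedge/2$ on the nominal support. For KL this follows from Pinsker-type or direct bounds on a single coordinate; for $f_k$ it follows from convexity of $f_k$. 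The S-rectangular budget is at most $|\mathcal A|\delta$ per action, which is why $\zeta$ appears.

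Let $w_j$ be the maximal deficit $M-v_\gamma^*(s)$ over states at distance $j$ from $\bar s$. From a state at distance $j$, the controller plays the action leading one step closer. Every adversary then places mass at least $p_\wedge/2$ on a state with deficit at most $w_{j-1}$, and the remaining mass has deficit at most $\operatorname{Span}$. This gives a recursion of the form $w_j\le 1+ \tfrac{p_\wedge}{2}w_{j-1}+(1-\tfrac{p_\wedge}{2})(\cdot)$. Rearranging it as a linear recursion $w_j\le \tfrac{2}{p_\wedge}w_{j-1}+\text{const}$ and summing the geometric series over $|\mathcal S|-1$ levels gives $\frac{2}{2-p_\wedge}[(2/p_\wedge)^{|\mathcal S|-1}-1]$.

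\textbf{Main obstacle.} The main obstacle is this recursion. The naive version is circular, since the leftover mass is bounded by the span itself. We would first try a hitting-time formulation instead: fix the controller's "move toward $\bar s$" policy and take any adversary. The probability of reaching $\bar s$ within $|\mathcal S|-1$ steps is then at least $(p_\wedge/2)^{|\mathcal S|-1}$. With restarts, the expected hitting time is at most a geometric sum of inverse probabilities, which produces exactly the $(2/p_\wedge)^j$ series. Rewards in $[0,1]$ then bound the deficit by that expected time, up to the gain term. Care is needed in two places. One is the $S$-rectangular case, where the controller's policy is a distribution $\phi$ and the coupled adversary is handled by per-action budgets. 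The other is that $\bar s$ need not lie in $C$; for that we would replace $\bar s$ by a maximizing state within $C$, using the fact that transient states' values are averages of successors.
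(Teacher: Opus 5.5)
\paragraph{Overall.} Your distance-based case is the paper's argument: shift mass $\delta$, resp.\ $(\delta/c)^\ell$, onto the minimizing state, then let $\gamma\uparrow1$. Your support-preservation step in the divergence case is also right, with one caveat. Pinsker is too weak when $p_\wedge<1/4$; what works is the one-coordinate bound $D_{\mathrm{KL}}(q\|p)\ge (p(s')-q(s'))^2/(2p(s'))$ for $q(s')\le p(s')$. The gap is in how you bound the span in the divergence case.

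\paragraph{Minor issue: the hitting-time constant.} The block-restart estimate you describe only yields $\mathbb E[\tau]\le(|\mathcal S|-1)(2/p_\wedge)^{|\mathcal S|-1}$, which exceeds the stated constant. The exact series comes from the recursion you discarded as circular. Take the state $s_0$ attaining the maximal expected hitting time $M$, and a positive-probability path $s_0,\dots,s_m$. Then $M\le 1+q\,\mathbb E_{s_1}[\tau]+(1-q)M$ with $q\ge p_\wedge/2$ gives $M\le 2/p_\wedge+\mathbb E_{s_1}[\tau]$. Unrolling along the path gives $\sum_{j\le|\mathcal S|-1}(2/p_\wedge)^j$. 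The self-reference is harmless because the coefficient $1-q<1$ is absorbed at the maximizer.

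\paragraph{The substantive gap: a transient maximizer.} If $\bar s$ is transient, your fix relies on transient states' values being averages of successors, which is false. Rewards accrue before the chain enters $C$, so $\max_T v_\gamma^*$ can exceed $\max_C v_\gamma^*$ by order $1/p_\wedge$; a transient state with reward $1$ and a self-loop is an example.

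Steering the controller toward a target only gives \emph{lower} bounds on values, so it cannot control this excess. You need an upper bound from a fixed feasible kernel $Q'$ with the controller best-responding: $v_\gamma^*(s)\le\max_a q'_{s,a}[r(s,a,\cdot)+\gamma v_\gamma^*]$ by weak duality, iterated up to the entrance time $\tau_C$.

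Even then, bounding the excess and the deficit separately by $B$ gives only $2B$. To get the stated constant you must track rewards on both sides. Let $y^+\in\arg\max_C v_\gamma^*$ and $g_\gamma:=(1-\gamma)v_\gamma^*(y^+)\in[0,1]$. Using $r\le1$ before $\tau_C$ and $r\ge0$ before $\tau_{y^+}$ gives
\[
v_\gamma^*(\bar s)-v_\gamma^*(y^+)\le(1-g_\gamma)\,\mathbb E[\tau_C],
\qquad
v_\gamma^*(y^+)-v_\gamma^*(\underline s)\le g_\gamma\,\mathbb E[\tau_{y^+}],
\]
so the span is at most $B$. This convex weighting is the missing idea.

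\paragraph{Comparison with the paper.} The paper uses the same weighting but works directly with an arbitrary average-reward solution. It extracts a stationary worst-case kernel $Q$ by compactness and the minimax lemma, so that $u_\delta^*(s)+g_\delta^*=\max_a q_{s,a}[r(s,a,\cdot)+u_\delta^*]$. It then bounds $u_\delta^*(s^+)-u_\delta^*(y^+)\le(1-g_\delta^*)\mathbb E[\tau_C]$ via the optimal policy under $Q$, and $u_\delta^*(y^+)-u_\delta^*(s^-)\le g_\delta^*\mathbb E[\tau_{y^+}]$ via a steering policy. With these repairs your discounted route also works. It gives the bound for the limiting solution, which suffices for ``admits a solution''; the paper's argument shows every solution satisfies it.
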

Therefore, combining Theorems~\ref{thm:robust-average-reward-error}
and~\ref{thm:divergence-bias-span-bounds}, the sample complexity for
learning an $\varepsilon$-optimal policy is
\[
\widetilde O\!\left(
|\mathcal S||\mathcal A|p_\wedge^{-2}\varepsilon^{-2}
\begin{cases}
\left[(2/p_\wedge)^{|\mathcal S|-1}-1\right]^2,
& U\in\{D_{\mathrm{KL}},D_{f_k}\},\\
\delta^{-2}, & U=d_{\mathrm{TV}},\\
(c/\delta)^{2\ell}, & U=W_\ell.
\end{cases}
\right).
\]

The TV and Wasserstein upper bounds in Theorem~\ref{thm:divergence-bias-span-bounds} are tight, as shown by the two-state examples in the remark in \MainResultsReference. Furthermore, for divergence-based uncertainty sets, Theorem~\ref{thm:divergence-bias-span-lower-bound} below shows that the exponential term $p_\wedge^{-|\mathcal{S}|}$ in the span bound is sharp.
\begin{theorem}
\label{thm:divergence-bias-span-lower-bound}
For every $0<p_{\wedge}\le1/2$, there exists a weakly communicating
 nominal MDP such that, under either KL or $f_k$ uncertainty, for
every sufficiently small $\delta>0$,
\[
\operatorname{Span}(u_\delta^*)
\ge
\frac{p_{\wedge}^{-(|\mathcal S|-1)}-1}{1-p_{\wedge}}.
\]
\end{theorem}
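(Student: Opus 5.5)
We will prove Theorem~\ref{thm:divergence-bias-span-lower-bound} with an explicit chain construction and exact computation of the robust bias. The idea is to first get the bound for the nominal MDP ($\delta=0$), where the bias has a closed form, and then use continuity in $\delta$ to keep a matching bound for small $\delta>0$.

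\textbf{Construction.} Take $\mathcal S=\{1,\dots,m\}$ with $m=|\mathcal S|$ and a single action, so SA- and S-rectangularity coincide and $\zeta=1$. Let the reward be $1$ at state $m$ and $0$ elsewhere; to maximize the span we could instead put the reward on state $1$. For each $i<m$, the chain moves from $i$ to $i+1$ with probability $p:=p_\wedge$ and returns to state $1$ with probability $1-p$. State $m$ stays at $m$ with probability $1-p$ and returns to $1$ with probability $p$; the exact choice at $m$ can be tuned to match the constant. The chain is irreducible, hence weakly communicating, and its smallest positive probability is $p_\wedge$ because $p\le 1/2$.

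\textbf{Nominal bias.} Let $h_i$ denote the expected hitting time of $m$ from $i$. These satisfy $h_i=1+p\,h_{i+1}+(1-p)h_1$ with $h_m=0$, and the solution is geometric, $h_1\asymp p^{-(m-1)}$. The precise value is $h_1=\sum_{j=1}^{m-1}p^{-j}=\frac{p^{-(m-1)}-1}{1-p}$. For a unichain, the bias difference between a state and the rewarding recurrent state equals the gain times the hitting time, up to a rewarding-state offset. So I will instead put the reward on state $1$ region, or use a reward arrangement for which $u(1)-u(m)$ equals exactly the hitting-time expression, and then check the constant $\frac{p^{-(m-1)}-1}{1-p}$ directly by solving the Poisson equation $u=r-g\mathbf 1+P u$ in closed form. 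This calculation is routine, and I will tune the rewards so that $g\cdot h_1$ (or $(1-g)\cdot h_1$) produces the stated number with a strict margin.

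\textbf{Small $\delta$.} With a single action, $\mathcal T^*(v)(s)=\inf_{q\in\mathcal P_s}q^\top(r_s+v)$. For KL or $f_k$ balls of radius $\delta$, every feasible $q$ satisfies $\|q-p_s\|_1\le c\sqrt\delta$ by Pinsker or its $f_k$ analogue, and $q\ll p_s$. The worst-case kernel is therefore a support-preserving perturbation of $P$ of size $O(\sqrt\delta)$. We then argue that the robust Bellman solution $(u_\delta^*,g_\delta^*)$ is a Poisson solution for some kernel $Q_\delta$ attaining the infimum at $u_\delta^*$. That kernel has the same support, so it stays irreducible, and its bias is unique up to constants. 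Since hitting times depend continuously on the kernel entries, $\operatorname{Span}(u_\delta^*)\to\operatorname{Span}(u_0)$ as $\delta\to0$. A strict inequality at $\delta=0$ then gives the bound for all sufficiently small $\delta$.

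\textbf{Main obstacle.} The hard part is making the inequality hold at exactly the stated constant rather than only up to a factor. Note also that the robust worst case tends to lower the forward probability $p$ toward $p-O(\sqrt\delta)$, which only lengthens the hitting times and increases the span. I will first try to exploit this monotonicity, by choosing the reward so that the adversary's minimizing direction reduces forward moves. That would give $\operatorname{Span}(u_\delta^*)\ge\operatorname{Span}(u_0)$ directly, without needing a strict margin at $\delta=0$.
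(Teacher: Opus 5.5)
\textbf{The gap is in the construction.} Your chain cannot reach the bound for any choice of rewards, because it is irreducible with a fast reset to state $1$. Let $\tau_1$ be the hitting time of state $1$. Then $\mathbb E_m[\tau_1]=1/p$, and $\mathbb E_i[\tau_1]=1+p\,\mathbb E_{i+1}[\tau_1]\le 2$ for $1<i<m$. Any Poisson solution for a kernel satisfies
$u(x)-u(1)=\mathbb E_x\bigl[\sum_{t<\tau_1}(r(S_t,S_{t+1})-g)\bigr]\in\bigl[-g\,\mathbb E_x[\tau_1],\,(1-g)\,\mathbb E_x[\tau_1]\bigr]$.
Hence $\operatorname{Span}(u)\le\max_x\mathbb E_x[\tau_1]=1/p$ for every reward with values in $[0,1]$, whether state- or transition-dependent.

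With one action, $u_\delta^*$ is a Poisson solution for the minimizing kernel, which is support-preserving and $O(\sqrt\delta)$-close to $P$. The same argument therefore gives $\operatorname{Span}(u_\delta^*)\le 1/p+O(\sqrt\delta)$. The target $\sum_{j=1}^{m-1}p^{-j}$ is at least $p^{-1}+p^{-2}$ once $m\ge3$. Retuning a non-absorbing row at $m$ only changes the constant in an $O(1/p)$ bound.

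The large hitting time $h_1$ of state $m$ is the wrong quantity. The bias span is controlled by the hitting time of the best reference state. Moreover, an irreducible chain whose rewarding state is hard to reach has a correspondingly small gain: with reward at $m$ you get $g=1/(1+ph_1)$ and $\operatorname{Span}(u)=g\,h_1<1/p$. So the ``routine tuning'' step fails outright, not just its constant.

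Your small-$\delta$ continuity argument has a second problem. It needs a strict margin at $\delta=0$, but the construction that works attains the bound with equality there.

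\textbf{The repair} is to give up irreducibility, which the theorem does not require. Make $m$ absorbing, with reward $1$ there and $0$ elsewhere, and keep your rows for $i<m$. After relabeling $i\mapsto|\mathcal S|+1-i$, this is exactly the paper's chain. Now $\{m\}$ is the only recurrent class, so the MDP is weakly communicating. Also $g_\delta^*=1$ and $u(x)-u(m)=-\mathbb E_x[\tau_m]$, so at $\delta=0$ the span equals the full hitting time $h_1$.

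For $\delta$ small enough that feasible rows keep the nominal support (Assumption~\ref{ass:divergence-uncertainty}), normalize $u(1)=0$. For $i<m$ the Bellman equation reads $1+u(i)=\inf_q q(i+1)\,u(i+1)$. This forces $u(i+1)>0$ inductively. So the adversary picks the smallest feasible forward probability $q_i$, and $q_i\le p_\wedge$ because the nominal row is feasible. Hence $u(i+1)=(1+u(i))/q_i\ge(1+u(i))/p_\wedge$, which gives $\operatorname{Span}(u_\delta^*)\ge\sum_{j=1}^{m-1}p_\wedge^{-j}$ directly.

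This is the monotonicity you mention at the end, and it is how the paper argues. It replaces the continuity step entirely. Since the recursion determines $u_\delta^*$ up to an additive constant, the bound holds for every solution of \eqref{eq:constant-gain-bellman}.
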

The proof of Theorems~\ref{thm:robust-average-reward-error},  \ref{thm:divergence-bias-span-bounds} and \ref{thm:divergence-bias-span-lower-bound} are given in
\MainResultsReference.

\graphicspath{{manuscript/figures/}{figures/}{../figures/}}
\newcommand{\RobustAverageRewardAlgorithm}{Algorithm~\ref{alg:dr-amdp-reduction}}
\newcommand{\RobustAverageRewardTheorem}{\hyperref[thm:robust-average-reward-error]{Theorem~\ref*{thm:robust-average-reward-error}}}
\section{Numerical Experiments}\label{sec:numberical_experiments}

Theorem~\ref{thm:robust-average-reward-error} shows that the gain-estimation error decreases
at a rate of $n^{-1/2}$. We verify our main theorem with
numerical validation. Considering both
divergence-based and metric-based uncertainty sets under the two
rectangularity structures. On the constructed MDP instance~\ref{def:hard-mdp-family}, we ran Algorithm~\ref{alg:dr-amdp-reduction} under the four uncertainty sets: KL-divergence, $\chi^2$-divergence, TV and Wasserstein uncertainty sets under both SA- and S-rectangularity.

\begin{definition}
\label{def:hard-mdp-family}
Consider the family of MDPs with state space $\mathcal S=\{0,1,2\}$ and
action space $\mathcal A=\{a_1,a_2\}$. For $p\in(0,1)$, the nominal
transition rows, listed in state order $(0,1,2)$, are
\[
\begin{aligned}
p_{0,a_1}&=(p,1-p,0), & p_{0,a_2}&=(p,0,1-p),\\
p_{1,a_1}=p_{2,a_2}&=(0,1,0), & p_{1,a_2}=p_{2,a_1}&=(0,0,1).
\end{aligned}
\]
For every $x\in\mathcal S$, the rewards are $r(0,a,x)=0$ for
$a\in\mathcal A$, and $r(s,a_1,x)=1$, $r(s,a_2,x)=5/6$ for $s\in\{1,2\}$.
\end{definition}

\begin{figure}[H]
\centering
\includegraphics[width=0.40\linewidth]{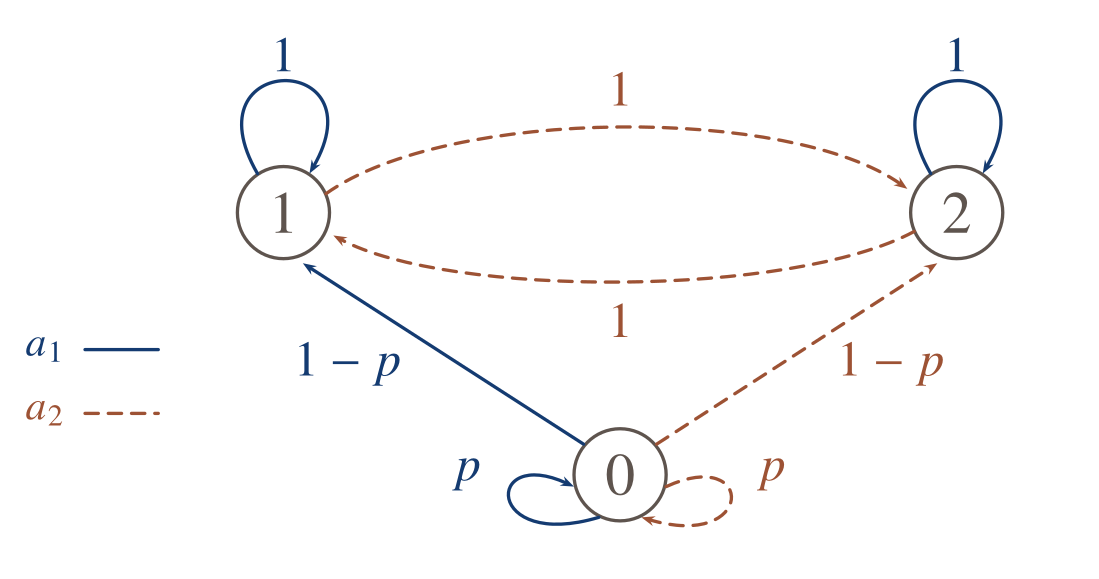}
\caption{Nominal MDP in Definition~\ref{def:hard-mdp-family}.}
\label{fig:hard-mdp-transition}
\end{figure}

This family, adapted from \citet[Figure~2(b)]{wan2024convergence}, is weakly
communicating but not unichain: state $0$ is transient under every policy,
and choosing $a_1$ at states $1$ and $2$ creates two recurrent classes.
We use $p\in\{0.2,0.5,0.8\}$, with $\delta=0.01$ for KL and $\chi^2$
($k=2$), and $\delta=0.1$ for TV and Wasserstein. For Wasserstein, we set
$\ell=2$ and $\rho(s,s')=\mathbbm{1}\{s\ne s'\}$. We evaluate $20$ sample
sizes from $10$ to $10^5$, with $30$ independent trials per sample size.
Figure~\ref{fig:hard-mdp-finite-sample} reports the results.

\begin{figure}[H]
\centering
% Two aligned rows with one shared title for each axis.
\begin{minipage}[c]{0.035\linewidth}
\centering
\rotatebox{90}{\footnotesize mean $\ell_\infty$ error with 95\% CI}
\end{minipage}\hfill
\begin{minipage}[c]{0.96\linewidth}
\centering
{\small SA-rectangular\par}
\smallskip
\begin{subfigure}[t]{0.24\linewidth}
\centering
\includegraphics[width=\linewidth]{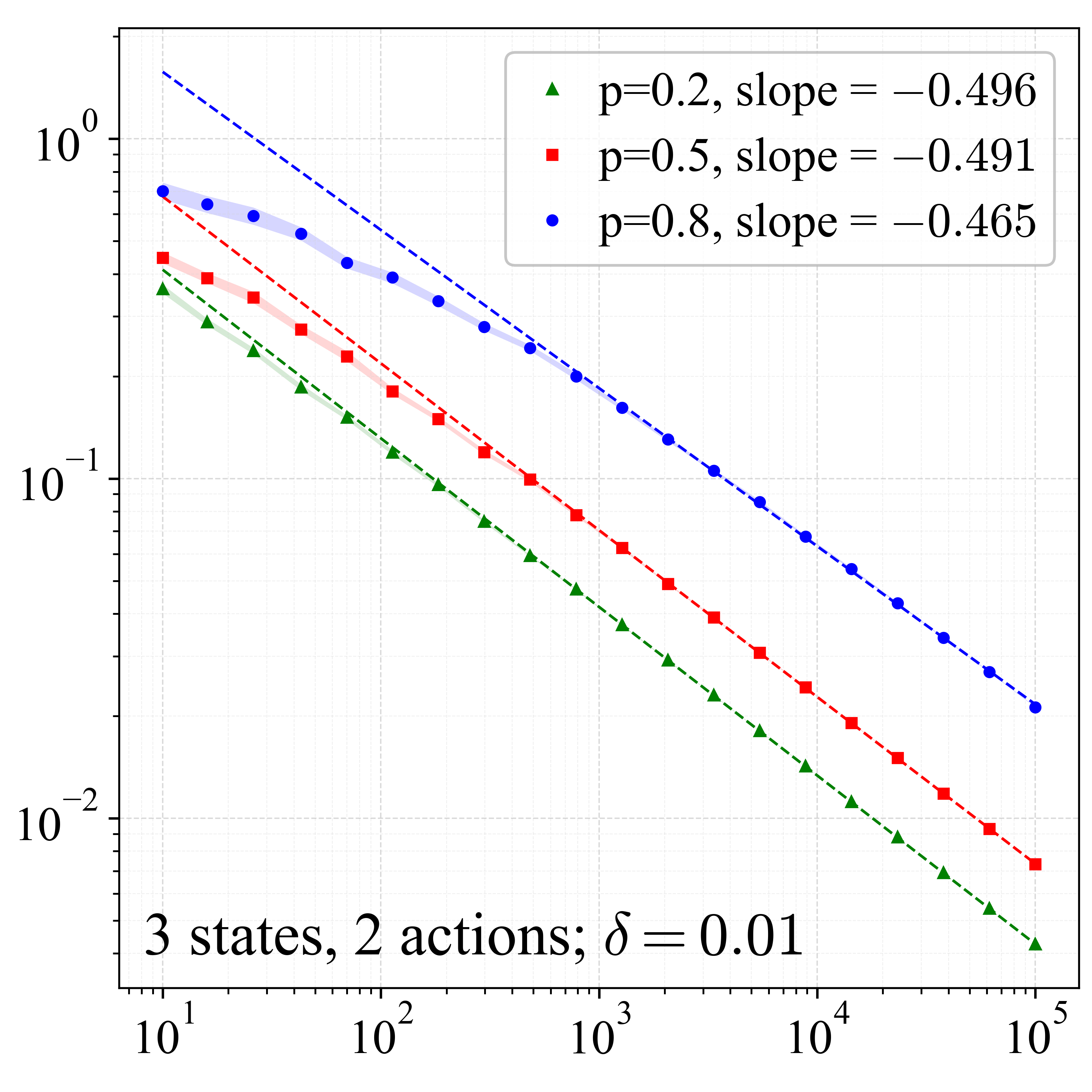}
\caption{KL}
\end{subfigure}\hfill
\begin{subfigure}[t]{0.24\linewidth}
\centering
\includegraphics[width=\linewidth]{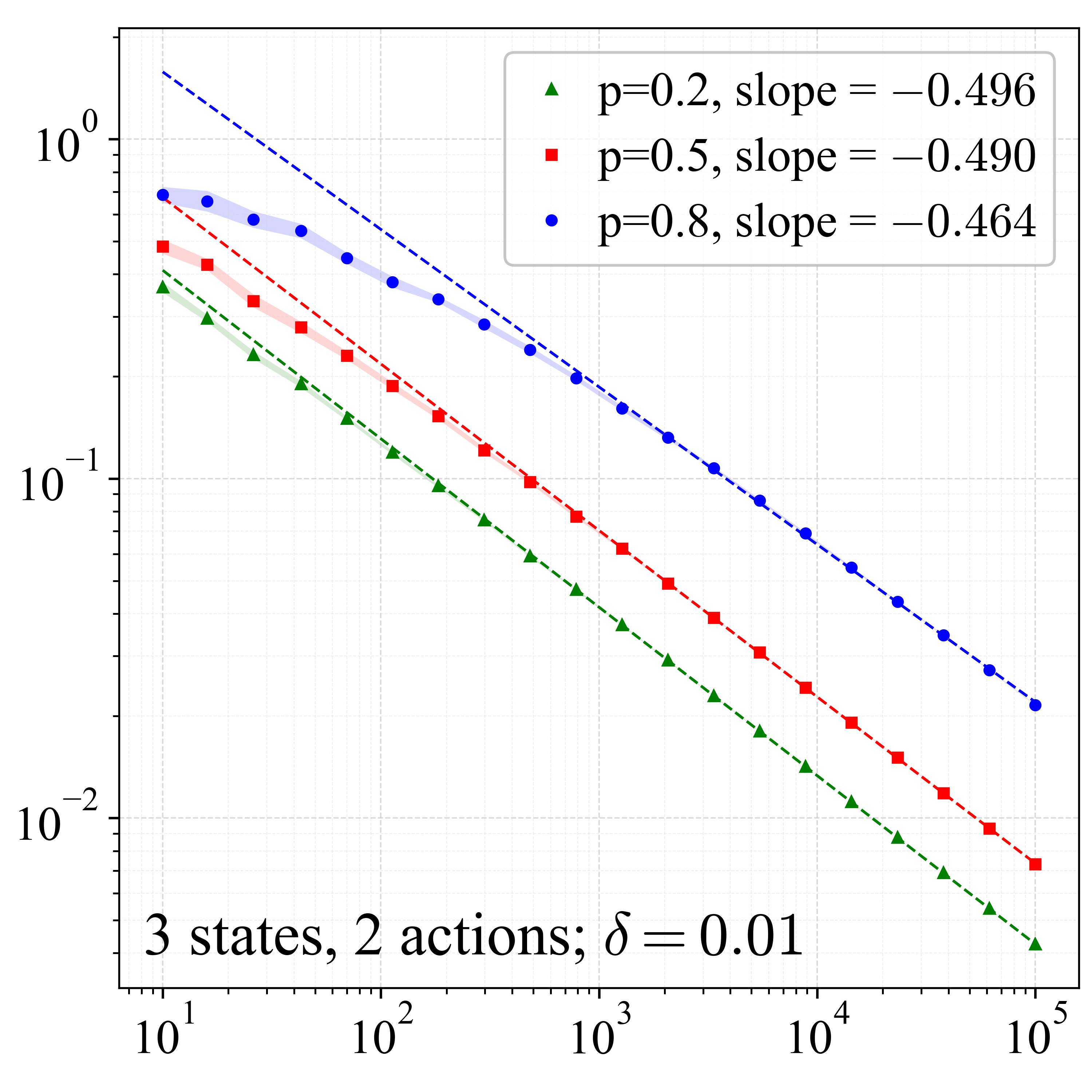}
\caption{$\chi^2$}
\end{subfigure}\hfill
\begin{subfigure}[t]{0.24\linewidth}
\centering
\includegraphics[width=\linewidth]{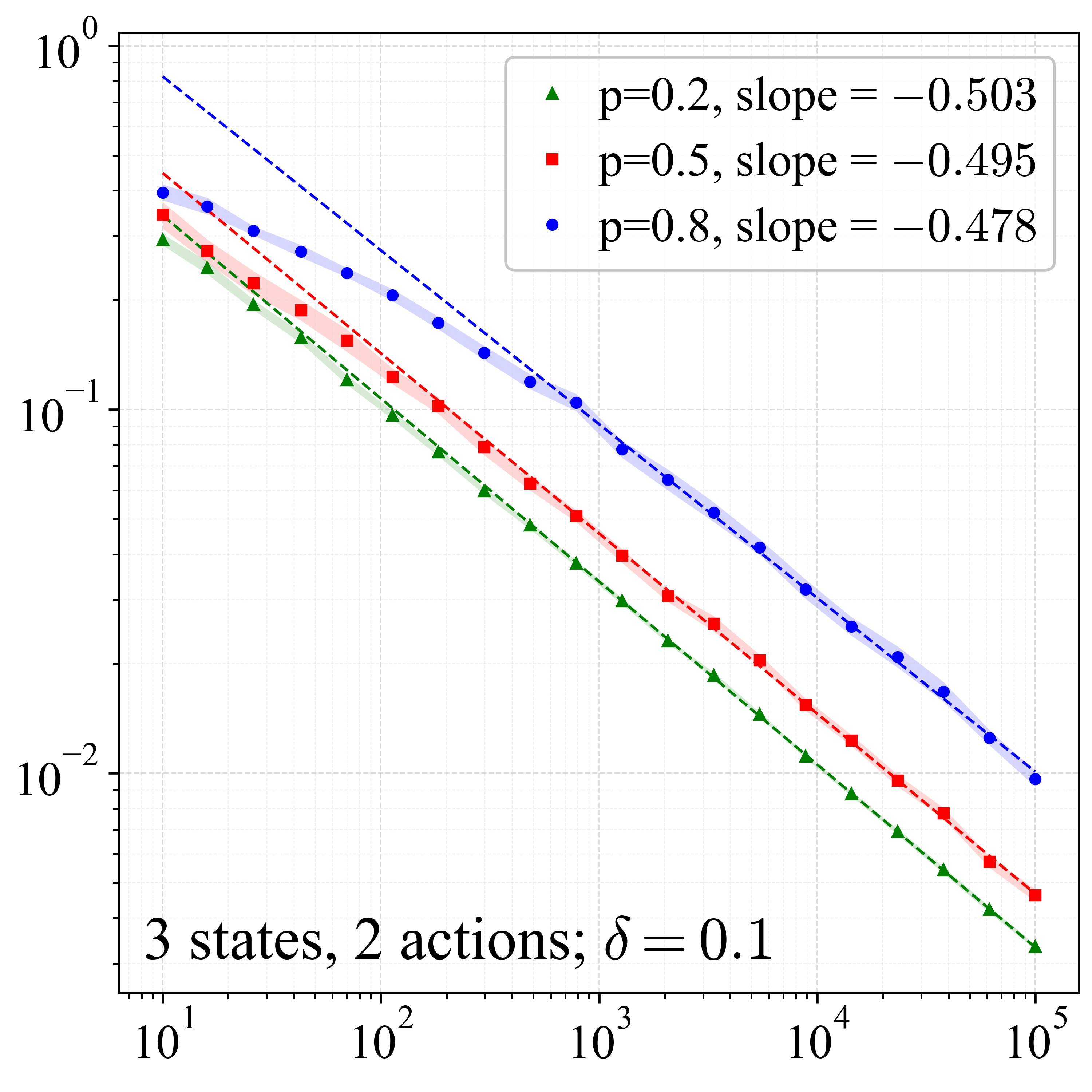}
\caption{TV}
\end{subfigure}\hfill
\begin{subfigure}[t]{0.24\linewidth}
\centering
\includegraphics[width=\linewidth]{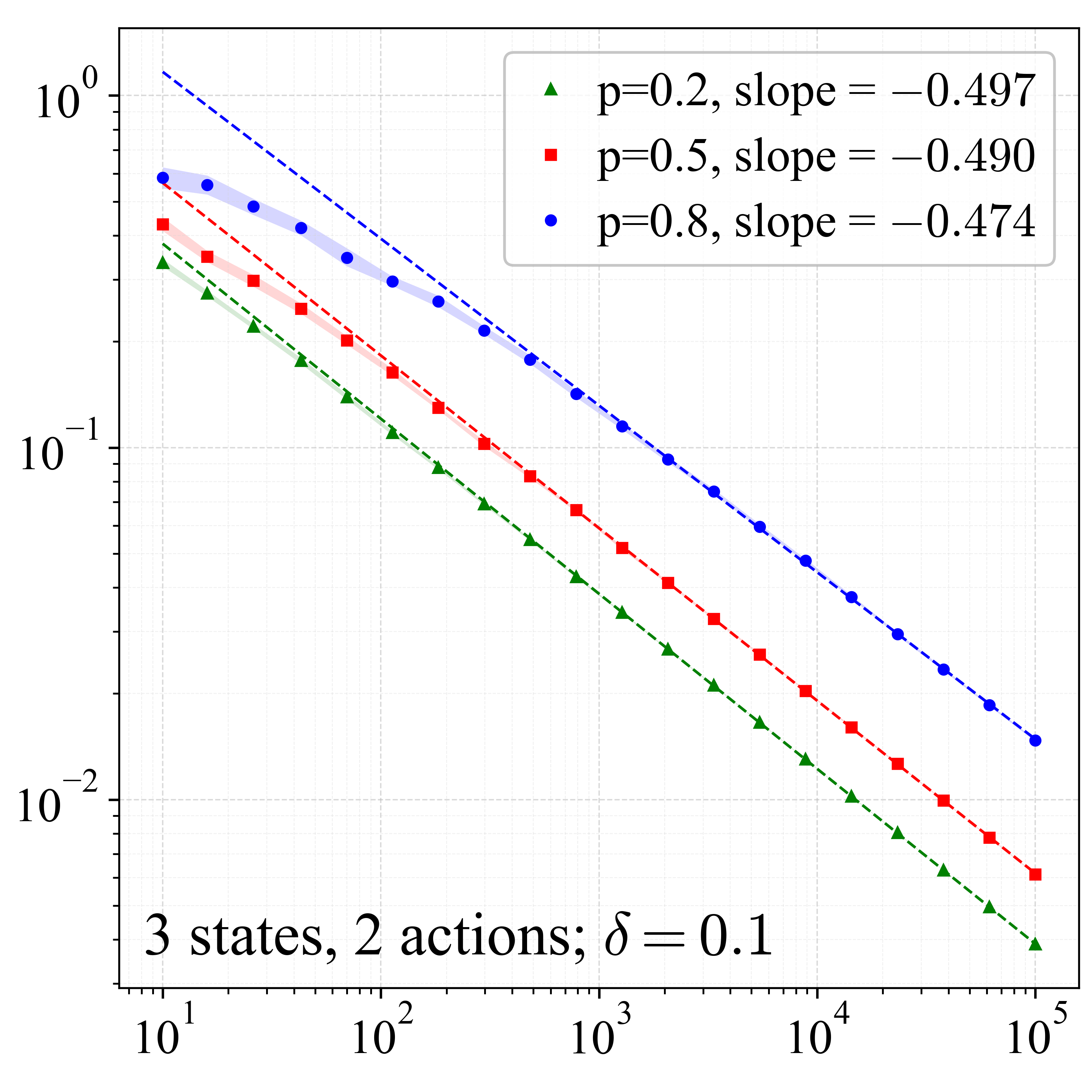}
\caption{Wasserstein}
\end{subfigure}

\medskip
{\small S-rectangular\par}
\smallskip
\begin{subfigure}[t]{0.24\linewidth}
\centering
\includegraphics[width=\linewidth]{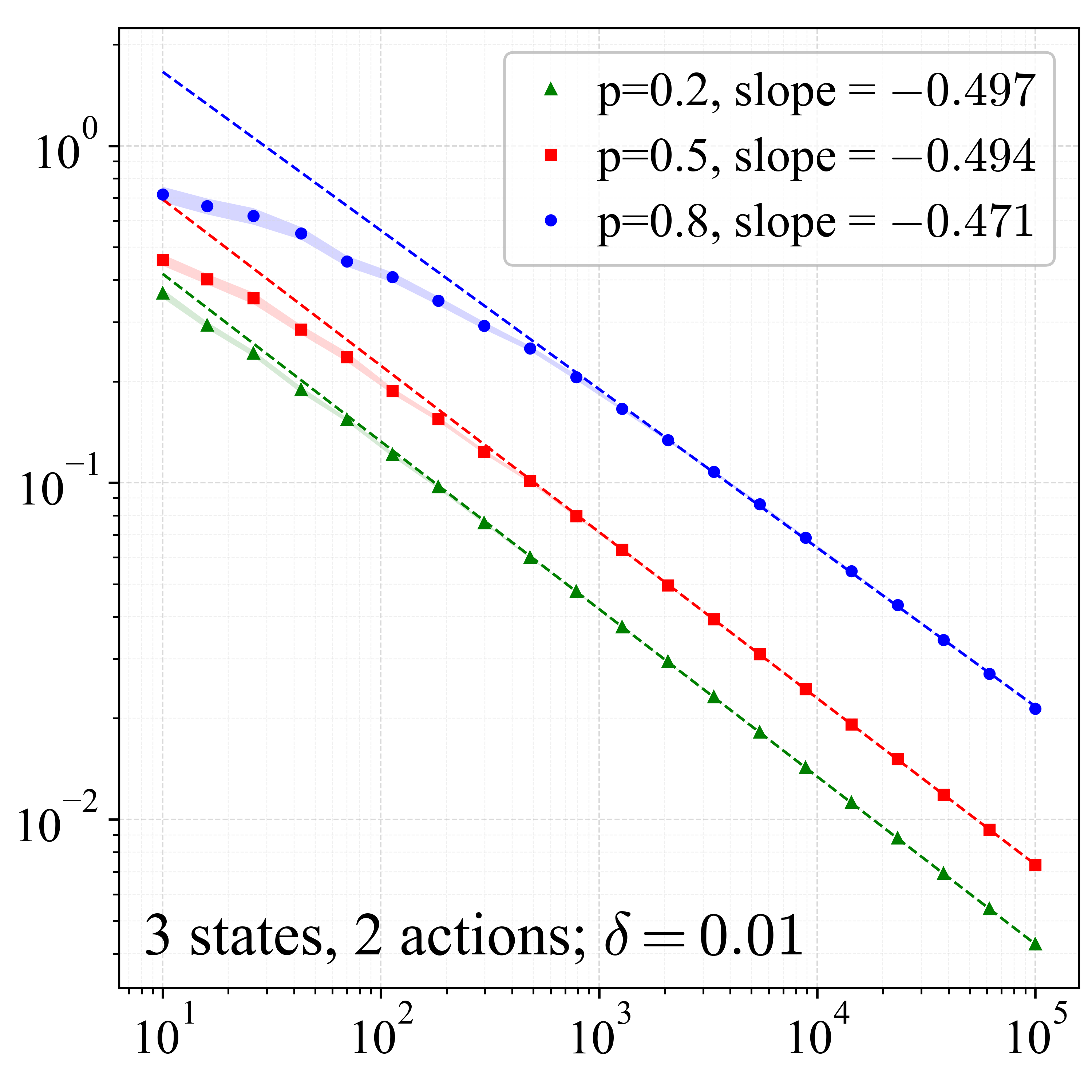}
\caption{KL}
\end{subfigure}\hfill
\begin{subfigure}[t]{0.24\linewidth}
\centering
\includegraphics[width=\linewidth]{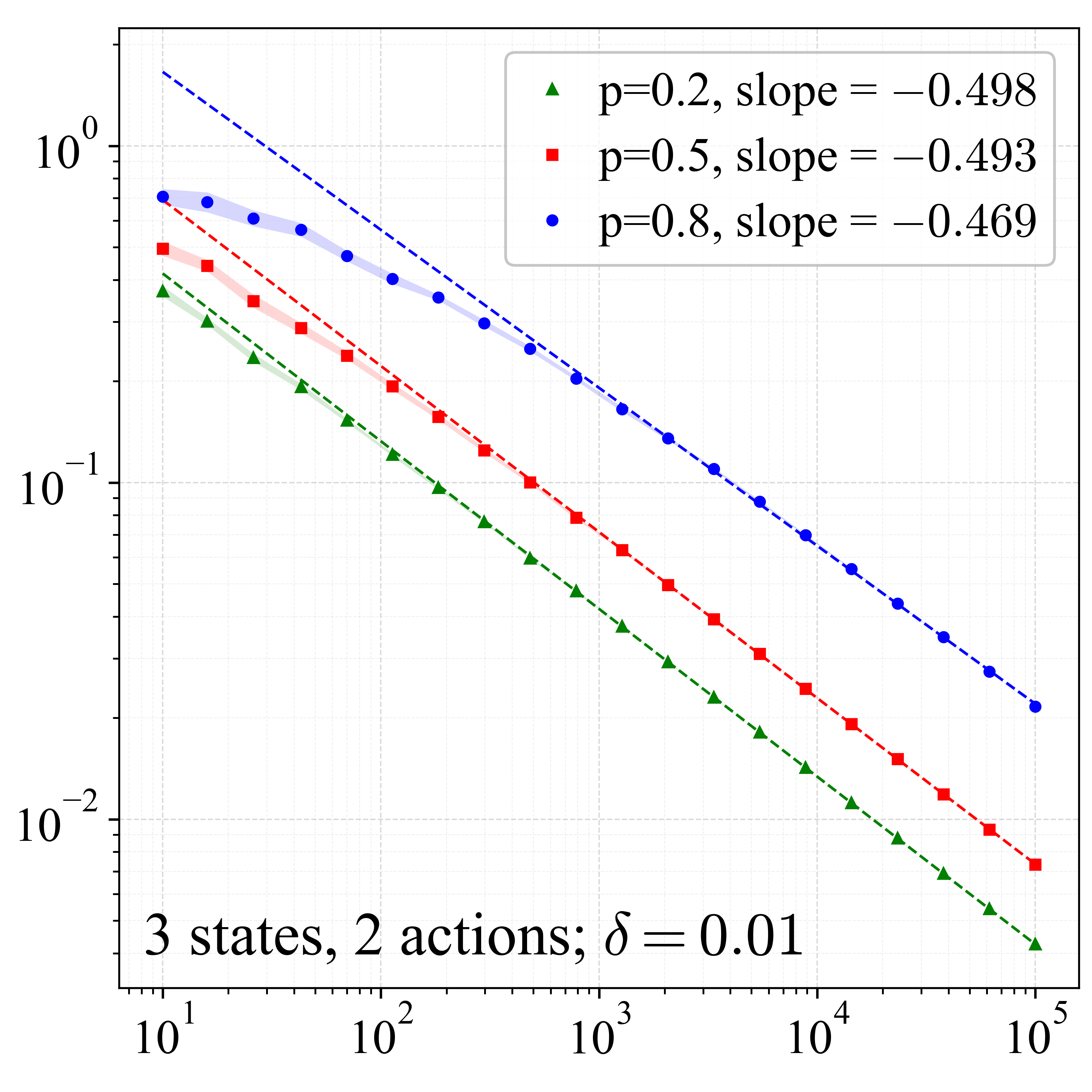}
\caption{$\chi^2$}
\end{subfigure}\hfill
\begin{subfigure}[t]{0.24\linewidth}
\centering
\includegraphics[width=\linewidth]{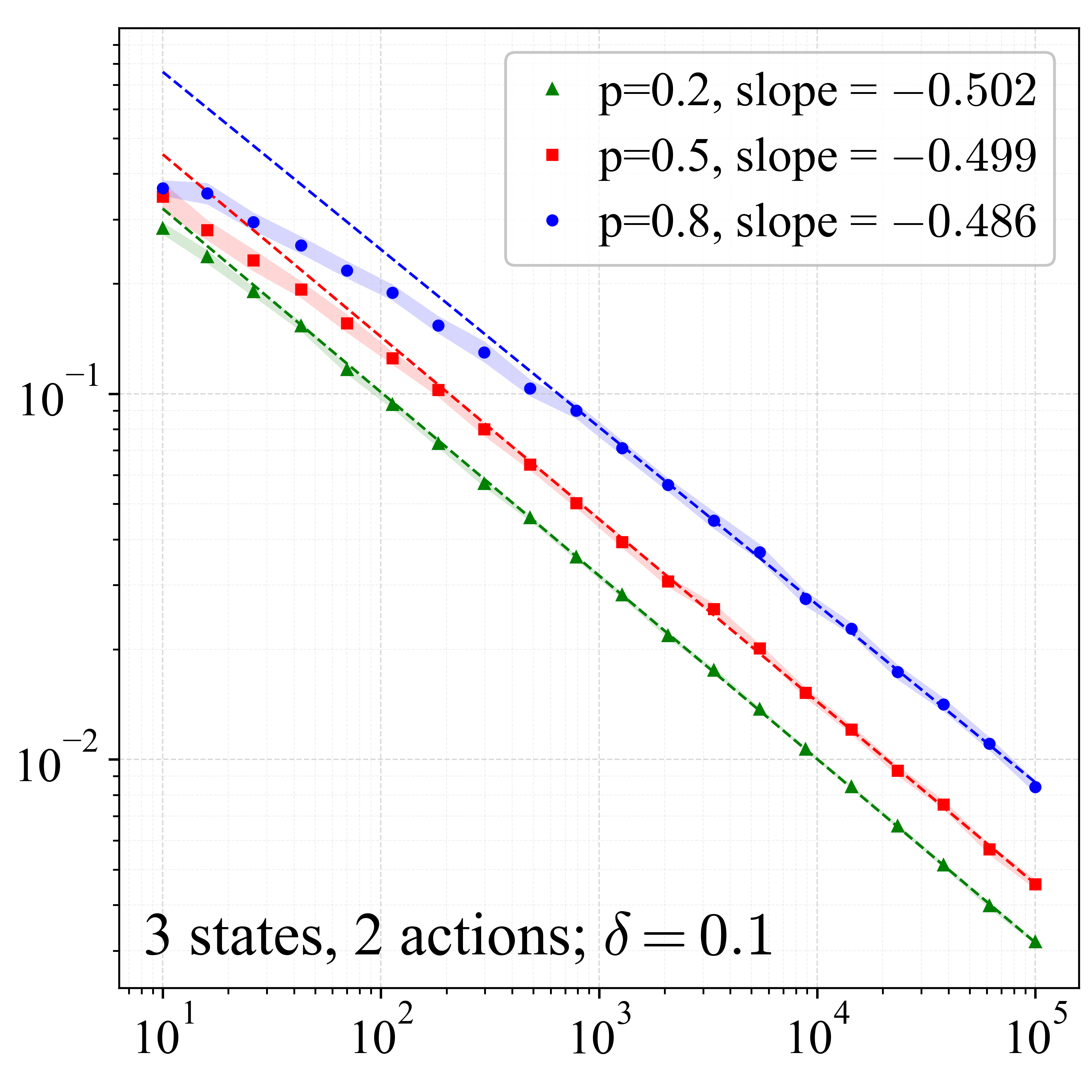}
\caption{TV}
\end{subfigure}\hfill
\begin{subfigure}[t]{0.24\linewidth}
\centering
\includegraphics[width=\linewidth]{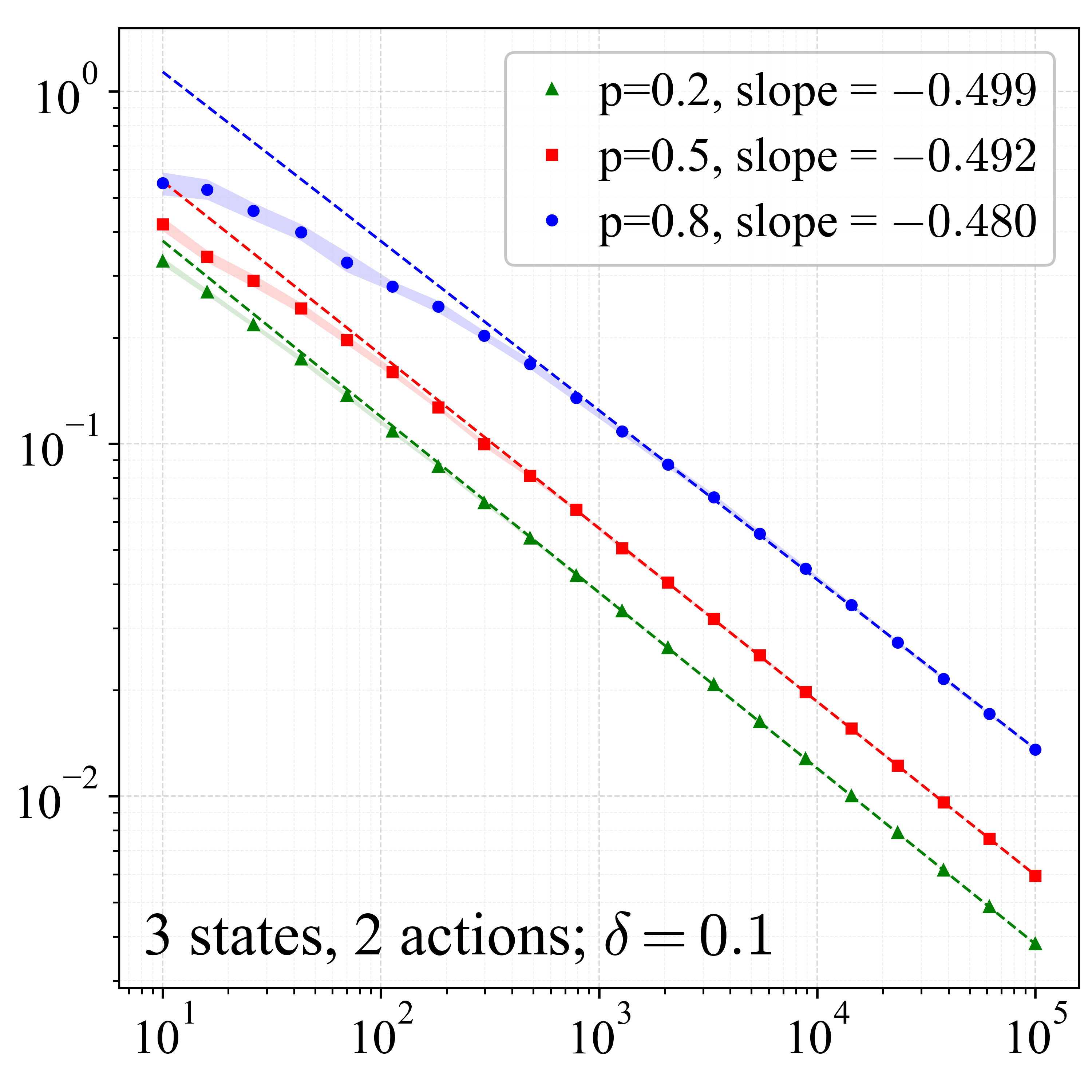}
\caption{Wasserstein}
\end{subfigure}
\end{minipage}
\par\smallskip
{\footnotesize number of samples per state-action pair\par}
\caption[Finite-sample performance on the hard MDP family]{Mean $\ell_\infty$ gain-estimation error of \RobustAverageRewardAlgorithm\ versus the number of samples per state-action pair for $p\in\{0.2,0.5,0.8\}$. Shaded bands indicate percentile-bootstrap 95\% confidence intervals over $30$ trials. Dashed lines are log--log fits using $n\ge10^3$, extended over the full plotted range.}
\label{fig:hard-mdp-finite-sample}
\end{figure}

Figure~\ref{fig:hard-mdp-finite-sample} plots the mean $\ell_\infty$ estimation error against the number of samples per state-action pair $n$, with both axes on logarithmic scales. Across all eight configurations and all three values of $p$, the fitted slopes range from $-0.503$ to $-0.464$, indicating an empirical error decay close to $n^{-1/2}$. This agrees with Theorem~\ref{thm:robust-average-reward-error}, which implies an $O(\epsilon^{-2})$ sample complexity, up to problem-dependent and logarithmic factors, for achieving estimation error at most $\epsilon$. Additional large-scale experiments on a nominal MDP with $20$ states and $30$ actions are provided in
Appendix~\ref{sec:additional-experiments}
for all eight combinations of rectangularity structures and uncertainty sets, where the fitted slopes range from $-0.527$ to $-0.490$ and exhibit the same convergence behavior.

\section*{Acknowledgement}
N. Si's research is supported in part by the Early Career Scheme [Grant 26210125] from the Hong Kong Research Grants Council.

\bibliographystyle{author_year}
\bibliography{reference}

\clearpage

\appendix
% \appendixpage

\section{Basic Properties}
\label{sec:basic-properties}

This appendix provides results underlying
Propositions~\ref{prop:divergence-constant-gain} and~\ref{prop:metric-constant-gain}
and proves Corollary~\ref{cor:stationary-average-optimal-policy}.
We first show that the inf--sup equation has a solution, justify exchanging
the infimum and maximum, and prove uniqueness of the constant gain when the
rewards depend on the realized next state. We then apply these results to KL-
and $f_k$-divergence uncertainty sets and treat TV and Wasserstein uncertainty
sets by taking a limit of normalized discounted values.

For $p\in\Delta(\mathcal S)$ and $f:\mathcal S\to\mathbb R$, write
$p[f]:=\sum_{x\in\mathcal S}p(x)f(x)$.

\begin{definition}
\label{def:adversary-weakly-communicating}
The adversary is weakly communicating if, for every $Q\in\mathcal P$, the
MDP with transition kernel $Q$ is weakly communicating in the sense of
Definition~\ref{def:weakly-communicating-mdp}.
\end{definition}

The preceding definition is identical to the adversary-side definition in
\citet[Definition~3]{wang2025bellmanoptimalityaveragerewardrobust}. The inf--sup equation in the following
lemma is adapted from \citet[Theorem~6]{wang2025bellmanoptimalityaveragerewardrobust} to rewards that depend on
the realized next state.

\begin{lemma}
\label{lem:inf-sup-constant-gain-existence}
Suppose that $\mathcal P=\prod_{s\in\mathcal S}\mathcal P_s$ is
$S$-rectangular, the adversary is weakly communicating, and every
$\mathcal P_s$ is compact. Then there exists
$(u',\alpha')\in\mathbb R^{\mathcal S}\times[0,1]$ such that
\begin{equation*}
\tag{IS}\label{eq:inf-sup-constant-gain}
u'(s)
=
\inf_{q_s\in\mathcal P_s}\max_{\phi\in\Delta(\mathcal A)}
\sum_{a\in\mathcal A}\sum_{s'\in\mathcal S}
\phi(a)q_{s,a}(s')
\bigl[r(s,a,s')-\alpha'+u'(s')\bigr],
\qquad s\in\mathcal S.
\end{equation*}
\end{lemma}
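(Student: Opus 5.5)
The plan is to adapt the vanishing-discount argument of \citet[Theorem~6]{wang2025bellmanoptimalityaveragerewardrobust} to the inf--sup operator. For $\gamma\in(0,1)$, define the discounted inf--sup operator
\[
\mathcal L_\gamma(v)(s):=\inf_{q_s\in\mathcal P_s}\max_{\phi\in\Delta(\mathcal A)}\sum_{a,s'}\phi(a)q_{s,a}(s')\bigl[r(s,a,s')+\gamma v(s')\bigr].
\]
This operator is monotone and a $\gamma$-contraction in $\|\cdot\|_\infty$. The reason is that $\inf\max$ of affine functions in $v$ changes by at most $\gamma\|v-w\|_\infty$. So $\mathcal L_\gamma$ has a unique fixed point $w_\gamma$ with $0\le w_\gamma\le(1-\gamma)^{-1}$. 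Compactness of $\mathcal P_s$ ensures that the infimum is attained by some $q_s$. We then set
\[
h_\gamma:=w_\gamma-\min_s w_\gamma(s)\mathbf 1,\qquad \alpha_\gamma:=(1-\gamma)\min_s w_\gamma(s)\in[0,1].
\]
With these, $h_\gamma(s)+\alpha_\gamma=\mathcal L_\gamma$-type expressions hold exactly, up to the term $(1-\gamma)h_\gamma$ inside, which is the usual normalization.

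The key step is a uniform bound $\sup_{\gamma<1}\operatorname{Span}(w_\gamma)<\infty$. Given this bound, we take a sequence $\gamma_m\uparrow1$ along which $h_{\gamma_m}\to u'$ and $\alpha_{\gamma_m}\to\alpha'$ by Bolzano--Weierstrass. Continuity of the map $(v,c)\mapsto\inf_{q_s}\max_\phi\sum\phi q[r-c+v]$, which is $1$-Lipschitz in $(v,c)$ under the sup norm, then passes the fixed-point identity
\[
h_\gamma(s)=\inf_{q_s}\max_\phi\sum\phi(a)q_{s,a}(s')\bigl[r-\alpha_\gamma+\gamma h_\gamma(s')\bigr]
\]
to the limit. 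Since $\|(1-\gamma)h_\gamma\|_\infty\to0$ by the span bound, the limit is exactly \eqref{eq:inf-sup-constant-gain}.

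The span bound is the main obstacle, and it is where adversary weak communication enters. My first approach is as follows. Fix $\gamma$, and for each $s$ let $q^\gamma_s\in\mathcal P_s$ attain the infimum; this gives a kernel $Q^\gamma\in\mathcal P$. Then $w_\gamma$ is the optimal discounted value of the ordinary (non-robust) MDP with kernel $Q^\gamma$. Indeed, $w_\gamma(s)=\max_\phi\sum\phi\,q^\gamma_{s,a}[r+\gamma w_\gamma]$ is precisely the standard Bellman optimality equation for $Q^\gamma$. Since $Q^\gamma$ is weakly communicating, $\operatorname{Span}$ of its optimal discounted value is bounded by a constant $H(Q^\gamma)$, for example $2\operatorname{Span}(h^*_{Q^\gamma})$ or the diameter-type bound on communicating states plus hitting times from transient states.

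The difficulty is making this bound uniform over $Q\in\mathcal P$. I would argue this by contradiction using compactness of $\mathcal P$. Suppose $\operatorname{Span}(w_{\gamma_m})\to\infty$, and pass to a subsequence with $Q^{\gamma_m}\to Q^\infty\in\mathcal P$. Then $Q^\infty$ is weakly communicating. Fix a stationary deterministic policy under $Q^\infty$ whose induced chain reaches the closed communicating class $C$ and moves among its states. Perturbing $Q$ slightly keeps every positive transition used along these paths positive, bounded below by half its value. Hence, for $m$ large, all states reach any state of $C$ under $Q^{\gamma_m}$ within $|\mathcal S|$ steps with probability bounded below by some $\eta>0$. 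This yields a uniform expected hitting-time bound $T$. The standard argument then gives $w_\gamma(s)\ge\gamma^{T}w_\gamma(s_0)-T$, and hence $\operatorname{Span}(w_\gamma)\le$ const, which is a contradiction.

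One subtlety remains. The states that are transient under every policy for $Q^\infty$ might behave differently for nearby $Q$. This is harmless, because reachability of $C$ is preserved under small perturbations and only lower bounds on values matter.
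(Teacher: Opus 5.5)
Your overall architecture (discounted inf--sup fixed point, normalization by $\min_s w_\gamma$, subsequential limit, $1$-Lipschitz passage to the limit) matches the paper's. Using an exactly attaining kernel $Q^\gamma$ in place of the paper's $\varepsilon$-optimal $Q_\varepsilon$ is also legitimate, since $q_s\mapsto\max_{a}q_{s,a}[r(s,a,\cdot)+\gamma w_\gamma]$ is continuous on the compact set $\mathcal P_s$.

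The gap is in the uniform span bound, specifically your closing claim that ``only lower bounds on values matter.'' Your hitting-time argument gives $w_\gamma(y)-w_\gamma(x)\le\mathbb E[\tau_y]$ for targets $y\in C$, hence $\min_s w_\gamma(s)\ge\max_{y\in C}w_\gamma(y)-T$. You cannot use it with target $y\in\arg\max_s w_\gamma(s)$ when that maximizer lies in $\mathcal S\setminus C$: such states are not reachable from $C$ under $Q^\infty$, so no uniform hitting-time bound to them exists. Nothing in your argument then stops $w_{\gamma_m}$ at a transient state from exceeding $\max_{C}w_{\gamma_m}$ by an unbounded amount. For instance, if the optimal policy for $Q^{\gamma_m}$ could keep the chain outside $C$ for a time of order $1/\epsilon_m$ while collecting reward $1$ (a self-loop of probability $1-\epsilon_m$, $\epsilon_m\to0$), the span would be of order $1/\epsilon_m$ even though $C$ stays reachable. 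This is exactly the paper's term $\xi_2$. The paper controls it through the entrance time $T_k$ into $C_{Q_k}$ under the optimal policy $\pi_\varepsilon$, uniformly over the finite cover, via \citet[Lemma~6]{wang2025bellmanoptimalityaveragerewardrobust}.

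The missing idea is an upper bound at transient states, which uses the other half of weak communication of $Q^\infty$: every state of $T:=\mathcal S\setminus C$ is transient under \emph{every} stationary policy. For each of the finitely many $\pi\in\Pi_{\mathrm{SD}}$, the restriction $Q^\infty_{\pi,TT}$ therefore has spectral radius below one. By continuity, $\max_{\pi\in\Pi_{\mathrm{SD}}}\|(I-Q_{\pi,TT})^{-1}\mathbf 1\|_\infty$ stays bounded for $Q$ near $Q^\infty$. Work with the fixed set $C=C_{Q^\infty}$ here, because $Q^{\gamma_m}$ may carry extra small transitions (for example from $C$ back into $T$) that destroy transience under $Q^{\gamma_m}$ itself. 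Take $\pi_m\in\Pi_{\mathrm{SD}}$ optimal for the fixed-kernel discounted MDP with kernel $Q^{\gamma_m}$. The strong Markov property at $T_C$, together with $r\le1$ and $w_{\gamma_m}\ge0$, gives
\[
w_{\gamma_m}(s)\le\mathbb E^{\pi_m}_{Q^{\gamma_m}}[T_C\mid S_0=s]+\max_{y\in C}w_{\gamma_m}(y).
\]
Combining this with your lower bound yields the uniform span bound.
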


\begin{proof}
For $\pi\in\Pi_{\mathrm{SR}}$ and $Q\in\mathcal P$, write
\[
V_\gamma^{\pi,Q}(s)
:=
{\operatorname E}_Q^\pi\!\left[
\sum_{t=0}^{\infty}\gamma^t r(S_t,A_t,S_{t+1})
\mathrel{\Big|}S_0=s
\right].
\]
Fix $Q=\{q_{s,a}\}_{s,a}\in\mathcal P$ and define
\[
\bar r_Q(s,a)
:=
\sum_{s'\in\mathcal S}q_{s,a}(s')r(s,a,s').
\]
For every $\pi\in\Pi_{\mathrm{SR}}$, conditional expectation gives
\[
{\operatorname E}_Q^\pi\!\left[
\sum_{t=0}^{\infty}\gamma^t r(S_t,A_t,S_{t+1})
\mathrel{\Big|}S_0=s
\right]
=
{\operatorname E}_Q^\pi\!\left[
\sum_{t=0}^{\infty}\gamma^t\bar r_Q(S_t,A_t)
\mathrel{\Big|}S_0=s
\right].
\]
Thus, once $Q$ is fixed, this is an ordinary discounted MDP, and the
stationary $\varepsilon$-optimality argument in the proof of
\citet[Theorem~6]{wang2025bellmanoptimalityaveragerewardrobust} applies.

The finite-cover construction and Lemmas~5--8 in Appendix~D of
\citet{wang2025bellmanoptimalityaveragerewardrobust} concern only induced transition kernels, communicating
classes, and hitting times, and hence remain valid when the reward is
$r(s,a,s')$. They yield a finite cover $\{G_{Q_k}\}_{k\in B'}$ of $\mathcal P$,
classes $C_{Q_k}$, and $\delta'>0$ such that, for $Q\in G_{Q_k}$ and
$y\in C_{Q_k}$, some $\widetilde\pi\in\Pi_{\mathrm{SR}}$ satisfies
\[
\max_{w\in\mathcal S}
{\operatorname E}_Q^{\widetilde\pi}[\tau_y\mid S_0=w]
\le \frac{|\mathcal S|}{\delta'}.
\]
Moreover, with
\[
T_k:=\inf\{t\ge0:S_t\in C_{Q_k}\},
\]
\citet[Lemma~6]{wang2025bellmanoptimalityaveragerewardrobust} and the finite cover give
\[
\max_{k\in B'}
\sup_{\substack{Q\in G_{Q_k},\,\pi\in\Pi_{\mathrm{SR}}\\
w\in\mathcal S}}
{\operatorname E}_Q^\pi[T_k\mid S_0=w]
<\infty.
\]

For $\gamma\in(0,1)$, let $v'_\gamma$ solve
\begin{equation*}
\tag{D}\label{eq:inf-sup-discounted}
v'_\gamma(s)
=
\inf_{q_s\in\mathcal P_s}\max_{\phi\in\Delta(\mathcal A)}
\sum_{a\in\mathcal A}\sum_{s'\in\mathcal S}
\phi(a)q_{s,a}(s')
\bigl[r(s,a,s')+\gamma v'_\gamma(s')\bigr].
\end{equation*}
Fix $\varepsilon>0$. For each $s\in\mathcal S$, choose
$q_{\varepsilon,s}=\{q_{\varepsilon,s,a}\}_{a\in\mathcal A}
\in\mathcal P_s$ such that
\[
\max_{\phi\in\Delta(\mathcal A)}
\sum_{a,s'}\phi(a)q_{\varepsilon,s,a}(s')
\bigl[r(s,a,s')+\gamma v'_\gamma(s')\bigr]
\le v'_\gamma(s)+(1-\gamma)\varepsilon.
\]
By $S$-rectangularity, the statewise choices form
$Q_\varepsilon\in\mathcal P$. Discounted fixed-point comparison gives,
simultaneously for all $s\in\mathcal S$,
\[
v'_\gamma(s)
\le
\max_{\pi\in\Pi_{\mathrm{SR}}}V_\gamma^{\pi,Q_\varepsilon}(s)
\le
v'_\gamma(s)+\varepsilon.
\]
The first inequality follows because $Q_\varepsilon$ is feasible in the
infimum in \eqref{eq:inf-sup-discounted}, while
$(1-\gamma)\varepsilon$ accumulates to at most
$\varepsilon$. For this finite discounted MDP, choose
$\pi_\varepsilon\in\Pi_{\mathrm{SD}}$ that is optimal for every initial state, so that
\[
\max_{\pi\in\Pi_{\mathrm{SR}}}V_\gamma^{\pi,Q_\varepsilon}(s)
=
V_\gamma^{\pi_\varepsilon,Q_\varepsilon}(s),
\qquad s\in\mathcal S.
\]
Choose $k\in B'$ with $Q_\varepsilon\in G_{Q_k}$ and let
\begin{equation*}
\tag{Y}\label{eq:fixed-policy-class-maximizer}
\begin{gathered}
s_\vee\in\arg\max_{s\in\mathcal S}v'_\gamma(s),
\qquad
s_\wedge\in\arg\min_{s\in\mathcal S}v'_\gamma(s),\\
y_\vee\in\arg\max_{y\in C_{Q_k}}
V_\gamma^{\pi_\varepsilon,Q_\varepsilon}(y).
\end{gathered}
\end{equation*}
The preceding comparison gives
\begin{equation*}
\tag{1}\label{eq:span-decomposition}
\begin{aligned}
\operatorname{Span}(v'_\gamma)
&\le
V_\gamma^{\pi_\varepsilon,Q_\varepsilon}(s_\vee)
-V_\gamma^{\pi_\varepsilon,Q_\varepsilon}(s_\wedge)+2\varepsilon \\
&=
\underbrace{
V_\gamma^{\pi_\varepsilon,Q_\varepsilon}(s_\vee)
-V_\gamma^{\pi_\varepsilon,Q_\varepsilon}(y_\vee)
}_{\xi_2}
+
\underbrace{
V_\gamma^{\pi_\varepsilon,Q_\varepsilon}(y_\vee)
-V_\gamma^{\pi_\varepsilon,Q_\varepsilon}(s_\wedge)
}_{\xi_1}
+2\varepsilon.
\end{aligned}
\end{equation*}

\emph{Upper-bounding $\xi_1$.}
Fix $y\in C_{Q_k}$. By \citet[Lemma~8]{wang2025bellmanoptimalityaveragerewardrobust}, there exists
$\widetilde\pi\in\Pi_{\mathrm{SR}}$ such that
\[
\max_{w\in\mathcal S}
{\operatorname E}_{Q_\varepsilon}^{\widetilde\pi}
[\tau_y\mid S_0=w]
\le \frac{|\mathcal S|}{\delta'}.
\]
Since $V_\gamma^{\pi_\varepsilon,Q_\varepsilon}$ is the optimal discounted
value for $Q_\varepsilon$, its Bellman inequality under $\widetilde\pi$
gives, by iteration up to $\tau_y$, for every $x\in\mathcal S$,
\begin{equation*}
\tag{2}\label{eq:hitting-time-decomposition}
\begin{aligned}
V_\gamma^{\pi_\varepsilon,Q_\varepsilon}(x)
&\ge
{\operatorname E}_{Q_\varepsilon}^{\widetilde\pi}\!\left[
\sum_{t=0}^{\tau_y-1}\gamma^t r(S_t,A_t,S_{t+1})
\mathrel{\Big|}S_0=x
\right]\\
&\quad+
{\operatorname E}_{Q_\varepsilon}^{\widetilde\pi}\!\left[
\gamma^{\tau_y}
V_\gamma^{\pi_\varepsilon,Q_\varepsilon}(y)
\mathrel{\Big|}S_0=x
\right]\\
&\ge
V_\gamma^{\pi_\varepsilon,Q_\varepsilon}(y)
{\operatorname E}_{Q_\varepsilon}^{\widetilde\pi}
[\gamma^{\tau_y}\mid S_0=x].
\end{aligned}
\end{equation*}
The stopped inequality follows by truncation and boundedness of the
discounted value. The first sum includes the reward on the transition
entering $y$. Hence
\[
\begin{aligned}
&V_\gamma^{\pi_\varepsilon,Q_\varepsilon}(y)
-V_\gamma^{\pi_\varepsilon,Q_\varepsilon}(x)\\
&\quad\le
V_\gamma^{\pi_\varepsilon,Q_\varepsilon}(y)
\left(1-
{\operatorname E}_{Q_\varepsilon}^{\widetilde\pi}
[\gamma^{\tau_y}\mid S_0=x]
\right)\\
&\quad=
(1-\gamma)V_\gamma^{\pi_\varepsilon,Q_\varepsilon}(y)
\,
{\operatorname E}_{Q_\varepsilon}^{\widetilde\pi}\!\left[
\frac{1-\gamma^{\tau_y}}{1-\gamma}
\mathrel{\Big|}S_0=x
\right]\\
&\quad\le
{\operatorname E}_{Q_\varepsilon}^{\widetilde\pi}[\tau_y\mid S_0=x].
\end{aligned}
\]
The last inequality follows from
$0\le V_\gamma^{\pi_\varepsilon,Q_\varepsilon}(y)
\le(1-\gamma)^{-1}$ and
$(1-\gamma^t)/(1-\gamma)=\sum_{j=0}^{t-1}\gamma^j\le t$.
Taking $x=s_\wedge$ and $y=y_\vee$ therefore gives
\begin{equation*}
\tag{3}\label{eq:xi-one-bound}
\xi_1
\le
{\operatorname E}_{Q_\varepsilon}^{\widetilde\pi}
[\tau_{y_\vee}\mid S_0=s_\wedge]
\le \frac{|\mathcal S|}{\delta'}.
\end{equation*}

\emph{Upper-bounding $\xi_2$.}
Let $T_k:=\inf\{t\ge0:S_t\in C_{Q_k}\}$. Then
\begin{equation*}
\tag{4}\label{eq:entrance-time-decomposition}
\begin{aligned}
\xi_2
&=
V_\gamma^{\pi_\varepsilon,Q_\varepsilon}(s_\vee)
-V_\gamma^{\pi_\varepsilon,Q_\varepsilon}(y_\vee)\\
&=
{\operatorname E}_{Q_\varepsilon}^{\pi_\varepsilon}\!\left[
\sum_{t=0}^{T_k-1}\gamma^t r(S_t,A_t,S_{t+1})
+\sum_{t=T_k}^{\infty}\gamma^t r(S_t,A_t,S_{t+1})
\mathrel{\Big|}S_0=s_\vee
\right]\\
&\quad-V_\gamma^{\pi_\varepsilon,Q_\varepsilon}(y_\vee)\\
&\stackrel{\mathrm{(i)}}{=}
{\operatorname E}_{Q_\varepsilon}^{\pi_\varepsilon}\!\left[
\sum_{t=0}^{T_k-1}\gamma^t r(S_t,A_t,S_{t+1})
+\gamma^{T_k}
V_\gamma^{\pi_\varepsilon,Q_\varepsilon}(S_{T_k})
\mathrel{\Big|}S_0=s_\vee
\right]\\
&\quad-V_\gamma^{\pi_\varepsilon,Q_\varepsilon}(y_\vee)\\
&\stackrel{\mathrm{(ii)}}{\le}
{\operatorname E}_{Q_\varepsilon}^{\pi_\varepsilon}\!\left[
\sum_{t=0}^{T_k-1}\gamma^t r(S_t,A_t,S_{t+1})
\mathrel{\Big|}S_0=s_\vee
\right]\\
&\le
{\operatorname E}_{Q_\varepsilon}^{\pi_\varepsilon}
[T_k\mid S_0=s_\vee].
\end{aligned}
\end{equation*}
Here \emph{(i)} follows from the strong Markov property, while
\emph{(ii)} follows from $S_{T_k}\in C_{Q_k}$, the choice of $y_\vee$ in
\eqref{eq:fixed-policy-class-maximizer}, and nonnegativity of the discounted
value. Consequently,
\begin{equation*}
\tag{5}\label{eq:xi-two-bound}
\xi_2
\le
\max_{k\in B'}
\sup_{\substack{Q\in G_{Q_k},\,\pi\in\Pi_{\mathrm{SR}}\\
w\in\mathcal S}}
{\operatorname E}_Q^\pi[T_k\mid S_0=w]
<\infty.
\end{equation*}
Combining \eqref{eq:span-decomposition}--\eqref{eq:xi-two-bound}, letting
$\varepsilon\downarrow0$, and applying the
uniform entrance-time bound yields
\begin{equation*}
\tag{6}\label{eq:uniform-inf-sup-span-bound}
\sup_{\gamma\in(0,1)}\operatorname{Span}(v'_\gamma)
\le
\frac{|\mathcal S|}{\delta'}
+\max_{k\in B'}
\sup_{\substack{Q\in G_{Q_k},\,\pi\in\Pi_{\mathrm{SR}}\\
w\in\mathcal S}}
{\operatorname E}_Q^\pi[T_k\mid S_0=w]
<\infty.
\end{equation*}
Fix $s_0\in\mathcal S$ and set
\[
h_\gamma:=v'_\gamma-v'_\gamma(s_0)\mathbf 1,
\qquad
\alpha_\gamma:=(1-\gamma)v'_\gamma(s_0).
\]
Since $r(s,a,s')\in[0,1]$, \eqref{eq:inf-sup-discounted} gives
$0\le v'_\gamma(s)\le(1-\gamma)^{-1}$ and hence
$0\le\alpha_\gamma\le1$. By
\eqref{eq:uniform-inf-sup-span-bound}, along some $\gamma_n\uparrow1$,
\[
h_{\gamma_n}\to u',
\qquad
\alpha_{\gamma_n}\to\alpha'\in[0,1].
\]
The normalized form of \eqref{eq:inf-sup-discounted} is
\[
h_\gamma(s)
=
\inf_{q_s\in\mathcal P_s}\max_{\phi\in\Delta(\mathcal A)}
\sum_{a,s'}\phi(a)q_{s,a}(s')
\bigl[r(s,a,s')-\alpha_\gamma+\gamma h_\gamma(s')\bigr].
\]
Moreover,
\[
\begin{aligned}
&\max_{s\in\mathcal S}
\sup_{\substack{\phi\in\Delta(\mathcal A)\\q_s\in\mathcal P_s}}
\left|
\begin{aligned}
&\sum_{a,s'}\phi(a)q_{s,a}(s')
\bigl[r(s,a,s')-\alpha_{\gamma_n}
+\gamma_nh_{\gamma_n}(s')\bigr]\\
&\quad-
\sum_{a,s'}\phi(a)q_{s,a}(s')
\bigl[r(s,a,s')-\alpha'+u'(s')\bigr]
\end{aligned}
\right|\\
&\le
|\alpha_{\gamma_n}-\alpha'|
+\max_{s'\in\mathcal S}
|\gamma_nh_{\gamma_n}(s')-u'(s')|
\longrightarrow0.
\end{aligned}
\]
The normalization-and-subsequence argument in
\citet[Theorem~4]{wang2025bellmanoptimalityaveragerewardrobust} now allows passage to the limit in
\eqref{eq:inf-sup-discounted}, giving
\eqref{eq:inf-sup-constant-gain}.
\end{proof}

The following lemma adapts \citet[Theorem~7(1)]{wang2025bellmanoptimalityaveragerewardrobust} to rewards that
depend on the realized next state.

\begin{lemma}
\label{lem:inf-sup-to-optimal-bellman}
Suppose that $(u',\alpha')$ solves the inf--sup equation in
Lemma~\ref{lem:inf-sup-constant-gain-existence}. Assume that the controller
action set is compact and convex and that every $\mathcal P_s$ is convex.
Then $(u',\alpha')$ satisfies the optimal average-reward Bellman equation~\eqref{eq:constant-gain-bellman}{}
defined in the preliminaries.
\end{lemma}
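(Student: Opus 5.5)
The plan is to show that, state by state, the infimum over $q_s\in\mathcal P_s$ and the maximum over $\phi\in\Delta(\mathcal A)$ in \eqref{eq:inf-sup-constant-gain} can be exchanged. The inf--max right-hand side then coincides with $\mathcal T^*(u')(s)-\alpha'$. Fix $s$ and define the payoff
\[
F_s(\phi,q_s):=\sum_{a\in\mathcal A}\sum_{s'\in\mathcal S}\phi(a)q_{s,a}(s')\bigl[r(s,a,s')-\alpha'+u'(s')\bigr].
\]
For fixed $q_s$ this is linear in $\phi$; for fixed $\phi$ it is linear in $q_s=(q_{s,a})_{a\in\mathcal A}$. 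Since the inner sum over $a$ pairs $\phi(a)$ only with $q_{s,a}$, we have $F_s(\phi,q_s)=\sum_a\phi(a)\,q_{s,a}[r(s,a,\cdot)-\alpha'+u']$. The dependence on the next state through $r(s,a,s')$ is harmless, because it only changes the fixed vector against which $q_{s,a}$ is integrated. $F_s$ is therefore bilinear, hence continuous, and in particular convex in $q_s$ and concave in $\phi$.

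Next, invoke a minimax theorem. The controller set $\Delta(\mathcal A)$ is compact and convex by hypothesis, $\mathcal P_s$ is convex, and $F_s$ is continuous and bilinear. Sion's minimax theorem requires compactness on only one side, which is $\Delta(\mathcal A)$, so it gives
\[
\inf_{q_s\in\mathcal P_s}\max_{\phi\in\Delta(\mathcal A)}F_s(\phi,q_s)=\max_{\phi\in\Delta(\mathcal A)}\inf_{q_s\in\mathcal P_s}F_s(\phi,q_s).
\]
In the setting of Lemma~\ref{lem:inf-sup-constant-gain-existence}, $\mathcal P_s$ is also compact, which makes the infimum a minimum but is not needed here. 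The outer maximum on the right is attained: $\phi\mapsto\inf_{q_s}F_s(\phi,q_s)$ is an infimum of linear functions of $\phi$, hence upper semicontinuous, and $\Delta(\mathcal A)$ is compact.

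Finally, pull $\alpha'$ out of the right-hand side. Since $\sum_a\phi(a)\sum_{s'}q_{s,a}(s')=1$ for every admissible $(\phi,q_s)$, the constant $-\alpha'$ leaves the optimization unchanged:
\[
\max_{\phi}\inf_{q_s}F_s(\phi,q_s)=\mathcal T^*(u')(s)-\alpha',
\]
with $\mathcal T^*$ as in \eqref{eq:average-bellman-operator}. Combining this with \eqref{eq:inf-sup-constant-gain} yields $u'(s)=\mathcal T^*(u')(s)-\alpha'$ for every $s$, that is, $\mathcal T^*(u')=u'+\alpha'\mathbf 1$. This is \eqref{eq:constant-gain-bellman} with $u_\delta^*=u'$ and $g=\alpha'$.

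Under SA-rectangularity, $\mathcal P_s=\prod_a\mathcal P_{s,a}$ is a product of convex sets, hence convex, so the same argument covers that case and recovers the deterministic-action reduction in Definition~\ref{defn:bellman-operators}. The only step needing care is checking the minimax hypotheses: convexity of each $\mathcal P_s$ is assumed in the lemma, and for the concrete KL, $f_k$, TV and Wasserstein balls it follows from joint convexity of the divergence or distance, which must be verified when the lemma is applied. Beyond that check, the argument is routine.
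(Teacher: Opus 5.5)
Your proposal is correct and follows the paper's own proof. As there, you fix $s$, note that the payoff $F_s$ is bilinear and continuous, and invoke Sion's minimax theorem, using compactness and convexity of $\Delta(\mathcal A)$ and convexity of $\mathcal P_s$, to exchange $\inf_{q_s}$ and $\max_\phi$ in \eqref{eq:inf-sup-constant-gain}. Your explicit removal of $-\alpha'$ via $\sum_a\phi(a)q_{s,a}[\mathbf 1]=1$ simply spells out the step the paper leaves implicit when it says ``combining'' the minimax identity with the inf--sup equation.
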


\begin{proof}
For fixed $s\in\mathcal S$, let
\[
F_s(\phi,q_s)
:=
\sum_{a\in\mathcal A}\sum_{s'\in\mathcal S}
\phi(a)q_{s,a}(s')
\bigl[r(s,a,s')-\alpha'+u'(s')\bigr].
\]
The function $F_s$ is linear and continuous in each argument. Since
$\mathcal A$ is finite, the controller action set $\Delta(\mathcal A)$ is
compact and convex; together with the convexity of $\mathcal P_s$, this
satisfies the conditions of Sion's minimax theorem
\citep[Corollary~3.3]{sion_1958_minimax}. Hence,
\[
\inf_{q_s\in\mathcal P_s}\max_{\phi\in\Delta(\mathcal A)}F_s(\phi,q_s)
=
\max_{\phi\in\Delta(\mathcal A)}\inf_{q_s\in\mathcal P_s}F_s(\phi,q_s).
\]
Combining this equality with the inf--sup equation in
Lemma~\ref{lem:inf-sup-constant-gain-existence} gives the claimed Bellman
equation~\eqref{eq:constant-gain-bellman}.
\end{proof}

Below, we show that if the optimal average-reward Bellman equation~\eqref{eq:constant-gain-bellman}\ admits a solution, then the constant gain must be unique.

\begin{lemma}
\label{lem:constant-gain-uniqueness}
For either the SA-rectangular or $S$-rectangular optimal average-reward
Bellman operator $\mathcal T^*$\ in \eqref{eq:average-bellman-operator}, suppose that $(u,\alpha)$ and $(v,\beta)$
are two solutions\ of \eqref{eq:constant-gain-bellman}:
\[
\mathcal T^*(u)=u+\alpha\mathbf 1,
\qquad
\mathcal T^*(v)=v+\beta\mathbf 1.
\]
Then $\alpha=\beta$. Thus only the constant gain is necessarily unique;
the relative value function need not be unique.
\end{lemma}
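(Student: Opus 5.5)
The plan is to compare iterates of $\mathcal T^*$ using two elementary properties of the operator that hold under either rectangularity: monotonicity and constant-shift equivariance. Both hold pointwise in $s$. For fixed $\phi$ and $q_s$, the map
\[
v\mapsto\sum_{a,s'}\phi(a)q_{s,a}(s')\bigl(r(s,a,s')+v(s')\bigr)
\]
is nondecreasing in $v$. It also satisfies the shift identity: replacing $v$ by $v+c\mathbf 1$ adds exactly $c$, because $\sum_{a,s'}\phi(a)q_{s,a}(s')=1$. Both properties survive taking $\inf_{q_s\in\mathcal P_s}$ and then $\max_{\phi\in\Delta(\mathcal A)}$. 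Hence $\mathcal T^*(v)\le\mathcal T^*(w)$ whenever $v\le w$, and $\mathcal T^*(v+c\mathbf 1)=\mathcal T^*(v)+c\mathbf 1$. This argument needs no compactness and no attainment of the infimum, and the SA-rectangular case is the special instance $\mathcal P_s=\prod_a\mathcal P_{s,a}$.

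Next, by induction, $(\mathcal T^*)^m(u)=u+m\alpha\mathbf 1$ and $(\mathcal T^*)^m(v)=v+m\beta\mathbf 1$ for every $m\ge1$. The induction step uses shift equivariance: $\mathcal T^*(u+m\alpha\mathbf 1)=\mathcal T^*(u)+m\alpha\mathbf 1=u+(m+1)\alpha\mathbf 1$.

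Now set $c:=\|u-v\|_\infty$, so that $v-c\mathbf 1\le u\le v+c\mathbf 1$. Monotonicity together with shift equivariance, iterated $m$ times, gives
\[
(\mathcal T^*)^m(v)-c\mathbf 1\le(\mathcal T^*)^m(u)\le(\mathcal T^*)^m(v)+c\mathbf 1.
\]
Substituting the closed forms yields $\|u-v+m(\alpha-\beta)\mathbf 1\|_\infty\le c$, and therefore $m|\alpha-\beta|\le 2c$ for all $m$. Dividing by $m$ and letting $m\to\infty$ gives $\alpha=\beta$.

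The only step that needs care is verifying that monotonicity and shift equivariance pass through the $\max\inf$ structure, in particular that the shift identity relies on each $q_{s,a}$ and $\phi$ being a probability vector. I expect this to be routine rather than a real obstacle.

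Non-uniqueness of the relative value function is witnessed by $u+c\mathbf 1$, which is again a solution by shift equivariance. More generally, in multichain nominal models solutions can differ by more than constants.
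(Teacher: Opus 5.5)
Your proof is correct and rests on the same two properties the paper uses: monotonicity and constant-shift equivariance of $\mathcal T^*$. The only difference is that the paper does not iterate. It takes $c=\max_s\{u(s)-v(s)\}$ attained at $\bar s$, applies the operator once to get $u+\alpha\mathbf 1\le v+(\beta+c)\mathbf 1$, and evaluates at $\bar s$ to obtain $\alpha\le\beta$ directly, then concludes by symmetry; your $m$-fold iteration with $m|\alpha-\beta|\le 2c$ reaches the same conclusion through a limit.
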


\begin{proof}
For both the SA-rectangular and $S$-rectangular operators in~\eqref{eq:average-bellman-operator},
\[
u\le v
\quad\Longrightarrow\quad
\mathcal T^*(u)\le\mathcal T^*(v),
\qquad
\mathcal T^*(v+c\mathbf 1)=\mathcal T^*(v)+c\mathbf 1,
\quad c\in\mathbb R.
\]
The term $r(s,a,s')$ is unchanged when the relative value function is compared
or shifted, so it does not affect these properties.

Let
\[
c:=\max_{s\in\mathcal S}\{u(s)-v(s)\},
\qquad
\bar s\in\arg\max_{s\in\mathcal S}\{u(s)-v(s)\}.
\]
Then $u\le v+c\mathbf 1$ and $u(\bar s)=v(\bar s)+c$. Monotonicity and
additive homogeneity give
\[
u+\alpha\mathbf 1
=
\mathcal T^*(u)
\le
\mathcal T^*(v+c\mathbf 1)
=
v+(\beta+c)\mathbf 1.
\]
Evaluating at $\bar s$ yields $\alpha\le\beta$. Interchanging the two
solutions gives $\beta\le\alpha$, and therefore $\alpha=\beta$.
\end{proof}

\begin{lemma}
\label{prop:adversarial-weak-communication}
Under Assumption~\ref{ass:divergence-uncertainty}, every adversarial
transition kernel $Q\in\mathcal P$ is weakly communicating.
\end{lemma}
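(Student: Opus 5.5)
The plan is to show that, under Assumption~\ref{ass:divergence-uncertainty}, every $Q\in\mathcal P$ has exactly the same support pattern as the nominal kernel: $\operatorname{supp}(q_{s,a})=\operatorname{supp}(p_{s,a})$ for all $(s,a)$. Weak communication in Definition~\ref{def:weakly-communicating-mdp} depends only on which transitions have positive probability. For any $\pi\in\Pi_{\mathrm{SR}}$, the directed graph of $Q_\pi$ has an edge $s\to x$ iff $\pi(a\mid s)q_{s,a}(x)>0$ for some $a$. This is the same as the graph of $P_\pi$ once the supports agree. Hence communicating classes, the partition $\mathcal S=C\cup T$, and transience of each state under every stationary policy (a property of the finite-chain graph: a state is transient iff its class is not closed) are identical for $Q$ and $P$. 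The nominal weak communication then transfers directly to $Q$.

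Absolute continuity $q_{s,a}\ll p_{s,a}$ is built into the divergence balls, so $\operatorname{supp}(q_{s,a})\subseteq\operatorname{supp}(p_{s,a})$ is automatic. The main step is the reverse inclusion: $q_{s,a}(x)>0$ whenever $p_{s,a}(x)>0$. Arguing by contradiction, I would suppose $q(x)=0$ while $p(x)\ge p_\wedge$ (dropping indices), and lower bound the divergence. By the data-processing inequality, coarsen to the binary partition $\{x\},\{x\}^c$. This gives $D(q\|p)\ge D(\mathrm{Ber}(0)\|\mathrm{Ber}(p(x)))$.

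For KL this equals $\log\frac{1}{1-p(x)}\ge\log\frac{1}{1-p_\wedge}$. For $f_k$ it equals $p(x)f_k(0)+(1-p(x))f_k\bigl(\tfrac{1}{1-p(x)}\bigr)$. Since $f_k\ge0$ and $f_k(0)=\frac{k-1}{k(k-1)}=\frac1k$, it is at least $p(x)/k\ge p_\wedge/k$. Both bounds are monotone in $p(x)$, so they hold at $p(x)\ge p_\wedge$.

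Under SA-rectangularity, $\zeta=1$, and the constraint $U(q_{s,a}\|p_{s,a})\le\delta$ with the strict inequality in Assumption~\ref{ass:divergence-uncertainty} gives a contradiction. Under S-rectangularity, each summand is nonnegative, so $U(q_{s,a}\|p_{s,a})\le\sum_{a'}U(q_{s,a'}\|p_{s,a'})\le|\mathcal A|\delta$. With $\zeta=|\mathcal A|$, the assumed bound makes $|\mathcal A|\delta$ strictly below the threshold, again a contradiction.

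The only delicate point I expect is the graph-equivalence claim for randomized policies. Because the mixture $\pi(a\mid s)q_{s,a}$ has positive weight exactly on $\bigcup_{a:\pi(a\mid s)>0}\operatorname{supp}(q_{s,a})$, equality of supports for each $(s,a)$ suffices, and the argument is uniform over $\Pi_{\mathrm{SR}}$. I would also note explicitly that the same $C,T$ partition from the nominal MDP works for $Q$.
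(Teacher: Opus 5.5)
Your proposal is correct and follows the paper's proof essentially step for step. The paper shows $\operatorname{supp}(q_{s,a})=\operatorname{supp}(p_{s,a})$ by lower-bounding the divergence whenever $q_{s,a}(s')=0$. For KL it applies the log-sum inequality on $\mathcal S\setminus\{s'\}$, which is your binary data-processing step. For $f_k$ it drops the nonnegative terms to get $p_{s,a}(s')/k$. It then uses $U(q_{s,a}\|p_{s,a})\le|\mathcal A|\delta$ in the $S$-rectangular case, and transfers the nominal partition $C\cup T$ through equality of the induced transition graphs for every $\pi\in\Pi_{\mathrm{SR}}$.
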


\begin{proof}
Fix $Q\in\mathcal P$. For every $(s,a)$, the definition of the
divergence-based uncertainty sets gives $q_{s,a}\ll p_{s,a}$. Thus,
\[
p_{s,a}(s')=0\quad\Longrightarrow\quad q_{s,a}(s')=0.
\]
Conversely, let $p_{s,a}(s')>0$ and suppose that $q_{s,a}(s')=0$. In the KL
case, applying the log-sum inequality to
$\mathcal S\setminus\{s'\}$ gives
\[
\begin{aligned}
D_{\mathrm{KL}}(q_{s,a}\|p_{s,a})
&=
q_{s,a}(s')\log\!\left(\frac{q_{s,a}(s')}{p_{s,a}(s')}\right)
+
\sum_{x\ne s'}q_{s,a}(x)
\log\!\left(\frac{q_{s,a}(x)}{p_{s,a}(x)}\right)\\
&\ge
q_{s,a}(s')\log\!\left(\frac{q_{s,a}(s')}{p_{s,a}(s')}\right)
+
\bigl(1-q_{s,a}(s')\bigr)
\log\!\left(
\frac{1-q_{s,a}(s')}{1-p_{s,a}(s')}
\right)\\
&=
\log\!\left(\frac{1}{1-p_{s,a}(s')}\right)
\ge
\log\!\left(\frac{1}{1-p_{\wedge}}\right).
\end{aligned}
\]
Consequently, in the SA-rectangular case,
\[
\delta
\ge D_{\mathrm{KL}}(q_{s,a}\|p_{s,a})
\ge \log\!\left(\frac{1}{1-p_{\wedge}}\right),
\]
whereas in the $S$-rectangular case,
\[
|\mathcal A|\delta
\ge
\sum_{b\in\mathcal A}D_{\mathrm{KL}}(q_{s,b}\|p_{s,b})
\ge
D_{\mathrm{KL}}(q_{s,a}\|p_{s,a})
\ge
\log\!\left(\frac{1}{1-p_{\wedge}}\right).
\]
Both conclusions contradict the corresponding KL radius condition in
Assumption~\ref{ass:divergence-uncertainty}.

In the $f_k$-divergence case, convexity of $t\mapsto t^k$ gives
\[
t^k\ge 1+k(t-1),
\qquad
f_k(t)=\frac{t^k-kt+k-1}{k(k-1)}\ge0,
\qquad t\ge0,
\]
and $f_k(0)=1/k$. Therefore,
\[
\begin{aligned}
D_{f_k}(q_{s,a}\|p_{s,a})
&=
\sum_{x\in\mathcal S}p_{s,a}(x)
f_k\!\left(\frac{q_{s,a}(x)}{p_{s,a}(x)}\right)\\
&\ge
p_{s,a}(s')f_k\!\left(\frac{q_{s,a}(s')}{p_{s,a}(s')}\right)
=
\frac{p_{s,a}(s')}{k}
\ge
\frac{p_{\wedge}}{k}.
\end{aligned}
\]
Thus, in the SA-rectangular case,
\[
\delta
\ge D_{f_k}(q_{s,a}\|p_{s,a})
\ge \frac{p_{\wedge}}{k},
\]
while in the $S$-rectangular case,
\[
|\mathcal A|\delta
\ge
\sum_{b\in\mathcal A}D_{f_k}(q_{s,b}\|p_{s,b})
\ge
D_{f_k}(q_{s,a}\|p_{s,a})
\ge
\frac{p_{\wedge}}{k}.
\]
These conclusions contradict the corresponding $f_k$-divergence radius
condition in
Assumption~\ref{ass:divergence-uncertainty}.
Hence
\[
\operatorname{supp}(q_{s,a})
=
\operatorname{supp}(p_{s,a}),
\qquad (s,a)\in\mathcal S\times\mathcal A.
\]
Therefore, for every $\pi\in\Pi_{\mathrm{SR}}$,
\[
\sum_{a\in\mathcal A}\pi(a\mid s)p_{s,a}(s')>0
\quad\Longleftrightarrow\quad
\sum_{a\in\mathcal A}\pi(a\mid s)q_{s,a}(s')>0,
\qquad s,s'\in\mathcal S.
\]
Thus the transition kernels induced by $P$ and $Q$ under $\pi$ have the same
directed graph. The partition witnessing weak communication of the nominal
MDP also witnesses weak communication for the MDP with transition kernel
$Q$. Since $Q\in\mathcal P$ was arbitrary, the adversary is weakly
communicating according to
Definition~\ref{def:adversary-weakly-communicating}.
\end{proof}

\begin{proof}[Proof of
Proposition~\ref{prop:divergence-constant-gain}]
By Lemma~\ref{prop:adversarial-weak-communication}
and Definition~\ref{def:adversary-weakly-communicating}, the adversary is
weakly communicating. Under
Assumption~\ref{ass:divergence-uncertainty},
the constraint $q_{s,a}\ll p_{s,a}$ defines a closed face of
$\Delta(\mathcal S)$. On this face,
$D_{\mathrm{KL}}(q_{s,a}\|p_{s,a})$ is continuous and convex in $q_{s,a}$. Likewise,
$D_{f_k}(q_{s,a}\|p_{s,a})$ is continuous and convex because $f_k$ is continuous
and convex. Hence every KL- or $f_k$-divergence set
$\mathcal P_{s,a}(U,\delta)$, with $U\in\{D_{\mathrm{KL}},D_{f_k}\}$, is a closed convex subset of the compact simplex
$\Delta(\mathcal S)$, and is therefore compact and convex.

In the SA-rectangular case, set
$\mathcal P_s:=\prod_{a\in\mathcal A}\mathcal P_{s,a}(U,\delta)$; these
statewise products are compact and convex.
In the $S$-rectangular case, the two defining functions
\[
\sum_{a\in\mathcal A}D_{\mathrm{KL}}(q_{s,a}\|p_{s,a})
\quad\text{and}\quad
\sum_{a\in\mathcal A}D_{f_k}(q_{s,a}\|p_{s,a})
\]
are continuous and convex on a finite product of the same closed support
faces. Their sublevel sets $\mathcal P_s(U,\delta)$ are therefore compact and
convex. Thus, under either structure,
$\mathcal P=\prod_{s\in\mathcal S}\mathcal P_s$ has compact and convex
statewise sets. Finally, $\Delta(\mathcal A)$ is compact and convex.
Lemma~\ref{lem:inf-sup-constant-gain-existence} therefore yields a solution
$(u',\alpha')$ of the inf--sup constant-gain equation, and
Lemma~\ref{lem:inf-sup-to-optimal-bellman} shows that the same pair solves
the optimal average-reward Bellman equation~\eqref{eq:constant-gain-bellman}. For the SA-rectangular model,
\[
\begin{aligned}
&\max_{\phi\in\Delta(\mathcal A)}
\inf_{q_s\in\prod_{a\in\mathcal A}\mathcal P_{s,a}}
\sum_{a\in\mathcal A}q_{s,a}
[\phi(a)(r(s,a,\cdot)+u')]\\
&\qquad=
\max_{a\in\mathcal A}\inf_{q_{s,a}\in\mathcal P_{s,a}}
q_{s,a}[r(s,a,\cdot)+u'].
\end{aligned}
\]
Thus this is precisely the SA-rectangular operator in Definition~\ref{defn:bellman-operators}.
Taking $u_\delta^*=u'$ and $g_\delta^*=\alpha'$ establishes existence. By
Lemma~\ref{lem:constant-gain-uniqueness}, every other solution has the same
constant gain $g_\delta^*$, which proves uniqueness.
\end{proof}

\begin{proof}[Proof of
Proposition~\ref{prop:metric-constant-gain}]
The argument is the same for the SA-rectangular and $S$-rectangular
formulations.

Choose any full-support $\nu\in\Delta(\mathcal S)$ and, for
$\varepsilon\in(0,1)$, set $\bar Q=\{\bar q_{s,a}\}_{s,a}$, where
\[
\bar q_{s,a}
:=(1-\varepsilon)p_{s,a}+\varepsilon\nu.
\]
For total variation,
\[
d_{\mathrm{TV}}(\bar q_{s,a},p_{s,a})
=\varepsilon d_{\mathrm{TV}}(\nu,p_{s,a})
\longrightarrow0,
\]
whereas, on the finite metric space $(\mathcal S,\rho)$,
\[
W_\ell(\bar q_{s,a},p_{s,a})^\ell
\le
\varepsilon\max_{x,y\in\mathcal S}\rho(x,y)^\ell
\longrightarrow0.
\]
Both distances, as well as their finite action-wise sums, vanish as
$\varepsilon\downarrow0$. Thus, under either rectangularity structure, a
common sufficiently small $\varepsilon>0$ gives $\bar Q\in\mathcal P$.
Moreover,
\[
\bar q_{\wedge}
:=
\min_{s,a,s'}\bar q_{s,a}(s')
\ge
\varepsilon\min_{s'}\nu(s')
>0.
\]

Fix $\gamma\in(0,1)$ and choose
$s_-\in\arg\min_x v_\gamma^*(x)$. For every $(s,a)$,
\[
\begin{aligned}
\bar q_{s,a}[v_\gamma^*]
&\le
\bar q_{s,a}(s_-)\min_x v_\gamma^*(x)
+\bigl(1-\bar q_{s,a}(s_-)\bigr)\max_x v_\gamma^*(x)\\
&\le
\bar q_{\wedge}\min_x v_\gamma^*(x)
+(1-\bar q_{\wedge})\max_x v_\gamma^*(x).
\end{aligned}
\]
Evaluating the inner infimum in $\mathcal T_\gamma^*$\ from \eqref{eq:discounted-bellman-operator}\ at the feasible
kernel $\bar Q$ gives, for both rectangularity structures,
\[
\begin{aligned}
\max_s v_\gamma^*(s)
&\le
\max_{s,a,s'}r(s,a,s')
+\gamma\left(
\bar q_{\wedge}\min_x v_\gamma^*(x)
+(1-\bar q_{\wedge})\max_x v_\gamma^*(x)
\right),\\
\min_s v_\gamma^*(s)
&\ge
\min_{s,a,s'}r(s,a,s')
+\gamma\min_x v_\gamma^*(x).
\end{aligned}
\]
Subtracting the two inequalities and using
$v_\gamma^*=\mathcal T_\gamma^*(v_\gamma^*)$ yields
\[
\begin{aligned}
\operatorname{Span}(v_\gamma^*)
&\le
\max_{s,a,s'}r(s,a,s')-\min_{s,a,s'}r(s,a,s')\\
&\quad+\gamma(1-\bar q_{\wedge})\operatorname{Span}(v_\gamma^*)\\
&\le
1+\gamma(1-\bar q_{\wedge})\operatorname{Span}(v_\gamma^*).
\end{aligned}
\]
and therefore
\[
\operatorname{Span}(v_\gamma^*)
\le
\frac{1}{1-\gamma(1-\bar q_{\wedge})}
\le
\frac{1}{\bar q_{\wedge}},
\qquad \gamma\in(0,1).
\]
The minimizing kernel need not have full support: feasibility of $\bar Q$
alone is sufficient because the inner infimum is no larger than its value
at $\bar Q$.

Fix $s_0\in\mathcal S$. The uniform span bound gives
\[
\left\|
v_\gamma^*-v_\gamma^*(s_0)\mathbf 1
\right\|_\infty
\le
\frac{1}{\bar q_{\wedge}}.
\]
Boundedness of the reward and the discounted Bellman equation for the operator\ in \eqref{eq:discounted-bellman-operator}\ also give
\[
0\le v_\gamma^*(s)\le\frac{1}{1-\gamma},
\qquad s\in\mathcal S.
\]
Hence finite dimensionality yields a sequence $\gamma_n\uparrow1$ such
that
\[
v_{\gamma_n}^*-v_{\gamma_n}^*(s_0)\mathbf 1
\longrightarrow u_\delta^*,
\qquad
(1-\gamma_n)v_{\gamma_n}^*(s_0)
\longrightarrow g_\delta^*.
\]

For the $S$-rectangular model, the normalized discounted Bellman equation using the operator~\eqref{eq:discounted-bellman-operator}\ is
\[
\begin{aligned}
v_\gamma^*(s)-v_\gamma^*(s_0)
&=
\max_{\phi\in\Delta(\mathcal A)}
\inf_{q_s\in\mathcal P_s}
\sum_{a\in\mathcal A}\sum_{s'\in\mathcal S}
\phi(a)q_{s,a}(s')\\
&\quad\times
\Bigl[
r(s,a,s')-(1-\gamma)v_\gamma^*(s_0)
+\gamma\bigl(v_\gamma^*(s')-v_\gamma^*(s_0)\bigr)
\Bigr].
\end{aligned}
\]
For the SA-rectangular model, using the reduction in Definition~\ref{defn:bellman-operators}, the right-hand side is replaced by
\[
\max_{a\in\mathcal A}
\inf_{q_{s,a}\in\mathcal P_{s,a}}
\sum_{s'\in\mathcal S}q_{s,a}(s')
\Bigl[
r(s,a,s')-(1-\gamma)v_\gamma^*(s_0)
+\gamma\bigl(v_\gamma^*(s')-v_\gamma^*(s_0)\bigr)
\Bigr].
\]
The corresponding one-stage expressions converge uniformly because
\[
\left|(1-\gamma_n)v_{\gamma_n}^*(s_0)-g_\delta^*\right|+\left\|\gamma_n\bigl(v_{\gamma_n}^*-v_{\gamma_n}^*(s_0)\mathbf 1\bigr)-u_\delta^*\right\|_\infty\longrightarrow0.
\]
Since a max--infimum is $1$-Lipschitz with respect to the uniform norm of
its objective, passage to the limit in the corresponding equation gives~\eqref{eq:constant-gain-bellman}:
\[
\mathcal T^*(u_\delta^*)=u_\delta^*+g_\delta^*\mathbf 1
\]
under either rectangularity structure. Lemma~\ref{lem:constant-gain-uniqueness}
then implies that every other solution has the same constant gain
$g_\delta^*$.
\end{proof}

\begin{proof}[Proof of Corollary~\ref{cor:stationary-average-optimal-policy}]
Propositions~\ref{prop:divergence-constant-gain}
and~\ref{prop:metric-constant-gain} provide a solution
$(u_\delta^*,g_\delta^*)$ of~\eqref{eq:constant-gain-bellman}.
By \citet[Theorem~3 and Remark~3]{wang2025bellmanoptimalityaveragerewardrobust},
a stationary policy attaining the maximization in the constant-gain
Bellman equation is robust optimal. Their result is stated for rewards
$r(s,a)$, and the same verification argument applies to our bounded
transition-dependent rewards $r(s,a,s')$.

For the SA-rectangular case,
\[
u_\delta^*(s)+g_\delta^*
=\max_{a\in\mathcal A}\inf_{q_{s,a}\in\mathcal P_{s,a}}
q_{s,a}[r(s,a,\cdot)+u_\delta^*].
\]
Since $\mathcal A$ is finite, for each $s\in\mathcal S$ there exists
\[
a^*(s)\in\arg\max_{a\in\mathcal A}
\inf_{q_{s,a}\in\mathcal P_{s,a}}
q_{s,a}[r(s,a,\cdot)+u_\delta^*].
\]
Define $\pi^*(a\mid s)=\mathbf 1\{a=a^*(s)\}$.
Then $\pi^*$ is stationary deterministic, so
$\pi^*\in\Pi_{\mathrm{SD}}$, and it attains $g_\delta^*$.

For the $S$-rectangular case,
\[
u_\delta^*(s)+g_\delta^*
=\max_{\phi\in\Delta(\mathcal A)}\inf_{q_s\in\mathcal P_s}
\sum_{a\in\mathcal A}q_{s,a}
\!\left[\phi(a)(r(s,a,\cdot)+u_\delta^*)\right].
\]
The objective is continuous in $\phi$, and $\Delta(\mathcal A)$ is compact, so a maximizer exists.
Choose
\[
\phi^*(s)\in\arg\max_{\phi\in\Delta(\mathcal A)}
\inf_{q_s\in\mathcal P_s}
\sum_{a\in\mathcal A}q_{s,a}
\!\left[\phi(a)(r(s,a,\cdot)+u_\delta^*)\right],
\]
and define $\pi^*(\cdot\mid s)=\phi^*(s)$.
Then $\pi^*$ is stationary randomized, so
$\pi^*\in\Pi_{\mathrm{SR}}$, and it attains $g_\delta^*$.

To verify optimality for the kernel-sequence criterion, summing the Bellman inequality shows that either constructed policy has expected $T$-step reward at least $Tg_\delta^*-\operatorname{Span}(u_\delta^*)$ from every initial state under every $\mathbf Q\in(\mathcal P)^{\mathbb N}$. Conversely, compactness and actionwise minimization under SA-rectangularity, or compactness, convexity, and the minimax argument in Lemma~\ref{lem:inf-sup-to-optimal-bellman} under $S$-rectangularity, yield a fixed $Q=(q_{s,a})_{s,a}\in\mathcal P$ such that $q_{s,a}[r(s,a,\cdot)+u_\delta^*]\le u_\delta^*(s)+g_\delta^*$ for all $(s,a)$. Against the constant sequence $(Q,Q,\ldots)$, summing this inequality bounds the expected $T$-step reward of every $\pi\in\Pi_{\mathrm{HD}}$ by $Tg_\delta^*+\operatorname{Span}(u_\delta^*)$. Dividing by $T$ and taking $\limsup$ therefore proves that $\pi^*$ attains the robust optimal gain for every initial state.
\end{proof}

\section{Notations and Dual Forms}
\label{app:notations-dual-forms}
Our algorithms replace the nominal transition distributions with estimates
obtained from samples. We describe the resulting empirical DR-MDP and its
Bellman operators, then introduce the event on which these estimates remain
close to their nominal counterparts. We then state the fixed-policy dual
representations and give their proofs.

For each $(s,a)\in\mathcal{S}\times\mathcal{A}$, let
$S_{s,a}^{(1)},\ldots,S_{s,a}^{(n)}$ be independent samples drawn from
$p_{s,a}$. For a sample outcome $\omega$, define the empirical transition
distribution by
\[
\widehat p_{s,a}^{\,\omega}(s')
:=
\frac{1}{n}
\sum_{i=1}^{n}
\mathbf{1}\{S_{s,a}^{(i)}(\omega)=s'\},
\qquad s'\in\mathcal{S}.
\]

For the observed samples, we suppress $\omega$ and write $\widehat p_{s,a}$.
Since the samples are drawn from $p_{s,a}$, with probability one,
$\operatorname{supp}(\widehat p_{s,a})\subseteq\operatorname{supp}(p_{s,a})$
holds simultaneously for all $(s,a)\in\mathcal S\times\mathcal A$.
Replacing each nominal center $p_{s,a}$ in the uncertainty sets introduced
in the preliminaries by $\widehat p_{s,a}$ gives the empirical DR-MDP.
The uncertainty radius $\delta$ and the rectangularity structure are
unchanged, so the empirical Bellman operators describe the same local
controller--adversary games with uncertainty centered at the estimated
transition distributions.

\textbf{SA-rectangular model.} The empirical uncertainty set factors
over state-action pairs:
\[
\widehat{\mathcal P}
=
\prod_{(s,a)\in\mathcal S\times\mathcal A}
\widehat{\mathcal P}_{s,a}.
\]
The adversary again minimizes each action's payoff independently.
Replacing the uncertainty sets in the
optimal discounted Bellman operator in Definition~\ref{defn:bellman-operators}\ gives
\[
\widehat{\mathcal T}_\gamma^*(v)(s)
:=
\max_{a\in\mathcal A}
\inf_{q_{s,a}\in\widehat{\mathcal P}_{s,a}}
q_{s,a}[r(s,a,\cdot)+\gamma v].
\]
The empirical optimal discounted value $\widehat v_\gamma^*$ is its fixed
point:
\begin{equation}
\widehat v_\gamma^*
=
\widehat{\mathcal T}_\gamma^*(\widehat v_\gamma^*).
\label{eq:sa-dr-dmdp-empirical-optimal-bellman}
\end{equation}
Removing the discount factor gives the corresponding empirical
average-reward Bellman operator, as in Definition~\ref{defn:bellman-operators}:
\[
\widehat{\mathcal T}^*(v)(s)
:=
\max_{a\in\mathcal A}
\inf_{q_{s,a}\in\widehat{\mathcal P}_{s,a}}
q_{s,a}[r(s,a,\cdot)+v].
\]

\textbf{$S$-rectangular model.} Here the empirical uncertainty set factors
over states:
\[
\widehat{\mathcal P}
=
\prod_{s\in\mathcal S}\widehat{\mathcal P}_s.
\]
The action-indexed transition distributions remain jointly constrained
at each state. For a controller's action distribution
$\phi\in\Delta(\mathcal A)$, the adversary's response is described by
the empirical discounted Bellman operator
\[
\widehat{\mathcal T}_\gamma^\phi(v)(s)
:=
\inf_{q_s\in\widehat{\mathcal P}_s}
\sum_{a\in\mathcal A}
q_{s,a}[\phi(a)(r(s,a,\cdot)+\gamma v)].
\]
Maximizing over $\phi$ gives the empirical counterpart of the optimal
discounted Bellman operator~\eqref{eq:discounted-bellman-operator}:
\[
\widehat{\mathcal T}_\gamma^*(v)(s)
:=
\max_{\phi\in\Delta(\mathcal A)}
\widehat{\mathcal T}_\gamma^\phi(v)(s).
\]
Its fixed point again determines the empirical optimal discounted value:
\begin{equation}
\widehat v_\gamma^*
=
\widehat{\mathcal T}_\gamma^*(\widehat v_\gamma^*).
\label{eq:s-dr-dmdp-empirical-optimal-bellman}
\end{equation}
The same local game without discounting gives the empirical
average-reward Bellman operator, as in \eqref{eq:average-bellman-operator}:
\[
\widehat{\mathcal T}^*(v)(s)
:=
\max_{\phi\in\Delta(\mathcal A)}
\inf_{q_s\in\widehat{\mathcal P}_s}
\sum_{a\in\mathcal A}
q_{s,a}[\phi(a)(r(s,a,\cdot)+v)].
\]

\textbf{Fixed-policy discounted Bellman operators.}
For $\pi\in\Pi_{\mathrm{SR}}$ and $\gamma\in(0,1)$, define
$\mathcal T_\gamma^\pi:\mathbb R^{|\mathcal S|}\to\mathbb R^{|\mathcal S|}$
under SA-rectangularity by
\begin{equation}
\label{eq:sa-discount-policy-bellman-operator}
\mathcal T_\gamma^\pi(v)(s):=\sum_{a\in\mathcal A}\pi(a| s)\inf_{q_{s,a}\in\mathcal P_{s,a}}\sum_{s'\in\mathcal S}q_{s,a}(s')\bigl(r(s,a,s')+\gamma v(s')\bigr),
\end{equation}
and under S-rectangularity by
\begin{equation}
\label{eq:s-discount-policy-bellman-operator}
\mathcal T_\gamma^\pi(v)(s):=\inf_{q_s\in\mathcal P_s}\sum_{a\in\mathcal A}\sum_{s'\in\mathcal S}\pi(a\mid s)q_{s,a}(s')\bigl(r(s,a,s')+\gamma v(s')\bigr).
\end{equation}
The empirical fixed-policy discounted operator
$\widehat{\mathcal T}_\gamma^\pi$ is obtained from
$\mathcal T_\gamma^\pi$ by replacing the true uncertainty sets with
their empirical counterparts while keeping $\pi$ unchanged.
Under either structure,
$\mathcal T_\gamma^*(v)=\max_{\pi\in\Pi_{\mathrm{SR}}}\mathcal T_\gamma^\pi(v)$
componentwise, and the same identity holds for the empirical operators.

Under either rectangularity structure, comparing the empirical and true
Bellman operators requires controlling the error in their centers.
We therefore introduce the good event $\Omega_{n,\eta}$, on which all
empirical transition probabilities are uniformly close to their nominal
counterparts in relative error.

\begin{definition}
\label{def:good-event}
For $\eta>0$, define the good event on which every empirical transition
distribution remains close to its nominal transition distribution in relative
error by
\[
\Omega_{n,\eta}
:=
\bigcap_{(s,a)\in\mathcal{S}\times\mathcal{A}}
\left\{
\omega:
\left\|
\frac{\widehat p_{s,a}^{\,\omega}-p_{s,a}}{p_{s,a}}
\right\|_{L^{\infty}(p_{s,a})}
\le \eta
\right\},
\]
where
\[
\left\|
\frac{\widehat p_{s,a}^{\,\omega}-p_{s,a}}{p_{s,a}}
\right\|_{L^{\infty}(p_{s,a})}
:=
\max_{s'\in\operatorname{supp}(p_{s,a})}
\left|
\frac{\widehat p_{s,a}^{\,\omega}(s')-p_{s,a}(s')}
{p_{s,a}(s')}
\right|.
\]
\end{definition}

\begin{proposition}
\label{prop:good-event-concentration}
For any $\beta\in(0,1)$, choose
\[
\eta
=
\frac{1}{3np_{\wedge}}
\log\frac{2|\mathcal{S}|^2|\mathcal{A}|}{\beta}
+
\sqrt{
\frac{2}{np_{\wedge}}
\log\frac{2|\mathcal{S}|^2|\mathcal{A}|}{\beta}
}.
\]
Then the good event satisfies
\[
\mathbb{P}\!\left(\Omega_{n,\eta}^{c}\right)\le \beta.
\]
\end{proposition}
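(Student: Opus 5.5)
The argument is a Bernstein inequality for each (state-action, next-state) coordinate, followed by a union bound. Fix $(s,a)\in\mathcal S\times\mathcal A$ and $s'\in\operatorname{supp}(p_{s,a})$, and write $p:=p_{s,a}(s')\ge p_\wedge$. The count $n\widehat p_{s,a}(s')=\sum_{i=1}^n \mathbf 1\{S^{(i)}_{s,a}=s'\}$ is a sum of $n$ i.i.d.\ Bernoulli$(p)$ variables. The centered summands $X_i:=\mathbf 1\{S^{(i)}_{s,a}=s'\}-p$ satisfy $|X_i|\le 1$ and $\operatorname{Var}(X_i)=p(1-p)\le p$.

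Bernstein's inequality in its inverted form then gives, for any $L>0$, with probability at least $1-2e^{-L}$,
\[
\bigl|\widehat p_{s,a}(s')-p\bigr|\le \sqrt{\frac{2pL}{n}}+\frac{L}{3n}.
\]
Dividing by $p$ yields
\[
\left|\frac{\widehat p_{s,a}(s')-p}{p}\right|\le \sqrt{\frac{2L}{np}}+\frac{L}{3np}\le \sqrt{\frac{2L}{np_\wedge}}+\frac{L}{3np_\wedge},
\]
where the last step uses $p\ge p_\wedge$ and the fact that the bound is decreasing in $p$. With $L:=\log(2|\mathcal S|^2|\mathcal A|/\beta)$, the right-hand side is exactly the $\eta$ in the statement.

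Coordinates $s'\notin\operatorname{supp}(p_{s,a})$ do not enter the $L^\infty(p_{s,a})$ norm, so no bound is needed for them. The triples $(s,a,s')$ number at most $|\mathcal S|^2|\mathcal A|$, and each fails with probability at most $2e^{-L}=\beta/(|\mathcal S|^2|\mathcal A|)$. A union bound gives $\mathbb P(\Omega_{n,\eta}^c)\le \beta$.

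The only point requiring care is the constant bookkeeping. With the two-sided tail $2e^{-L}$ and $L=\log(2|\mathcal S|^2|\mathcal A|/\beta)$, the total failure probability is exactly $\beta$. Even the cruder choice of two one-sided bounds, each failing with probability at most $e^{-L}$, gives the same total, so there is slack either way. Beyond that, one only needs to apply Bernstein in its standard inverted form, $t=\sqrt{2\sigma^2 L/n}+bL/(3n)$ with $b=1$.
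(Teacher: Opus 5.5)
Your proof is correct and is essentially the argument behind Lemma~A.2 of \citet{chen2025sample}, which the paper cites instead of reproving. That argument combines a two-sided Bernstein bound on each relative error $|\widehat p_{s,a}(s')-p_{s,a}(s')|/p_{s,a}(s')$, monotonicity of the threshold in $p_{s,a}(s')\ge p_\wedge$, and a union bound over at most $|\mathcal S|^2|\mathcal A|$ triples. The one point to pin down is the version of Bernstein. The threshold $\sqrt{2\sigma^2L/n}+L/(3n)$ comes from the sub-gamma form $\mathbb P\bigl(\sum_i X_i\ge\sqrt{2vL}+cL\bigr)\le e^{-L}$ with $v=n\sigma^2$ and $c=b/3$, as in the moment-generating-function proof of Bernstein's inequality. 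Inverting the classical tail $\exp\bigl(-nt^2/(2\sigma^2+2bt/3)\bigr)$ gives a strictly larger threshold. Your union-bound total is exactly $\beta$, not below it as your remark about slack suggests, so there is no room to absorb that difference.
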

This proposition is the same as \citet[Lemma~A.2]{chen2025sample},
written in the present notation.

\subsection{Uncertainty Sets and Robust Bellman Oracles}
\label{subsec:uncertainty-sets-and-robust-bellman-oracles}
We give the fixed-policy dual representations for the uncertainty sets
defined in Section~\ref{subsec:robust-mdps-and-rectangularity}.
For fixed $\gamma\in(0,1)$ and $v:\mathcal S\to\mathbb{R}$, write
\[
    z_{s,a}(x)
    :=
    r(s,a,x)+\gamma v(x),
    \qquad x\in\mathcal S.
\]
The dependence of $z_{s,a}$ on $\gamma$ and $v$ is suppressed when it
is clear from context.

\textbf{Kullback-Leibler (KL) Divergence.}

\begin{lemma}[{Lemma~C.1 in \citet{chen2025sample}}]
\label{lem:sa-rec-kl-dual}
Suppose $\cP$ is SA-rectangular and $U=D_{\mathrm{KL}}$ with $\delta > 0$. For any $\pi\in\Pi_{\mathrm{SR}}$, $s\in\mathcal S$, and $v:\mathcal S\to\mathbb R$,
the fixed-policy Bellman operator satisfies
\[
\mathcal T_\gamma^\pi(v)(s)
=\sum_{a\in\mathcal A}\pi(a\mid s)\sup_{\lambda_a\ge 0}\left\{-\lambda_a\delta-\lambda_a\log\bigl(p_{s,a}[e^{-z_{s,a}/\lambda_a}]\bigr)\right\}.
\]
\end{lemma}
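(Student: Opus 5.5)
The plan is to reduce the fixed-policy statement to a single-pair KL robust expectation and then apply standard KL duality. Under SA-rectangularity, the operator $\mathcal T_\gamma^\pi(v)(s)$ in \eqref{eq:sa-discount-policy-bellman-operator} is already a $\pi(\cdot\mid s)$-weighted sum of the per-action quantities $\inf_{q\in\mathcal P_{s,a}} q[z_{s,a}]$, since the adversary minimizes each action separately. So it suffices to prove, for fixed $(s,a)$ with $p:=p_{s,a}$ and $z:=z_{s,a}$,
\[
\inf_{q\ll p,\;D_{\mathrm{KL}}(q\|p)\le\delta} q[z]
=\sup_{\lambda\ge0}\left\{-\lambda\delta-\lambda\log\bigl(p[e^{-z/\lambda}]\bigr)\right\}.
\]
At $\lambda=0$ the bracket is understood as its limit, $\operatorname{ess\,inf}_p z=\min_{x\in\operatorname{supp}(p)}z(x)$. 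We then sum against $\pi(a\mid s)$.

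\textbf{Weak duality.} Fix $\lambda>0$ and any feasible $q$. The Donsker--Varadhan (Gibbs) variational formula gives
\[
-\lambda\log p[e^{-z/\lambda}]=\inf_{q'\ll p}\{q'[z]+\lambda D_{\mathrm{KL}}(q'\|p)\}\le q[z]+\lambda\delta.
\]
Rearranging, the dual objective at $\lambda$ is at most $q[z]$. The case $\lambda=0$ is immediate, because $q\ll p$ forces $q[z]\ge\min_{\operatorname{supp} p}z$. Taking the infimum over $q$ and the supremum over $\lambda$ yields ``$\ge$''.

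\textbf{Strong duality.} The primal problem is a convex program on the compact face $\{q\ll p\}$ of the simplex. Its objective is linear and its constraint is convex and continuous. Slater's condition holds at $q=p$, because $D_{\mathrm{KL}}(p\|p)=0<\delta$. By Lagrangian duality, therefore,
\[
\inf_q q[z]=\sup_{\lambda\ge0}\inf_{q\ll p}\{q[z]+\lambda(D_{\mathrm{KL}}(q\|p)-\delta)\}.
\]
The inner infimum equals $-\lambda\log p[e^{-z/\lambda}]-\lambda\delta$ for $\lambda>0$, again by the Gibbs formula, with minimizer $q\propto p\,e^{-z/\lambda}$. For $\lambda=0$ the inner infimum equals $\min_{\operatorname{supp}p}z$, which matches the limiting convention.

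\textbf{Main obstacle.} The one delicate point is the boundary case $\lambda=0$: the dual optimum may only be approached as $\lambda\downarrow0$, which happens when $\delta\ge\log(1/p(\arg\min z))$. I would handle it by showing that $\lambda\mapsto-\lambda\log p[e^{-z/\lambda}]$ is continuous on $(0,\infty)$ and tends to $\min_{\operatorname{supp}p}z$ as $\lambda\downarrow0$. This follows from a Laplace-type argument: $p[e^{-z/\lambda}]$ is dominated by $p(\arg\min z)\,e^{-\min z/\lambda}$. With this in hand, the supremum over $\lambda\ge0$ agrees with the Lagrangian dual value. This is exactly Lemma~C.1 of \citet{chen2025sample}, which we may cite for the single-pair identity.
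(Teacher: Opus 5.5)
Your proposal is correct and follows the paper's proof. The paper likewise writes $\mathcal T_\gamma^\pi(v)(s)$ as the $\pi(\cdot\mid s)$-weighted sum of the action-wise infima $\inf_{q\in\mathcal P_{s,a}}q[z_{s,a}]$ and applies Lemma~C.1 of \citet{chen2025sample} to each, reading the $\lambda_a=0$ term as its limit; your Gibbs-variational and Slater argument is a sound self-contained proof of that cited identity, and since it holds for an arbitrary payoff $z$, it also covers the transition-dependent rewards $r(s,a,s')$ the paper remarks on.
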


% For each fixed $(s,a)$, apply the cited $S$-rectangular result to a singleton action set with nominal distribution $p_{s,a}$, payoff $z_{s,a}$, and radius $\rho=\delta$; the divergence constraint then becomes the local SA-rectangular constraint. Substituting these action-wise dual representations into the fixed-policy Bellman operator gives the displayed weighted sum.

\begin{lemma}[{Lemma~1 in \citet{li_wang_si_2026_s_rectangular}}]
\label{lem:s-rec-kl-dual}
Suppose $\cP$ is S-rectangular and $U=D_{\mathrm{KL}}$ with $\delta >0$. The fixed-policy Bellman operator
has the following representation for any $\pi\in\Pi_{\mathrm{SR}}$, $s\in\mathcal S$, and $v:\mathcal S\to\mathbb R$:
\[
\mathcal T_\gamma^\pi(v)(s)
=\sup_{\lambda>0}\left\{-\lambda|\mathcal A|\delta-\lambda\sum_{a\in\mathcal A}\log p_{s,a}\left[\exp\left(-\frac{\pi(a\mid s)z_{s,a}}{\lambda}\right)\right]\right\}.
\]
\end{lemma}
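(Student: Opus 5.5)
This is the S-rectangular KL dual of \citet{li_wang_si_2026_s_rectangular}. I would prove it by Lagrangian duality on the joint budget constraint, reducing the inner problem to actionwise Donsker--Varadhan (Gibbs) variational problems.

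\textbf{Step 1: the primal problem.} Fix $s$, $\pi$ and $v$. Write $c_a:=\pi(a\mid s)z_{s,a}$. The fixed-policy operator \eqref{eq:s-discount-policy-bellman-operator} is the convex program
\[
\inf_{q_s}\sum_{a\in\mathcal A}q_{s,a}[c_a]\quad\text{subject to}\quad q_{s,a}\ll p_{s,a},\qquad \sum_{a\in\mathcal A}D_{\mathrm{KL}}(q_{s,a}\|p_{s,a})\le|\mathcal A|\delta.
\]
The objective is linear and the constraint function is convex and continuous on the compact product of support faces. Taking $q_{s,a}=p_{s,a}$ gives a Slater point, since the constraint value is $0<|\mathcal A|\delta$. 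Hence strong duality holds with a single multiplier $\lambda\ge0$:
\[
\mathcal T_\gamma^\pi(v)(s)=\sup_{\lambda\ge0}\Bigl\{-\lambda|\mathcal A|\delta+\sum_{a\in\mathcal A}\inf_{q_{s,a}\ll p_{s,a}}\bigl(q_{s,a}[c_a]+\lambda D_{\mathrm{KL}}(q_{s,a}\|p_{s,a})\bigr)\Bigr\}.
\]

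\textbf{Step 2: the inner problems.} For $\lambda>0$, the Gibbs variational principle gives
\[
\inf_{q\ll p}\bigl\{q[c]+\lambda D_{\mathrm{KL}}(q\|p)\bigr\}=-\lambda\log p\bigl[e^{-c/\lambda}\bigr],
\]
with minimizer $q\propto p\,e^{-c/\lambda}$. Substituting into Step 1 yields exactly the displayed expression, but with the supremum taken over $\lambda\ge0$.

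\textbf{Step 3: dropping $\lambda=0$.} The value at $\lambda=0$ is $\sum_a\min_{x\in\operatorname{supp}p_{s,a}}c_a(x)$. As $\lambda\downarrow0$, the quantity $-\lambda\log p[e^{-c/\lambda}]$ converges to $\min_{\operatorname{supp}p}c$, because $\lambda\log p_{\min}\to0$. So the $\lambda>0$ branch approaches the $\lambda=0$ value, and the supremum over $\lambda>0$ equals the supremum over $\lambda\ge0$.

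\textbf{Main obstacle.} The delicate point is the boundary case $\lambda=0$ in Step 3, together with justifying the interchange of the infimum over the product set with the sum over actions. The interchange is valid because, once $\lambda$ is fixed, the Lagrangian separates across actions.
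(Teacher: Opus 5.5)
Your proposal is correct. You dualize only the joint budget and keep the simplex and support constraints in the domain, so the Slater point $q_{s,a}=p_{s,a}$ gives strong duality with a single multiplier. The Lagrangian then separates over actions, because apart from the budget the feasible set is a product of support faces, and the Gibbs variational formula evaluates each actionwise problem. Your $\lambda\downarrow0$ argument is also valid: sandwiching $p[e^{-c/\lambda}]$ between $p(x^\ast)e^{-m/\lambda}$ and $e^{-m/\lambda}$, with $m=\min_{\operatorname{supp}p}c$ and $x^\ast$ a minimizer, shows that the $\lambda=0$ endpoint adds nothing to the supremum over $\lambda>0$.

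The paper gives no proof of its own here; it imports Lemma~1 of \citet{li_wang_si_2026_s_rectangular}. Your argument is the standard derivation behind that citation. Writing it out makes explicit that only the payoff vectors $\pi(a\mid s)z_{s,a}$ enter, so transition-dependent rewards $r(s,a,s')$ need no modification. It also shows that the $\lambda=0$ boundary is handled by the same limiting convention the paper states for the SA-rectangular KL dual.
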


\textbf{$f_k$-Divergence.}

\begin{lemma}[{Lemma~F.1 in \citet{chen2025sample}}]
\label{lem:sa-rec-fk-dual}
Suppose $\cP$ is SA-rectangular and $U=D_{f_k}$ with $\delta >0$. Let $k^*:=k/(k-1)$ denote the conjugate exponent, then, for any $\pi\in\Pi_{\mathrm{SR}}$, $s\in\mathcal S$, and $v:\mathcal S\to\mathbb R$,
the fixed-policy Bellman operator satisfies
\[
\mathcal T_\gamma^\pi(v)(s)
=\sum_{a\in\mathcal A}\pi(a\mid s)\sup_{\theta_a\in\mathbb R}\left\{\theta_a-\bigl(1+k(k-1)\delta\bigr)^{1/k}\bigl(p_{s,a}[(\theta_a-z_{s,a})_+^{k^*}]\bigr)^{1/k^*}\right\}.
\]
\end{lemma}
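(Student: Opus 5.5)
The statement is the SA-rectangular $f_k$ dual (Lemma~\ref{lem:sa-rec-fk-dual}). Because $\mathcal P$ is SA-rectangular, the infimum in $\mathcal T_\gamma^\pi(v)(s)$ splits actionwise. It therefore suffices to prove, for a single pair $(s,a)$ with $p:=p_{s,a}$ and $z:=z_{s,a}$,
\[
\inf_{q\ll p,\ D_{f_k}(q\|p)\le\delta} q[z]
=\sup_{\theta\in\mathbb R}\Bigl\{\theta-\bigl(1+k(k-1)\delta\bigr)^{1/k}\bigl(p[(\theta-z)_+^{k^*}]\bigr)^{1/k^*}\Bigr\},
\]
and then to weight by $\pi(a\mid s)$ and sum.

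I would change variables to the likelihood ratio $L=q/p$ on $\operatorname{supp}(p)$, so that $L\ge0$ and $p[L]=1$. Expanding $f_k$ and using $p[L]=1$, the divergence constraint becomes
\[
p[L^k]\le 1+k(k-1)\delta .
\]
The primal problem is then to minimize $p[Lz]$ over $\{L\ge0,\ p[L]=1,\ \|L\|_{L^k(p)}\le C\}$, where $C:=(1+k(k-1)\delta)^{1/k}$. This problem is convex, and $L\equiv1$ is strictly feasible when $\delta>0$, so Slater's condition gives strong duality.

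I would dualize only the normalization constraint, with multiplier $\theta$. This gives
\[
\sup_\theta\ \inf_{L\ge0,\ \|L\|_k\le C}\bigl\{\theta+p[L(z-\theta)]\bigr\}.
\]
For fixed $\theta$, the inner problem is $\theta-\sup_{L\ge0,\|L\|_k\le C}p[L(\theta-z)]$. Restricting to $L\ge0$ means only the positive part $(\theta-z)_+$ contributes. By Hölder duality between $L^k(p)$ and $L^{k^*}(p)$, the supremum equals $C\,\|(\theta-z)_+\|_{L^{k^*}(p)}$, and it is attained by $L\propto(\theta-z)_+^{k^*-1}$. This yields exactly the stated expression.

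The main obstacle is justifying the interchange of the supremum over $\theta$ with the infimum over $L$. I would handle it with Sion's minimax theorem: the feasible set of $L$ is compact and convex in the finite dimension $|\operatorname{supp}(p)|$, and the Lagrangian is bilinear. Alternatively, since the statement is cited as Lemma~F.1 of \citet{chen2025sample}, I would simply note that it follows verbatim there with $z$ allowed to depend on the next state. Nothing else changes, because the dual depends on $z$ only as a vector over $\mathcal S$.
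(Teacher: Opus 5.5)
Your proposal is correct. Its skeleton matches the paper's: the paper also splits $\mathcal T_\gamma^\pi(v)(s)$ actionwise under SA-rectangularity and applies a per-row dual to the fixed payoff vector $z_{s,a}=r(s,a,\cdot)+\gamma v$. It also notes that transition-dependent rewards enter only through this vector.

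The difference is in how the per-row identity is obtained. The paper simply invokes Lemma~F.1 of \citet{chen2025sample}, which is your fallback option, whereas you re-derive it. Your derivation is sound:
\begin{itemize}
\item With $L=q/p$ and $p[L]=1$, the divergence constraint becomes exactly $p[L^k]\le 1+k(k-1)\delta$.
\item The H\"older maximizer $L\propto(\theta-z)_+^{k^*-1}$ works because $(k^*-1)k=k^*$. When $(\theta-z)_+\equiv0$ on $\operatorname{supp}(p)$, the choice $L=0$ is admissible, since normalization has been dualized.
\item Sion's theorem applies directly, with the compact convex side $\{L\ge0,\ p[L^k]\le C^k\}$ and an affine Lagrangian. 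For this reason the separate appeal to Slater's condition is not needed.
\end{itemize}
Your route buys self-containedness and a transparent check that the identity depends on $z$ only as a vector on $\mathcal S$. The paper's citation buys brevity.
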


\begin{lemma}[{Lemma~2 in \citet{li_wang_si_2026_s_rectangular}}]
\label{lem:s-rec-fk-dual}
Suppose $\cP$ is S-rectangular and $U=D_{f_k}$ with $\delta >0$. Let $k^*:=k/(k-1)$ denote the conjugate exponent, then, for any $\pi\in\Pi_{\mathrm{SR}}$, $s\in\mathcal S$, and $v:\mathcal S\to\mathbb R$,
the fixed-policy Bellman operator satisfies
\[
\adjustbox{max width=\linewidth}{$\displaystyle
\mathcal T_\gamma^\pi(v)(s)=\sup_{\theta\in\mathbb R^{|\mathcal A|}}\Biggl\{\sum_{a\in\mathcal A}\theta_a-|\mathcal A|^{1/k}\bigl(1+k(k-1)\delta\bigr)^{1/k}\left(\sum_{a\in\mathcal A}p_{s,a}\left[(\theta_a-\pi(a\mid s)z_{s,a})_+^{k^*}\right]\right)^{1/k^*}\Biggr\}.
$}
\]
\end{lemma}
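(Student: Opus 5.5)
The plan is to prove the representation directly by Lagrangian duality for a fixed state $s$ and fixed policy $\pi$, treating the inner infimum in \eqref{eq:s-discount-policy-bellman-operator} as a finite-dimensional convex program. Write $\pi_a:=\pi(a\mid s)$ and $p_a:=p_{s,a}$. Since $q_{s,a}\ll p_{s,a}$ is required, parametrize $q_{s,a}=p_a w_a$ by likelihood ratios $w_a:\operatorname{supp}(p_a)\to[0,\infty)$. The primal problem becomes
\[
\min_{w}\ \sum_{a\in\mathcal A}p_a[\pi_a z_{s,a}w_a]
\quad\text{s.t.}\quad p_a[w_a]=1\ (a\in\mathcal A),\qquad \sum_{a\in\mathcal A}p_a[f_k(w_a)]\le|\mathcal A|\delta,\qquad w_a\ge0 .
\]
This program has a linear objective, affine equalities and one convex inequality. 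Because $\delta>0$, the point $w_a\equiv1$ is strictly feasible (it has divergence $0<|\mathcal A|\delta$), so Slater's condition holds. Hence strong duality holds, with the sign constraints $w_a\ge0$ kept in the domain.

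Next we form the Lagrangian with multipliers $\theta_a\in\mathbb R$ for the normalizations and $\mu\ge0$ for the budget. Minimizing pointwise over $w_a(x)\ge0$ produces the restricted conjugate $f_k^\star(y):=\sup_{t\ge0}\{yt-f_k(t)\}$. The first-order condition $(t^{k-1}-1)/(k-1)=y$ gives $t=(1+(k-1)y)_+^{1/(k-1)}$, and hence
\[
f_k^\star(y)=\tfrac1k\bigl[(1+(k-1)y)_+^{k^*}-1\bigr].
\]
For $\mu>0$ the dual function is therefore
\[
\sum_a\theta_a-\mu|\mathcal A|\delta-\mu\sum_a p_a\!\left[f_k^\star\!\left(\frac{\theta_a-\pi_az_{s,a}}{\mu}\right)\right].
\]
The case $\mu=0$ gives $\sum_a\min_x\pi_az_{s,a}(x)$, which is a limit of the $\mu>0$ values, so it can be dropped.

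We then simplify in two steps. First, shift $\tilde\theta_a:=\theta_a+\mu/(k-1)$. Using $1/(k-1)-1/k=1/(k(k-1))$, the objective becomes
\[
\sum_a\tilde\theta_a-\mu|\mathcal A|\frac{1+k(k-1)\delta}{k(k-1)}-\frac{(k-1)^{k^*}}{k}\,\mu^{1-k^*}\,S(\tilde\theta),
\qquad S(\tilde\theta):=\sum_a p_a\bigl[(\tilde\theta_a-\pi_az_{s,a})_+^{k^*}\bigr].
\]
Second, maximize explicitly over $\mu>0$ for fixed $\tilde\theta$. This amounts to minimizing $A\mu+B\mu^{-1/(k-1)}$, which is a one-variable calculus step. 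Its optimal value is a constant times $A^{1/k}B^{1/k^*}$, and after bookkeeping of the constants this should equal exactly
\[
|\mathcal A|^{1/k}\bigl(1+k(k-1)\delta\bigr)^{1/k}S(\tilde\theta)^{1/k^*}.
\]
Taking the supremum over $\tilde\theta\in\mathbb R^{|\mathcal A|}$ then yields the stated formula.

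The step I expect to need the most care is the duality justification rather than the algebra. Specifically, one must confirm that strong duality holds with the absolute-continuity face and nonnegativity built into the domain. One must also handle the degenerate cases: $S(\tilde\theta)=0$, where the $\mu$-optimization sends $\mu\downarrow0$ and the value is $\sum_a\tilde\theta_a$ with every $\tilde\theta_a\le\min\pi_az_{s,a}$, consistent with the formula; and $\pi_a=0$ for some actions. I would handle these by checking that the formula is continuous in $\tilde\theta$ and recovers the $\mu=0$ boundary value. Finally, the constant-matching in the closed-form $\mu$-optimization should be verified once carefully, for instance against the known SA-rectangular case of Lemma~\ref{lem:sa-rec-fk-dual} with a single action.
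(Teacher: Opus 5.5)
Your proposal is correct. It differs from the paper only because the paper gives no proof of this lemma: it imports the identity verbatim from Lemma~2 of \citet{li_wang_si_2026_s_rectangular}. It does the same for the SA-rectangular analogue from \citet{chen2025sample}, adding only the remark that those identities hold for an arbitrary payoff vector. You instead derive the formula from scratch by Lagrangian duality in the likelihood ratios $w_a$, and each step checks out:
\begin{itemize}
\item Slater's condition holds at $w_a\equiv1$ because $\delta>0$.
\item The restricted conjugate is $f_k^\star(y)=\tfrac1k[(1+(k-1)y)_+^{k^*}-1]$.
\item For each fixed $\mu$, the shift $\tilde\theta_a=\theta_a+\mu/(k-1)$ is a bijection of $\mathbb R^{|\mathcal A|}$, so it commutes with the supremum over $\theta$.
\item In the $\mu$-step, the minimizer of $A\mu+B\mu^{-1/(k-1)}$ satisfies $\mu_\star^{k^*}=B/((k-1)A)$. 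The minimum value is $kA\mu_\star=k(k-1)^{-1/k^*}A^{1/k}B^{1/k^*}$. Substituting your $A$ and $B$, all powers of $k$ and of $k-1$ cancel, leaving exactly $|\mathcal A|^{1/k}(1+k(k-1)\delta)^{1/k}S(\tilde\theta)^{1/k^*}$, so the constant-matching you anticipated does go through.
\item For the degenerate case $\mu=0$, the dual value is $\sum_a\min_{x\in\operatorname{supp}(p_{s,a})}\pi(a\mid s)z_{s,a}(x)$. The final formula already attains this by taking $\tilde\theta_a$ equal to that minimum, which gives $S(\tilde\theta)=0$.
\end{itemize}
Your route is self-contained and shows that only the payoff vectors $\pi(a\mid s)z_{s,a}$ enter. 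It therefore covers transition-dependent rewards $r(s,a,x)$ and actions with $\pi(a\mid s)=0$ without checking that the cited lemma was stated in that generality. It also shows where the coupling comes from. The factor $|\mathcal A|^{1/k}$ arises from eliminating the single budget multiplier $\mu$, shared across actions, against a linear term carrying $|\mathcal A|$. This shared $\mu$ is also why the supremum does not split actionwise as in Lemma~\ref{lem:sa-rec-fk-dual}. The paper's citation buys only brevity.
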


\textbf{Total-Variation (TV) distance.}

\begin{lemma}[{Lemma~6 in \citet{iyengar2005robust}}]
\label{lem:sa-rec-tv-dual}
For $\mathcal P=\prod_{(s,a)\in\mathcal S\times\mathcal A}\mathcal P_{s,a}(d_{\mathrm{TV}},\delta)$
with $\delta\in(0,\infty)$, the fixed-policy Bellman operator
can be written as follows for any $\pi\in\Pi_{\mathrm{SR}}$, $s\in\mathcal S$, and $v:\mathcal S\to\mathbb R$:
\[
\mathcal T_\gamma^\pi(v)(s)
=\sum_{a\in\mathcal A}\pi(a\mid s)\sup_{\mu_a\ge0}\Bigl\{p_{s,a}[z_{s,a}-\mu_a]-\delta\operatorname{Span}(z_{s,a}-\mu_a)\Bigr\},
\]
where each $\mu_a$ is a nonnegative vector on $\mathcal S$.
\end{lemma}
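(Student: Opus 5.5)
The statement to prove is Lemma~\ref{lem:sa-rec-tv-dual}, the TV dual representation under SA-rectangularity. Since the fixed-policy operator \eqref{eq:sa-discount-policy-bellman-operator} is a $\pi(\cdot\mid s)$-weighted sum of independent actionwise infima, it suffices to fix $(s,a)$, write $p=p_{s,a}$ and $z=z_{s,a}$, and prove
\[
\inf_{q\in\Delta(\mathcal S):\,d_{\mathrm{TV}}(q,p)\le\delta} q[z]
=\sup_{\mu\ge0}\bigl\{p[z-\mu]-\delta\operatorname{Span}(z-\mu)\bigr\}.
\]
Summing with the weights $\pi(a\mid s)$ then gives the lemma.

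\emph{Weak duality.} Fix $\mu\ge0$ and a feasible $q$. Because $\mu\ge0$, we have $q[z]\ge q[z-\mu]$. Write $w:=z-\mu$ and decompose $q[w]=p[w]+(q-p)[w]$. Since $q-p$ has zero total mass and $\|q-p\|_1=2d_{\mathrm{TV}}(q,p)\le2\delta$, we get $(q-p)[w]\ge-\tfrac12\|q-p\|_1\operatorname{Span}(w)\ge-\delta\operatorname{Span}(w)$. Hence every feasible $q$ has value at least the dual objective at every $\mu\ge0$.

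\emph{Strong duality.} I would compute the primal directly. For $\delta<1$, the worst case moves mass $\delta$ from the highest values of $z$ onto $x_-\in\arg\min z$, so that
\[
\inf q[z]=\delta\min z+\sum_x q^\dagger(x)z(x),
\]
where $q^\dagger$ is $p$ with its top-$z$ mass removed greedily until mass $\delta$ has been taken away. For $\delta\ge1$ the infimum is $\min z$. Now choose $\mu:=(z-\tau)_+$ with threshold $\tau$ equal to the $(1-\delta)$-quantile level, i.e.\ $\tau$ is chosen so that $p(z>\tau)\le\delta\le p(z\ge\tau)$. Then $w=\min(z,\tau)$ and $\operatorname{Span}(w)=\tau-\min z$, so
\[
p[w]-\delta(\tau-\min z)=p[\min(z,\tau)]-\delta\tau+\delta\min z.
\]
This is exactly the greedy value: the mass $\delta$ sitting at or above $\tau$ contributes $\tau\delta$ through $p[\min(z,\tau)]$, and that contribution is cancelled by the $-\delta\tau$ term. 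For $\delta\ge1$, take $\tau=\min z$; then the dual objective is $\min z-\delta\cdot0=\min z$.

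The main obstacle is the bookkeeping at the quantile when $p$ has an atom at level $\tau$. Rather than handling this by hand, I could instead invoke LP duality: the primal is a finite linear program (feasible because $p$ itself is feasible), so strong duality holds. The multiplier $\mu$ then arises from the constraint $q\ge0$, and eliminating the split variables for $|q-p|$ produces the span penalty. This is the cited \citet[Lemma~6]{iyengar2005robust} argument, so either route closes the proof.
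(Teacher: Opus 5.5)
Your proposal is correct, and it takes a more self-contained route than the paper. The paper makes the same actionwise reduction you do. It then simply cites \citet[Lemma~6]{iyengar2005robust}, applied with $L_1$ radius $2\delta$ via $d_{\mathrm{TV}}(q,p)=\tfrac12\|q-p\|_1$, and sums the actionwise dual values with weights $\pi(a\mid s)$.

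You instead prove the duality directly. Weak duality comes from the span inequality of Lemma~\ref{lem:tv-span}. For strong duality you exhibit an explicit optimal certificate $\mu=(z-\tau)_+$ at the $p$-quantile $\tau$, whose dual value matches that of a feasible ``greedy'' kernel.

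This sandwich already yields both the primal formula and strong duality. You therefore never need to prove that the greedy kernel is optimal, only that it is feasible, which holds because $\|q-p\|_1\le 2\delta$. The quantile conditions also place $\tau$ correctly without extra bookkeeping at atoms. Indeed, $p(z>\tau)\le\delta<1$ and $p(z\ge\tau)\ge\delta>0$ force $\min z\le\tau\le\max z$, which gives $\operatorname{Span}(\min(z,\tau))=\tau-\min z$.

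What your version buys is an explicit worst-case kernel and dual maximizer. The paper's citation is shorter but leaves the factor-of-two radius conversion implicit. If you keep the LP-duality fallback, state that conversion explicitly, since it is exactly where a citation-based argument can slip.
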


Apply the cited $L_1$-ball duality to each payoff $z_{s,a}$ with $L_1$ radius $2\delta$, using $d_{\mathrm{TV}}(q,p)=\frac12\|q-p\|_1$, and substitute the resulting dual values into the fixed-policy Bellman operator.

\begin{lemma}
\label{lem:s-rec-tv-dual}
Let $\mathcal P=\prod_{s\in\mathcal S}\mathcal P_s(d_{\mathrm{TV}},\delta)$
with $\delta\in(0,\infty)$. For any $\pi\in\Pi_{\mathrm{SR}}$, $s\in\mathcal S$, and $v:\mathcal S\to\mathbb R$,
the fixed-policy Bellman operator satisfies
\[
\adjustbox{max width=\linewidth}{$\displaystyle
\mathcal T_\gamma^\pi(v)(s)=\sup_{\mu\in(\mathbb R_+^{|\mathcal S|})^{\mathcal A}}\Biggl\{\sum_{a\in\mathcal A}p_{s,a}[\pi(a\mid s)z_{s,a}-\mu_a]-|\mathcal A|\delta\max_{a\in\mathcal A}\operatorname{Span}(\pi(a\mid s)z_{s,a}-\mu_a)\Biggr\},
$}
\]
where each $\mu_a$ is a nonnegative vector on $\mathcal S$.
\end{lemma}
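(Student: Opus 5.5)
The statement to prove is Lemma~\ref{lem:s-rec-tv-dual}, the dual form of the S-rectangular TV fixed-policy operator. I will derive it by Lagrangian duality for the inner linear program at fixed $s$. Write $w_a:=\pi(a\mid s)z_{s,a}\in\mathbb R^{\mathcal S}$. Then
\[
\mathcal T_\gamma^\pi(v)(s)=\min\Bigl\{\sum_{a}q_a[w_a]:\ q_a\in\Delta(\mathcal S),\ \tfrac12\sum_a\|q_a-p_{s,a}\|_1\le|\mathcal A|\delta\Bigr\}.
\]
The feasible set is compact, so the infimum is attained.

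\textbf{Step 1: reduce to a per-action problem.} I introduce a single multiplier $\lambda\ge0$ for the budget constraint. Slater's condition holds because $q_a=p_{s,a}$ is strictly feasible when $\delta>0$, so strong duality gives
\[
\mathcal T_\gamma^\pi(v)(s)=\sup_{\lambda\ge0}\Bigl\{-\lambda|\mathcal A|\delta+\sum_a\min_{q_a\in\Delta}\bigl(q_a[w_a]+\tfrac{\lambda}{2}\|q_a-p_{s,a}\|_1\bigr)\Bigr\}.
\]
Each inner problem is a regularized TV problem on the simplex. Its value equals $p_{s,a}[w_a]$ minus the largest gain from moving mass, and that gain is capped at $\lambda$ per unit moved. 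The resulting closed form is $p_{s,a}[\min(w_a,\min w_a+\lambda)]$, which is a clipping from above at level $\min w_a+\lambda$.

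\textbf{Step 2: match the $\mu$-representation.} For fixed $a$, consider $\sup_{\mu_a\ge0}\{p_{s,a}[w_a-\mu_a]-\lambda\operatorname{Span}(w_a-\mu_a)\}$. I will show this equals $p_{s,a}[w_a]-\lambda\operatorname{Span}(w_a)$ when the span is small, and in general equals $\sup_{t\ge \min w_a}\{p_{s,a}[\min(w_a,t)]-\lambda(t-\min w_a)\}$. This follows because an optimal $\mu_a$ clips $w_a$ at some level $t$, since $\mu_a\ge0$ can only lower entries. I will then check that this one-dimensional sup reproduces the regularized inner value. The check is a direct piecewise-linear computation, or equivalently the SA-case identity of Lemma~\ref{lem:sa-rec-tv-dual} read at the level of Lagrangians.

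\textbf{Step 3: recombine with a common spread.} This step contains the main obstacle. The statement couples actions through a single term $|\mathcal A|\delta\max_a\operatorname{Span}(w_a-\mu_a)$, not through $\sum_a\lambda\operatorname{Span}$. I will therefore eliminate $\lambda$ together with the clipping levels. Substituting Step 2 into Step 1 gives
\[
\sup_{\lambda\ge0}\sup_{\mu}\Bigl\{\sum_a p_{s,a}[w_a-\mu_a]-\lambda\bigl(|\mathcal A|\delta+\textstyle\sum_a\operatorname{Span}(w_a-\mu_a)\bigr)\Bigr\}.
\]
This form does not match the statement, so I will instead redo the duality directly. I view $\sum_a\|q_a-p_{s,a}\|_1$ as an $\ell_1$ norm on the product space, with budget $2|\mathcal A|\delta$. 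Its dual norm is the $\ell_\infty$ norm over the pair $(a,x)$. Following Iyengar's $L_1$-ball argument with the product simplex and the multipliers $\mu_a\ge0$ for $q_a\ge0$, the penalty becomes $|\mathcal A|\delta$ times the smallest common half-range. This is exactly $|\mathcal A|\delta\max_a\operatorname{Span}(w_a-\mu_a)$, once the free per-action shifts $\theta_a$, coming from the constraints $\sum_x q_a(x)=1$, are optimized away.

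I expect the delicate point to be this last optimization over the shifts. Each action's vector can be recentered independently, so the $\ell_\infty$ norm of $w_a-\mu_a-\theta_a$ reduces to half its span and is maximized over $a$. After this, I will verify that the sign and the factor $\tfrac12$ from the TV definition combine to give $|\mathcal A|\delta$.
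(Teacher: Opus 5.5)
Your final argument in Step~3 is correct and is essentially the paper's proof. You treat $\sum_a\|q_{s,a}-p_{s,a}\|_1\le 2|\mathcal A|\delta$ as one $\ell_1$ ball on $\mathcal A\times\mathcal S$. You dualize $q_{s,a}\ge0$ with $\mu_a\ge0$ and the normalizations with free shifts $\theta_a$, and use that the dual norm is $\ell_\infty$ over pairs $(a,x)$. You then finish with $\inf_{\theta}\max_a\|w_a-\mu_a-\theta_a\mathbf 1\|_\infty=\tfrac12\max_a\operatorname{Span}(w_a-\mu_a)$. Strong duality for this partial Lagrangian is just finite-dimensional LP duality, or Sion's theorem over the compact ball. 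Watch the constant: the penalty is $2|\mathcal A|\delta$, not $|\mathcal A|\delta$, times this smallest common half-range, and that is what yields $|\mathcal A|\delta\max_a\operatorname{Span}(w_a-\mu_a)$.

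You should delete Steps~1--2 rather than merely set them aside, because the identity claimed in Step~2 is false.
\begin{itemize}
\item By Lemma~\ref{lem:sa-rec-tv-dual} with radius $\lambda$, the quantity $\sup_{\mu_a\ge0}\{p_{s,a}[w_a-\mu_a]-\lambda\operatorname{Span}(w_a-\mu_a)\}$ is the \emph{constrained} value $\inf\{q[w_a]:d_{\mathrm{TV}}(q,p_{s,a})\le\lambda\}$.
\item It is not the \emph{penalized} value from Step~1, namely $\min_{q\in\Delta(\mathcal S)}\{q[w_a]+\lambda d_{\mathrm{TV}}(q,p_{s,a})\}=p_{s,a}[\min(w_a,\min w_a+\lambda)]$.
\item For $w=(0,1)$, $p=(\tfrac12,\tfrac12)$, $\lambda=\tfrac14$, these equal $\tfrac14$ and $\tfrac18$ respectively.
\end{itemize}
Consequently, the first display in Step~3 is not just mismatched with the statement but wrong. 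For every $\mu$ the coefficient of $\lambda$ is at most $-|\mathcal A|\delta<0$, so the supremum sits at $\lambda=0$ and the display collapses to the nominal value $\sum_a p_{s,a}[w_a]$. Step~1 by itself, with the closed form $p_{s,a}[\min(w_a,\min w_a+\lambda)]$, is a valid one-dimensional dual, but it is not the form the lemma asks for.
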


\textbf{Wasserstein distance.}

\begin{lemma}[{Lemma~5.1 in \citet{wang_model-free_2023}}]
\label{lem:sa-rec-wasserstein-dual}
Suppose $\mathcal P=\prod_{(s,a)\in\mathcal S\times\mathcal A}\mathcal P_{s,a}(W_\ell,\delta)$
with $\delta\in(0,\infty)$. Then, for any $\pi\in\Pi_{\mathrm{SR}}$, $s\in\mathcal S$, and $v:\mathcal S\to\mathbb R$,
the fixed-policy Bellman operator satisfies
\[
\mathcal T_\gamma^\pi(v)(s)
=\sum_{a\in\mathcal A}\pi(a\mid s)\sup_{\lambda_a\ge0}\left\{-\lambda_a\delta^{\ell}+p_{s,a}\left[\inf_{y\in\mathcal S}\left(z_{s,a}(y)+\lambda_a\rho(\cdot,y)^{\ell}\right)\right]\right\}.
\]
\end{lemma}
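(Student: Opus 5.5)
The final statement is Lemma~\ref{lem:sa-rec-wasserstein-dual}, the SA-rectangular Wasserstein dual. The plan is to reduce it to a single scalar duality for each action.

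\textbf{Step 1 (reduction to one action).} Under SA-rectangularity, \eqref{eq:sa-discount-policy-bellman-operator} already writes $\mathcal T_\gamma^\pi(v)(s)$ as $\sum_a\pi(a\mid s)\inf_{q\in\mathcal P_{s,a}}q[z_{s,a}]$. So it suffices to prove, for a fixed $p=p_{s,a}$ and a fixed function $z=z_{s,a}$ on the finite set $\mathcal S$,
\[
\inf_{q:\,W_\ell(q,p)\le\delta}q[z]=\sup_{\lambda\ge0}\Bigl\{-\lambda\delta^\ell+p\bigl[\inf_{y}(z(y)+\lambda\rho(\cdot,y)^\ell)\bigr]\Bigr\}.
\]

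\textbf{Step 2 (finite linear program).} Because $W_\ell(q,p)\le\delta$ is equivalent to $W_\ell(q,p)^\ell\le\delta^\ell$, the left side is a finite LP over couplings $\xi\ge0$ on $\mathcal S\times\mathcal S$:
\[
\min_{\xi}\ \sum_{x,y}\xi(x,y)z(y)
\quad\text{subject to}\quad
\sum_y\xi(x,y)=p(x),\qquad
\sum_{x,y}\rho(x,y)^\ell\xi(x,y)\le\delta^\ell.
\]
Here $q$ is recovered as the second marginal of $\xi$. The infimum over $q$ and the minimum over couplings merge, and by compactness the infimum is attained.

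\textbf{Step 3 (strong duality).} Dualize only the transport-budget constraint, with multiplier $\lambda\ge0$. For fixed $\lambda$, the Lagrangian minimization separates over $x$. Each row $\xi(x,\cdot)$ has mass $p(x)$, so it concentrates on $\arg\min_y\{z(y)+\lambda\rho(x,y)^\ell\}$. This gives exactly the dual objective $-\lambda\delta^\ell+p[\inf_y(z(y)+\lambda\rho(\cdot,y)^\ell)]$.

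\textbf{Step 4 (no duality gap).} Weak duality, primal $\ge$ dual, is immediate. For equality, note that $\xi=\operatorname{diag}(p)$ is feasible with transport cost $0<\delta^\ell$, so Slater's condition holds. LP strong duality, or convex Lagrangian duality, then gives equality, and the supremum over $\lambda$ is attained. Multiplying by $\pi(a\mid s)$ and summing over $a$ yields the claim.

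The only delicate point is Step 2: justifying that the infimum over $q$ with the Wasserstein constraint equals the joint minimum over couplings. This is immediate, since both minimizations are infima and the optimal coupling exists on a finite space.
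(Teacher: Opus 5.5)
Your argument is correct, but it takes a different route from the paper. The paper gives no independent proof of this lemma. It cites the Wasserstein support-function duality of \citet[Lemma~5.1]{wang_model-free_2023} and notes that it holds for an arbitrary payoff vector on $\mathcal S$, so the transition-dependent reward enters only through $z_{s,a}=r(s,a,\cdot)+\gamma v$. It then weights the action-wise dual values by $\pi(a\mid s)$.

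You instead prove the scalar identity from scratch. You write the Wasserstein ball on the finite state space as a linear program over couplings with first marginal $p_{s,a}$ and dualize only the transport budget. Strong duality then follows from Slater's condition via the zero-cost diagonal coupling, which is exactly where $\delta>0$ is used.

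Your route is self-contained and uses only the finiteness of $\mathcal S$, with no appeal to general Wasserstein DRO duality, and it gives attainment of the dual supremum as a byproduct. The citation buys brevity.

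Your LP argument also extends to the $S$-rectangular case. Take one coupling $\xi_a$ per action and dualize the single joint budget $\sum_a\sum_{x,y}\rho(x,y)^\ell\xi_a(x,y)\le|\mathcal A|\delta^\ell$ with one multiplier $\lambda$. This yields Lemma~\ref{lem:s-rec-wasserstein-dual} directly. It avoids the paper's lifting to $\mathcal A\times\mathcal S$ with infinite cross-action cost and the appeal to \citet[Theorem~2.2]{zhang_yang_gao_2024_wasserstein_duality}.
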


The cited support-function identity applies to any payoff on $\mathcal S$; substitute $z_{s,a}=r(s,a,\cdot)+\gamma v$ for its value vector and combine the action-wise dual values with weights $\pi(a\mid s)$ in the fixed-policy Bellman operator.

\begin{lemma}
\label{lem:s-rec-wasserstein-dual}
For $\mathcal P=\prod_{s\in\mathcal S}\mathcal P_s(W_\ell,\delta)$
with $\delta\in(0,\infty)$, the fixed-policy Bellman operator
has the following dual form for any $\pi\in\Pi_{\mathrm{SR}}$, $s\in\mathcal S$, and $v:\mathcal S\to\mathbb R$:
\[
\adjustbox{max width=\linewidth}{$\displaystyle
\mathcal T_\gamma^\pi(v)(s)=\sup_{\lambda\ge0}\Biggl\{-\lambda|\mathcal A|\delta^{\ell}+\sum_{a\in\mathcal A}p_{s,a}\left[\inf_{y\in\mathcal S}\left(\pi(a\mid s)z_{s,a}(y)+\lambda\rho(\cdot,y)^{\ell}\right)\right]\Biggr\}.
$}
\]
\end{lemma}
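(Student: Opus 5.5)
We prove Lemma~\ref{lem:s-rec-wasserstein-dual} by Lagrangian duality for the joint budget constraint, followed by the standard Kantorovich/transport reformulation applied separately to each action. We fix $s$, write $c_a:=\pi(a\mid s)\ge0$, and abbreviate $z_a:=z_{s,a}$. The primal problem is
\[
\inf\Bigl\{\textstyle\sum_{a}c_a\,q_a[z_a]\;:\;q_a\in\Delta(\mathcal S),\ \sum_a W_\ell(q_a,p_{s,a})^\ell\le|\mathcal A|\delta^\ell\Bigr\}.
\]
We lift it to couplings. Each $q_a$ is replaced by a coupling $\xi_a\in\Delta(\mathcal S\times\mathcal S)$ whose first marginal is $p_{s,a}$; the second marginal then plays the role of $q_a$. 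This turns the problem into a linear program over $(\xi_a)_a$ with the single linear constraint $\sum_a\sum_{x,y}\rho(x,y)^\ell\xi_a(x,y)\le|\mathcal A|\delta^\ell$. Two facts justify the lift. First, $W_\ell^\ell$ is the infimum over couplings, and on a finite space this infimum is attained. Second, optimizing jointly over $(q_a,\xi_a)$ equals optimizing over $\xi_a$ alone. The objective becomes $\sum_a c_a\sum_{x,y}\xi_a(x,y)z_a(y)$.

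Next we dualize the single budget constraint with a multiplier $\lambda\ge0$. The resulting problem is a finite-dimensional linear program. It is feasible with slack when $\delta>0$: take $\xi_a=\mathrm{diag}(p_{s,a})$, which has zero transport cost. Slater's condition therefore holds and strong duality gives
\[
\text{primal}=\sup_{\lambda\ge0}\Bigl\{-\lambda|\mathcal A|\delta^\ell+\sum_a\inf_{\xi_a}\sum_{x,y}\xi_a(x,y)\bigl(c_az_a(y)+\lambda\rho(x,y)^\ell\bigr)\Bigr\},
\]
where each inner infimum runs over couplings with first marginal $p_{s,a}$. Fixing $\lambda$, the Lagrangian separates across actions. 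Within each action it also separates across the source point $x$: the row $\xi_a(x,\cdot)$ is an arbitrary distribution of mass $p_{s,a}(x)$, so the optimal choice puts all of that mass on a minimizer $y$ of $c_az_a(y)+\lambda\rho(x,y)^\ell$. The inner infimum is therefore $p_{s,a}\bigl[\inf_y\bigl(\pi(a\mid s)z_{s,a}(y)+\lambda\rho(\cdot,y)^\ell\bigr)\bigr]$, which is exactly the displayed form.

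Finally, we must confirm that the primal object is the fixed-policy operator \eqref{eq:s-discount-policy-bellman-operator} as written, with the weight $\pi(a\mid s)$ multiplying $q_{s,a}[z_{s,a}]$. This is immediate from the definitions. The main point needing care is the strong-duality step, namely that there is no duality gap and that the supremum over $\lambda$ is written as $\sup$ rather than $\max$. Because the problem is a finite linear program with a strictly feasible point, LP duality gives zero gap, and attainment is not even needed. A secondary subtlety is the case $c_a=0$: the inner term then equals $p_{s,a}[\inf_y\lambda\rho(\cdot,y)^\ell]=0$, which is consistent, since taking $y=x$ gives zero. No further issues arise, and the SA-rectangular Lemma~\ref{lem:sa-rec-wasserstein-dual} follows as the special case with one action per constraint.
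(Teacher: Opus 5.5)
Your proposal is correct, and it takes a genuinely different route from the paper.

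The paper does not dualize the joint budget directly. It rescales to $\bar p_s(a,x)=p_{s,a}(x)/|\mathcal A|$ and $\bar q_s(a,y)=q_{s,a}(y)/|\mathcal A|$ on the enlarged space $\mathcal A\times\mathcal S$. It then uses the extended-valued cost that equals $\rho(x,y)^\ell$ on pairs with the same action and $+\infty$ across actions. Under this cost, the constraint $\sum_a W_\ell(q_{s,a},p_{s,a})^\ell\le|\mathcal A|\delta^\ell$ becomes a single transport ball of radius $\delta^\ell$ around $\bar p_s$. The paper then invokes the general Wasserstein DRO duality of \citet[Theorem~2.2]{zhang_yang_gao_2024_wasserstein_duality}, after checking an interchangeability principle. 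Finally, the infinite cross-action cost collapses the inner infimum over $(b,y)$ to an infimum over $y$ for the same action.

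You instead lift each $q_{s,a}$ to a coupling with first marginal $p_{s,a}$. The whole problem becomes a finite linear program with one extra linear inequality, and you dualize only that inequality. Separability across actions and across source points $x$ then gives the formula by direct computation.

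Your argument is fully self-contained. It avoids the rescaling, the $0\cdot(+\infty)$ convention, and checking an external theorem's hypotheses for an extended-real-valued cost. Moreover, since the LP is feasible with finite value, LP duality gives zero gap even without Slater.

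The paper's embedding buys a conceptual picture: the coupled $S$-rectangular budget is literally one Wasserstein ball on $\mathcal A\times\mathcal S$, so off-the-shelf Wasserstein duality applies. It would also carry over to non-finite state spaces, where your finite-LP argument would need an infinite-dimensional duality result instead.
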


\begin{proof}[Proof of
Lemmas~\ref{lem:sa-rec-kl-dual} and~\ref{lem:sa-rec-fk-dual}]
Fix $\pi\in\Pi_{\mathrm{SR}}$, $s\in\mathcal S$, $\gamma\in(0,1)$, and
$v:\mathcal S\to\mathbb R$. Although
\citet{chen2025sample} consider rewards
$r(s,a)$, their Lemmas~C.1 and~F.1 give dual representations for the
worst-case expectation of an arbitrary real-valued vector. We apply these
identities to $z_{s,a}(x)=r(s,a,x)+\gamma v(x)$, which remains fixed while
the transition distribution varies. For $U\in\{D_{\mathrm{KL}},D_{f_k}\}$,
\[
\begin{aligned}
\mathcal T_\gamma^\pi(v)(s)
&=\sum_{a\in\mathcal A}\pi(a\mid s)
\inf_{q_{s,a}\in\mathcal P_{s,a}(U,\delta)}
\sum_{x\in\mathcal S}q_{s,a}(x)\bigl[r(s,a,x)+\gamma v(x)\bigr]\\
&=\sum_{a\in\mathcal A}\pi(a\mid s)
\inf_{q_{s,a}\in\mathcal P_{s,a}(U,\delta)}
\sum_{x\in\mathcal S}q_{s,a}(x)z_{s,a}(x).
\end{aligned}
\]
Applying their Lemma~C.1 to each inner infimum gives the KL representation
in Lemma~\ref{lem:sa-rec-kl-dual}.
For $f_k$ uncertainty, their Lemma~F.1 yields
Lemma~\ref{lem:sa-rec-fk-dual}\
with the same coefficient $(1+k(k-1)\delta)^{1/k}$.
The dual variables are chosen separately for each action.
Transition-dependent rewards thus enter these identities through the
payoff vector $z_{s,a}$. In
Lemma~\ref{lem:sa-rec-kl-dual},
the expression at $\lambda_a=0$ is interpreted by its limit as
$\lambda_a\downarrow0$.
\end{proof}

We next prove the two $S$-rectangular metric dual representations in
Lemmas~\ref{lem:s-rec-tv-dual} and~\ref{lem:s-rec-wasserstein-dual}.

\begin{proof}[Proof of
Lemma~\ref{lem:s-rec-tv-dual}]
\phantomsection\label{proof:s-rec-tv-dual}
Fix $\pi\in\Pi_{\mathrm{SR}}$ and $s\in\mathcal S$, set $\phi=\pi(\cdot\mid s)$, and write
\[
z_{s,a}(x):=r(s,a,x)+\gamma v(x),
\qquad a\in\mathcal A,\ x\in\mathcal S.
\]
The primal problem is
\begin{equation}
\begin{aligned}
\mathcal T_\gamma^\pi(v)(s)
&:=
\inf_{q_s}\sum_{a\in\mathcal A}q_{s,a}[\phi(a)z_{s,a}]\\
&\text{s.t.}\quad q_{s,a}\in\Delta(\mathcal S),\quad a\in\mathcal A,\\
&\phantom{\text{s.t.}\quad}\sum_{a\in\mathcal A}
\lVert q_{s,a}-p_{s,a}\rVert_1
\le 2|\mathcal A|\delta.
\end{aligned}
\label{eq:lemma-6-tv-primal}
\end{equation}
Put
\[
y_{s,a}:=q_{s,a}-p_{s,a}.
\]
Since $q_{s,a}=p_{s,a}+y_{s,a}$,
\[
q_{s,a}\in\Delta(\mathcal S)
\Longleftrightarrow
\left\{
\begin{array}{l}
y_{s,a}\ge-p_{s,a},\\
\displaystyle\sum_{x\in\mathcal S}y_{s,a}(x)=0,
\end{array}
\right.
\qquad
\lVert q_{s,a}-p_{s,a}\rVert_1=\lVert y_{s,a}\rVert_1.
\]
Thus the primal problem in \eqref{eq:lemma-6-tv-primal} becomes
\[
\begin{aligned}
\mathcal T_\gamma^\pi(v)(s)
&=\sum_{a\in\mathcal A}p_{s,a}[\phi(a)z_{s,a}]\\
&\quad+\min_y\sum_{a\in\mathcal A}\sum_{x\in\mathcal S}
y_{s,a}(x)\phi(a)z_{s,a}(x)\\
&\text{s.t.}\quad
\sum_{a\in\mathcal A}\lVert y_{s,a}\rVert_1
\le 2|\mathcal A|\delta,\\
&\phantom{\text{s.t.}\quad}\sum_{x\in\mathcal S}y_{s,a}(x)=0,
\quad a\in\mathcal A,\\
&\phantom{\text{s.t.}\quad}y_{s,a}\ge-p_{s,a},\quad a\in\mathcal A.
\end{aligned}
\]
Introduce $\mu_a\in\mathbb R_+^{|\mathcal S|}$ and
$c_a\in\mathbb R$. The dual-norm identity gives
\[
\begin{aligned}
&\min_{\sum_a\lVert y_{s,a}\rVert_1\le2|\mathcal A|\delta}
\sum_{a,x}y_{s,a}(x)
\bigl(\phi(a)z_{s,a}(x)-\mu_a(x)-c_a\bigr)\\
&\qquad=-2|\mathcal A|\delta
\max_{a,x}
\bigl|\phi(a)z_{s,a}(x)-\mu_a(x)-c_a\bigr|.
\end{aligned}
\]
By the $L_1$-ball duality of
\citet[Lemma~6]{iyengar2005robust} and
finite-dimensional linear-programming strong duality, the primal problem can
then be written as
\[
\begin{aligned}
\mathcal T_\gamma^\pi(v)(s)
&=\sum_{a\in\mathcal A}p_{s,a}[\phi(a)z_{s,a}]\\
&\quad+\sup_{\substack{\mu\in(\mathbb R_+^{|\mathcal S|})^{\mathcal A}\\
c\in\mathbb R^{\mathcal A}}}
\Biggl\{
-\sum_{a\in\mathcal A}p_{s,a}[\mu_a]
-2|\mathcal A|\delta
\max_{a\in\mathcal A}
\lVert\phi(a)z_{s,a}-\mu_a-c_a\mathbf 1\rVert_\infty
\Biggr\}.
\end{aligned}
\]
\[
\begin{aligned}
\inf_{c\in\mathbb R^{\mathcal A}}
\max_{a\in\mathcal A}
\lVert\phi(a)z_{s,a}-\mu_a-c_a\mathbf 1\rVert_\infty
&=\frac12\max_{a\in\mathcal A}
\operatorname{Span}(\phi(a)z_{s,a}-\mu_a).
\end{aligned}
\]
Here we use
\[
\inf_{c\in\mathbb R}\lVert v-c\mathbf 1\rVert_\infty
=\frac12\operatorname{Span}(v).
\]
\[
\mathcal T_\gamma^\pi(v)(s)
=\sup_{\mu\in(\mathbb R_+^{|\mathcal S|})^{\mathcal A}}
\Biggl\{
\sum_{a\in\mathcal A}p_{s,a}[\pi(a\mid s)z_{s,a}-\mu_a]
-|\mathcal A|\delta
\max_{a\in\mathcal A}
\operatorname{Span}(\pi(a\mid s)z_{s,a}-\mu_a)
\Biggr\},
\]
which is the claimed dual representation.

\end{proof}

\begin{proof}[Proof of
Lemma~\ref{lem:s-rec-wasserstein-dual}]
\phantomsection\label{proof:s-rec-wasserstein-dual}
Fix $\pi\in\Pi_{\mathrm{SR}}$ and $s\in\mathcal S$, and set $\phi=\pi(\cdot\mid s)$. The primal problem is
\[
\begin{aligned}
\mathcal T_\gamma^\pi(v)(s)
&=\inf_{q_s}\sum_{a\in\mathcal A}q_{s,a}[\phi(a)z_{s,a}]\\
&\text{s.t.}\quad q_{s,a}\in\Delta(\mathcal S),\quad a\in\mathcal A,\\
&\phantom{\text{s.t.}\quad}\sum_{a\in\mathcal A}
W_\ell(q_{s,a},p_{s,a})^\ell
\le |\mathcal A|\delta^\ell.
\end{aligned}
\]
Let $X:=\mathcal A\times\mathcal S$ and define
\[
\bar p_s(a,x):=\frac{1}{|\mathcal A|}p_{s,a}(x),\qquad
\bar q_s(a,y):=\frac{1}{|\mathcal A|}q_{s,a}(y),
\]
and
\[
c((a,x),(b,y)):=
\begin{cases}
\rho(x,y)^\ell,&a=b,\\
+\infty,&a\ne b.
\end{cases}
\]
\[
\inf_{\xi\in\Gamma(\bar p_s,\bar q_s)}
\sum_{a,b\in\mathcal A}\sum_{x,y\in\mathcal S}
c((a,x),(b,y))\xi((a,x),(b,y))
=\frac1{|\mathcal A|}\sum_{a\in\mathcal A}
W_\ell(q_{s,a},p_{s,a})^\ell,
\]
and
\[
\sum_{a,y}\bar q_s(a,y)\phi(a)z_{s,a}(y)
=\frac1{|\mathcal A|}\sum_{a\in\mathcal A}
q_{s,a}[\phi(a)z_{s,a}].
\]
Conversely, every finite-cost coupling must preserve the action coordinate.
Hence
\[
\sum_y\bar q_s(a,y)=\sum_x\bar p_s(a,x)=\frac1{|\mathcal A|},
\qquad a\in\mathcal A,
\]
so $q_{s,a}(y):=|\mathcal A|\bar q_s(a,y)$ belongs to
$\Delta(\mathcal S)$. Thus, the primal problem is $|\mathcal A|$ times
the corresponding product-space problem with radius $\delta^\ell$.
We adopt the convention $0\cdot(+\infty)=+\infty$. For each
$\lambda\ge0$, define
\[
\varphi_\lambda\bigl((a,x),(b,y)\bigr)
:=-\phi(b)z_{s,b}(y)-\lambda c\bigl((a,x),(b,y)\bigr).
\]
Since $X$ is finite, the pointwise supremum of $\varphi_\lambda$ is
measurable. Choosing a maximizing second coordinate for each first coordinate
yields a coupling that attains the expectation of this pointwise supremum,
while the reverse inequality holds pointwise. Thus $\varphi_\lambda$ satisfies
the interchangeability principle. By
\citet[Theorem~2.2]{zhang_yang_gao_2024_wasserstein_duality}, applied to the payoff
$(a,y)\mapsto-\phi(a)z_{s,a}(y)$,
\[
\begin{aligned}
\mathcal T_\gamma^\pi(v)(s)
&=\sup_{\lambda\ge0}\Biggl\{
-\lambda|\mathcal A|\delta^\ell
+|\mathcal A|\sum_{a,x}\bar p_s(a,x)
\inf_{b,y}\left(
\phi(b)z_{s,b}(y)+\lambda c((a,x),(b,y))
\right)\Biggr\}.
\end{aligned}
\]
Since $c((a,x),(b,y))=+\infty$ for $b\ne a$,
\[
\inf_{b,y}\left(\phi(b)z_{s,b}(y)+\lambda c((a,x),(b,y))\right)
=\inf_{y\in\mathcal S}\left(\phi(a)z_{s,a}(y)+\lambda\rho(x,y)^\ell\right).
\]
Substituting $\bar p_s(a,x)=|\mathcal A|^{-1}p_{s,a}(x)$ gives
\[
\begin{aligned}
\mathcal T_\gamma^\pi(v)(s)
&=\sup_{\lambda\ge0}\Biggl\{
-\lambda|\mathcal A|\delta^\ell\\
&\qquad+\sum_{a\in\mathcal A}p_{s,a}\left[
\inf_{y\in\mathcal S}\left(\pi(a\mid s)z_{s,a}(y)
+\lambda\rho(\cdot,y)^\ell\right)\right]
\Biggr\},
\end{aligned}
\]
which is the claimed dual representation.
\end{proof}
\begin{remark}
The proof of Lemma~\ref{lem:s-rec-wasserstein-dual} reduces the
$S$-rectangular Wasserstein problem to a standard Wasserstein
distributionally robust optimization problem on the enlarged space
$\mathcal A\times\mathcal S$. The key step is to assign infinite
transportation cost between points corresponding to different actions.
Consequently, any finite-cost coupling is forced to preserve the action
coordinate, while transportation within each action incurs exactly the
original state-space cost $\rho^\ell$. Under the uniform action marginal
$1/|\mathcal A|$, the resulting Kantorovich cost becomes
\[
K_c(\bar p_s,\bar q_s)
=
\frac{1}{|\mathcal A|}
\sum_{a\in\mathcal A}W_\ell(q_{s,a},p_{s,a})^\ell.
\]
Thus, the coupled $S$-rectangular constraint is represented by a single
Wasserstein ball on $\mathcal A\times\mathcal S$. Standard Wasserstein
duality can then be applied directly, and the infinite cross-action cost
reduces the inner infimum back to an action-wise infimum over
$y\in\mathcal S$. Thus, the proof converts the coupling across actions in the
uncertainty budget into an ordinary Wasserstein dual problem without
separating the budget action by action.
\end{remark}

\section{Perturbation Bounds}
\label{sec:perturbation-bounds}

\newcommand{\UncertaintySetsAndRobustBellmanOraclesReference}{Appendix~\ref{subsec:uncertainty-sets-and-robust-bellman-oracles}}
\newcommand{\SAKLDualReference}{Lemma~\ref{lem:sa-rec-kl-dual}}
\newcommand{\SKLDualReference}{Lemma~\ref{lem:s-rec-kl-dual}}
\newcommand{\SAFkDualReference}{Lemma~\ref{lem:sa-rec-fk-dual}}
\newcommand{\SFkDualReference}{Lemma~\ref{lem:s-rec-fk-dual}}
\newcommand{\SATVDualReference}{Lemma~\ref{lem:sa-rec-tv-dual}}
\newcommand{\STVDualReference}{Lemma~\ref{lem:s-rec-tv-dual}}
\newcommand{\SAWassersteinDualReference}{Lemma~\ref{lem:sa-rec-wasserstein-dual}}
\newcommand{\SWassersteinDualReference}{Lemma~\ref{lem:s-rec-wasserstein-dual}}

Based on the dual forms in
\UncertaintySetsAndRobustBellmanOraclesReference, we analyze
perturbation bounds for the fixed-policy Bellman operator under eight uncertainty-set
and rectangularity combinations. Specifically, for any
\(v\in\mathbb R^{|\mathcal S|}\), \(\pi\in\Pi_{\mathrm{SR}}\), and \(s\in\mathcal S\), we aim to bound
$\left|\widehat{\mathcal T}_\gamma^\pi(v)(s)-\mathcal T_\gamma^\pi(v)(s)\right|$
in terms of \(1+\operatorname{Span}(v)\), thereby quantifying the effect of
replacing the true transition distributions with their empirical counterparts.
The resulting bounds are established separately in
Corollaries~\ref{cor:sa-rec-kl-perturbation}--\ref{cor:s-rec-wasserstein-perturbation}.
Their pointwise estimates use only the relative-error inequalities on
$\Omega_{n,\eta}$ and hold for arbitrary $v$, $\pi$, and $\gamma\in(0,1)$;
thus the bounds are simultaneous on this event, without a union bound over policies.

\begin{lemma}
\label{lem:tv-span}
For any two probability distributions \(p,q\in\Delta(\mathcal S)\) and any
bounded vector \(h:\mathcal S\to\mathbb R\),
\[
\left|\sum_{x\in\mathcal S}(p(x)-q(x))h(x)\right|
\le d_{\mathrm{TV}}(p,q)\operatorname{Span}(h).
\]
\end{lemma}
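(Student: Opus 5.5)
The plan is to use the fact that $p-q$ has total mass zero, which lets us shift $h$ by an arbitrary constant without changing the left-hand side. Write $\Delta(x):=p(x)-q(x)$. Since $\sum_{x}\Delta(x)=0$, we have for every $c\in\mathbb R$
\[
\sum_{x\in\mathcal S}\Delta(x)h(x)=\sum_{x\in\mathcal S}\Delta(x)\bigl(h(x)-c\bigr).
\]
We will choose the constant so that the shifted vector is as small as possible in sup norm.

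First, we bound the right-hand side by $\|\Delta\|_1\,\|h-c\mathbf 1\|_\infty$ using Hölder's inequality. Next, we take $c$ to be the midpoint $(\max h+\min h)/2$. This gives $\|h-c\mathbf 1\|_\infty=\tfrac12\operatorname{Span}(h)$, the same identity $\inf_c\|v-c\mathbf 1\|_\infty=\tfrac12\operatorname{Span}(v)$ used in the proof of Lemma~\ref{lem:s-rec-tv-dual}. Finally, since $\|\Delta\|_1=2d_{\mathrm{TV}}(p,q)$, combining the two steps yields
\[
\Bigl|\sum_x\Delta(x)h(x)\Bigr|\le 2d_{\mathrm{TV}}(p,q)\cdot\tfrac12\operatorname{Span}(h)=d_{\mathrm{TV}}(p,q)\operatorname{Span}(h).
\]

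A direct alternative is to split $\Delta=\Delta_+-\Delta_-$ into positive and negative parts. Zero total mass gives $\sum_x\Delta_+(x)=\sum_x\Delta_-(x)=d_{\mathrm{TV}}(p,q)$. Then
\[
\sum_x\Delta(x)h(x)\le d_{\mathrm{TV}}(p,q)\bigl(\max h-\min h\bigr),
\]
and the symmetric argument bounds it from below by $-d_{\mathrm{TV}}(p,q)\operatorname{Span}(h)$.

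There is no genuine obstacle. The only point needing care is the factor of two between $\|\cdot\|_1$ and $d_{\mathrm{TV}}$, and the centering at the midpoint exactly cancels it.
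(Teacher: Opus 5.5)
Your proof is correct and is essentially the paper's argument: the paper also centers $h$ at the midpoint $(\max h+\min h)/2$, uses the zero total mass of $p-q$ to leave the sum unchanged, and bounds the result by $\|p-q\|_1\cdot\tfrac12\operatorname{Span}(h)=d_{\mathrm{TV}}(p,q)\operatorname{Span}(h)$. Your alternative argument, splitting $p-q$ into positive and negative parts, is also valid.
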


\begin{proof}
Let \(M=\max_{x\in\mathcal S}h(x)\) and \(m=\min_{x\in\mathcal S}h(x)\), and
set \(\bar h(x)=h(x)-(M+m)/2\). Then
\(\sum_x (p(x)-q(x))\bar h(x)=\sum_x (p(x)-q(x))h(x)\), and
\[
\max_{x\in\mathcal S}|\bar h(x)|
\;=\;
\frac{M-m}{2}
=\frac{\operatorname{Span}(h)}{2}.
\]
Hence
\[
\left|\sum_x(p-q)(x)h(x)\right|
\le\sum_x |p(x)-q(x)|\,|\bar h(x)|
\le \|p-q\|_{1}\frac{\operatorname{Span}(h)}{2}
=d_{\mathrm{TV}}(p,q)\operatorname{Span}(h).
\]
The last equality uses \(d_{\mathrm{TV}}(p,q)=\tfrac12\|p-q\|_1\).
\end{proof}

We will use this inequality repeatedly below.

\begin{corollary}
\label{cor:sa-rec-kl-perturbation}
Under the SA-rectangular KL uncertainty set, on the good event
$\Omega_{n,\eta}$ with $\eta\le 1/2$, simultaneously for every
$v\in\mathbb R^{|\mathcal S|}$, $\pi\in\Pi_{\mathrm{SR}}$, $s\in\mathcal S$, and $\gamma\in(0,1)$,
\[
\left|
\widehat{\mathcal T}_\gamma^\pi(v)(s)-\mathcal T_\gamma^\pi(v)(s)
\right|
\le3\eta\bigl(1+\operatorname{Span}(v)\bigr).
\]
\end{corollary}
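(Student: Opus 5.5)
The plan is to work action by action through the dual form in Lemma~\ref{lem:sa-rec-kl-dual}. Since both $\widehat{\mathcal T}_\gamma^\pi(v)(s)$ and $\mathcal T_\gamma^\pi(v)(s)$ are $\pi(\cdot\mid s)$-weighted averages over $a$ of scalar quantities, it suffices to show, for each fixed $(s,a)$ and payoff $z=z_{s,a}=r(s,a,\cdot)+\gamma v$,
\[
\Bigl|\inf_{q\in\widehat{\mathcal P}_{s,a}}q[z]-\inf_{q\in\mathcal P_{s,a}}q[z]\Bigr|\le 3\eta\bigl(1+\operatorname{Span}(v)\bigr).
\]
Note that $\operatorname{Span}(z)\le 1+\gamma\operatorname{Span}(v)\le 1+\operatorname{Span}(v)$. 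Both sides are invariant under shifting $z$ by a constant, since the dual objective shifts by exactly that constant. So I will normalize $\min z=0$, which gives $0\le z\le\operatorname{Span}(z)$.

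The core step compares the dual functions pointwise in $\lambda$. On $\Omega_{n,\eta}$, every $x\in\operatorname{supp}(p_{s,a})$ satisfies $(1-\eta)p_{s,a}(x)\le \widehat p_{s,a}(x)\le(1+\eta)p_{s,a}(x)$. Also $\operatorname{supp}(\widehat p_{s,a})\subseteq\operatorname{supp}(p_{s,a})$, and since $\eta<1$ the supports coincide. Because $e^{-z/\lambda}>0$, this gives
\[
(1-\eta)\,p_{s,a}[e^{-z/\lambda}]\le \widehat p_{s,a}[e^{-z/\lambda}]\le(1+\eta)\,p_{s,a}[e^{-z/\lambda}].
\]
Hence, for every $\lambda>0$, the two dual objectives differ by at most $\lambda\,|\log(1\pm\eta)|\le 2\eta\lambda$, using $\eta\le 1/2$. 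Taking suprema, this yields the bound $2\eta\lambda^*$, where $\lambda^*$ is the maximizing dual variable for either problem. The remaining task is to bound the relevant $\lambda$.

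The main obstacle is that $\lambda$ is unbounded, so the pointwise comparison alone does not suffice. I would try two routes.

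The first route is a direct a priori bound on optimal dual variables. At $\lambda$, the objective is at most $-\lambda\delta+\max z$, and the optimal value is at least $\min z=0$. Any $\lambda>\operatorname{Span}(z)/\delta$ is therefore suboptimal. This gives $\lambda^*\le\operatorname{Span}(z)/\delta$, but that leaves an unwanted $1/\delta$ factor.

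So I expect to use the second route, a primal argument. Take a near-optimal $q$ for the true ball, with $q\ll p_{s,a}$. Define the tilted measure $\tilde q(x)=q(x)\widehat p_{s,a}(x)/p_{s,a}(x)$, renormalized. Compare $D_{\mathrm{KL}}(\tilde q\|\widehat p_{s,a})$ with $D_{\mathrm{KL}}(q\|p_{s,a})$: the likelihood ratio changes only by a factor in $[(1-\eta)/(1+\eta),(1+\eta)/(1-\eta)]$, and $\tilde q[z]$ differs from $q[z]$ by at most about $2\eta\operatorname{Span}(z)$. This tilted point may overshoot the radius slightly. To repair that, mix it with $\widehat p_{s,a}$: the convex combination $(1-t)\tilde q+t\widehat p_{s,a}$, with $t=O(\eta)$, lands inside $\widehat{\mathcal P}_{s,a}$ by convexity of KL. The mixing costs at most $t\operatorname{Span}(z)$ more.

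Collecting the tilting error and the mixing error, and then repeating with the roles of the two balls exchanged, should give the constant $3\eta(1+\operatorname{Span}(v))$. I expect the delicate part to be choosing $t$ so that the KL overshoot, which is of order $\eta$ times something bounded by $\delta+\log\frac{1+\eta}{1-\eta}$, is absorbed with a constant that stays independent of $\delta$. If that fails, I would fall back on the cited bound of \citet{chen2025sample} for this step.
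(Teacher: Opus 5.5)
Your outer reduction is exactly the paper's: compare action by action, weight by $\pi(a\mid s)$, and use $\operatorname{Span}(z_{s,a})\le 1+\operatorname{Span}(v)$. The paper obtains the per-action estimate $3\eta\operatorname{Span}(z_{s,a})$ simply by invoking Lemmas~C.1 and~C.6 of \citet{chen2025sample}, so your fallback is literally the paper's proof, and with it your argument is complete.

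The self-contained part, however, does not close as written. In route~1 the real problem is not the size of $\lambda^*$. The comparison $\widehat p_{s,a}[e^{-z/\lambda}]\in(1\pm\eta)\,p_{s,a}[e^{-z/\lambda}]$ discards the fact that $\widehat p_{s,a}-p_{s,a}$ annihilates constants. That fact is what controls large $\lambda$, where $e^{-z/\lambda}$ is nearly constant. In route~2, take an overshoot $\omega$ of the form you estimate: order $\eta\delta$ plus an additive $O(\eta^2)$ term. Absorbing it requires $(1-t)(\delta+\omega)\le\delta$, i.e.\ $t\ge\omega/(\delta+\omega)$, and this tends to $1$ as $\delta\downarrow0$. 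The mixing cost $t\operatorname{Span}(z)$ is then of order $\operatorname{Span}(z)$, so no choice $t=O(\eta)$ works uniformly in $\delta$.

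The missing idea is centering. Normalize $\min z=0$ and set $M:=\operatorname{Span}(z)$, $p=p_{s,a}$, $\widehat p=\widehat p_{s,a}$. Write $(\widehat p-p)[e^{-z/\lambda}]=(\widehat p-p)[e^{-z/\lambda}-e^{-M/\lambda}]$, where the bracketed function is nonnegative. Using $p[e^{-z/\lambda}]\le 1$, on $\Omega_{n,\eta}$ this gives
\[
\epsilon_\lambda:=\frac{\widehat p[e^{-z/\lambda}]}{p[e^{-z/\lambda}]}-1,
\qquad
|\epsilon_\lambda|\le\eta\Bigl(1-\frac{e^{-M/\lambda}}{p[e^{-z/\lambda}]}\Bigr)\le\eta\bigl(1-e^{-M/\lambda}\bigr).
\]
The two dual objectives differ at each $\lambda$ by exactly $\lambda|\log(1+\epsilon_\lambda)|$. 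Since $|\log(1+x)|\le 2|x|$ for $|x|\le 1/2$ and $1-e^{-u}\le u$,
\[
\sup_{\lambda>0}\lambda\bigl|\log(1+\epsilon_\lambda)\bigr|\le 2\eta\sup_{\lambda>0}\lambda\bigl(1-e^{-M/\lambda}\bigr)\le 2\eta M .
\]
So the dual values differ by at most $2\eta\operatorname{Span}(z)$, with no bound on $\lambda^*$ needed, which gives the corollary.

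Route~2 can also be repaired. Let $\psi(u)=u\log u-u+1\ge 0$ and $Z=q[\widehat p/p]\ge 1-\eta$. Then
\[
Z\,D_{\mathrm{KL}}(\tilde q\|\widehat p)\le\widehat p[\psi(q/p)]\le(1+\eta)\,D_{\mathrm{KL}}(q\|p),
\]
so the overshoot is purely multiplicative. Consequently $t=2\eta/(1+\eta)$ suffices for every $\delta$, and the total cost is at most $\frac{3\eta}{1+\eta}\operatorname{Span}(z)$.
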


\begin{proof}[Proof of Corollary~\ref{cor:sa-rec-kl-perturbation}]
Fix $\pi\in\Pi_{\mathrm{SR}}$ and $s\in\mathcal S$. The fixed-policy operator definition and its empirical counterpart give
\[
\mathcal T_{\gamma}^{\pi}(v)(s)
=\sum_{a\in\mathcal A}\pi(a\mid s)
\inf_{q_{s,a}\in\mathcal P_{s,a}(D_{\mathrm{KL}},\delta)}q_{s,a}[r(s,a,\cdot)+\gamma v],
\]
\[
\widehat{\mathcal T}_{\gamma}^{\pi}(v)(s)
=\sum_{a\in\mathcal A}\pi(a\mid s)
\inf_{q_{s,a}\in\widehat{\mathcal P}_{s,a}(D_{\mathrm{KL}},\delta)}q_{s,a}[r(s,a,\cdot)+\gamma v].
\]
Using the dual representation in
\SAKLDualReference, on
$\Omega_{n,\eta}$, for each $a\in\mathcal A$,
Chen et al.'s Lemma~C.1 provides the KL primal--dual equivalence, while
their Lemma~C.6 bounds the change of the resulting dual objective when
$p_{s,a}$ is replaced by $\widehat p_{s,a}$; see
\citet[Lemmas~C.1 and~C.6]{chen2025sample}.
\[
\begin{aligned}
&\left|
\inf_{q_{s,a}\in\widehat{\mathcal P}_{s,a}(D_{\mathrm{KL}},\delta)}q_{s,a}[r(s,a,\cdot)+\gamma v]
-\inf_{q_{s,a}\in\mathcal P_{s,a}(D_{\mathrm{KL}},\delta)}q_{s,a}[r(s,a,\cdot)+\gamma v]
\right|\\
&\quad\le 3\eta\,\operatorname{Span}(r(s,a,\cdot)+\gamma v).
\end{aligned}
\]
Weighting these action-wise estimates by $\pi(a\mid s)$ gives
\[
\begin{aligned}
&\left|
\widehat{\mathcal T}_{\gamma}^{\pi}(v)(s)-\mathcal T_{\gamma}^{\pi}(v)(s)
\right|\\
&\le
\sum_{a\in\mathcal A}\pi(a\mid s)
\left|
\inf_{q_{s,a}\in\widehat{\mathcal P}_{s,a}(D_{\mathrm{KL}},\delta)}q_{s,a}[r(s,a,\cdot)+\gamma v]
-\inf_{q_{s,a}\in\mathcal P_{s,a}(D_{\mathrm{KL}},\delta)}q_{s,a}[r(s,a,\cdot)+\gamma v]
\right|
\\
&\le 3\eta\sum_{a\in\mathcal A}\pi(a\mid s)
\operatorname{Span}(r(s,a,\cdot)+\gamma v)\\
&\le 3\eta\bigl(1+\operatorname{Span}(v)\bigr).
\end{aligned}
\]
The last inequality uses $\sum_a\pi(a\mid s)=1$.
\end{proof}

\begin{corollary}
\label{cor:s-rec-kl-perturbation}
Under the $S$-rectangular KL uncertainty set, on the good event
$\Omega_{n,\eta}$ with $\eta\le 1/2$, simultaneously for every
$v\in\mathbb R^{|\mathcal S|}$, $\pi\in\Pi_{\mathrm{SR}}$, $s\in\mathcal S$, and $\gamma\in(0,1)$,
\[
\left|
\widehat{\mathcal T}_\gamma^\pi(v)(s)-\mathcal T_\gamma^\pi(v)(s)
\right|
\le3\eta\bigl(1+\operatorname{Span}(v)\bigr).
\]
\end{corollary}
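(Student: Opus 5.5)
The plan is to work directly with the $S$-rectangular KL dual representation in Lemma~\ref{lem:s-rec-kl-dual}. Fix $\pi\in\Pi_{\mathrm{SR}}$, $s\in\mathcal S$, $\gamma$, and $v$, and set $\phi=\pi(\cdot\mid s)$ and $w_a:=\phi(a)z_{s,a}$. We then have $\mathcal T_\gamma^\pi(v)(s)=\sup_{\lambda>0}F(\lambda;p)$ with
\[
F(\lambda;p):=-\lambda|\mathcal A|\delta-\lambda\sum_{a\in\mathcal A}\log p_{s,a}\bigl[e^{-w_a/\lambda}\bigr].
\]
The empirical operator has the same form with $\widehat p_{s,a}$ in place of $p_{s,a}$. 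Since $|\sup F-\sup\widehat F|\le\sup_\lambda|F(\lambda;p)-F(\lambda;\widehat p)|$, it suffices to bound the difference of dual objectives at a common $\lambda$. Because the two suprema may be attained at different $\lambda$, or only as $\lambda\downarrow0$, a naive uniform bound in $\lambda$ is too weak. I would therefore localize $\lambda$ first.

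\textbf{Step 1 (shift invariance).} Replacing $w_a$ by $w_a-c_a\mathbf 1$ changes $F$ by $\sum_a c_a$ for both $p$ and $\widehat p$. This is because $-\lambda\log p[e^{-(w-c)/\lambda}]=-\lambda\log p[e^{-w/\lambda}]+c$. I would choose $c_a=\min_x w_a(x)$, so that $0\le w_a\le \phi(a)\operatorname{Span}(z_{s,a})$.

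\textbf{Step 2 (relative-error comparison).} On $\Omega_{n,\eta}$ we have $\operatorname{supp}\widehat p_{s,a}\subseteq\operatorname{supp}p_{s,a}$ and $(1-\eta)p_{s,a}(x)\le\widehat p_{s,a}(x)\le(1+\eta)p_{s,a}(x)$. These give $|\log \widehat p_{s,a}[e^{-w_a/\lambda}]-\log p_{s,a}[e^{-w_a/\lambda}]|\le\log\frac{1}{1-\eta}$. That crude bound yields an error of order $\lambda|\mathcal A|\eta$, which is not scale-free. The refinement used in \citet[Lemma~C.6]{chen2025sample} is to write $\widehat p[g]/p[g]-1=\sum_x(\widehat p(x)-p(x))g(x)/p[g]$ with $g=e^{-w_a/\lambda}\in(0,1]$. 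Using $\widehat p-p=p\cdot\epsilon$ with $|\epsilon|\le\eta$, and centering $\epsilon$ against the tilted measure $\tilde q_a(x)\propto p(x)g(x)$, we get
\[
\lambda\bigl|\log\widehat p[g]-\log p[g]\bigr|\lesssim\lambda\eta\, d_{\mathrm{TV}}(\tilde q_a,p)\cdot(\ldots).
\]
I would instead follow the cleaner route: for each fixed $\lambda$, $F(\lambda;\cdot)$ equals the inner value of a penalized (Lagrangian) problem whose minimizer is the tilted measure. Comparing the objective at the optimizer of one problem, evaluated in the other, gives an error bounded by $\eta$ times $\sum_a\|w_a\|_\infty\le\sum_a\phi(a)\operatorname{Span}(z_{s,a})$, up to a factor $1/(1-\eta)\le2$. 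The plan is therefore to reduce exactly to the per-action bound of Lemma~C.6 applied to payoff $w_a$ with multiplier $\lambda$, and then sum over $a$. Because the sum-form constraint couples actions only through the single scalar $\lambda$, the actionwise estimate applies unchanged.

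\textbf{Step 3 (conclude).} Summing gives $|\widehat{\mathcal T}_\gamma^\pi(v)(s)-\mathcal T_\gamma^\pi(v)(s)|\le3\eta\sum_a\phi(a)\operatorname{Span}(z_{s,a})$. Since $\operatorname{Span}(z_{s,a})\le\operatorname{Span}(r(s,a,\cdot))+\gamma\operatorname{Span}(v)\le1+\operatorname{Span}(v)$ and $\sum_a\phi(a)=1$, this yields $3\eta(1+\operatorname{Span}(v))$.

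The main obstacle is Step 2: checking that Chen et al.'s Lemma~C.6 estimate holds uniformly in $\lambda$, including the boundary $\lambda\downarrow0$ where the dual degenerates to an essential infimum. Support inclusion on $\Omega_{n,\eta}$ makes both essential infima coincide there. One also has to check that the constant $3$, which comes from $\eta\le1/2$, survives after the actionwise errors are weighted by $\phi(a)$.
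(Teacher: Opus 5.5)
Your argument is essentially the paper's proof: cancel the common $-\lambda|\mathcal A|\delta$ term, bound the difference of suprema by $\sum_a\sup_{\lambda>0}$ of the per-action log-moment differences, apply \citet[Lemma~C.6]{chen2025sample} to the payoff $\phi(a)z_{s,a}$ to get $3\eta\,\phi(a)\operatorname{Span}(z_{s,a})$, and sum using $\sum_a\phi(a)=1$. Two small points: the promised localization of $\lambda$ is never needed, since the per-action bound is already uniform over $\lambda>0$ (exactly what the paper uses); and the shift identity in Step~1 should read $-\lambda\log p[e^{-(w-c)/\lambda}]=-\lambda\log p[e^{-w/\lambda}]-c$, though the sign does not matter because the shift cancels in the difference.
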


\begin{proof}[Proof of Corollary~\ref{cor:s-rec-kl-perturbation}]
Fix $\pi\in\Pi_{\mathrm{SR}}$ and $s\in\mathcal S$, and set $\phi=\pi(\cdot\mid s)$. Using the dual representation in
\SKLDualReference, the common term
$-\lambda|\mathcal A|\delta$ cancels, and hence
\[
\begin{aligned}
&\left|
\widehat{\mathcal T}_\gamma^\pi(v)(s)
-\mathcal T_\gamma^\pi(v)(s)
\right|\\
&\quad\le
\sup_{\lambda>0}
\left|
\sum_{a\in\mathcal A}\lambda
\log \widehat p_{s,a}\left[
\exp\left(-\frac{\phi(a)(r(s,a,\cdot)+\gamma v)}{\lambda}\right)
\right]
\right.\\
&\hspace{42mm}\left.
-\sum_{a\in\mathcal A}\lambda
\log p_{s,a}\left[
\exp\left(-\frac{\phi(a)(r(s,a,\cdot)+\gamma v)}{\lambda}\right)
\right]
\right|\\
&\quad\le
\sum_{a\in\mathcal A}
\sup_{\lambda>0}\left|
\lambda\log \widehat p_{s,a}\left[
\exp\left(-\frac{\phi(a)(r(s,a,\cdot)+\gamma v)}{\lambda}\right)
\right]
\right.\\
&\hspace{42mm}\left.
-\lambda\log p_{s,a}\left[
\exp\left(-\frac{\phi(a)(r(s,a,\cdot)+\gamma v)}{\lambda}\right)
\right]
\right|.
\end{aligned}
\]
On $\Omega_{n,\eta}$, Lemma~C.6 of
\citet{chen2025sample} gives, for every $a\in\mathcal A$,
\[
\begin{aligned}
&\sup_{\lambda>0}\left|
\lambda\log \widehat p_{s,a}\left[
\exp\left(-\frac{\phi(a)(r(s,a,\cdot)+\gamma v)}{\lambda}\right)
\right]
\right.\\
&\hspace{32mm}\left.
-\lambda\log p_{s,a}\left[
\exp\left(-\frac{\phi(a)(r(s,a,\cdot)+\gamma v)}{\lambda}\right)
\right]
\right|
\\
&\qquad\le
3\eta\,\operatorname{Span}\bigl(
\phi(a)(r(s,a,\cdot)+\gamma v)
\bigr)\\
&\qquad=
3\eta\,\phi(a)\operatorname{Span}\bigl(r(s,a,\cdot)+\gamma v\bigr).
\end{aligned}
\]
Therefore,
\[
\begin{aligned}
\left|
\widehat{\mathcal T}_\gamma^\pi(v)(s)
-\mathcal T_\gamma^\pi(v)(s)
\right|
&\le
3\eta
\sum_{a\in\mathcal A}\phi(a)
\operatorname{Span}\bigl(r(s,a,\cdot)+\gamma v\bigr)\\
&\le
3\eta\max_{a\in\mathcal A}
\operatorname{Span}\bigl(r(s,a,\cdot)+\gamma v\bigr)\\
&\le
3\eta\bigl(1+\operatorname{Span}(v)\bigr).
\end{aligned}
\]
This proves the fixed-policy bound with the same coupled uncertainty budget.
\end{proof}

We next develop a common perturbation argument for the $f_k$-divergence
uncertainty sets. The following lemmas describe the coupled dual problem
for an arbitrary finite action set. We then apply them to each action
separately in the SA-rectangular case and to the full action set in the
$S$-rectangular case.

To this end, we define
$p_{t,s,a}=t\widehat p_{s,a}+(1-t)p_{s,a}$ and study the corresponding
dual value $G(t)$, where $G(0)$ and $G(1)$ are the true and empirical dual
values, respectively.
Lemma~\ref{lem:s-rec-fk-compact-reduction}
shows that the supremum defining $G(t)$ can be restricted to the same compact
set $X$ for all $t\in[0,1]$.
Lemma~\ref{lem:s-rec-fk-envelope-conditions}
establishes the existence of $X^*(t)$, the regularity of $f(\theta,t)$, and a
uniform bound on $f_t(\theta,t)$, while
Lemma~\ref{lem:s-rec-fk-envelope-theorem}
expresses $G(1)-G(0)$ as the integral of
$f_t(\theta^*(t),t)$.
Lemma~\ref{lem:s-rec-fk-first-order-condition}
gives the identity satisfied by each optimizer $\theta^*(t)$, which is then
used in
Lemma~\ref{lem:s-rec-fk-perturbation-derivative-bound}
to show that
\[
\left|f_t\bigl(\theta^*(t),t\bigr)\right|
\le
2\eta\max_{a\in\mathcal A}\operatorname{Span}(z_{s,a}).
\]
Combining these results yields the perturbation bounds in
Corollaries~\ref{cor:sa-rec-fk-perturbation} and~\ref{cor:s-rec-fk-perturbation}.

\begin{lemma}
\label{lem:s-rec-fk-compact-reduction}
Fix $s\in\mathcal S$, $\phi\in\Delta(\mathcal A)$, $k>1$,
$k^*=k/(k-1)$, and $\delta>0$. Let
\[
\begin{aligned}
c
&:=
|\mathcal A|^{1/k}\bigl(1+k(k-1)\delta\bigr)^{1/k},
&
p_{t,s,a}
&:=t\widehat p_{s,a}+(1-t)p_{s,a},
&
\Delta_{s,a}
&:=\widehat p_{s,a}-p_{s,a}.
\end{aligned}
\]
For $\theta\in\mathbb R^{|\mathcal A|}$ and $t\in[0,1]$, define
\[
\begin{aligned}
f(\theta,t)
:={}&
\sum_{a\in\mathcal A}\theta_a
-c\left(
\sum_{a\in\mathcal A}
p_{t,s,a}\left[
\left(\theta_a-\phi(a)z_{s,a}\right)_+^{k^*}
\right]
\right)^{1/k^*},\\
G(t)
:={}&
\sup_{\theta\in\mathbb R^{|\mathcal A|}}f(\theta,t).
\end{aligned}
\]
For every $a\in\mathcal A$, let
\[
\begin{aligned}
\mathcal X_{s,a}
&:=\operatorname{supp}(\widehat p_{s,a})\cup\operatorname{supp}(p_{s,a}),\\
L_a
&:=\min_{x\in\mathcal X_{s,a}}\phi(a)z_{s,a}(x),
\qquad
M_a
:=\max_{x\in\mathcal X_{s,a}}\phi(a)z_{s,a}(x),
\end{aligned}
\]
and define
\[
X:=\prod_{a\in\mathcal A}
\left[
L_a,\,
M_a+
\frac{\sum_{a\in\mathcal A}(M_a-L_a)}
{c-|\mathcal A|^{1/k}}
\right].
\]
Then, for every $t\in[0,1]$,
\[
G(t)=\sup_{\theta\in X}f(\theta,t).
\]
\end{lemma}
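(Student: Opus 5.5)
The plan is to show that every $\theta\in\mathbb R^{|\mathcal A|}$ outside $X$ can be beaten by, or matched by, a point of $X$, uniformly in $t\in[0,1]$. Two observations are used throughout. First, $p_{t,s,a}$ is a probability distribution supported in $\mathcal X_{s,a}$, because it is a convex combination of $\widehat p_{s,a}$ and $p_{s,a}$. Second, $c-|\mathcal A|^{1/k}=|\mathcal A|^{1/k}\bigl[(1+k(k-1)\delta)^{1/k}-1\bigr]>0$ since $\delta>0$, so the box $X$ is well defined. The argument has two parts: a lower truncation and an upper exclusion.

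\emph{Lower truncation.} Suppose $\theta_a<L_a$ for some $a$. Then $(\theta_a-\phi(a)z_{s,a}(x))_+=0$ for every $x\in\mathcal X_{s,a}$, and the same holds with $\theta_a$ replaced by $L_a$. So replacing $\theta_a$ by $L_a$ leaves the penalty term unchanged and strictly increases $\sum_a\theta_a$. Hence, for any $\theta$, the componentwise maximum $\theta':=\theta\vee L$ satisfies $f(\theta',t)\ge f(\theta,t)$, and the coordinates of $\theta$ that were already $\ge L_a$ are not changed.

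\emph{Upper exclusion.} This is the main step, and the point is to get the constant $c-|\mathcal A|^{1/k}$ exactly. Put $y_a:=(\theta_a-M_a)_+$. On $\mathcal X_{s,a}$ we have $(\theta_a-\phi(a)z_{s,a})_+\ge y_a$, so
\[
\Bigl(\sum_a p_{t,s,a}\bigl[(\theta_a-\phi(a)z_{s,a})_+^{k^*}\bigr]\Bigr)^{1/k^*}\ge\Bigl(\sum_a y_a^{k^*}\Bigr)^{1/k^*}=\|y\|_{k^*}.
\]
Hölder's inequality gives $\sum_a y_a\le|\mathcal A|^{1/k}\|y\|_{k^*}$. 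Combining these with $\sum_a\theta_a\le\sum_a M_a+\sum_a y_a$ yields
\[
f(\theta,t)\le\sum_a M_a-\bigl(c-|\mathcal A|^{1/k}\bigr)\|y\|_{k^*}\le\sum_a M_a-\bigl(c-|\mathcal A|^{1/k}\bigr)\max_a y_a .
\]
At $\theta=L:=(L_a)_a$ the penalty vanishes, so $f(L,t)=\sum_a L_a$, and $L\in X$. Therefore, if some $\theta_b>M_b+\sum_a(M_a-L_a)/(c-|\mathcal A|^{1/k})$, then $f(\theta,t)<\sum_a L_a=f(L,t)\le\sup_{X}f(\cdot,t)$.

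\emph{Conclusion.} Take any $\theta$ and form $\theta'=\theta\vee L$. If $\theta'\in X$, then $f(\theta,t)\le f(\theta',t)\le\sup_X f(\cdot,t)$. Otherwise some coordinate of $\theta'$ exceeds the upper endpoint of $X$; that coordinate equals the corresponding coordinate of $\theta$, since $L_a$ lies below the upper endpoint. By the upper exclusion, $f(\theta,t)\le f(\theta',t)<f(L,t)\le\sup_X f(\cdot,t)$. This proves $G(t)\le\sup_X f(\cdot,t)$; the reverse inequality is trivial because $X\subseteq\mathbb R^{|\mathcal A|}$. None of the bounds depends on $t$, since only the support inclusion $\operatorname{supp}(p_{t,s,a})\subseteq\mathcal X_{s,a}$ was used, so the same box $X$ works for all $t\in[0,1]$.
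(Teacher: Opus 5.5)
Your proposal is correct and follows the paper's proof: raise each coordinate below $L_a$ up to $L_a$, then use $(\theta_a-\phi(a)z_{s,a})_+\ge(\theta_a-M_a)_+$ on $\mathcal X_{s,a}$ together with H\"older's inequality to show that any $\theta$ with a coordinate above the upper endpoint does strictly worse than $L=(L_a)_{a\in\mathcal A}$. You also spell out two small points the paper leaves implicit: the check that $c-|\mathcal A|^{1/k}>0$ when $\delta>0$, and how the two steps combine through $\theta\vee L$.
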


\begin{proof}
Fix $t\in[0,1]$. If $\theta_a\le L_a$, then, for every
$x\in\mathcal X_{s,a}$,
\[
\bigl(\theta_a-\phi(a)z_{s,a}(x)\bigr)_+
=
\bigl(L_a-\phi(a)z_{s,a}(x)\bigr)_+
=0.
\]
Thus, replacing $\theta_a$ by $L_a$ leaves the second term of
$f(\theta,t)$ unchanged and weakly increases its first term. Hence it
suffices to consider $\theta_a\ge L_a$ for all $a\in\mathcal A$.

Suppose that, for some $a_0\in\mathcal A$,
\[
\theta_{a_0}>
M_{a_0}+
\frac{\sum_{a\in\mathcal A}(M_a-L_a)}
{c-|\mathcal A|^{1/k}}.
\]
For every $a\in\mathcal A$ and $x\in\mathcal X_{s,a}$,
\[
\bigl(\theta_a-\phi(a)z_{s,a}(x)\bigr)_+
\ge
(\theta_a-M_a)_+.
\]
Consequently,
\[
\begin{aligned}
f(\theta,t)
&\le
\sum_{a\in\mathcal A}M_a
+\sum_{a\in\mathcal A}(\theta_a-M_a)_+\\
&\quad
-c\left(
\sum_{a\in\mathcal A}(\theta_a-M_a)_+^{k^*}
\right)^{1/k^*}.
\end{aligned}
\]
By H\"older's inequality,
\[
\sum_{a\in\mathcal A}(\theta_a-M_a)_+
\le
|\mathcal A|^{1/k}
\left(
\sum_{a\in\mathcal A}(\theta_a-M_a)_+^{k^*}
\right)^{1/k^*}.
\]
Therefore,
\[
\begin{aligned}
f(\theta,t)
&\le
\sum_{a\in\mathcal A}M_a
-\bigl(c-|\mathcal A|^{1/k}\bigr)
\left(
\sum_{a\in\mathcal A}(\theta_a-M_a)_+^{k^*}
\right)^{1/k^*}\\
&\le
\sum_{a\in\mathcal A}M_a
-\bigl(c-|\mathcal A|^{1/k}\bigr)(\theta_{a_0}-M_{a_0})\\
&<
\sum_{a\in\mathcal A}L_a
=f\bigl((L_a)_{a\in\mathcal A},t\bigr).
\end{aligned}
\]
Thus no point outside $X$ can attain the supremum, which proves the claim.
\end{proof}

\begin{lemma}
\label{lem:s-rec-fk-envelope-conditions}
Under the notation of Lemma~\ref{lem:s-rec-fk-compact-reduction}, assume
that, for every $a\in\mathcal A$ and $t\in(0,1)$,
\[
p_{t,s,a}\ge\frac12p_{s,a},
\qquad
\left\|
\frac{\Delta_{s,a}}{p_{t,s,a}}
\right\|_{L^\infty(p_{t,s,a})}
\le 2\eta.
\]
For $\theta\in X$, define
\[
B_\theta(t)
:=
\sum_{a\in\mathcal A}
p_{t,s,a}\left[
\left(\theta_a-\phi(a)z_{s,a}\right)_+^{k^*}
\right],
\qquad
X^*(t):=\arg\max_{\theta\in X}f(\theta,t).
\]
Then the following hold.
\begin{enumerate}
\item The set $X$ is compact and
\[
X^*(t)\ne\varnothing,
\qquad
\forall t\in[0,1].
\]

\item For every $\theta\in X$, $f(\theta,\cdot)$ is absolutely continuous
on $[0,1]$, differentiable on $(0,1)$, and
\[
\begin{aligned}
&\left|f_t(\theta,t)\right|\\
&=
\begin{cases}
0,
& B_\theta(0)=B_\theta(1)=0,\\[1mm]
\dfrac{c}{k^*}
\left(tB_\theta(1)+(1-t)B_\theta(0)\right)^{\frac1{k^*}-1}
\left|B_\theta(1)-B_\theta(0)\right|,
& \text{otherwise}.
\end{cases}
\end{aligned}
\]

\item There exists a finite constant $B$ such that
\[
\left|f_t(\theta,t)\right|\le B,
\qquad
\forall t\in(0,1),\quad\forall\theta\in X.
\]
\end{enumerate}
\end{lemma}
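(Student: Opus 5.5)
The plan has three parts, one per item of the lemma. Item~1 comes from compactness plus continuity. Item~2 comes from writing $B_\theta(t)$ as an affine function of $t$ and differentiating $c B^{1/k^*}$. Item~3 comes from a uniform bound on that derivative over $\theta\in X$, using the two hypotheses on $p_{t,s,a}$.

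\emph{Item 1.} $X$ is a finite product of closed bounded intervals, so it is compact. Its upper endpoints are finite because $c-|\mathcal A|^{1/k}=|\mathcal A|^{1/k}\bigl((1+k(k-1)\delta)^{1/k}-1\bigr)>0$ when $\delta>0$. For fixed $t$, the map $\theta\mapsto B_\theta(t)$ is continuous, since $x\mapsto x_+^{k^*}$ is continuous. Also $y\mapsto y^{1/k^*}$ is continuous on $[0,\infty)$. Hence $f(\cdot,t)$ is continuous on $X$, and the Weierstrass theorem gives $X^*(t)\neq\varnothing$.

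\emph{Item 2.} Since $p_{t,s,a}$ is affine in $t$ and the integrand does not depend on $t$, we have
\[
B_\theta(t)=tB_\theta(1)+(1-t)B_\theta(0),
\]
which is affine and nonnegative. There are two cases.
\begin{itemize}
\item If $B_\theta(0)=B_\theta(1)=0$, then $f(\theta,\cdot)$ is constant and $f_t=0$.
\item Otherwise $B_\theta(t)>0$ on $(0,1)$. Then $t\mapsto B_\theta(t)^{1/k^*}$ is the composition of the concave map $y\mapsto y^{1/k^*}$ with an affine map, so it is differentiable on $(0,1)$. Its derivative is $\frac1{k^*}B_\theta(t)^{1/k^*-1}(B_\theta(1)-B_\theta(0))$, which gives the displayed formula.
\end{itemize}
For absolute continuity on the closed interval, I will note that $f(\theta,\cdot)$ is continuous on $[0,1]$ and monotone, being $c$ times a power of a monotone affine function. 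A continuous monotone function whose derivative exists on $(0,1)$ and is integrable is absolutely continuous. Alternatively, I can note directly that $y^{1/k^*}$ composed with an affine map is monotone, so it has bounded variation, and that it is absolutely continuous because $y^{1/k^*}$ is uniformly continuous and has integrable derivative.

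\emph{Item 3.} This is the main obstacle, because the factor $B_\theta(t)^{1/k^*-1}$ blows up as $B_\theta(t)\to0$. The route is to write
\[
B_\theta(1)-B_\theta(0)=\sum_a \Delta_{s,a}\bigl[(\theta_a-\phi(a)z_{s,a})_+^{k^*}\bigr].
\]
The hypothesis $|\Delta_{s,a}|\le 2\eta\, p_{t,s,a}$ pointwise on the support then gives
\[
|B_\theta(1)-B_\theta(0)|\le 2\eta B_\theta(t).
\]
This uses that $\operatorname{supp}\widehat p_{s,a}\subseteq\operatorname{supp}p_{s,a}$ and $p_{t,s,a}\ge\frac12p_{s,a}$, so the support of $p_{t,s,a}$ contains that of $\Delta_{s,a}$. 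Substituting into the formula from Item~2 yields
\[
|f_t(\theta,t)|\le \frac{2c\eta}{k^*}\,B_\theta(t)^{1/k^*}.
\]
It remains to bound $B_\theta(t)^{1/k^*}$ uniformly on $X$. Each $p_{t,s,a}$ is a probability vector supported in $\mathcal X_{s,a}$, and for $\theta\in X$ we have $(\theta_a-\phi(a)z_{s,a})_+\le \theta_a-L_a\le D$, where $D$ is the width of the largest interval defining $X$. Therefore $B_\theta(t)\le |\mathcal A|D^{k^*}$, and we may take
\[
B:=\frac{2c\eta}{k^*}\,|\mathcal A|^{1/k^*}D<\infty,
\]
independent of $t$ and $\theta$. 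If this multiplicative comparison of supports fails somewhere, the fallback is to bound $|B_\theta(1)-B_\theta(0)|\le 2|\mathcal A|D^{k^*}$ directly. That bound is still finite but handles only $t$ bounded away from where $B_\theta$ vanishes, so the relative-error route above is the one I would commit to.
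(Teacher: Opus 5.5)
Your proposal is correct and follows essentially the same route as the paper. You use compactness plus Weierstrass for item 1, the affine identity $B_\theta(t)=tB_\theta(1)+(1-t)B_\theta(0)$ with the chain rule for item 2, and the relative-error estimate $|B_\theta(1)-B_\theta(0)|\le 2\eta B_\theta(t)$ together with a uniform bound on $B_\theta(t)$ over $X$ for item 3. The differences are minor: you justify absolute continuity through monotonicity of $f(\theta,\cdot)$ where the paper uses convexity (both work), and your closing fallback remark is unnecessary, since for $t\in(0,1)$ the support of $p_{t,s,a}$ is exactly $\mathcal X_{s,a}$, which already contains the support of $\Delta_{s,a}$.
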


\begin{proof}
The set $X$ is a finite product of closed bounded intervals. For every
$t\in[0,1]$, $f(\cdot,t)$ is continuous on $X$; hence $X$ is compact and
Weierstrass' theorem gives $X^*(t)\ne\varnothing$.

Fix $\theta\in X$. Since
$p_{t,s,a}=t\widehat p_{s,a}+(1-t)p_{s,a}$,
\[
B_\theta(t)=tB_\theta(1)+(1-t)B_\theta(0).
\]
Because $u\mapsto u^{1/k^*}$ is concave on $\mathbb R_+$,
$f(\theta,\cdot)$ is finite and convex on $[0,1]$, and is therefore
absolutely continuous. If $B_\theta(t_0)=0$ for some $t_0\in(0,1)$, then
\[
B_\theta(0)=B_\theta(1)=0,
\qquad
B_\theta(t)=0,
\quad \forall t\in[0,1].
\]
Thus $f_t(\theta,t)=0$. Otherwise, $B_\theta(t)>0$ on $(0,1)$, and
\[
\left|f_t(\theta,t)\right|
=
\frac{c}{k^*}
\left(tB_\theta(1)+(1-t)B_\theta(0)\right)^{\frac1{k^*}-1}
\left|B_\theta(1)-B_\theta(0)\right|.
\]

Moreover,
\[
\begin{aligned}
\left|B_\theta(1)-B_\theta(0)\right|
&=
\left|
\sum_{a\in\mathcal A}
\Delta_{s,a}\left[
\left(\theta_a-\phi(a)z_{s,a}\right)_+^{k^*}
\right]
\right|\\
&\le
2\eta
\sum_{a\in\mathcal A}
p_{t,s,a}\left[
\left(\theta_a-\phi(a)z_{s,a}\right)_+^{k^*}
\right]\\
&=2\eta B_\theta(t).
\end{aligned}
\]
For every $\theta\in X$ and $t\in[0,1]$,
\[
B_\theta(t)
\le
\sum_{a\in\mathcal A}
\left(
M_a-L_a+
\frac{\sum_{b\in\mathcal A}(M_b-L_b)}
{c-|\mathcal A|^{1/k}}
\right)^{k^*}.
\]
Hence
\[
\left|f_t(\theta,t)\right|
\le
\frac{2\eta c}{k^*}
\left(
\sum_{a\in\mathcal A}
\left(
M_a-L_a+
\frac{\sum_{b\in\mathcal A}(M_b-L_b)}
{c-|\mathcal A|^{1/k}}
\right)^{k^*}
\right)^{1/k^*},
\]
which is finite and uniform over $t\in(0,1)$ and $\theta\in X$.
\end{proof}

\begin{lemma}
\label{lem:s-rec-fk-envelope-theorem}
Under the conditions of Lemma~\ref{lem:s-rec-fk-envelope-conditions}, for
any measurable selection $\theta^*(t)\in X^*(t)$,
\[
G(1)=G(0)+\int_0^1 f_t\bigl(\theta^*(r),r\bigr)\,dr.
\]
\end{lemma}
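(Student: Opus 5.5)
This is a Danskin/Milgrom--Segal envelope theorem, and the plan is to prove it as one. The main tool is the convexity of $f(\theta,\cdot)$, already noted in the proof of Lemma~\ref{lem:s-rec-fk-envelope-conditions}. Set $G_X(t):=\max_{\theta\in X}f(\theta,t)$. By Lemma~\ref{lem:s-rec-fk-compact-reduction}, $G(t)=G_X(t)$ for every $t\in[0,1]$, so it suffices to work with $G_X$.

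\emph{Step 1: regularity of $G_X$.} By Lemma~\ref{lem:s-rec-fk-envelope-conditions}(2)--(3), for each $\theta\in X$ the map $f(\theta,\cdot)$ is absolutely continuous with $|f_t|\le B$ on $(0,1)$. Hence $|f(\theta,t)-f(\theta,t')|\le B|t-t'|$ uniformly in $\theta$. A supremum of uniformly $B$-Lipschitz functions is $B$-Lipschitz, so $G_X$ is Lipschitz on $[0,1]$. It is also convex, being a supremum of convex functions. Consequently $G_X$ is differentiable almost everywhere and
\[
G_X(1)-G_X(0)=\int_0^1 G_X'(r)\,dr .
\]

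\emph{Step 2: identify $G_X'$ with the value at the selection.} Fix $r\in(0,1)$ at which $G_X$ is differentiable, and let $\theta^*=\theta^*(r)\in X^*(r)$. For small $h>0$, the optimality of $\theta^*$ at $r$ gives
\[
G_X(r+h)-G_X(r)\ge f(\theta^*,r+h)-f(\theta^*,r),
\qquad
G_X(r)-G_X(r-h)\le f(\theta^*,r)-f(\theta^*,r-h).
\]
Divide by $h$ and let $h\downarrow0$. Since $f(\theta^*,\cdot)$ is differentiable at $r$ by Lemma~\ref{lem:s-rec-fk-envelope-conditions}(2), the first inequality yields $G_X'(r)\ge f_t(\theta^*,r)$ and the second yields $G_X'(r)\le f_t(\theta^*,r)$. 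Thus $G_X'(r)=f_t(\theta^*(r),r)$ for almost every $r$, for any selection.

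\emph{Step 3: integrability.} The map $r\mapsto f_t(\theta^*(r),r)$ is bounded by $B$. Measurability is immediate, because it agrees almost everywhere with $G_X'$, which is measurable as an a.e.\ limit of difference quotients. (For this reason the measurability hypothesis on the selection is not actually needed.) Substituting into Step 1 gives
\[
G(1)=G(0)+\int_0^1 f_t\bigl(\theta^*(r),r\bigr)\,dr.
\]

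\emph{Main obstacle.} The delicate point is Step 2 at an arbitrary selection. It needs differentiability of $f(\theta^*,\cdot)$ at $r$ for the specific $\theta^*$ chosen, not merely almost-everywhere differentiability. I will take this from the explicit formula for $f_t$ in Lemma~\ref{lem:s-rec-fk-envelope-conditions}. In the degenerate case $B_{\theta}\equiv0$ the function $f(\theta,\cdot)$ is constant, so differentiability is trivial. Otherwise $B_\theta(t)>0$ on the open interval, and $u\mapsto u^{1/k^*}$ is smooth there, which gives differentiability at every interior point.
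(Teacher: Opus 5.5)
Your proposal is correct and is essentially the paper's argument: the paper invokes Milgrom--Segal's Theorem~2 with the constant envelope $B$, and your Steps~1--3 (value function Lipschitz from the uniform bound on $f_t$, then the one-sided sandwich at points where $G$ is differentiable) are exactly the proof of that theorem, written out inline. Your remark that measurability of the selection is not needed is also right, since the integrand agrees almost everywhere with $G'$; this makes the paper's extra appeal to a measurable selection superfluous.
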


\begin{proof}
By Lemma~\ref{lem:s-rec-fk-envelope-conditions},
\[
X^*(t)\ne\varnothing,
\qquad
\forall t\in[0,1],
\]
and, for every $\theta\in X$, $f(\theta,\cdot)$ is absolutely continuous
on $[0,1]$, differentiable on $(0,1)$, and satisfies
\[
\left|f_t(\theta,t)\right|\le B,
\qquad
\forall t\in(0,1),\quad\forall\theta\in X.
\]
Since $X$ is compact and $f$ is jointly continuous, $X^*(t)$ admits a
measurable selection.
Thus the hypotheses of \citet[Theorem~2]{milgrom2002envelope} hold
with the integrable envelope $t\mapsto B$, and hence
\[
G(1)=G(0)+\int_0^1 f_t\bigl(\theta^*(r),r\bigr)\,dr.
\]
\end{proof}

\begin{lemma}
\label{lem:s-rec-fk-first-order-condition}
Let $\theta^*(t)\in X^*(t)$ and suppose that
$B_{\theta^*(t)}(t)>0$. Then, for every $a\in\mathcal A$,
\[
1
=
cB_{\theta^*(t)}(t)^{\frac1{k^*}-1}
p_{t,s,a}\left[
\left(\theta_a^*(t)-\phi(a)z_{s,a}\right)_+^{k^*-1}
\right].
\]
\end{lemma}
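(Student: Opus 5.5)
The plan is to show that any $\theta^*(t)\in X^*(t)$ is in fact an \emph{unconstrained} maximizer of $f(\cdot,t)$ over all of $\mathbb R^{|\mathcal A|}$. The claimed identity is then just the first-order stationarity condition $\partial f/\partial\theta_a=0$. This avoids any boundary analysis on the box $X$.

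\textbf{Step 1 (global optimality).} By Lemma~\ref{lem:s-rec-fk-compact-reduction}, $G(t)=\sup_{\theta\in\mathbb R^{|\mathcal A|}}f(\theta,t)=\sup_{\theta\in X}f(\theta,t)$. Since $\theta^*(t)\in X^*(t)$, we have $f(\theta^*(t),t)=\max_{\theta\in X}f(\theta,t)=G(t)$. Hence $\theta^*(t)$ maximizes $f(\cdot,t)$ over the open set $\mathbb R^{|\mathcal A|}$, even when it lies on the boundary of $X$.

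\textbf{Step 2 (differentiability near $\theta^*(t)$).} Because $k^*=k/(k-1)>1$, the scalar map $u\mapsto(u)_+^{k^*}$ is $C^1$ on $\mathbb R$ with derivative $k^*(u)_+^{k^*-1}$; this derivative is continuous and vanishes at $u=0$. Hence
\[
\theta\mapsto B_\theta(t)=\sum_{a\in\mathcal A}p_{t,s,a}\bigl[(\theta_a-\phi(a)z_{s,a})_+^{k^*}\bigr]
\]
is $C^1$ on $\mathbb R^{|\mathcal A|}$, with
\[
\frac{\partial B_\theta(t)}{\partial\theta_a}=k^*\,p_{t,s,a}\bigl[(\theta_a-\phi(a)z_{s,a})_+^{k^*-1}\bigr].
\]
Since $B_{\theta^*(t)}(t)>0$ and $B_\theta(t)$ is continuous in $\theta$, we have $B_\theta(t)>0$ on a neighbourhood of $\theta^*(t)$. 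On that neighbourhood $u\mapsto u^{1/k^*}$ is smooth, so $f(\cdot,t)$ is differentiable at $\theta^*(t)$ by the chain rule.

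\textbf{Step 3 (stationarity).} A differentiable function attaining an interior maximum has zero gradient. For each $a\in\mathcal A$ this gives
\[
0=1-c\cdot\frac{1}{k^*}B_{\theta^*(t)}(t)^{\frac1{k^*}-1}\cdot k^*\,p_{t,s,a}\bigl[(\theta^*_a(t)-\phi(a)z_{s,a})_+^{k^*-1}\bigr].
\]
The factors $k^*$ cancel, which is exactly the stated identity.

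The only point requiring care is Step 1. The first-order condition would fail for a maximizer on $\partial X$ if $X$ were a genuine constraint, so the argument must rely on the compact-reduction lemma saying the supremum over $X$ equals the unconstrained supremum. The positivity hypothesis $B_{\theta^*(t)}(t)>0$ is needed only to differentiate the outer power $u^{1/k^*}$, which is not differentiable at $u=0$.
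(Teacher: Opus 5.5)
Your proposal is correct and follows the paper's proof: the paper likewise uses Lemma~\ref{lem:s-rec-fk-compact-reduction} to conclude that $\theta^*(t)$ maximizes $f(\cdot,t)$ over all of $\mathbb R^{|\mathcal A|}$, and then sets $f_{\theta_a}(\theta^*(t),t)=0$, with $k^*>1$ and $B_{\theta^*(t)}(t)>0$ ensuring differentiability. Your Steps~2--3 spell out the chain-rule computation and the cancellation of $k^*$, which the paper leaves implicit.
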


\begin{proof}
By Lemma~\ref{lem:s-rec-fk-compact-reduction}, $\theta^*(t)$ maximizes
$f(\cdot,t)$ over $\mathbb R^{|\mathcal A|}$. Since $k^*>1$ and
$B_{\theta^*(t)}(t)>0$,
\[
\begin{aligned}
0
&=f_{\theta_a}(\theta^*(t),t)\\
&=1
-cB_{\theta^*(t)}(t)^{\frac1{k^*}-1}
p_{t,s,a}\left[
\left(\theta_a^*(t)-\phi(a)z_{s,a}\right)_+^{k^*-1}
\right].
\end{aligned}
\]
Rearranging proves the result.
\end{proof}

\begin{lemma}
\label{lem:s-rec-fk-perturbation-derivative-bound}
Under the conditions of Lemma~\ref{lem:s-rec-fk-envelope-conditions}, for
every $\theta^*(t)\in X^*(t)$ and every $t\in(0,1)$,
\[
\left|f_t\bigl(\theta^*(t),t\bigr)\right|
\le
2\eta\max_{a\in\mathcal A}\operatorname{Span}(z_{s,a})
\le
2\eta\bigl(1+\operatorname{Span}(v)\bigr).
\]
\end{lemma}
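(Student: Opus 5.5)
The plan is to combine the derivative formula from Lemma~\ref{lem:s-rec-fk-envelope-conditions} with the first-order identity of Lemma~\ref{lem:s-rec-fk-first-order-condition}, and to exploit the invariance of $f_t$ under shifts of $z_{s,a}$. Fix $t\in(0,1)$ and $\theta^*=\theta^*(t)\in X^*(t)$, and write $w_a:=(\theta^*_a-\phi(a)z_{s,a})_+$. If $B_{\theta^*}(t)=0$, then Lemma~\ref{lem:s-rec-fk-envelope-conditions}(2) gives $f_t=0$ and there is nothing to prove. So assume $B_{\theta^*}(t)>0$. Then
\[
f_t(\theta^*,t)=-\frac{c}{k^*}B_{\theta^*}(t)^{\frac1{k^*}-1}\sum_{a\in\mathcal A}\Delta_{s,a}\bigl[w_a^{k^*}\bigr].
\]

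\textbf{Key step: recentering.} Since $\Delta_{s,a}$ sums to zero, we may subtract any constant from $w_a^{k^*}$. For each $a$, subtract the constant $\kappa_a:=(\theta^*_a-\phi(a)m_a)_+^{k^*}$, where $m_a$ is the midpoint value of $z_{s,a}$. I would rather use a mean-value bound directly:
\[
\bigl|w_a(x)^{k^*}-\kappa_a\bigr|\le k^*\max\bigl\{w_a(x),\bar w_a\bigr\}^{k^*-1}\,\phi(a)\,\tfrac12\operatorname{Span}(z_{s,a}).
\]
This requires $\max\{w_a(x),\bar w_a\}^{k^*-1}$ to be controlled by $w_a(x)^{k^*-1}$ plus something. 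A cleaner route avoids this: use $|\Delta_{s,a}(x)|\le 2\eta\,p_{t,s,a}(x)$, which holds because $\Delta_{s,a}$ is supported on $\operatorname{supp}(p_{s,a})\subseteq\operatorname{supp}(p_{t,s,a})$ and by the hypothesis of Lemma~\ref{lem:s-rec-fk-envelope-conditions}. Then
\[
\Bigl|\sum_a\Delta_{s,a}\bigl[w_a^{k^*}-\kappa_a\bigr]\Bigr|\le 2\eta\sum_a p_{t,s,a}\bigl[|w_a^{k^*}-\kappa_a|\bigr].
\]

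\textbf{Choosing $\kappa_a$.} Set $\kappa_a:=(\theta^*_a-\phi(a)M_a)_+^{k^*}$, with $M_a/\phi(a)$ the maximum of $z_{s,a}$ on the support. Then $0\le w_a^{k^*}-\kappa_a$. By convexity of $u\mapsto u^{k^*}$, for $u\ge u'\ge0$ we have $u^{k^*}-u'^{k^*}\le k^*u^{k^*-1}(u-u')$. Here $u-u'\le\phi(a)\operatorname{Span}(z_{s,a})$ when $u'>0$, and in general $u-u'\le\phi(a)\operatorname{Span}(z_{s,a})$ still holds because $u\le\phi(a)(M_a/\phi(a)-\min z)$ when $u'=0$. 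Hence
\[
\sum_a p_{t,s,a}\bigl[w_a^{k^*}-\kappa_a\bigr]\le k^*\max_a\operatorname{Span}(z_{s,a})\sum_a\phi(a)\,p_{t,s,a}\bigl[w_a^{k^*-1}\bigr]\le k^*\max_a\operatorname{Span}(z_{s,a})\sum_a p_{t,s,a}\bigl[w_a^{k^*-1}\bigr]\cdot\max_a\phi(a).
\]
The first-order condition of Lemma~\ref{lem:s-rec-fk-first-order-condition} gives $p_{t,s,a}[w_a^{k^*-1}]=B^{1-\frac1{k^*}}/c$ for each $a$. Summing against the weights $\phi(a)$, which total one, yields exactly $B^{1-1/{k^*}}/c$. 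Multiplying by $\frac{c}{k^*}B^{\frac1{k^*}-1}\cdot2\eta$, everything cancels, leaving $|f_t|\le2\eta\max_a\operatorname{Span}(z_{s,a})$.

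\textbf{Second inequality.} This follows since $r\in[0,1]$ and $\gamma<1$, so $\operatorname{Span}(r(s,a,\cdot)+\gamma v)\le1+\operatorname{Span}(v)$.

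\textbf{Main obstacle.} The delicate point is arranging the weights so that $\phi(a)$ appears exactly once per action. The weighted sum $\sum_a\phi(a)p_{t,s,a}[w_a^{k^*-1}]$ must collapse via the first-order condition, rather than the unweighted sum, which would cost a factor $|\mathcal A|$. The convexity step must therefore be applied to $\phi(a)z_{s,a}$, whose span is $\phi(a)\operatorname{Span}(z_{s,a})$.
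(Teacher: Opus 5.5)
Your proposal is correct and is essentially the paper's argument. Like the paper, you recenter each $w_a^{k^*}$ by $(\theta^*_a-M_a)_+^{k^*}$ using $\Delta_{s,a}[\mathbf 1]=0$, bound $|\Delta_{s,a}|\le 2\eta\,p_{t,s,a}$, apply the convexity (mean-value) estimate with the derivative at the larger endpoint $w_a(x)$, and collapse $\sum_a\phi(a)p_{t,s,a}[w_a^{k^*-1}]$ via the first-order condition. Only cosmetic fixes are needed: write $\kappa_a=(\theta^*_a-\max_x\phi(a)z_{s,a}(x))_+^{k^*}$, since your $\phi(a)M_a$ double-counts $\phi(a)$ given your own definition of $M_a$, and drop the abandoned midpoint detour and the unused $\max_a\phi(a)$ bound.
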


\begin{proof}
If $\theta^*(t)=(L_a)_{a\in\mathcal A}$, then
$B_{\theta^*(t)}(r)=0$ for every $r\in[0,1]$, and hence
\[
f_t\bigl(\theta^*(t),t\bigr)=0.
\]
Otherwise, $B_{\theta^*(t)}(t)>0$. Indeed, if it were zero, then, since
$t\in(0,1)$ and $\operatorname{supp}(p_{t,s,a})=\mathcal X_{s,a}$,
\[
\theta_a^*(t)\le\phi(a)z_{s,a}(x),
\qquad
\forall a\in\mathcal A,\quad\forall x\in\mathcal X_{s,a}.
\]
Thus $\theta_a^*(t)\le L_a$ for every $a$, which, together with
$\theta^*(t)\in X$, implies $\theta^*(t)=(L_a)_{a\in\mathcal A}$.

Write $\theta^*=\theta^*(t)$. Since $\Delta_{s,a}[\mathbf 1]=0$,
\[
\begin{aligned}
\left|B_{\theta^*}(1)-B_{\theta^*}(0)\right|
&=
\left|
\sum_{a\in\mathcal A}
\Delta_{s,a}\left[
\left(\theta_a^*-\phi(a)z_{s,a}\right)_+^{k^*}
-\left(\theta_a^*-M_a\right)_+^{k^*}
\right]
\right|\\
&\le
2\eta\sum_{a\in\mathcal A}p_{t,s,a}\left[
\left|
\left(\theta_a^*-\phi(a)z_{s,a}\right)_+^{k^*}
-\left(\theta_a^*-M_a\right)_+^{k^*}
\right|
\right].
\end{aligned}
\]
For every $x\in\mathcal X_{s,a}$,
\[
\begin{aligned}
&\left|
\left(\theta_a^*-\phi(a)z_{s,a}(x)\right)_+^{k^*}
-\left(\theta_a^*-M_a\right)_+^{k^*}
\right|\\
&\qquad\le
k^*\left(\theta_a^*-\phi(a)z_{s,a}(x)\right)_+^{k^*-1}
\operatorname{Span}\left(
\left(\theta_a^*-\phi(a)z_{s,a}\right)_+
\right)\\
&\qquad\le
k^*\left(\theta_a^*-\phi(a)z_{s,a}(x)\right)_+^{k^*-1}
\phi(a)\operatorname{Span}(z_{s,a}),
\end{aligned}
\]
where the first inequality is the mean-value theorem. Therefore,
\[
\begin{aligned}
\left|f_t\bigl(\theta^*(t),t\bigr)\right|
&\le
2\eta c B_{\theta^*}(t)^{\frac1{k^*}-1}
\sum_{a\in\mathcal A}
p_{t,s,a}\left[
\left(\theta_a^*-\phi(a)z_{s,a}\right)_+^{k^*-1}
\right]
\phi(a)\operatorname{Span}(z_{s,a})\\
&=
2\eta\sum_{a\in\mathcal A}
\phi(a)\operatorname{Span}(z_{s,a})\\
&\le
2\eta\max_{a\in\mathcal A}\operatorname{Span}(z_{s,a})\\
&\le
2\eta\bigl(1+\operatorname{Span}(v)\bigr),
\end{aligned}
\]
where the equality uses Lemma~\ref{lem:s-rec-fk-first-order-condition}.
\end{proof}

We first apply these lemmas to the separate local problems in the
SA-rectangular model.

\begin{corollary}
\label{cor:sa-rec-fk-perturbation}
Under the SA-rectangular $f_k$-divergence uncertainty set, on the good event
$\Omega_{n,\eta}$ with $\eta\le 1/2$, simultaneously for every
$v\in\mathbb R^{|\mathcal S|}$, $\pi\in\Pi_{\mathrm{SR}}$, $s\in\mathcal S$, and $\gamma\in(0,1)$,
\[
\left|
\widehat{\mathcal T}_\gamma^\pi(v)(s)-\mathcal T_\gamma^\pi(v)(s)
\right|
\le2\eta\bigl(1+\operatorname{Span}(v)\bigr).
\]
\end{corollary}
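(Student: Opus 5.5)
The plan is to reduce to the single-action case of the coupled $f_k$ machinery developed in Lemmas~\ref{lem:s-rec-fk-compact-reduction}--\ref{lem:s-rec-fk-perturbation-derivative-bound}. Fix $\pi\in\Pi_{\mathrm{SR}}$, $s$, $\gamma$, $v$. By Lemma~\ref{lem:sa-rec-fk-dual}, both $\mathcal T_\gamma^\pi(v)(s)$ and $\widehat{\mathcal T}_\gamma^\pi(v)(s)$ are $\pi(\cdot\mid s)$-weighted sums of action-wise dual values
\[
\sup_{\theta_a\in\mathbb R}\Bigl\{\theta_a-\bigl(1+k(k-1)\delta\bigr)^{1/k}\bigl(p[(\theta_a-z_{s,a})_+^{k^*}]\bigr)^{1/k^*}\Bigr\},
\]
with center $p=p_{s,a}$ or $p=\widehat p_{s,a}$. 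So it suffices, for each fixed $a$, to show that these two action-wise values differ by at most $2\eta\operatorname{Span}(z_{s,a})$. Averaging with weights $\pi(a\mid s)$ then gives the result, since $\operatorname{Span}(z_{s,a})\le 1+\gamma\operatorname{Span}(v)\le 1+\operatorname{Span}(v)$.

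For a fixed $a$, I will instantiate the lemmas with a singleton action set $\{a\}$ and $\phi(a)=1$. Then $c=(1+k(k-1)\delta)^{1/k}$ and $|\mathcal A|^{1/k}=1$, so $c-1>0$ because $\delta>0$, and the compact box $X$ of Lemma~\ref{lem:s-rec-fk-compact-reduction} is well defined. Define $G(t)$ as there, so that $G(0)$ is the true value and $G(1)$ the empirical one. Lemma~\ref{lem:s-rec-fk-envelope-theorem} gives $|G(1)-G(0)|\le\int_0^1|f_t(\theta^*(r),r)|\,dr$. Lemma~\ref{lem:s-rec-fk-perturbation-derivative-bound} bounds the integrand by $2\eta\operatorname{Span}(z_{s,a})$, which is the required action-wise estimate.

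The remaining work, and the main obstacle, is verifying the hypotheses of Lemma~\ref{lem:s-rec-fk-envelope-conditions} on $\Omega_{n,\eta}$ with $\eta\le1/2$. On this event $\widehat p_{s,a}\ge(1-\eta)p_{s,a}$ on $\operatorname{supp}(p_{s,a})$, and almost surely $\operatorname{supp}(\widehat p_{s,a})\subseteq\operatorname{supp}(p_{s,a})$. Hence $p_{t,s,a}\ge(1-t\eta)p_{s,a}\ge\tfrac12 p_{s,a}$, so $\operatorname{supp}(p_{t,s,a})=\operatorname{supp}(p_{s,a})=\mathcal X_{s,a}$. This support identity is also what the proof of Lemma~\ref{lem:s-rec-fk-perturbation-derivative-bound} uses. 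Then $|\Delta_{s,a}(x)|/p_{t,s,a}(x)\le\eta p_{s,a}(x)/(\tfrac12 p_{s,a}(x))=2\eta$ on the support, which is the second hypothesis. I expect this bound to be slightly loose but sufficient, since only $2\eta$ is claimed.

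The last point to check is that Lemma~\ref{lem:sa-rec-fk-dual} applies to the empirical set. This holds because the empirical ball is defined with the same radius and the same divergence around $\widehat p_{s,a}$, so the dual formula holds with $\widehat p_{s,a}$ in place of $p_{s,a}$. No union bound is needed, since every bound is deterministic on $\Omega_{n,\eta}$ and holds uniformly in $v,\pi,s,\gamma$.
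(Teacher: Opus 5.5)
Your proposal follows the paper's proof essentially step for step. It instantiates Lemmas~\ref{lem:s-rec-fk-compact-reduction}--\ref{lem:s-rec-fk-perturbation-derivative-bound} on a singleton action set with $c=(1+k(k-1)\delta)^{1/k}>1$, verifies the envelope hypotheses on $\Omega_{n,\eta}$ via $p_{t,s,a}\ge(1-\eta)p_{s,a}\ge\tfrac12p_{s,a}$ and the common support, and then averages the action-wise bounds $2\eta\operatorname{Span}(z_{s,a})$ with weights $\pi(\cdot\mid s)$.

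The one omission is the degenerate radius $\delta=0$, which neither the statement nor Assumption~\ref{ass:divergence-uncertainty} excludes. There $c=1$, so the box $X$ is undefined and Lemma~\ref{lem:sa-rec-fk-dual} is unavailable. The paper handles this case separately: each ball collapses to its center, and Lemma~\ref{lem:tv-span} with $\|\widehat p_{s,a}-p_{s,a}\|_1\le\eta$ gives the stronger bound $(\eta/2)\operatorname{Span}(z_{s,a})$.
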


\begin{proof}[Proof of Corollary~\ref{cor:sa-rec-fk-perturbation}]
Fix $\pi\in\Pi_{\mathrm{SR}}$ and $s\in\mathcal S$, and first suppose $\delta>0$. For each fixed
$a\in\mathcal A$, apply
Lemmas~\ref{lem:s-rec-fk-compact-reduction}--\ref{lem:s-rec-fk-perturbation-derivative-bound}
to an auxiliary problem whose action set is the singleton $\{a\}$.
Its only action distribution has $\phi(a)=1$, and its dual coefficient
is $c=(1+k(k-1)\delta)^{1/k}>1$. Thus its dual value is
\[
G_a(t):=\sup_{\theta_a\in\mathbb R}
\left\{\theta_a-c\left(p_{t,s,a}\left[
(\theta_a-z_{s,a})_+^{k^*}\right]\right)^{1/k^*}\right\}.
\]
This is exactly the local dual problem in \SAFkDualReference, with
center $p_{t,s,a}$. The specialization uses a singleton action set,
rather than an action distribution with $\phi(a)=1$ in the original
multi-action $S$-rectangular problem.

Since the samples are drawn from $p_{s,a}$,
$\operatorname{supp}(\widehat p_{s,a})\subseteq\operatorname{supp}(p_{s,a})$
almost surely. On $\Omega_{n,\eta}$, the supports coincide, and for every
$t\in(0,1)$,
\[
\begin{gathered}
p_{t,s,a}=p_{s,a}+t\Delta_{s,a}
\ge(1-\eta)p_{s,a}\ge\tfrac12p_{s,a},\\
\left\|\frac{\Delta_{s,a}}{p_{t,s,a}}\right\|_{L^\infty(p_{t,s,a})}
\le\frac{1}{1-\eta}
\left\|\frac{\Delta_{s,a}}{p_{s,a}}\right\|_{L^\infty(p_{s,a})}
\le2\eta.
\end{gathered}
\]
Hence the conditions of Lemma~\ref{lem:s-rec-fk-envelope-conditions}
hold. Lemmas~\ref{lem:s-rec-fk-envelope-theorem}--\ref{lem:s-rec-fk-perturbation-derivative-bound}
then give
\[
|G_a(1)-G_a(0)|\le2\eta\operatorname{Span}(z_{s,a}).
\]
By \SAFkDualReference\ and its empirical counterpart,
$\mathcal T_\gamma^\pi(v)(s)=\sum_{a\in\mathcal A}\pi(a\mid s)G_a(0)$ and
$\widehat{\mathcal T}_\gamma^\pi(v)(s)=\sum_{a\in\mathcal A}\pi(a\mid s)G_a(1)$.
Since $\sum_a\pi(a\mid s)=1$, we obtain
\[
\begin{aligned}
\left|\widehat{\mathcal T}_\gamma^\pi(v)(s)
-\mathcal T_\gamma^\pi(v)(s)\right|
&\le\sum_{a\in\mathcal A}\pi(a\mid s)|G_a(1)-G_a(0)|\\
&\le2\eta\sum_{a\in\mathcal A}\pi(a\mid s)\operatorname{Span}(z_{s,a})\\
&\le2\eta\bigl(1+\operatorname{Span}(v)\bigr).
\end{aligned}
\]
If $\delta=0$, each local uncertainty set reduces to its center.
On $\Omega_{n,\eta}$, $\|\widehat p_{s,a}-p_{s,a}\|_1\le\eta$, so
Lemma~\ref{lem:tv-span}\ gives
$|\widehat p_{s,a}[z_{s,a}]-p_{s,a}[z_{s,a}]|
\le(\eta/2)\operatorname{Span}(z_{s,a})$.
Weighting by $\pi(a\mid s)$ and summing over actions again yields the
fixed-policy bound.
\end{proof}

We now apply the same lemmas to the full action set in the
$S$-rectangular model, retaining the coupled dual problem for
$\phi=\pi(\cdot\mid s)$.

\begin{corollary}
\label{cor:s-rec-fk-perturbation}
Under the $S$-rectangular $f_k$-divergence uncertainty set, on the good event
$\Omega_{n,\eta}$ with $\eta\le 1/2$, simultaneously for every
$v\in\mathbb R^{|\mathcal S|}$, $\pi\in\Pi_{\mathrm{SR}}$, $s\in\mathcal S$, and $\gamma\in(0,1)$,
\[
\left|
\widehat{\mathcal T}_\gamma^\pi(v)(s)-\mathcal T_\gamma^\pi(v)(s)
\right|
\le2\eta\bigl(1+\operatorname{Span}(v)\bigr).
\]
\end{corollary}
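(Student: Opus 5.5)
The plan mirrors the proof of Corollary~\ref{cor:sa-rec-fk-perturbation}, but applies Lemmas~\ref{lem:s-rec-fk-compact-reduction}--\ref{lem:s-rec-fk-perturbation-derivative-bound} once to the full action set. Fix $\pi\in\Pi_{\mathrm{SR}}$, $s\in\mathcal S$, $\gamma\in(0,1)$ and $v$, and set $\phi=\pi(\cdot\mid s)$. First suppose $\delta>0$. By \SFkDualReference, $\mathcal T_\gamma^\pi(v)(s)=G(0)$ and $\widehat{\mathcal T}_\gamma^\pi(v)(s)=G(1)$, where $G$ is the function of Lemma~\ref{lem:s-rec-fk-compact-reduction}. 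It uses the coefficient $c=|\mathcal A|^{1/k}(1+k(k-1)\delta)^{1/k}$ and the interpolated centers $p_{t,s,a}=t\widehat p_{s,a}+(1-t)p_{s,a}$. Since $\delta>0$ gives $c>|\mathcal A|^{1/k}$, the compact box $X$ is well defined, so Lemma~\ref{lem:s-rec-fk-compact-reduction} applies.

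Next I verify the hypotheses of Lemma~\ref{lem:s-rec-fk-envelope-conditions} on $\Omega_{n,\eta}$ with $\eta\le1/2$, exactly as in the SA case. Almost surely $\operatorname{supp}(\widehat p_{s,a})\subseteq\operatorname{supp}(p_{s,a})$. On the good event, $\widehat p_{s,a}\ge(1-\eta)p_{s,a}>0$ on $\operatorname{supp}(p_{s,a})$, so the supports coincide. Moreover, $p_{t,s,a}=p_{s,a}+t\Delta_{s,a}\ge(1-\eta)p_{s,a}\ge\frac12p_{s,a}$, and
\[
\left\|\Delta_{s,a}/p_{t,s,a}\right\|_{L^\infty(p_{t,s,a})}\le\eta/(1-\eta)\le2\eta
\]
for every $a\in\mathcal A$ simultaneously, because $\Omega_{n,\eta}$ controls all state-action pairs at once.

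Lemma~\ref{lem:s-rec-fk-envelope-theorem} then gives $G(1)-G(0)=\int_0^1 f_t(\theta^*(r),r)\,dr$ for a measurable selection of maximizers. Lemma~\ref{lem:s-rec-fk-perturbation-derivative-bound} bounds the integrand by $2\eta\max_a\operatorname{Span}(z_{s,a})$. Since $r\in[0,1]$, we have $\operatorname{Span}(r(s,a,\cdot)+\gamma v)\le1+\gamma\operatorname{Span}(v)\le1+\operatorname{Span}(v)$, so
\[
|G(1)-G(0)|\le2\eta\bigl(1+\operatorname{Span}(v)\bigr).
\]
Only the good event is used, and the bound holds for arbitrary $(v,\pi,s,\gamma)$, so it is simultaneous with no union bound.

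If $\delta=0$, $\mathcal P_s$ reduces to the single point $(p_{s,a})_a$. Then Lemma~\ref{lem:tv-span} and $\|\widehat p_{s,a}-p_{s,a}\|_1\le\eta$ give an error of at most $\sum_a\phi(a)(\eta/2)\operatorname{Span}(z_{s,a})$, which is within the bound.

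The main obstacle is the delicate step inside Lemma~\ref{lem:s-rec-fk-perturbation-derivative-bound}: the first-order condition of Lemma~\ref{lem:s-rec-fk-first-order-condition} must hold for every action in the coupled problem. This requires each optimal $\theta^*_a$ to be interior. Some actions may have $\phi(a)=0$ or sit at the lower bound $L_a$, where the derivative in $\theta_a$ is one-sided. I will check that for such $a$ the term contributes $\phi(a)\operatorname{Span}(z_{s,a})\cdot(\text{nonnegative})$. This requires checking that the relevant $p_{t,s,a}$-weighted quantity is at most $1/(cB^{1/k^*-1})$, which follows from optimality in $\theta_a$ (the right derivative at $L_a$ must be nonpositive). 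The inequality direction is then preserved, and the argument goes through.
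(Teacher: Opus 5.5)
Your proposal is correct and follows the paper's proof essentially step for step. You identify $\mathcal T_\gamma^\pi(v)(s)$ and $\widehat{\mathcal T}_\gamma^\pi(v)(s)$ with $G(0)$ and $G(1)$ via Lemma~\ref{lem:s-rec-fk-dual}, verify the hypotheses of Lemma~\ref{lem:s-rec-fk-envelope-conditions} on $\Omega_{n,\eta}$ using $\eta\le1/2$, and integrate the derivative bound of Lemma~\ref{lem:s-rec-fk-perturbation-derivative-bound}, handling $\delta=0$ separately via Lemma~\ref{lem:tv-span}. The obstacle you flag does not arise. By Lemma~\ref{lem:s-rec-fk-compact-reduction}, any maximizer over $X$ is an unconstrained maximizer over $\mathbb R^{|\mathcal A|}$, and $f(\cdot,t)$ is differentiable wherever $B_\theta(t)>0$, so the first-order condition holds with equality for every $a$; a coordinate with $\theta_a\le L_a$ has partial derivative $1$ and cannot be optimal. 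That matters, because your one-sided fallback points the wrong way: a nonpositive right derivative at $L_a$ would give $cB^{1/k^*-1}p_{t,s,a}[\cdot]\ge1$, not $\le1$.
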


\begin{proof}[Proof of Corollary~\ref{cor:s-rec-fk-perturbation}]
Fix $\pi\in\Pi_{\mathrm{SR}}$ and $s\in\mathcal S$, and set $\phi=\pi(\cdot\mid s)$. The case $\delta=0$ follows from the same relative-error argument used in
Corollary~\ref{cor:sa-rec-fk-perturbation}, weighting the action-wise
estimates by $\pi(a\mid s)$ and summing, so suppose $\delta>0$.
Let $G_\phi$ denote the function $G$ in
Lemma~\ref{lem:s-rec-fk-compact-reduction}. By
\SFkDualReference\ and its
empirical counterpart,
\[
\mathcal T_\gamma^\pi(v)(s)
=G_\phi(0),
\qquad
\widehat{\mathcal T}_\gamma^\pi(v)(s)
=G_\phi(1).
\]
On $\Omega_{n,\eta}$, for every $a\in\mathcal A$ and $t\in(0,1)$,
\[
\begin{gathered}
p_{t,s,a}=p_{s,a}+t\Delta_{s,a}
\ge(1-\eta)p_{s,a}
\ge\frac12p_{s,a},\\
\left\|
\frac{\Delta_{s,a}}{p_{t,s,a}}
\right\|_{L^\infty(p_{t,s,a})}
\le\frac{1}{1-\eta}
\left\|\frac{\Delta_{s,a}}{p_{s,a}}\right\|_{L^\infty(p_{s,a})}
\le2\eta.
\end{gathered}
\]
Thus Lemma~\ref{lem:s-rec-fk-envelope-conditions} applies. For a measurable
selection $\theta^*(t)\in X^*(t)$, Lemmas~\ref{lem:s-rec-fk-envelope-theorem}
and~\ref{lem:s-rec-fk-perturbation-derivative-bound} give
\[
\begin{aligned}
\left|G_\phi(1)-G_\phi(0)\right|
&\le
\int_0^1\left|f_t\bigl(\theta^*(r),r\bigr)\right|\,dr\\
&\le
2\eta\bigl(1+\operatorname{Span}(v)\bigr).
\end{aligned}
\]
This gives the fixed-policy bound.
\end{proof}

\begin{corollary}
\label{cor:sa-rec-tv-perturbation}
Under the SA-rectangular TV uncertainty set, on the good event
$\Omega_{n,\eta}$, simultaneously for every
$v\in\mathbb R^{|\mathcal S|}$, $\pi\in\Pi_{\mathrm{SR}}$, $s\in\mathcal S$, and $\gamma\in(0,1)$,
\[
\left|
\widehat{\mathcal T}_\gamma^\pi(v)(s)-\mathcal T_\gamma^\pi(v)(s)
\right|
\le\frac{\eta}{2}\bigl(1+\operatorname{Span}(v)\bigr).
\]
\end{corollary}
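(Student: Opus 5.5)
The plan is to reduce the SA-rectangular TV perturbation to a single action at a time, and then compare the two TV-ball infima directly rather than through the dual of Lemma~\ref{lem:sa-rec-tv-dual}. Fix $\pi\in\Pi_{\mathrm{SR}}$, $s\in\mathcal S$, $\gamma$, and $v$, and write $z_{s,a}=r(s,a,\cdot)+\gamma v$. By the definition \eqref{eq:sa-discount-policy-bellman-operator},
\[
\bigl|\widehat{\mathcal T}_\gamma^\pi(v)(s)-\mathcal T_\gamma^\pi(v)(s)\bigr|\le\sum_{a\in\mathcal A}\pi(a\mid s)\,\Bigl|\inf_{q\in\widehat{\mathcal P}_{s,a}}q[z_{s,a}]-\inf_{q\in\mathcal P_{s,a}}q[z_{s,a}]\Bigr|.
\]
Since $r\in[0,1]$ and $\gamma<1$, we have $\operatorname{Span}(z_{s,a})\le1+\operatorname{Span}(v)$. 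It therefore suffices to prove, for each $a$, that the two infima differ by at most $\tfrac{\eta}{2}\operatorname{Span}(z_{s,a})$.

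\textbf{Step 1: reduce to a TV distance between centers.} Because $d_{\mathrm{TV}}$ is a metric, the balls $\mathcal P_{s,a}=\{q:d_{\mathrm{TV}}(q,p_{s,a})\le\delta\}$ and $\widehat{\mathcal P}_{s,a}$ are translates of one another in a suitable sense. Take any $q\in\mathcal P_{s,a}$ and set $q'=q+(\widehat p_{s,a}-p_{s,a})$. This $q'$ need not be a probability vector, so I instead use a coupling/shift argument: the support function $h(p):=\inf_{d_{\mathrm{TV}}(q,p)\le\delta}q[z]$ should be Lipschitz in $p$ with respect to TV, with constant $\operatorname{Span}(z)$.

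\textbf{Step 2: prove the Lipschitz property via the dual.} The cleanest route is Lemma~\ref{lem:sa-rec-tv-dual}, which gives
\[
h(p)=\sup_{\mu\ge0}\bigl\{p[z-\mu]-\delta\operatorname{Span}(z-\mu)\bigr\}.
\]
Then
\[
|h(\widehat p)-h(p)|\le\sup_{\mu}\bigl|(\widehat p-p)[z-\mu]\bigr|
\]
over near-optimal $\mu$, and Lemma~\ref{lem:tv-span} bounds this by $d_{\mathrm{TV}}(\widehat p,p)\operatorname{Span}(z-\mu)$. The main obstacle is that $\operatorname{Span}(z-\mu)$ is not a priori bounded by $\operatorname{Span}(z)$. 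I will restrict to $\mu$ with $z-\mu\ge\min z$, i.e.\ $\mu\le z-\min z$ pointwise. Clipping $\mu$ to $\mu\wedge(z-\min z)^{+}$ can only raise $z-\mu$ where it fell below $\min z$, which lowers the span, while the change in $p[z-\mu]$ matters only when $\delta$ is small. If this clipping proves delicate, I will fall back on a primal argument: given an optimal $q$ for center $p$, construct $\widehat q=q+(\widehat p-p)$ corrected by moving mass. More precisely, I will use that the worst case moves mass $\min(\delta,1-p(x_{\min}))$ from the highest values onto $\arg\min z$, so $h(p)=p[z]-\delta_{\mathrm{eff}}$-type expressions are monotone in mass shifts, giving $|h(\widehat p)-h(p)|\le d_{\mathrm{TV}}(\widehat p,p)\operatorname{Span}(z)$.

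\textbf{Step 3: plug in the good event.} On $\Omega_{n,\eta}$ the supports satisfy $\operatorname{supp}\widehat p_{s,a}\subseteq\operatorname{supp}p_{s,a}$, and $|\widehat p_{s,a}(x)-p_{s,a}(x)|\le\eta\,p_{s,a}(x)$. Hence $\|\widehat p_{s,a}-p_{s,a}\|_1\le\eta$, so $d_{\mathrm{TV}}\le\eta/2$. Combining this with Step 2 and averaging over $\pi(\cdot\mid s)$ yields the claimed bound $\tfrac{\eta}{2}(1+\operatorname{Span}(v))$. This holds uniformly in $v$, $\pi$, $s$, and $\gamma$, since only the deterministic event inequality was used.
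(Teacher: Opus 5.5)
Your proposal is correct and follows essentially the paper's route: the dual of Lemma~\ref{lem:sa-rec-tv-dual}, the difference-of-suprema bound, Lemma~\ref{lem:tv-span} with $d_{\mathrm{TV}}(\widehat p_{s,a},p_{s,a})\le\eta/2$, and a span bound on the dual variable. The one difference is how that span bound is obtained. The paper takes attained optimizers and compares them with $\mu_a=0$, which gives $\delta(\operatorname{Span}(z_{s,a})-\operatorname{Span}(z_{s,a}-\mu_a))\ge p_{s,a}[\mu_a]\ge0$. Your clipping $\mu\mapsto\mu\wedge(z-\min z)$, the device the paper itself uses in Corollary~\ref{cor:s-rec-tv-perturbation}, instead restricts the supremum to a center-independent set, so it avoids needing attained optimizers.

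Your remark that the change in $p[z-\mu]$ ``matters only when $\delta$ is small'' is mistaken, though harmless. Clipping raises $z-\mu$ pointwise, so $p[z-\mu]$ weakly increases while the span weakly decreases. The clipped objective is therefore no smaller for every $\delta$ and every center, and the primal fallback is unnecessary.
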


\begin{proof}[Proof of Corollary~\ref{cor:sa-rec-tv-perturbation}]
Fix $\pi\in\Pi_{\mathrm{SR}}$ and $s\in\mathcal S$, and first suppose $\delta>0$. By \SATVDualReference, let
$\widehat\mu_a,\mu_a\ge0$ be optimizers for the empirical and true dual
objectives for $z_{s,a}$, respectively. Then
\[
\begin{aligned}
\widehat p_{s,a}[z_{s,a}-\widehat\mu_a]
-\delta\operatorname{Span}(z_{s,a}-\widehat\mu_a)
&\ge
\widehat p_{s,a}[z_{s,a}]
-\delta\operatorname{Span}(z_{s,a}),\\
\delta\!\left(
\operatorname{Span}(z_{s,a})
-\operatorname{Span}(z_{s,a}-\widehat\mu_a)
\right)
&\ge
\widehat p_{s,a}[\widehat\mu_a]
\ge0,\\[1mm]
p_{s,a}[z_{s,a}-\mu_a]
-\delta\operatorname{Span}(z_{s,a}-\mu_a)
&\ge
p_{s,a}[z_{s,a}]
-\delta\operatorname{Span}(z_{s,a}),\\
\delta\!\left(
\operatorname{Span}(z_{s,a})
-\operatorname{Span}(z_{s,a}-\mu_a)
\right)
&\ge
p_{s,a}[\mu_a]
\ge0.
\end{aligned}
\]
Moreover, on $\Omega_{n,\eta}$,
\[
d_{\mathrm{TV}}\bigl(\widehat p_{s,a},p_{s,a}\bigr)
=\frac12\left\|\widehat p_{s,a}-p_{s,a}\right\|_1
\le\frac{\eta}{2}.
\]
Therefore, by Lemma~\ref{lem:tv-span},
\[
\begin{aligned}
&\left|
\inf_{q_{s,a}\in\widehat{\mathcal P}_{s,a}(d_{\mathrm{TV}},\delta)}q_{s,a}[z_{s,a}]
-\inf_{q_{s,a}\in\mathcal P_{s,a}(d_{\mathrm{TV}},\delta)}q_{s,a}[z_{s,a}]
\right|\\
&\quad\le
\max\!\left\{
\left|\bigl(\widehat p_{s,a}-p_{s,a}\bigr)[z_{s,a}-\widehat\mu_a]\right|,
\left|\bigl(\widehat p_{s,a}-p_{s,a}\bigr)[z_{s,a}-\mu_a]\right|
\right\}\\
&\quad\le
d_{\mathrm{TV}}\bigl(\widehat p_{s,a},p_{s,a}\bigr)
\max\!\left\{
\operatorname{Span}(z_{s,a}-\widehat\mu_a),
\operatorname{Span}(z_{s,a}-\mu_a)
\right\}\\
&\quad\le
d_{\mathrm{TV}}\bigl(\widehat p_{s,a},p_{s,a}\bigr)
\operatorname{Span}(z_{s,a})
\le
\frac{\eta}{2}\operatorname{Span}(z_{s,a}).
\end{aligned}
\]
If $\delta=0$, each uncertainty set reduces to its center, and the same
action-wise bound follows directly from Lemma~\ref{lem:tv-span}.
Weighting these estimates by $\pi(a\mid s)$ and using
$\sum_a\pi(a\mid s)=1$ yields
\[
\begin{aligned}
\left|
\widehat{\mathcal T}_\gamma^\pi(v)(s)
-\mathcal T_\gamma^\pi(v)(s)
\right|
&\le
\frac{\eta}{2}\sum_{a\in\mathcal A}\pi(a\mid s)
\operatorname{Span}\bigl(r(s,a,\cdot)+\gamma v\bigr)\\
&\le
\frac{\eta}{2}\bigl(1+\operatorname{Span}(v)\bigr).
\end{aligned}
\]
\end{proof}

\begin{corollary}
\label{cor:s-rec-tv-perturbation}
Under the $S$-rectangular TV uncertainty set, on the good event
$\Omega_{n,\eta}$, simultaneously for every
$v\in\mathbb R^{|\mathcal S|}$, $\pi\in\Pi_{\mathrm{SR}}$, $s\in\mathcal S$, and $\gamma\in(0,1)$,
\[
\left|
\widehat{\mathcal T}_\gamma^\pi(v)(s)-\mathcal T_\gamma^\pi(v)(s)
\right|
\le\frac{\eta}{2}\bigl(1+\operatorname{Span}(v)\bigr).
\]
\end{corollary}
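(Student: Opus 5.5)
We follow the proof of Corollary~\ref{cor:sa-rec-tv-perturbation}, using the coupled dual representation of Lemma~\ref{lem:s-rec-tv-dual} in place of the action-wise one. Fix $\pi\in\Pi_{\mathrm{SR}}$ and $s\in\mathcal S$, and set $\phi=\pi(\cdot\mid s)$ and $w_a:=\phi(a)z_{s,a}$. For $\mu=(\mu_a)_{a\in\mathcal A}$ with each $\mu_a$ nonnegative, define the dual objectives
\[
F(\mu):=\sum_{a\in\mathcal A}p_{s,a}[w_a-\mu_a]-|\mathcal A|\delta\max_{a\in\mathcal A}\operatorname{Span}(w_a-\mu_a),
\]
and define $\widehat F$ in the same way with $\widehat p_{s,a}$ in place of $p_{s,a}$. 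Then $\mathcal T_\gamma^\pi(v)(s)=\sup_\mu F(\mu)$ and $\widehat{\mathcal T}_\gamma^\pi(v)(s)=\sup_\mu\widehat F(\mu)$. The case $\delta=0$ is handled directly by Lemma~\ref{lem:tv-span}, applied action by action, so assume $\delta>0$.

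The first step is to restrict both suprema to the set $\mathcal M:=\{\mu:\operatorname{Span}(w_a-\mu_a)\le\operatorname{Span}(w_a)\ \text{for all }a\}$. Given any $\mu$, replace each $\mu_a$ by the truncation $\tilde\mu_a:=\min\{\mu_a,(w_a-\min w_a)\}$, taken componentwise. Then $w_a-\tilde\mu_a=\max\{w_a-\mu_a,\min w_a\}$, so its span is at most both $\operatorname{Span}(w_a)$ and $\operatorname{Span}(w_a-\mu_a)$. Hence the penalty term does not increase.

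The linear term is the delicate part. Raising the floor changes $p_{s,a}[w_a-\mu_a]$ only by increasing it, because $\tilde\mu_a\le\mu_a$. So $F(\tilde\mu)\ge F(\mu)$, and the same holds for $\widehat F$. This mirrors the first-order inequalities used in the SA case, but here it gives a clean reduction to $\mathcal M$ for both the true and the empirical problem simultaneously.

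On $\mathcal M$ the penalty terms of $F$ and $\widehat F$ are identical, so
\[
|\widehat F(\mu)-F(\mu)|=\Bigl|\sum_{a\in\mathcal A}(\widehat p_{s,a}-p_{s,a})[w_a-\mu_a]\Bigr|.
\]
By Lemma~\ref{lem:tv-span}, this is at most $\sum_a d_{\mathrm{TV}}(\widehat p_{s,a},p_{s,a})\operatorname{Span}(w_a)$. On $\Omega_{n,\eta}$ we have $d_{\mathrm{TV}}(\widehat p_{s,a},p_{s,a})\le\eta/2$, and $\operatorname{Span}(w_a)=\phi(a)\operatorname{Span}(z_{s,a})$. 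Therefore
\[
|\widehat F(\mu)-F(\mu)|\le\frac{\eta}{2}\sum_a\phi(a)\operatorname{Span}(z_{s,a})\le\frac{\eta}{2}\bigl(1+\operatorname{Span}(v)\bigr),
\]
where the last inequality uses $\operatorname{Span}(r(s,a,\cdot)+\gamma v)\le 1+\operatorname{Span}(v)$. Since the two suprema are over the same set $\mathcal M$, the difference of the suprema is at most the supremum of the difference, which gives the stated bound.

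The step most likely to need care is the truncation argument: we must confirm that the supremum over the possibly unbounded set of $\mu$ is achieved or approached within $\mathcal M$ for both objectives at once. The truncation handles this uniformly, with no need for attained optimizers.
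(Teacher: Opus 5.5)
Your proof is correct and follows the same route as the paper: the coupled dual form of Lemma~\ref{lem:s-rec-tv-dual}, a truncation of $\mu_a$ that enforces $\operatorname{Span}(\phi(a)z_{s,a}-\mu_a)\le\phi(a)\operatorname{Span}(z_{s,a})$ without decreasing either dual objective, and then Lemma~\ref{lem:tv-span} with $d_{\mathrm{TV}}(\widehat p_{s,a},p_{s,a})\le\eta/2$. Your version is slightly cleaner. The paper assumes dual optimizers $\mu_a,\widehat\mu_a$ exist and also shifts by $\min_y\mu_a(y)$, whereas you restrict both suprema to the common set $\mathcal M$, which needs no attainment and makes the difference-of-suprema step explicit; you also treat $\delta=0$ separately, which the paper omits.
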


\begin{proof}[Proof of Corollary~\ref{cor:s-rec-tv-perturbation}]
Fix $\pi\in\Pi_{\mathrm{SR}}$ and $s\in\mathcal S$, and set $\phi=\pi(\cdot\mid s)$. By
\STVDualReference, let
$\widehat\mu_a,\mu_a\ge0$ be optimizers for the empirical and true dual
objectives, respectively. Replace $\mu_a$ by
\[
\min\!\left\{
\mu_a-\min_{y\in\mathcal S}\mu_a(y),\,
\phi(a)z_{s,a}-\min_{y\in\mathcal S}\phi(a)z_{s,a}(y)
\right\}.
\]
Then
\[
\phi(a)z_{s,a}-\mu_a
=
\max\!\left\{
\phi(a)z_{s,a}-\mu_a+\min_{y\in\mathcal S}\mu_a(y),\,
\min_{y\in\mathcal S}\phi(a)z_{s,a}(y)
\right\}.
\]
The resulting vector is pointwise no smaller than the original one, and its
values lie between
\[
\min_{y\in\mathcal S}\phi(a)z_{s,a}(y)
\quad\text{and}\quad
\max_{y\in\mathcal S}\phi(a)z_{s,a}(y).
\]
Thus its expectation does not decrease and its span does not increase.
Consequently, an optimizer can be chosen such that
\[
\operatorname{Span}(\phi(a)z_{s,a}-\mu_a)
\le
\phi(a)\operatorname{Span}(z_{s,a}).
\]
The same argument applies to $\widehat\mu_a$:
\[
\operatorname{Span}(\phi(a)z_{s,a}-\widehat\mu_a)
\le
\phi(a)\operatorname{Span}(z_{s,a}).
\]
On $\Omega_{n,\eta}$, Lemma~\ref{lem:tv-span} gives
\[
\begin{aligned}
&\left|
\widehat{\mathcal T}_\gamma^\pi(v)(s)
-\mathcal T_\gamma^\pi(v)(s)
\right|\\
&\quad\le
\sum_{a\in\mathcal A}
d_{\mathrm{TV}}\bigl(\widehat p_{s,a},p_{s,a}\bigr)
\phi(a)\operatorname{Span}(z_{s,a})\\
&\quad\le
\frac{\eta}{2}
\sum_{a\in\mathcal A}
\phi(a)\operatorname{Span}(z_{s,a})\\
&\quad\le
\frac{\eta}{2}
\max_{a\in\mathcal A}\operatorname{Span}(z_{s,a}).
\end{aligned}
\]
Since $\phi=\pi(\cdot\mid s)$, this gives
\[
\begin{aligned}
&\left|
\widehat{\mathcal T}_\gamma^\pi(v)(s)
-\mathcal T_\gamma^\pi(v)(s)
\right|\\
&\quad\le
\frac{\eta}{2}
\max_{a\in\mathcal A}
\operatorname{Span}\bigl(r(s,a,\cdot)+\gamma v\bigr)\\
&\quad\le
\frac{\eta}{2}
\bigl(1+\operatorname{Span}(v)\bigr).
\end{aligned}
\]
\end{proof}

\begin{corollary}
\label{cor:sa-rec-wasserstein-perturbation}
Under the SA-rectangular Wasserstein uncertainty set, on the good event
$\Omega_{n,\eta}$, simultaneously for every
$v\in\mathbb R^{|\mathcal S|}$, $\pi\in\Pi_{\mathrm{SR}}$, $s\in\mathcal S$, and $\gamma\in(0,1)$,
\[
\left|
\widehat{\mathcal T}_\gamma^\pi(v)(s)-\mathcal T_\gamma^\pi(v)(s)
\right|
\le\frac{\eta}{2}\bigl(1+\operatorname{Span}(v)\bigr).
\]
\end{corollary}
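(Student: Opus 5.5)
The plan is to reduce the Wasserstein case to the structure already used for TV in Corollary~\ref{cor:sa-rec-tv-perturbation}. First fix $\pi\in\Pi_{\mathrm{SR}}$, $s\in\mathcal S$ and $a\in\mathcal A$, and write $z_{s,a}=r(s,a,\cdot)+\gamma v$. By Lemma~\ref{lem:sa-rec-wasserstein-dual} and its empirical counterpart, the action-wise worst-case values are
\[
J(p):=\sup_{\lambda\ge0}\Bigl\{-\lambda\delta^\ell+p\bigl[h_\lambda\bigr]\Bigr\},\qquad h_\lambda(x):=\inf_{y\in\mathcal S}\bigl(z_{s,a}(y)+\lambda\rho(x,y)^\ell\bigr),
\]
evaluated at $p=p_{s,a}$ and $p=\widehat p_{s,a}$, respectively. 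The crucial observation is that the $p$-dependence enters only linearly through $p[h_\lambda]$, with the penalty $-\lambda\delta^\ell$ independent of $p$.

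Since a difference of suprema is bounded by the supremum of differences, we obtain
\[
|J(\widehat p_{s,a})-J(p_{s,a})|\le\sup_{\lambda\ge0}\bigl|(\widehat p_{s,a}-p_{s,a})[h_\lambda]\bigr|.
\]
The main step is to show that $\operatorname{Span}(h_\lambda)\le\operatorname{Span}(z_{s,a})$ uniformly in $\lambda\ge0$. Taking $y=x$ in the infimum gives $h_\lambda(x)\le z_{s,a}(x)\le\max z_{s,a}$. Since $\rho\ge0$, we also have $h_\lambda(x)\ge\min_y z_{s,a}(y)$. Hence $h_\lambda$ takes values in $[\min z_{s,a},\max z_{s,a}]$. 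Lemma~\ref{lem:tv-span} then yields
\[
\bigl|(\widehat p_{s,a}-p_{s,a})[h_\lambda]\bigr|\le d_{\mathrm{TV}}(\widehat p_{s,a},p_{s,a})\operatorname{Span}(z_{s,a}).
\]

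Next we control the total variation on the good event. On $\Omega_{n,\eta}$ we have $\operatorname{supp}(\widehat p_{s,a})\subseteq\operatorname{supp}(p_{s,a})$ almost surely, and $|\widehat p_{s,a}(x)-p_{s,a}(x)|\le\eta p_{s,a}(x)$. Summing over $x$ gives $\|\widehat p_{s,a}-p_{s,a}\|_1\le\eta$, so $d_{\mathrm{TV}}\le\eta/2$.

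It remains to assemble the action-wise bounds. Weighting by $\pi(a\mid s)$ and using $\operatorname{Span}(z_{s,a})\le1+\gamma\operatorname{Span}(v)\le1+\operatorname{Span}(v)$ gives the claim. The bound is simultaneous over all $v$, $\pi$ and $\gamma$ because only the event $\Omega_{n,\eta}$ is used. The degenerate case $\delta=0$ reduces to the centers and follows directly from Lemma~\ref{lem:tv-span}. The only delicate point is interchanging the sup over the unbounded $\lambda$ with the difference, which is harmless because both suprema are finite (bounded by $\max z_{s,a}$).
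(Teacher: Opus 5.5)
Your proposal is correct and follows essentially the same route as the paper. Both arguments use the action-wise dual of Lemma~\ref{lem:sa-rec-wasserstein-dual} and bound the difference of suprema by the supremum of differences. Both then place $\inf_{y}\bigl(z_{s,a}(y)+\lambda\rho(\cdot,y)^\ell\bigr)$ between $\min z_{s,a}$ and $\max z_{s,a}$, apply Lemma~\ref{lem:tv-span} with $d_{\mathrm{TV}}(\widehat p_{s,a},p_{s,a})\le\eta/2$ on $\Omega_{n,\eta}$, and average over $\pi(a\mid s)$. Your justification of the lower bound through $\lambda\ge0$ and $\rho\ge0$ is in fact slightly more precise than the paper's appeal to $\rho(x,x)=0$, which only gives the upper bound.
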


\begin{proof}[Proof of Corollary~\ref{cor:sa-rec-wasserstein-perturbation}]
Fix $\pi\in\Pi_{\mathrm{SR}}$ and $s\in\mathcal S$. By \SAWassersteinDualReference\
and its empirical counterpart, weighting the action-wise dual estimates
by $\pi(a\mid s)$ gives
\[
\begin{aligned}
&\left|
\widehat{\mathcal T}_\gamma^\pi(v)(s)
-\mathcal T_\gamma^\pi(v)(s)
\right|\\
&\quad\le
\sum_{a\in\mathcal A}\pi(a\mid s)\sup_{\lambda_a\ge0}
\left|
\bigl(\widehat p_{s,a}-p_{s,a}\bigr)
\left[
\inf_{y\in\mathcal S}
\left(z_{s,a}(y)+\lambda_a\rho(\cdot,y)^\ell\right)
\right]
\right|.
\end{aligned}
\]
Since $\rho(x,x)=0$, for every $x\in\mathcal S$ and $\lambda_a\ge0$,
\[
\min_{y\in\mathcal S}z_{s,a}(y)
\le
\inf_{y\in\mathcal S}
\left(z_{s,a}(y)+\lambda_a\rho(x,y)^\ell\right)
\le
z_{s,a}(x)
\le
\max_{y\in\mathcal S}z_{s,a}(y).
\]
Hence, by Lemma~\ref{lem:tv-span} and
$d_{\mathrm{TV}}(\widehat p_{s,a},p_{s,a})\le\eta/2$ on
$\Omega_{n,\eta}$, together with $\sum_a\pi(a\mid s)=1$,
\[
\begin{aligned}
&\left|
\widehat{\mathcal T}_\gamma^\pi(v)(s)
-\mathcal T_\gamma^\pi(v)(s)
\right|\\
&\quad\le
\frac{\eta}{2}\sum_{a\in\mathcal A}\pi(a\mid s)\operatorname{Span}(z_{s,a})\\
&\quad\le
\frac{\eta}{2}\sum_{a\in\mathcal A}\pi(a\mid s)
\operatorname{Span}\bigl(r(s,a,\cdot)+\gamma v\bigr)\\
&\quad\le
\frac{\eta}{2}\bigl(1+\operatorname{Span}(v)\bigr).
\end{aligned}
\]
\end{proof}

\begin{corollary}
\label{cor:s-rec-wasserstein-perturbation}
Under the $S$-rectangular Wasserstein uncertainty set, on the good event
$\Omega_{n,\eta}$, simultaneously for every
$v\in\mathbb R^{|\mathcal S|}$, $\pi\in\Pi_{\mathrm{SR}}$, $s\in\mathcal S$, and $\gamma\in(0,1)$,
\[
\left|
\widehat{\mathcal T}_\gamma^\pi(v)(s)-\mathcal T_\gamma^\pi(v)(s)
\right|
\le\frac{\eta}{2}\bigl(1+\operatorname{Span}(v)\bigr).
\]
\end{corollary}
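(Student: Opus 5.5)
The argument mirrors the proof of Corollary~\ref{cor:sa-rec-wasserstein-perturbation}, now using the coupled dual representation in Lemma~\ref{lem:s-rec-wasserstein-dual}. Fix $\pi\in\Pi_{\mathrm{SR}}$ and $s\in\mathcal S$, set $\phi=\pi(\cdot\mid s)$, and for $\lambda\ge0$ define the smoothed payoffs
\[
h_{a,\lambda}(x):=\inf_{y\in\mathcal S}\bigl(\phi(a)z_{s,a}(y)+\lambda\rho(x,y)^\ell\bigr),\qquad x\in\mathcal S,\ a\in\mathcal A.
\]
By Lemma~\ref{lem:s-rec-wasserstein-dual} and its empirical counterpart, both $\mathcal T_\gamma^\pi(v)(s)$ and $\widehat{\mathcal T}_\gamma^\pi(v)(s)$ are suprema over the same $\lambda\ge0$ of objectives. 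These objectives share the term $-\lambda|\mathcal A|\delta^\ell$ and differ only in their centers, $p_{s,a}$ versus $\widehat p_{s,a}$.

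Since the difference of two suprema is bounded by the supremum of the pointwise differences, we obtain
\[
\bigl|\widehat{\mathcal T}_\gamma^\pi(v)(s)-\mathcal T_\gamma^\pi(v)(s)\bigr|
\le\sup_{\lambda\ge0}\Bigl|\sum_{a\in\mathcal A}(\widehat p_{s,a}-p_{s,a})[h_{a,\lambda}]\Bigr|
\le\sup_{\lambda\ge0}\sum_{a\in\mathcal A}\bigl|(\widehat p_{s,a}-p_{s,a})[h_{a,\lambda}]\bigr|.
\]
Both suprema are finite, because the objectives are bounded above by the primal values and are attained or approached. The key observation is that taking $y=x$ and using $\rho(x,x)=0$ gives
\[
\min_y\phi(a)z_{s,a}(y)\le h_{a,\lambda}(x)\le\phi(a)z_{s,a}(x).
\]
Hence $\operatorname{Span}(h_{a,\lambda})\le\phi(a)\operatorname{Span}(z_{s,a})$, uniformly in $\lambda$.

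Next we apply Lemma~\ref{lem:tv-span} to each action. On $\Omega_{n,\eta}$ we have $d_{\mathrm{TV}}(\widehat p_{s,a},p_{s,a})\le\eta/2$, since $\|\widehat p_{s,a}-p_{s,a}\|_1\le\eta\sum_x p_{s,a}(x)=\eta$ and the supports agree almost surely. Therefore each term is at most $(\eta/2)\phi(a)\operatorname{Span}(z_{s,a})$. Summing over actions and using $\sum_a\phi(a)=1$ gives the bound $(\eta/2)\max_a\operatorname{Span}(r(s,a,\cdot)+\gamma v)$. Finally, $r\in[0,1]$ and $\gamma<1$ give $\operatorname{Span}(r(s,a,\cdot)+\gamma v)\le1+\operatorname{Span}(v)$.

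All inequalities used hold pointwise on $\Omega_{n,\eta}$, for arbitrary $v$, $\pi$, and $\gamma$, so the bound holds simultaneously for all of them.

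The main point needing care is the first step: the supremum over $\lambda$ is taken jointly across actions rather than separately for each action. The bound still goes through because the sup-difference estimate uses only a single common $\lambda$ for both objectives. The $\phi(a)$ factor in the span bound is what makes the coupled sum collapse to a convex combination.
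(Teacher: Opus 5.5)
Your proof is correct and follows the paper's argument step for step: the coupled dual of Lemma~\ref{lem:s-rec-wasserstein-dual} with a single common $\lambda$; the sandwich $\min_y\phi(a)z_{s,a}(y)\le h_{a,\lambda}(x)\le\phi(a)z_{s,a}(x)$ from $\rho(x,x)=0$; Lemma~\ref{lem:tv-span} with $d_{\mathrm{TV}}(\widehat p_{s,a},p_{s,a})\le\eta/2$; and the convex combination over actions. One small wording point: for the total-variation bound you only need $\operatorname{supp}(\widehat p_{s,a})\subseteq\operatorname{supp}(p_{s,a})$, which holds almost surely, whereas equality of supports is guaranteed on $\Omega_{n,\eta}$ only when $\eta<1$.
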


\begin{proof}[Proof of Corollary~\ref{cor:s-rec-wasserstein-perturbation}]
Fix $\pi\in\Pi_{\mathrm{SR}}$ and $s\in\mathcal S$, and set $\phi=\pi(\cdot\mid s)$. By \SWassersteinDualReference\
and its empirical counterpart,
\[
\begin{aligned}
&\left|
\widehat{\mathcal T}_\gamma^\pi(v)(s)
-\mathcal T_\gamma^\pi(v)(s)
\right|\\
&\quad\le
\sup_{\lambda\ge0}
\left|
\sum_{a\in\mathcal A}
\bigl(\widehat p_{s,a}-p_{s,a}\bigr)
\left[
\inf_{y\in\mathcal S}
\left(\phi(a)z_{s,a}(y)+\lambda\rho(\cdot,y)^\ell\right)
\right]
\right|\\
&\quad\le
\sup_{\lambda\ge0}
\sum_{a\in\mathcal A}
d_{\mathrm{TV}}\bigl(\widehat p_{s,a},p_{s,a}\bigr)
\operatorname{Span}\!\left(
\inf_{y\in\mathcal S}
\left(\phi(a)z_{s,a}(y)+\lambda\rho(\cdot,y)^\ell\right)
\right).
\end{aligned}
\]
Since $\rho(x,x)=0$, for every $x\in\mathcal S$,
\[
\min_{y\in\mathcal S}\phi(a)z_{s,a}(y)
\le
\inf_{y\in\mathcal S}
\left(\phi(a)z_{s,a}(y)+\lambda\rho(x,y)^\ell\right)
\le
\phi(a)z_{s,a}(x).
\]
Hence, on $\Omega_{n,\eta}$,
\[
\begin{aligned}
&\left|
\widehat{\mathcal T}_\gamma^\pi(v)(s)
-\mathcal T_\gamma^\pi(v)(s)
\right|\\
&\quad\le
\frac{\eta}{2}
\sum_{a\in\mathcal A}
\phi(a)\operatorname{Span}(z_{s,a})\\
&\quad\le
\frac{\eta}{2}
\max_{a\in\mathcal A}
\operatorname{Span}\bigl(r(s,a,\cdot)+\gamma v\bigr)\\
&\quad\le
\frac{\eta}{2}\bigl(1+\operatorname{Span}(v)\bigr).
\end{aligned}
\]
This proves the fixed-policy bound with the same joint Wasserstein budget.
\end{proof}

\section{Empirical DR-DMDP}
\label{sec:empirical-dr-dmdp}

This appendix develops the bounds required for the empirical DR-DMDP analysis.
We first combine the eight perturbation bounds established in
Appendix~\ref{sec:perturbation-bounds}
into a uniform bound for the fixed-policy Bellman operator. We then control
$\widehat v_\gamma^*-v_\gamma^*$ using the contraction property and relate
$\operatorname{Span}(v_\gamma^*)$ to the robust bias span
$\operatorname{Span}(u_\delta^*)$.

\begin{lemma}
\label{lem:empirical-operator-perturbation}
Suppose that $\mathcal P$ is a KL-divergence, $f_k$-divergence, TV, or
Wasserstein uncertainty set with either SA-rectangular or
$S$-rectangular structure. If $\eta\le 1/2$, then, on
$\Omega_{n,\eta}$, simultaneously for every
$v\in\mathbb R^{|\mathcal S|}$, $\pi\in\Pi_{\mathrm{SR}}$, and $\gamma\in(0,1)$,
\[
\left\|
\widehat{\mathcal T}_\gamma^\pi(v)-\mathcal T_\gamma^\pi(v)
\right\|_\infty
\le
3\eta\bigl(1+\operatorname{Span}(v)\bigr).
\]
\end{lemma}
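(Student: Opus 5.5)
This lemma is a consolidation of the eight per-structure perturbation bounds already established, so the plan is a case split followed by taking the worst constant. Fix $\eta\le 1/2$ and work on $\Omega_{n,\eta}$. Each of Corollaries~\ref{cor:sa-rec-kl-perturbation}--\ref{cor:s-rec-wasserstein-perturbation} holds on this event simultaneously for all $v\in\mathbb R^{|\mathcal S|}$, $\pi\in\Pi_{\mathrm{SR}}$, $s\in\mathcal S$ and $\gamma\in(0,1)$. No union bound over policies or value functions is therefore needed; the statement is deterministic on $\Omega_{n,\eta}$.

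The cases and their pointwise constants are as follows.
\begin{itemize}
\item \textbf{KL divergence.} Corollaries~\ref{cor:sa-rec-kl-perturbation} and~\ref{cor:s-rec-kl-perturbation} give the constant $3\eta$. These use $\eta\le1/2$, which is exactly the hypothesis of the lemma.
\item \textbf{$f_k$ divergence.} Corollaries~\ref{cor:sa-rec-fk-perturbation} and~\ref{cor:s-rec-fk-perturbation} give $2\eta$, also under $\eta\le 1/2$.
\item \textbf{TV and Wasserstein.} Corollaries~\ref{cor:sa-rec-tv-perturbation}, \ref{cor:s-rec-tv-perturbation}, \ref{cor:sa-rec-wasserstein-perturbation} and~\ref{cor:s-rec-wasserstein-perturbation} give $\eta/2$, with no restriction on $\eta$.
\end{itemize}
Since $\eta/2\le 2\eta\le 3\eta$, every case yields
\[
\left|\widehat{\mathcal T}_\gamma^\pi(v)(s)-\mathcal T_\gamma^\pi(v)(s)\right|\le 3\eta\bigl(1+\operatorname{Span}(v)\bigr)
\]
for each $s$. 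The right-hand side does not depend on $s$, so taking the maximum over $s\in\mathcal S$ gives the $\ell_\infty$ bound.

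The only points needing care are bookkeeping checks. First, the corollaries must hold whenever $\delta$ satisfies the standing assumptions. For instance, the KL corollaries rely on Chen et al.'s Lemma~C.6, whose relative-error hypothesis on $\Omega_{n,\eta}$ holds for $\eta\le1/2$. Second, the empirical operators are well defined, because $\operatorname{supp}(\widehat p_{s,a})\subseteq\operatorname{supp}(p_{s,a})$ almost surely, with equality on $\Omega_{n,\eta}$ since $\eta<1$.

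I do not expect a genuine obstacle; the substantive work was done in the corollaries. The step I would double-check is the $S$-rectangular KL case. There the bound was obtained through the coupled dual with per-action factors $\phi(a)\operatorname{Span}(z_{s,a})$, summed and then bounded by $\max_a\operatorname{Span}(z_{s,a})\le 1+\operatorname{Span}(v)$. This uses $r\in[0,1]$ and $\gamma<1$, so $\operatorname{Span}(r(s,a,\cdot)+\gamma v)\le 1+\operatorname{Span}(v)$ for every case.
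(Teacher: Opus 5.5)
Your proposal is correct and follows the same route as the paper's proof. It invokes the eight per-structure corollaries, notes that the pointwise constants $\max\{3,2,\tfrac12,\tfrac12\}\eta$ are all at most $3\eta$, and takes the maximum over $s$, with simultaneity over $v,\pi,\gamma$ inherited from the corollaries on $\Omega_{n,\eta}$.
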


\newcommand{\SAKLBound}{Corollary~\ref{cor:sa-rec-kl-perturbation}}
\newcommand{\SAFkBound}{Corollary~\ref{cor:sa-rec-fk-perturbation}}
\newcommand{\SATVBound}{Corollary~\ref{cor:sa-rec-tv-perturbation}}
\newcommand{\SAWassersteinBound}{Corollary~\ref{cor:sa-rec-wasserstein-perturbation}}
\newcommand{\SKLBound}{Corollary~\ref{cor:s-rec-kl-perturbation}}
\newcommand{\SFkBound}{Corollary~\ref{cor:s-rec-fk-perturbation}}
\newcommand{\STVBound}{Corollary~\ref{cor:s-rec-tv-perturbation}}
\newcommand{\SWassersteinBound}{Corollary~\ref{cor:s-rec-wasserstein-perturbation}}

\begin{proof}
For the SA-rectangular structure, the KL-, $f_k$-divergence-, TV-, and
Wasserstein-based bounds are given by
\SAKLBound, \SAFkBound, \SATVBound, and \SAWassersteinBound,
respectively. Their conclusions give, for every
$s\in\mathcal S$ and $\pi\in\Pi_{\mathrm{SR}}$,
\[
\left|
\widehat{\mathcal T}_\gamma^\pi(v)(s)-\mathcal T_\gamma^\pi(v)(s)
\right|
\le
\max\left\{3,2,\frac12,\frac12\right\}
\eta\bigl(1+\operatorname{Span}(v)\bigr).
\]
For the $S$-rectangular structure, the corresponding bounds follow from
\SKLBound, \SFkBound, \STVBound, and \SWassersteinBound,
respectively, and yield
\[
\left|
\widehat{\mathcal T}_\gamma^\pi(v)(s)-\mathcal T_\gamma^\pi(v)(s)
\right|
\le
\max\left\{3,2,\frac12,\frac12\right\}
\eta\bigl(1+\operatorname{Span}(v)\bigr).
\]
Both displays are bounded by
$3\eta\bigl(1+\operatorname{Span}(v)\bigr)$. Taking the maximum over
$s\in\mathcal S$ proves the claim, simultaneously over
$v$, $\pi$, and $\gamma$ on $\Omega_{n,\eta}$.
\end{proof}

\begin{lemma}
\label{lem:empirical-discounted-value-perturbation}
Suppose that $\mathcal P$ is a KL-divergence, $f_k$-divergence, TV, or
Wasserstein uncertainty set with either SA-rectangular or
$S$-rectangular structure. If $\eta\le 1/2$, then, on
$\Omega_{n,\eta}$, for every $\gamma\in(0,1)$,
\[
(1-\gamma)
\left\|
\widehat v_\gamma^*-v_\gamma^*
\right\|_\infty
\le
3\eta\bigl(1+\operatorname{Span}(v_\gamma^*)\bigr).
\]
\end{lemma}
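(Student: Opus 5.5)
The plan is the standard fixed-point perturbation argument: use the $\gamma$-contraction of the empirical optimal operator, and evaluate the operator perturbation only at the true fixed point $v_\gamma^*$. Then only $\operatorname{Span}(v_\gamma^*)$ enters the bound, not $\operatorname{Span}(\widehat v_\gamma^*)$.

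\emph{Step 1 (decomposition).} Write
\[
\widehat v_\gamma^*-v_\gamma^*
=\bigl(\widehat{\mathcal T}_\gamma^*(\widehat v_\gamma^*)-\widehat{\mathcal T}_\gamma^*(v_\gamma^*)\bigr)
+\bigl(\widehat{\mathcal T}_\gamma^*(v_\gamma^*)-\mathcal T_\gamma^*(v_\gamma^*)\bigr),
\]
using the fixed-point identities $\widehat v_\gamma^*=\widehat{\mathcal T}_\gamma^*(\widehat v_\gamma^*)$ and $v_\gamma^*=\mathcal T_\gamma^*(v_\gamma^*)$.

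\emph{Step 2 (contraction).} The first term is at most $\gamma\|\widehat v_\gamma^*-v_\gamma^*\|_\infty$ in sup-norm. For fixed $\phi$ and fixed $q_s$, the objective $\sum_a\phi(a)q_{s,a}[r+\gamma v]$ is $\gamma$-Lipschitz in $v$ with respect to $\|\cdot\|_\infty$. Taking an infimum over $q_s$ and then a maximum over $\phi$ (or over $a$ in the SA case) preserves this Lipschitz constant. The same holds for the empirical sets, with no structural assumption on $\widehat{\mathcal P}$ needed.

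\emph{Step 3 (perturbation at $v_\gamma^*$).} For the second term, use $\mathcal T_\gamma^*(v)=\max_{\pi\in\Pi_{\mathrm{SR}}}\mathcal T_\gamma^\pi(v)$ and the same identity for the empirical operators. Then apply the elementary bound $|\max_\pi A_\pi-\max_\pi B_\pi|\le\sup_\pi|A_\pi-B_\pi|$ statewise. Lemma~\ref{lem:empirical-operator-perturbation} holds uniformly over $\pi$ on $\Omega_{n,\eta}$, so with $v=v_\gamma^*$ it gives
\[
\|\widehat{\mathcal T}_\gamma^*(v_\gamma^*)-\mathcal T_\gamma^*(v_\gamma^*)\|_\infty\le 3\eta\bigl(1+\operatorname{Span}(v_\gamma^*)\bigr).
\]

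\emph{Step 4 (conclusion).} Combining the two bounds gives
\[
\|\widehat v_\gamma^*-v_\gamma^*\|_\infty\le\gamma\|\widehat v_\gamma^*-v_\gamma^*\|_\infty+3\eta\bigl(1+\operatorname{Span}(v_\gamma^*)\bigr).
\]
Both values are finite, bounded by $(1-\gamma)^{-1}$, so we can rearrange to get the claim.

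The only delicate point is Step 3: the representation $\max_\pi\mathcal T_\gamma^\pi$ must hold for the S-rectangular operator. There the max over $\phi\in\Delta(\mathcal A)$ is statewise, and a stationary randomized $\pi$ can realize any choice of $\phi(s)$ at every state simultaneously, so the identity holds (the paper already states it). Uniformity over $\pi$ in Lemma~\ref{lem:empirical-operator-perturbation}, with no union bound needed, is what makes the max-difference step legitimate.
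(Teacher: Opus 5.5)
Your proposal is correct and follows essentially the same route as the paper's proof. The paper also bounds $\|\widehat{\mathcal T}_\gamma^*(v_\gamma^*)-\mathcal T_\gamma^*(v_\gamma^*)\|_\infty$ through $\mathcal T_\gamma^*=\max_{\pi\in\Pi_{\mathrm{SR}}}\mathcal T_\gamma^\pi$ and Lemma~\ref{lem:empirical-operator-perturbation}, checks the $\gamma$-contraction of $\widehat{\mathcal T}_\gamma^*$ separately for the SA- and $S$-rectangular forms, and then applies the triangle inequality at the two fixed points and rearranges.
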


\begin{proof}
Using $\mathcal T_\gamma^*=\max_{\pi\in\Pi_{\mathrm{SR}}}\mathcal T_\gamma^\pi$ and its empirical counterpart,
Lemma~\ref{lem:empirical-operator-perturbation} gives
\[
\left\|\widehat{\mathcal T}_\gamma^*(v)-\mathcal T_\gamma^*(v)\right\|_\infty
\le\sup_{\pi\in\Pi_{\mathrm{SR}}}
\left\|\widehat{\mathcal T}_\gamma^\pi(v)-\mathcal T_\gamma^\pi(v)\right\|_\infty
\le3\eta\bigl(1+\operatorname{Span}(v)\bigr).
\]
For the SA-rectangular empirical operator, for $U,v\in\mathbb R^{|\mathcal S|}$,
\[
\begin{aligned}
\left|
\widehat{\mathcal T}_\gamma^*(U)(s)
-\widehat{\mathcal T}_\gamma^*(v)(s)
\right|
&\le
\gamma\max_{a\in\mathcal A}
\sup_{q_{s,a}\in\widehat{\mathcal P}_{s,a}}
\left|q_{s,a}[U-v]\right|\\
&\le \gamma\|U-v\|_\infty.
\end{aligned}
\]
For the $S$-rectangular empirical operator,
\[
\begin{aligned}
&\left|
\widehat{\mathcal T}_\gamma^*(U)(s)
-\widehat{\mathcal T}_\gamma^*(v)(s)
\right|\\
&\quad\le
\max_{\phi\in\Delta(\mathcal A)}
\sup_{q_s\in\widehat{\mathcal P}_s}
\left|
\sum_{a\in\mathcal A}
q_{s,a}[\gamma\phi(a)(U-v)]
\right|\\
&\quad\le
\gamma\|U-v\|_\infty
\max_{\phi\in\Delta(\mathcal A)}
\sup_{q_s\in\widehat{\mathcal P}_s}
\sum_{a\in\mathcal A}\phi(a)q_{s,a}[\mathbf 1]\\
&\quad=
\gamma\|U-v\|_\infty.
\end{aligned}
\]
Thus, under either structure, $\widehat{\mathcal T}_\gamma^*$ is a
$\gamma$-contraction in $\|\cdot\|_\infty$. Since $v_\gamma^*$ and
$\widehat v_\gamma^*$ are the fixed points of $\mathcal T_\gamma^*$\ in \eqref{eq:discounted-bellman-operator}\ and
$\widehat{\mathcal T}_\gamma^*$, respectively,
\[
\begin{aligned}
\left\|
\widehat v_\gamma^*-v_\gamma^*
\right\|_\infty
&\le
\left\|
\widehat{\mathcal T}_\gamma^*(\widehat v_\gamma^*)
-\widehat{\mathcal T}_\gamma^*(v_\gamma^*)
\right\|_\infty\\
&\quad+
\left\|
\widehat{\mathcal T}_\gamma^*(v_\gamma^*)
-\mathcal T_\gamma^*(v_\gamma^*)
\right\|_\infty\\
&\le
\gamma\left\|
\widehat v_\gamma^*-v_\gamma^*
\right\|_\infty
+3\eta\bigl(1+\operatorname{Span}(v_\gamma^*)\bigr),
\end{aligned}
\]
where the last inequality follows from
Lemma~\ref{lem:empirical-operator-perturbation}.
Rearranging proves the claim.
\end{proof}

\begin{lemma}
\label{lem:discounted-value-span-bound}
Suppose that $(u_\delta^*,g_\delta^*)$ solves the optimal average-reward
Bellman equation~\eqref{eq:constant-gain-bellman}\ with constant gain under either the SA-rectangular or
$S$-rectangular structure:
\[
\mathcal T^*(u_\delta^*)
=u_\delta^*+g_\delta^*\mathbf 1.
\]
Then, for every $\gamma\in(0,1)$,
\[
\operatorname{Span}(v_\gamma^*)
\le
2\operatorname{Span}(u_\delta^*).
\]
\end{lemma}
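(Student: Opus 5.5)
The plan is a sandwich argument: bracket $v_\gamma^*$ between two affine shifts of $u_\delta^*$ with the same large constant, so the constants cancel in the span. Write $u:=u_\delta^*$ and $g:=g_\delta^*$. The discounted operator satisfies three basic properties, which are immediate from \eqref{eq:discounted-bellman-operator}. It is monotone. It obeys $\mathcal T_\gamma^*(v+c\mathbf 1)=\mathcal T_\gamma^*(v)+\gamma c\mathbf 1$, since every $\phi\otimes q_s$ is a probability weighting. It is a $\gamma$-contraction, as in the proof of Lemma~\ref{lem:empirical-discounted-value-perturbation}.

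The key observation links the discounted operator to $\mathcal T^*$ at $\gamma u$. For any $v$,
\[
\mathcal T_\gamma^*(v)=\mathcal T^*(\gamma v)
\]
holds directly from the definitions, because the payoff is $r+\gamma v$ in both expressions. I will compare $\gamma u$ with $u$. Write $m:=\min u$ and $M:=\max u$. Since $\gamma u=u-(1-\gamma)u$, we have
\[
u-(1-\gamma)M\mathbf 1\le\gamma u\le u-(1-\gamma)m\mathbf 1 .
\]
Monotonicity and the constant-shift property of $\mathcal T^*$, as used in Lemma~\ref{lem:constant-gain-uniqueness}, then give
\[
u+(g-(1-\gamma)M)\mathbf 1\le\mathcal T_\gamma^*(u)\le u+(g-(1-\gamma)m)\mathbf 1 .
\]

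Next I build sub- and supersolutions. Set $w_\pm:=u+c_\pm\mathbf 1$ and choose $c_\pm$ so that $\mathcal T_\gamma^*(w_+)\le w_+$ and $\mathcal T_\gamma^*(w_-)\ge w_-$. Since $\mathcal T_\gamma^*(w_+)=\mathcal T_\gamma^*(u)+\gamma c_+\mathbf 1\le u+(g-(1-\gamma)m+\gamma c_+)\mathbf 1$, it suffices that $g-(1-\gamma)m+\gamma c_+\le c_+$, i.e.
\[
c_+=\frac{g}{1-\gamma}-m .
\]
Symmetrically, $c_-=\frac{g}{1-\gamma}-M$ works. Iterating $\mathcal T_\gamma^*$ from $w_\pm$ gives monotone sequences that converge to the unique fixed point by contraction. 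Hence
\[
u+\Bigl(\tfrac{g}{1-\gamma}-M\Bigr)\mathbf 1\le v_\gamma^*\le u+\Bigl(\tfrac{g}{1-\gamma}-m\Bigr)\mathbf 1 .
\]

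Finally, taking the maximum of the upper bound and the minimum of the lower bound gives
\[
\max v_\gamma^*-\min v_\gamma^*\le \Bigl(M-m+\tfrac{g}{1-\gamma}\Bigr)-\Bigl(m-M+\tfrac{g}{1-\gamma}\Bigr)=2(M-m)=2\operatorname{Span}(u_\delta^*),
\]
which is the claim. The only step needing care is the identity $\mathcal T_\gamma^*(v)=\mathcal T^*(\gamma v)$ together with shift-equivariance in the $S$-rectangular case. Both hold because the objective is linear in $v$, with total weight $\sum_a\phi(a)q_{s,a}[\mathbf 1]=1$, so a constant pulls out of the inner infimum and the outer maximum unchanged. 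Nothing else in the argument depends on the rectangularity structure or the uncertainty family.
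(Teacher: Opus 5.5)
Your proposal is correct and is essentially the paper's proof. The paper normalizes $\min u_\delta^*=0$ and uses exactly your $w_+$ and $w_-$ (as $\overline v_\gamma=\frac{g_\delta^*}{1-\gamma}\mathbf 1+u_\delta^*$ and $\underline v_\gamma=\overline v_\gamma-\operatorname{Span}(u_\delta^*)\mathbf 1$), checks the super- and subsolution inequalities through $\gamma u_\delta^*\le u_\delta^*$ and $\gamma u_\delta^*+(1-\gamma)\operatorname{Span}(u_\delta^*)\mathbf 1\ge u_\delta^*$, and then iterates using the contraction property; carrying $m$ and $M$ instead of normalizing is only a cosmetic difference.
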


\begin{proof}
This extends the comparison argument in the proof of Theorem~4,
Appendix~B, $(2)\Rightarrow(1)$, of
\citet{wang2025bellmanoptimalityaveragerewardrobust} from $r(s,a)$ to
$r(s,a,s')$, retaining the reward inside the next-state expectation.
By additive homogeneity,
normalize $u_\delta^*$ so that
\[
\min_{s\in\mathcal S}u_\delta^*(s)=0,
\qquad
0\le u_\delta^*(s)\le\operatorname{Span}(u_\delta^*).
\]
Define
\[
\overline v_\gamma
:=
\frac{g_\delta^*}{1-\gamma}\mathbf 1+u_\delta^*,
\qquad
\underline v_\gamma
:=
\overline v_\gamma
-\operatorname{Span}(u_\delta^*)\mathbf 1.
\]
Since $\gamma u_\delta^*\le u_\delta^*$, the operator definitions~\eqref{eq:discounted-bellman-operator}--\eqref{eq:average-bellman-operator}\ and the constant-gain equation~\eqref{eq:constant-gain-bellman}\ give
\[
\begin{aligned}
\mathcal T_\gamma^*(\overline v_\gamma)
&=
\frac{\gamma g_\delta^*}{1-\gamma}\mathbf 1
+\mathcal T_\gamma^*(u_\delta^*)\\
&\le
\frac{\gamma g_\delta^*}{1-\gamma}\mathbf 1
+\mathcal T^*(u_\delta^*)\\
&=
\overline v_\gamma.
\end{aligned}
\]
Also,
\[
\gamma u_\delta^*+(1-\gamma)
\operatorname{Span}(u_\delta^*)\mathbf 1
\ge u_\delta^*,
\]
so
\[
\begin{aligned}
\mathcal T_\gamma^*(\underline v_\gamma)
&=
\left(
\frac{\gamma g_\delta^*}{1-\gamma}
-\gamma\operatorname{Span}(u_\delta^*)
\right)\mathbf 1
+\mathcal T_\gamma^*(u_\delta^*)\\
&\ge
\left(
\frac{\gamma g_\delta^*}{1-\gamma}
-\gamma\operatorname{Span}(u_\delta^*)
\right)\mathbf 1
+\mathcal T^*(u_\delta^*)
-(1-\gamma)\operatorname{Span}(u_\delta^*)\mathbf 1\\
&=
\underline v_\gamma.
\end{aligned}
\]
\[
\begin{aligned}
\underline v_\gamma
&\le
\mathcal T_\gamma^*(\underline v_\gamma)
\le
(\mathcal T_\gamma^*)^2(\underline v_\gamma)
\le\cdots\le
(\mathcal T_\gamma^*)^k(\underline v_\gamma),\\
\overline v_\gamma
&\ge
\mathcal T_\gamma^*(\overline v_\gamma)
\ge
(\mathcal T_\gamma^*)^2(\overline v_\gamma)
\ge\cdots\ge
(\mathcal T_\gamma^*)^k(\overline v_\gamma).
\end{aligned}
\]
The $\gamma$-contraction property of the discounted operator\ in \eqref{eq:discounted-bellman-operator}\ gives
\[
\begin{aligned}
\left\|
(\mathcal T_\gamma^*)^k(\underline v_\gamma)-v_\gamma^*
\right\|_\infty
&\le
\gamma^k
\left\|
\underline v_\gamma-v_\gamma^*
\right\|_\infty
\xrightarrow[k\to\infty]{}0,
\\[2mm]
\left\|
(\mathcal T_\gamma^*)^k(\overline v_\gamma)-v_\gamma^*
\right\|_\infty
&\le
\gamma^k
\left\|
\overline v_\gamma-v_\gamma^*
\right\|_\infty
\xrightarrow[k\to\infty]{}0.
\end{aligned}
\]
Thus,
\[
\underline v_\gamma\le v_\gamma^*\le\overline v_\gamma.
\]
\[
\begin{aligned}
\operatorname{Span}(v_\gamma^*)
&\le
\max_{s\in\mathcal S}\overline v_\gamma(s)
-\min_{s\in\mathcal S}\underline v_\gamma(s)\\
&=
2\operatorname{Span}(u_\delta^*).
\end{aligned}
\]
\end{proof}

\begin{remark}
The constant-gain Bellman solution established in
Section~\ref{subsec:assumptions-and-properties}
allows Lemma~\ref{lem:discounted-value-span-bound} to bound the discounted
value span by twice the robust optimal bias span.
Since $g_\delta^*$ is the same for every state, the common term
$g_\delta^*/(1-\gamma)$ in the upper and lower bounds on $v_\gamma^*$
cancels in the span estimate, yielding a bound uniform over
$\gamma\in(0,1)$. Using this comparison in
Lemma~\ref{lem:empirical-discounted-value-perturbation} gives the dependence
on $\operatorname{Span}(u_\delta^*)$ in the estimation bound of
Lemma~\ref{lem:empirical-dr-dmdp-error},
without requiring uniform mixing. The same comparison is used in
Lemma~\ref{lem:discounted-average-reward-bridge}
to control the discounted-to-average-reward approximation error for
Theorem~\ref{thm:robust-average-reward-error}
in terms of the same robust optimal bias span.
\end{remark}

\section{Empirical DR-AMDP}
\label{sec:empirical-dr-amdp}

This appendix connects the optimal discounted value $v_\gamma^*$ with the
optimal constant gain $g_\delta^*$. The main result is
Lemma~\ref{lem:discounted-average-reward-bridge},
which bounds $\left\|(1-\gamma)v_\gamma^*-g_\delta^*\mathbf 1\right\|_\infty$
in terms of $(1-\gamma)\operatorname{Span}(u_\delta^*)$. For completeness,
we first establish three intermediate results.
Lemma~\ref{lem:average-operator-comparison}
bounds $\mathcal T^*(v_\gamma^*)-v_\gamma^*$ using the minimum and maximum
of $v_\gamma^*$.
Lemma~\ref{lem:iterated-operator-comparison}
shows the property of $\mathcal T^*$\ in \eqref{eq:average-bellman-operator}, and
Lemma~\ref{lem:gain-sandwich}
bounds $g_\delta^*$ by the minimum and maximum of
$\mathcal T^*(v)-v$. Together with
Lemma~\ref{lem:discounted-value-span-bound},
these results yield the bound in
Lemma~\ref{lem:discounted-average-reward-bridge}.

\begin{lemma}
\label{lem:average-operator-comparison}
Let
\[
m:=\min_{s\in\mathcal S}v_\gamma^*(s),
\qquad
M:=\max_{s\in\mathcal S}v_\gamma^*(s).
\]
Under either the SA-rectangular or $S$-rectangular structure,
\[
v_\gamma^*+(1-\gamma)m\mathbf 1
\le
\mathcal T^*(v_\gamma^*)
\le
v_\gamma^*+(1-\gamma)M\mathbf 1.
\]
\end{lemma}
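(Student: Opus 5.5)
The plan is to compare the average-reward operator $\mathcal T^*$ with the discounted operator $\mathcal T_\gamma^*$ at the single point $v_\gamma^*$, and then use the fixed-point identity $\mathcal T_\gamma^*(v_\gamma^*)=v_\gamma^*$. The starting observation is that, for every feasible pair $(\phi,q_s)$, the two payoffs differ only in the continuation term. By linearity,
\[
\sum_{a,s'}\phi(a)q_{s,a}(s')\bigl(r(s,a,s')+v_\gamma^*(s')\bigr)
=\sum_{a,s'}\phi(a)q_{s,a}(s')\bigl(r(s,a,s')+\gamma v_\gamma^*(s')\bigr)
+(1-\gamma)\sum_{a,s'}\phi(a)q_{s,a}(s')v_\gamma^*(s').
\]

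The last term is a convex combination of the values of $v_\gamma^*$. This holds because $\sum_a\phi(a)\sum_{s'}q_{s,a}(s')=1$ for every $\phi\in\Delta(\mathcal A)$ and every $q_s$ whose components are probability vectors. Hence the last term lies in $[(1-\gamma)m,(1-\gamma)M]$, uniformly over all feasible $(\phi,q_s)$. This covers both the $S$-rectangular set $\mathcal P_s$ and the SA-rectangular product set $\prod_a\mathcal P_{s,a}$, since in both cases each $q_{s,a}\in\Delta(\mathcal S)$.

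We therefore obtain pointwise lower and upper bounds on the $\mathcal T^*$ payoff: it equals the $\mathcal T_\gamma^*$ payoff plus a shift lying between $(1-\gamma)m$ and $(1-\gamma)M$. Taking the infimum over $q_s\in\mathcal P_s$ and then the maximum over $\phi\in\Delta(\mathcal A)$ preserves inequalities with constant shifts. This is the monotonicity and additive homogeneity of a max--inf, as already used in Lemma~\ref{lem:constant-gain-uniqueness}. It gives
\[
\mathcal T_\gamma^*(v_\gamma^*)(s)+(1-\gamma)m\le\mathcal T^*(v_\gamma^*)(s)\le\mathcal T_\gamma^*(v_\gamma^*)(s)+(1-\gamma)M .
\]
Substituting $\mathcal T_\gamma^*(v_\gamma^*)=v_\gamma^*$, the unique fixed point recalled after Definition~\ref{defn:bellman-operators}, yields the claim for every $s$.

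No step is a genuine obstacle. The only care needed is to apply the bound before the infimum and maximum: the shift is not constant across $(\phi,q_s)$, so the argument needs the uniform two-sided bound rather than an identity. In the SA-rectangular case one can alternatively argue actionwise with the reduced form $\max_a\inf_{q_{s,a}}$ from Definition~\ref{defn:bellman-operators}, and the same convex-combination argument applies verbatim.
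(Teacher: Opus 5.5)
Your proof is correct and follows essentially the same route as the paper. Both arguments bound the difference of the two payoffs pointwise by $(1-\gamma)m\le(1-\gamma)\sum_a\phi(a)q_{s,a}[v_\gamma^*]\le(1-\gamma)M$, pass this through the infimum and maximum, and then substitute $\mathcal T_\gamma^*(v_\gamma^*)=v_\gamma^*$. The only cosmetic difference is that the paper treats the SA-rectangular case separately via the actionwise reduction, whereas you cover both structures at once by taking $\mathcal P_s=\prod_a\mathcal P_{s,a}$.
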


\begin{proof}
For the SA-rectangular structure, for every $s\in\mathcal S$,
$a\in\mathcal A$, and $q_{s,a}\in\mathcal P_{s,a}$,
\[
m
\le q_{s,a}[v_\gamma^*]\le M.
\]
Hence
\[
\begin{aligned}
q_{s,a}[r(s,a,\cdot)+\gamma v_\gamma^*]+(1-\gamma)m
&\le q_{s,a}[r(s,a,\cdot)+v_\gamma^*]\\
&\le q_{s,a}[r(s,a,\cdot)+\gamma v_\gamma^*]+(1-\gamma)M.
\end{aligned}
\]
Taking the infimum over $q_{s,a}\in\mathcal P_{s,a}$ and then the
maximum over $a\in\mathcal A$ yields, by the SA-rectangular reductions in Definition~\ref{defn:bellman-operators},
\[
\mathcal T_\gamma^*(v_\gamma^*)(s)+(1-\gamma)m
\le
\mathcal T^*(v_\gamma^*)(s)
\le
\mathcal T_\gamma^*(v_\gamma^*)(s)+(1-\gamma)M.
\]

For the $S$-rectangular structure, for every
$\phi\in\Delta(\mathcal A)$ and $q_s\in\mathcal P_s$,
\[
m
\le
\sum_{a\in\mathcal A}q_{s,a}[\phi(a)v_\gamma^*]
\le M.
\]
Consequently,
\[
\begin{aligned}
&\sum_{a\in\mathcal A}
q_{s,a}[\phi(a)(r(s,a,\cdot)+\gamma v_\gamma^*)]
+(1-\gamma)m\\
&\quad\le
\sum_{a\in\mathcal A}
q_{s,a}[\phi(a)(r(s,a,\cdot)+v_\gamma^*)]\\
&\quad\le
\sum_{a\in\mathcal A}
q_{s,a}[\phi(a)(r(s,a,\cdot)+\gamma v_\gamma^*)]
+(1-\gamma)M.
\end{aligned}
\]
Taking the infimum over $q_s\in\mathcal P_s$ and the maximum over
$\phi\in\Delta(\mathcal A)$ gives the same display using the operator definitions~\eqref{eq:discounted-bellman-operator} and~\eqref{eq:average-bellman-operator}. Since
$v_\gamma^*=\mathcal T_\gamma^*(v_\gamma^*)$, the claim follows.
\end{proof}

\begin{lemma}
\label{lem:iterated-operator-comparison}
Let $v\in\mathbb R^{|\mathcal S|}$. Under either the SA-rectangular or
$S$-rectangular structure, if
\[
C_1\mathbf 1
\le
\mathcal T^*(v)-v
\le
C_2\mathbf 1,
\]
then, for every $n\in\mathbb N$,
\[
nC_1\mathbf 1
\le
\bigl(\mathcal T^*\bigr)^n(v)-v
\le
nC_2\mathbf 1.
\]
\end{lemma}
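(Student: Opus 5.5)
The plan is a straightforward induction on $n$, using only the two structural properties of $\mathcal T^*$ already recorded in the proof of Lemma~\ref{lem:constant-gain-uniqueness}: monotonicity ($u\le w\Rightarrow\mathcal T^*(u)\le\mathcal T^*(w)$) and additive homogeneity ($\mathcal T^*(w+c\mathbf 1)=\mathcal T^*(w)+c\mathbf 1$). Both hold for the SA-rectangular and the $S$-rectangular operators in \eqref{eq:average-bellman-operator}. For monotonicity, each $q_{s,a}$ and each $\phi$ defines a nonnegative weighting that sums to one, and taking an infimum followed by a maximum preserves order. For additive homogeneity, $\sum_a\phi(a)q_{s,a}[\mathbf 1]=1$, so the constant can be pulled out of the inf and the max. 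I would restate these briefly before running the induction.

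The base case $n=1$ is exactly the hypothesis $C_1\mathbf 1\le\mathcal T^*(v)-v\le C_2\mathbf 1$. For the inductive step, assume $v+nC_1\mathbf 1\le(\mathcal T^*)^n(v)\le v+nC_2\mathbf 1$ and apply $\mathcal T^*$ to all three terms. Monotonicity gives
\[
\mathcal T^*(v+nC_1\mathbf 1)\le(\mathcal T^*)^{n+1}(v)\le\mathcal T^*(v+nC_2\mathbf 1).
\]
Additive homogeneity turns the outer terms into $\mathcal T^*(v)+nC_1\mathbf 1$ and $\mathcal T^*(v)+nC_2\mathbf 1$. Applying the base hypothesis once more, $\mathcal T^*(v)\ge v+C_1\mathbf 1$ and $\mathcal T^*(v)\le v+C_2\mathbf 1$, which yields $v+(n+1)C_1\mathbf 1\le(\mathcal T^*)^{n+1}(v)\le v+(n+1)C_2\mathbf 1$ and closes the induction.

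No real obstacle is expected. The only point needing care is that these properties hold for the $S$-rectangular operator, where $\phi$ and $q_s$ are coupled. Even there, the payoff is affine in $v$ with nonnegative coefficients summing to one for every fixed pair $(\phi,q_s)$, so monotonicity and the constant shift survive the inf--max pointwise. A further subtlety is that $\mathcal T^*(v)$ must be finite, which holds because each $\mathcal P_s$ is nonempty and the payoff is bounded.
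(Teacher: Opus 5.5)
Your proposal is correct and follows essentially the same approach as the paper's proof: induction on $n$, with the hypothesis as the base case and the inductive step obtained by applying $\mathcal T^*$ to the inductive bounds, using monotonicity and additive homogeneity, and then invoking the hypothesis once more. Your extra justification of why these two properties survive the coupled inf--max in the $S$-rectangular case expands on what the paper states in a single line.
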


\begin{proof}
Under both structures, the operator definition~\eqref{eq:average-bellman-operator}\ gives
\[
U\le W
\Longrightarrow
\mathcal T^*(U)\le\mathcal T^*(W),
\qquad
\mathcal T^*(W+c\mathbf 1)=\mathcal T^*(W)+c\mathbf 1.
\]
The term $r(s,a,\cdot)$ is unchanged in both relations. The assertion holds
for $n=1$. If it holds for $n$, then
\[
\begin{aligned}
\bigl(\mathcal T^*\bigr)^{n+1}(v)
&=\mathcal T^*\!\left(\bigl(\mathcal T^*\bigr)^n(v)\right)\\
&\le\mathcal T^*(v+nC_2\mathbf 1)\\
&=\mathcal T^*(v)+nC_2\mathbf 1\\
&\le v+(n+1)C_2\mathbf 1,
\end{aligned}
\]
and
\[
\begin{aligned}
\bigl(\mathcal T^*\bigr)^{n+1}(v)
&=\mathcal T^*\!\left(\bigl(\mathcal T^*\bigr)^n(v)\right)\\
&\ge\mathcal T^*(v+nC_1\mathbf 1)\\
&=\mathcal T^*(v)+nC_1\mathbf 1\\
&\ge v+(n+1)C_1\mathbf 1.
\end{aligned}
\]
Induction completes the proof.
\end{proof}

\begin{lemma}
\label{lem:gain-sandwich}
Suppose that $(u_\delta^*,g_\delta^*)$ solves the optimal average-reward
Bellman equation~\eqref{eq:constant-gain-bellman}\ with constant gain under either the SA-rectangular or
$S$-rectangular structure:
\[
\mathcal T^*(u_\delta^*)=u_\delta^*+g_\delta^*\mathbf 1.
\]
Then, for every $v\in\mathbb R^{|\mathcal S|}$,
\[
\min_{s\in\mathcal S}
\left\{\mathcal T^*(v)(s)-v(s)\right\}
\le
g_\delta^*
\le
\max_{s\in\mathcal S}
\left\{\mathcal T^*(v)(s)-v(s)\right\}.
\]
\end{lemma}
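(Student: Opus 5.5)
The plan is to compare $v$ with $u_\delta^*$ through monotonicity and additive homogeneity of $\mathcal T^*$, iterate, and divide by the number of iterations. Set
\[
C_1:=\min_{s\in\mathcal S}\{\mathcal T^*(v)(s)-v(s)\},\qquad C_2:=\max_{s\in\mathcal S}\{\mathcal T^*(v)(s)-v(s)\},
\]
so that $C_1\mathbf 1\le\mathcal T^*(v)-v\le C_2\mathbf 1$. Lemma~\ref{lem:iterated-operator-comparison} then gives, for every $n\in\mathbb N$,
\[
v+nC_1\mathbf 1\le(\mathcal T^*)^n(v)\le v+nC_2\mathbf 1 .
\]

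Next, apply the same iteration to $u_\delta^*$. Since $\mathcal T^*(u_\delta^*)=u_\delta^*+g_\delta^*\mathbf 1$, Lemma~\ref{lem:iterated-operator-comparison} with $C_1=C_2=g_\delta^*$ (or a direct induction using additive homogeneity) yields $(\mathcal T^*)^n(u_\delta^*)=u_\delta^*+ng_\delta^*\mathbf 1$.

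To link the two sequences, I would use the fact that $\mathcal T^*$ is nonexpansive in $\|\cdot\|_\infty$. This follows from monotonicity and additive homogeneity, as in the proof of Lemma~\ref{lem:constant-gain-uniqueness}: from $w-\|w-w'\|_\infty\mathbf 1\le w'\le w+\|w-w'\|_\infty\mathbf 1$, monotonicity and homogeneity give $\|\mathcal T^*(w)-\mathcal T^*(w')\|_\infty\le\|w-w'\|_\infty$. Iterating, $\|(\mathcal T^*)^n(v)-(\mathcal T^*)^n(u_\delta^*)\|_\infty\le\|v-u_\delta^*\|_\infty=:K$, and $K$ does not depend on $n$. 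Combining this with the displays above gives, at any state $s$,
\[
v(s)+nC_1\le u_\delta^*(s)+ng_\delta^*+K,
\qquad
u_\delta^*(s)+ng_\delta^*-K\le v(s)+nC_2 .
\]
Dividing by $n$ and letting $n\to\infty$ yields $C_1\le g_\delta^*\le C_2$, which is the claim.

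There is no real obstacle here. The only point needing care is confirming that monotonicity and additive homogeneity hold for both the SA- and $S$-rectangular $\mathcal T^*$ with transition-dependent rewards; this was already observed in Lemma~\ref{lem:constant-gain-uniqueness}, since the reward term is unaffected by shifts of $v$. As an alternative to the limit argument, one could compare $v$ directly with $u_\delta^*+c\mathbf 1$ at an argmax/argmin of $v-u_\delta^*$, as in Lemma~\ref{lem:constant-gain-uniqueness}, but the iteration-and-divide route is cleaner and reuses Lemma~\ref{lem:iterated-operator-comparison}.
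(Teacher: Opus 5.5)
Your proposal is correct and follows essentially the same route as the paper: both sandwich $(\mathcal T^*)^n(v)-v$ using Lemma~\ref{lem:iterated-operator-comparison}, compare $(\mathcal T^*)^n(v)$ with $(\mathcal T^*)^n(u_\delta^*)=u_\delta^*+ng_\delta^*\mathbf 1$ through monotonicity and additive homogeneity, then divide by $n$ and let $n\to\infty$. The only difference is cosmetic: you express the comparison as sup-norm nonexpansiveness with constant $\|v-u_\delta^*\|_\infty$, while the paper uses the one-sided shifts $B_1=\min_s(v-u_\delta^*)(s)$ and $B_2=\max_s(v-u_\delta^*)(s)$.
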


\begin{proof}
Let
\[
C_1:=\min_{s\in\mathcal S}
\left\{\mathcal T^*(v)(s)-v(s)\right\},
\qquad
C_2:=\max_{s\in\mathcal S}
\left\{\mathcal T^*(v)(s)-v(s)\right\}.
\]
By Lemma~\ref{lem:iterated-operator-comparison},
\[
nC_1\mathbf 1
\le
\bigl(\mathcal T^*\bigr)^n(v)-v
\le
nC_2\mathbf 1.
\]
Set
\[
B_1:=\min_{s\in\mathcal S}\bigl(v(s)-u_\delta^*(s)\bigr),
\qquad
B_2:=\max_{s\in\mathcal S}\bigl(v(s)-u_\delta^*(s)\bigr).
\]
Then, using the constant-gain equation~\eqref{eq:constant-gain-bellman},
\[
u_\delta^*+B_1\mathbf 1
\le v\le
u_\delta^*+B_2\mathbf 1,
\qquad
\bigl(\mathcal T^*\bigr)^n(u_\delta^*)
=u_\delta^*+ng_\delta^*\mathbf 1.
\]
By monotonicity and additive homogeneity,
\[
u_\delta^*+(ng_\delta^*+B_1)\mathbf 1
\le
\bigl(\mathcal T^*\bigr)^n(v)
\le
u_\delta^*+(ng_\delta^*+B_2)\mathbf 1.
\]
Thus,
\[
ng_\delta^*\mathbf 1-(B_2-B_1)\mathbf 1
\le
\bigl(\mathcal T^*\bigr)^n(v)-v
\le
ng_\delta^*\mathbf 1+(B_2-B_1)\mathbf 1.
\]
Combining the two displays gives
\[
C_1\le g_\delta^*+\frac{B_2-B_1}{n},
\qquad
g_\delta^*-\frac{B_2-B_1}{n}\le C_2.
\]
Letting $n\to\infty$ proves the claim.
\end{proof}

\begin{lemma}
\label{lem:discounted-average-reward-bridge}
Suppose that $(u_\delta^*,g_\delta^*)$ solves the optimal average-reward
Bellman equation~\eqref{eq:constant-gain-bellman}\ with constant gain under either the SA-rectangular or
$S$-rectangular structure. Then, for every $\gamma\in(0,1)$,
\[
\left\|
(1-\gamma)v_\gamma^*-g_\delta^*\mathbf 1
\right\|_\infty
\le
2(1-\gamma)\operatorname{Span}(u_\delta^*).
\]
\end{lemma}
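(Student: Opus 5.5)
The plan is to combine the three preparatory lemmas directly. Set $m:=\min_s v_\gamma^*(s)$ and $M:=\max_s v_\gamma^*(s)$. By Lemma~\ref{lem:average-operator-comparison},
\[
(1-\gamma)m\mathbf 1\le\mathcal T^*(v_\gamma^*)-v_\gamma^*\le(1-\gamma)M\mathbf 1 .
\]
Applying Lemma~\ref{lem:gain-sandwich} with $v=v_\gamma^*$ then gives $\min_s\{\mathcal T^*(v_\gamma^*)(s)-v_\gamma^*(s)\}\le g_\delta^*\le\max_s\{\mathcal T^*(v_\gamma^*)(s)-v_\gamma^*(s)\}$. Hence
\[
(1-\gamma)m\le g_\delta^*\le(1-\gamma)M .
\]

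Next we compare coordinatewise. For every $s$, both $(1-\gamma)v_\gamma^*(s)$ and $g_\delta^*$ lie in the interval $[(1-\gamma)m,(1-\gamma)M]$. Their difference is therefore at most the length of that interval:
\[
\bigl|(1-\gamma)v_\gamma^*(s)-g_\delta^*\bigr|\le(1-\gamma)(M-m)=(1-\gamma)\operatorname{Span}(v_\gamma^*).
\]
Taking the maximum over $s$ yields $\|(1-\gamma)v_\gamma^*-g_\delta^*\mathbf 1\|_\infty\le(1-\gamma)\operatorname{Span}(v_\gamma^*)$.

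Finally, Lemma~\ref{lem:discounted-value-span-bound} gives $\operatorname{Span}(v_\gamma^*)\le2\operatorname{Span}(u_\delta^*)$ for every $\gamma\in(0,1)$, which produces the claimed bound $2(1-\gamma)\operatorname{Span}(u_\delta^*)$.

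There is no serious obstacle here. The only points to check are that Lemma~\ref{lem:gain-sandwich} is applicable, which needs only the constant-gain solution assumed in the statement, and that Lemma~\ref{lem:average-operator-comparison} holds under both rectangularity structures, which it does as stated. Note that using $\underline v_\gamma\le v_\gamma^*\le\overline v_\gamma$ from the proof of Lemma~\ref{lem:discounted-value-span-bound} directly would even give the sharper constant $(1-\gamma)\operatorname{Span}(u_\delta^*)$. The factor $2$ in the statement is consistent with the cruder route above, so I would follow that route.
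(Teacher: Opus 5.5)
Your proposal is correct and follows the paper's proof step for step. The paper also combines Lemma~\ref{lem:average-operator-comparison} with Lemma~\ref{lem:gain-sandwich} to get $m\le g_\delta^*/(1-\gamma)\le M$, bounds the error by $(1-\gamma)\operatorname{Span}(v_\gamma^*)$, and finishes with Lemma~\ref{lem:discounted-value-span-bound}. Your side remark is also right: normalizing $\min_s u_\delta^*(s)=0$, the sandwich $\underline v_\gamma\le v_\gamma^*\le\overline v_\gamma$ gives $v_\gamma^*(s)-g_\delta^*/(1-\gamma)\in[u_\delta^*(s)-\operatorname{Span}(u_\delta^*),\,u_\delta^*(s)]$, which yields the sharper constant $(1-\gamma)\operatorname{Span}(u_\delta^*)$.
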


\begin{proof}
Let
\[
m:=\min_{s\in\mathcal S}v_\gamma^*(s),
\qquad
M:=\max_{s\in\mathcal S}v_\gamma^*(s).
\]
By Lemma~\ref{lem:average-operator-comparison}
and Lemma~\ref{lem:gain-sandwich},
\[
(1-\gamma)m
\le
\min_{s\in\mathcal S}
\left\{\mathcal T^*(v_\gamma^*)(s)-v_\gamma^*(s)\right\}
\le
g_\delta^*
\le
\max_{s\in\mathcal S}
\left\{\mathcal T^*(v_\gamma^*)(s)-v_\gamma^*(s)\right\}
\le
(1-\gamma)M.
\]
Thus,
\[
m
\le
\frac{g_\delta^*}{1-\gamma}
\le M,
\]
and hence
\[
\begin{aligned}
\left\|
(1-\gamma)v_\gamma^*-g_\delta^*\mathbf 1
\right\|_\infty
&=(1-\gamma)
\max_{s\in\mathcal S}
\left|
v_\gamma^*(s)-\frac{g_\delta^*}{1-\gamma}
\right|\\
&\le
(1-\gamma)(M-m)\\
&=(1-\gamma)\operatorname{Span}(v_\gamma^*).
\end{aligned}
\]
Finally, Lemma~\ref{lem:discounted-value-span-bound} gives
$\operatorname{Span}(v_\gamma^*)\le
2\operatorname{Span}(u_\delta^*)$; combining this with the preceding display
yields the claimed bound.
\end{proof}

The following comparison relates the discounted value to a constant-gain
Bellman solution.
\begin{lemma}
\label{lem:bellman-comparison}
Suppose that $(u_\delta^*,g_\delta^*)$ solves the optimal average-reward
Bellman equation~\eqref{eq:constant-gain-bellman}\
with constant gain under either rectangularity structure. Then, for every
$\gamma\in(0,1)$,
\[
\operatorname{Span}(v_\gamma^*)\le2\operatorname{Span}(u_\delta^*),
\qquad
\left\|(1-\gamma)v_\gamma^*-g_\delta^*\mathbf1\right\|_\infty
\le2(1-\gamma)\operatorname{Span}(u_\delta^*).
\]
\end{lemma}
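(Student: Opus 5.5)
This statement packages two bounds already established above. The plan is to derive each one from the corresponding earlier lemma and then combine them.

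The first inequality, $\operatorname{Span}(v_\gamma^*)\le2\operatorname{Span}(u_\delta^*)$, is exactly Lemma~\ref{lem:discounted-value-span-bound}. That argument is a sandwich. We normalize $\min_s u_\delta^*(s)=0$ and use the upper solution $\overline v_\gamma=\frac{g_\delta^*}{1-\gamma}\mathbf 1+u_\delta^*$ and the lower solution $\underline v_\gamma=\overline v_\gamma-\operatorname{Span}(u_\delta^*)\mathbf 1$. Monotonicity and additive homogeneity of $\mathcal T_\gamma^*$, together with the comparison $\mathcal T_\gamma^*(u)\le \mathcal T^*(u)$ for $u\ge0$ and a correction of $(1-\gamma)\operatorname{Span}(u_\delta^*)$, give $\mathcal T_\gamma^*(\overline v_\gamma)\le \overline v_\gamma$ and $\mathcal T_\gamma^*(\underline v_\gamma)\ge\underline v_\gamma$. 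Iterating these inequalities and using the $\gamma$-contraction of $\mathcal T_\gamma^*$ traps the fixed point: $\underline v_\gamma\le v_\gamma^*\le\overline v_\gamma$. The two envelopes differ by $u_\delta^*$ plus constants, so the span is at most $2\operatorname{Span}(u_\delta^*)$. This step is simply a citation.

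The second inequality is Lemma~\ref{lem:discounted-average-reward-bridge}, whose mechanism I would recall as follows. Let $m=\min_s v_\gamma^*(s)$ and $M=\max_s v_\gamma^*(s)$. Lemma~\ref{lem:average-operator-comparison} gives $(1-\gamma)m\mathbf 1\le\mathcal T^*(v_\gamma^*)-v_\gamma^*\le(1-\gamma)M\mathbf 1$. Lemma~\ref{lem:gain-sandwich}, which relies on the existence of the constant-gain solution $(u_\delta^*,g_\delta^*)$ through Lemma~\ref{lem:iterated-operator-comparison}, places $g_\delta^*$ between the minimum and maximum of $\mathcal T^*(v_\gamma^*)-v_\gamma^*$. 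Hence $g_\delta^*/(1-\gamma)\in[m,M]$, and so $\|(1-\gamma)v_\gamma^*-g_\delta^*\mathbf1\|_\infty\le(1-\gamma)\operatorname{Span}(v_\gamma^*)$. Substituting the first inequality then yields $2(1-\gamma)\operatorname{Span}(u_\delta^*)$.

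There is no real obstacle here. The only point to check is that both lemmas apply under either rectangularity structure with transition-dependent rewards $r(s,a,s')$. This holds because each argument uses only monotonicity, additive homogeneity, and the fact that every $q_{s,a}$ (or every $\phi$-mixture of them) is a probability vector. All of these were verified for both operator forms in the proofs cited.
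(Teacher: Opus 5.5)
Your proposal is correct and follows the paper's proof, which also just cites Lemma~\ref{lem:discounted-value-span-bound} for the span bound and Lemma~\ref{lem:discounted-average-reward-bridge} for the gain approximation. Your summaries of both arguments are accurate: the super/sub-solution sandwich $\underline v_\gamma\le v_\gamma^*\le\overline v_\gamma$ for the first bound, and placing $g_\delta^*/(1-\gamma)$ in $[\min_s v_\gamma^*(s),\max_s v_\gamma^*(s)]$ through Lemmas~\ref{lem:average-operator-comparison} and~\ref{lem:gain-sandwich} for the second.
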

These bounds follow from the comparison argument in the proof of
Theorem~4, Appendix~B, $(2)\Rightarrow(1)$, of
\citet{wang2025bellmanoptimalityaveragerewardrobust}.
We extend that argument to bounded transition-dependent rewards
$r(s,a,s')$ by retaining the reward inside the next-state expectation.

\begin{proof}
The first inequality is the discounted-value span bound proved in
Lemma~\ref{lem:discounted-value-span-bound};
the second is Lemma~\ref{lem:discounted-average-reward-bridge} above.
Both hold for every $\gamma\in(0,1)$ under either rectangularity structure.
\end{proof}

\section{Main Results}
\label{sec:main-results}

\newcommand{\DMDPErrorLemma}{Lemma~\ref{lem:empirical-dr-dmdp-error}}
\newcommand{\AMDPErrorTheorem}{Theorem~\ref{thm:robust-average-reward-error}}
\newcommand{\GoodEventProposition}{Proposition~\ref{prop:good-event-concentration}}
\newcommand{\DiscountedValueLemma}{Lemma~\ref{lem:empirical-discounted-value-perturbation}}
\newcommand{\DiscountedSpanLemma}{Lemma~\ref{lem:discounted-value-span-bound}}
\newcommand{\AverageBridgeLemma}{Lemma~\ref{lem:discounted-average-reward-bridge}}
\newcommand{\MetricConstantGainProposition}{Proposition~\ref{prop:metric-constant-gain}}
\newcommand{\StructuralResultsReference}{Section~\ref{sec:structural-properties-learning-guarantees}}

We first establish an auxiliary discounted-value bound and use it to prove
\AMDPErrorTheorem\ from \StructuralResultsReference. We then prove the
explicit bias-span bounds in
Theorem~\ref{thm:divergence-bias-span-bounds}.
\begin{lemma}
\label{lem:empirical-dr-dmdp-error}
Suppose either Assumption~\ref{ass:divergence-uncertainty} or
Assumption~\ref{ass:metric-uncertainty} holds, and let
$(u_\delta^*,g_\delta^*)$ be any solution of the corresponding optimal average-reward
Bellman equation~\eqref{eq:constant-gain-bellman}\ with constant gain and
$\operatorname{Span}(u_\delta^*)\ge1$. Fix $\beta\in(0,1)$ and
$\gamma\in(0,1)$. If
\[
n
\ge
\frac{16}{p_{\wedge}}
\log\left(\frac{2|\mathcal{S}|^2|\mathcal{A}|}{\beta}\right),
\]
then the empirical optimal discounted value $\widehat v_\gamma^*$
satisfies, with probability at least
$1-\beta$,
\[
\left\|
\widehat v_\gamma^*
-
v_\gamma^*
\right\|_{\infty}
\le
\frac{14}{1-\gamma}
\sqrt{\frac{1}{np_{\wedge}}
\log\left(\frac{2|\mathcal{S}|^2|\mathcal{A}|}{\beta}\right)}
\operatorname{Span}(u_\delta^*).
\]
\end{lemma}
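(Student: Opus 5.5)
The proof assembles three earlier results: the concentration event of Proposition~\ref{prop:good-event-concentration}, the contraction-based perturbation bound of Lemma~\ref{lem:empirical-discounted-value-perturbation}, and the span comparison of Lemma~\ref{lem:discounted-value-span-bound}. Write $L:=\log(2|\mathcal S|^2|\mathcal A|/\beta)$. Choose $\eta$ as in Proposition~\ref{prop:good-event-concentration}, so that $\Omega_{n,\eta}$ holds with probability at least $1-\beta$. All subsequent statements are deterministic on $\Omega_{n,\eta}$.

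\textbf{Step 1: simplify $\eta$.} The sample-size condition $n\ge 16L/p_\wedge$ gives $x:=L/(np_\wedge)\le 1/16$, hence $\sqrt{x}\le 1/4$. Then
\[
\eta=\frac{x}{3}+\sqrt{2x}\le\sqrt{x}\left(\frac{1}{12}+\sqrt2\right)\le 1.5\sqrt{x}.
\]
In particular $\eta\le 1.5/4<1/2$, so the hypothesis $\eta\le1/2$ of Lemma~\ref{lem:empirical-discounted-value-perturbation} is satisfied. Verifying the constants here, so that both $\eta\le1/2$ and the final factor $14$ come out, is the only place where any care is needed.

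\textbf{Step 2: perturbation bound.} Lemma~\ref{lem:empirical-discounted-value-perturbation} then gives
\[
\|\widehat v_\gamma^*-v_\gamma^*\|_\infty\le\frac{3\eta}{1-\gamma}\bigl(1+\operatorname{Span}(v_\gamma^*)\bigr).
\]

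\textbf{Step 3: span comparison.} Under either assumption, Propositions~\ref{prop:divergence-constant-gain} and~\ref{prop:metric-constant-gain} supply the Bellman solution. For any such $u_\delta^*$, Lemma~\ref{lem:discounted-value-span-bound} gives $\operatorname{Span}(v_\gamma^*)\le2\operatorname{Span}(u_\delta^*)$. Using the standing normalization $\operatorname{Span}(u_\delta^*)\ge1$, this yields
\[
1+\operatorname{Span}(v_\gamma^*)\le 3\operatorname{Span}(u_\delta^*).
\]

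\textbf{Step 4: combine.} Putting Steps 1--3 together,
\[
\|\widehat v_\gamma^*-v_\gamma^*\|_\infty\le\frac{9\eta}{1-\gamma}\operatorname{Span}(u_\delta^*)\le\frac{13.5}{1-\gamma}\sqrt{x}\,\operatorname{Span}(u_\delta^*),
\]
which is at most the claimed bound with constant $14$.
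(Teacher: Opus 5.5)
Your proposal is correct and follows the paper's proof essentially step for step. You fix $\eta$ from Proposition~\ref{prop:good-event-concentration}, use the sample-size condition to get $\eta\le(1/12+\sqrt2)\sqrt{x}<1/2$ with your $x=L/(np_\wedge)$, chain Lemma~\ref{lem:empirical-discounted-value-perturbation} with Lemma~\ref{lem:discounted-value-span-bound} and $\operatorname{Span}(u_\delta^*)\ge1$ to reach $9\eta\operatorname{Span}(u_\delta^*)$, and close with $9(1/12+\sqrt2)<14$ (your $13.5$ is the same check), exactly as the paper does; the appeal to Propositions~\ref{prop:divergence-constant-gain}--\ref{prop:metric-constant-gain} is harmless but unnecessary, since the lemma already takes an arbitrary solution $(u_\delta^*,g_\delta^*)$ as given.
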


\begin{proof}[Proof of \DMDPErrorLemma]
Let $\eta$ be chosen as in \GoodEventProposition. The sample-size condition
implies
\[
\sqrt{
\frac{1}{np_{\wedge}}
\log\left(\frac{2|\mathcal S|^2|\mathcal A|}{\beta}\right)}
\le\frac14,
\]
and hence
\[
\begin{aligned}
\eta
&\le
\left(\frac1{12}+\sqrt2\right)
\sqrt{
\frac{1}{np_{\wedge}}
\log\left(\frac{2|\mathcal S|^2|\mathcal A|}{\beta}\right)}
<\frac12.
\end{aligned}
\]
On $\Omega_{n,\eta}$, \DiscountedValueLemma\ and \DiscountedSpanLemma\ give
\[
\begin{aligned}
(1-\gamma)
\left\|
\widehat v_\gamma^*-v_\gamma^*
\right\|_\infty
&\le
3\eta\bigl(1+\operatorname{Span}(v_\gamma^*)\bigr)\\
&\le
3\eta\bigl(1+2\operatorname{Span}(u_\delta^*)\bigr)\\
&\le
9\eta
\operatorname{Span}(u_\delta^*)\\
&\le
14
\sqrt{
\frac{1}{np_{\wedge}}
\log\left(\frac{2|\mathcal S|^2|\mathcal A|}{\beta}\right)}
\operatorname{Span}(u_\delta^*).
\end{aligned}
\]
The penultimate inequality uses $\operatorname{Span}(u_\delta^*)\ge1$,
and the last uses $9(1/12+\sqrt2)<14$.
Since $\mathbb P(\Omega_{n,\eta})\ge1-\beta$ by \GoodEventProposition,
dividing by $1-\gamma$ proves the claim.
\end{proof}

\begin{remark}
Lemma~\ref{lem:empirical-dr-dmdp-error} implies that the
empirical optimal discounted value $\widehat v_\gamma^*$ is a uniform
$\epsilon$-error estimate of $v_\gamma^*$ with high probability using
$\widetilde{\mathcal O}\!\left(
|\mathcal S||\mathcal A|p_{\wedge}^{-1}(1-\gamma)^{-2}
\operatorname{Span}^2(u_\delta^*)\epsilon^{-2}
\right)$ samples. The proof separates the optimal-value error into a
contraction term and an operator-perturbation term. The Bellman fixed-point
identities and the triangle inequality give
$\|\widehat v_\gamma^*-v_\gamma^*\|_\infty
\le
\|
\widehat{\mathcal T}_\gamma^*(\widehat v_\gamma^*)
-\widehat{\mathcal T}_\gamma^*(v_\gamma^*)
\|_\infty
+
\|
\widehat{\mathcal T}_\gamma^*(v_\gamma^*)
-\mathcal T_\gamma^*(v_\gamma^*)
\|_\infty$.
The first term is at most
$\gamma\|\widehat v_\gamma^*-v_\gamma^*\|_\infty$ by contraction;
the second measures the operator perturbation at the population value and is
at most $\sup_{\pi\in\Pi_{\mathrm{SR}}}\|\widehat{\mathcal T}_\gamma^\pi(v_\gamma^*)-\mathcal T_\gamma^\pi(v_\gamma^*)\|_\infty$
by $\mathcal T_\gamma^*=\max_{\pi\in\Pi_{\mathrm{SR}}}\mathcal T_\gamma^\pi$ and its empirical counterpart.
This is controlled by
Corollaries~\ref{cor:sa-rec-kl-perturbation}--\ref{cor:s-rec-wasserstein-perturbation}.
Building on the constant-gain Bellman framework of
\citet{wang2025bellmanoptimalityaveragerewardrobust}, we use
$\operatorname{Span}(v_\gamma^*)\le2\operatorname{Span}(u_\delta^*)$
to express the bound in terms of the robust optimal bias span.
The required Bellman structure follows from nominal weak communication and
the stated radius restrictions for divergence-based sets, or from a positive
radius for metric-based sets. This gives the $(1-\gamma)^{-2}$ sample-complexity
dependence and the
$n^{-1/2}$ robust gain-estimation rate in
Theorem~\ref{thm:robust-average-reward-error}.
\end{remark}

\begin{proof}[Proof of \AMDPErrorTheorem]
Let $\eta$ be chosen as in \GoodEventProposition. The sample-size
condition gives $\eta<1/2$, as shown in the proof of \DMDPErrorLemma.
Since $1-\gamma_n=1/\sqrt n$,
\[
\begin{aligned}
\left\|
\frac{1}{\sqrt n}\widehat v_{\gamma_n}^*
-g_\delta^*\mathbf 1
\right\|_\infty
&\le
(1-\gamma_n)
\left\|
\widehat v_{\gamma_n}^*-v_{\gamma_n}^*
\right\|_\infty\\
&\quad+
\left\|
(1-\gamma_n)v_{\gamma_n}^*-g_\delta^*\mathbf 1
\right\|_\infty.
\end{aligned}
\]
By \DMDPErrorLemma\ and \AverageBridgeLemma, on $\Omega_{n,\eta}$,
\[
\begin{aligned}
\left\|
\frac{1}{\sqrt n}\widehat v_{\gamma_n}^*
-g_\delta^*\mathbf 1
\right\|_\infty
&\le
14
\sqrt{
\frac{1}{np_{\wedge}}
\log\left(\frac{2|\mathcal S|^2|\mathcal A|}{\beta}\right)}
\operatorname{Span}(u_\delta^*)\\
&\quad+
\frac{2}{\sqrt n}\operatorname{Span}(u_\delta^*).
\end{aligned}
\]
The assumption $\operatorname{Span}(u_\delta^*)\ge1$ implies
$|\mathcal S|\ge2$, so
\[
\frac{1}{\sqrt n}
\le
\sqrt{
\frac{1}{np_{\wedge}}
\log\left(\frac{2|\mathcal S|^2|\mathcal A|}{\beta}\right)}.
\]
Therefore,
\[
\frac{2}{\sqrt n}\operatorname{Span}(u_\delta^*)
\le
2
\sqrt{
\frac{1}{np_{\wedge}}
\log\left(\frac{2|\mathcal S|^2|\mathcal A|}{\beta}\right)}
\operatorname{Span}(u_\delta^*).
\]
Thus,
\[
\left\|
\frac{1}{\sqrt n}\widehat v_{\gamma_n}^*
-g_\delta^*\mathbf 1
\right\|_\infty
\le
16
\sqrt{
\frac{1}{np_{\wedge}}
\log\left(\frac{2|\mathcal S|^2|\mathcal A|}{\beta}\right)}
\operatorname{Span}(u_\delta^*).
\]
We now bound the robust average reward of the returned policy on the same
event $\Omega_{n,\eta}$. By construction in
Algorithm~\ref{alg:dr-amdp-reduction}, $\widehat\pi^*$ attains the
maximization in $\widehat{\mathcal T}_{\gamma_n}^*$ at
$\widehat v_{\gamma_n}^*$, so
$\widehat{\mathcal T}_{\gamma_n}^{\widehat\pi^*}(\widehat v_{\gamma_n}^*)
=\widehat{\mathcal T}_{\gamma_n}^*(\widehat v_{\gamma_n}^*)
=\widehat v_{\gamma_n}^*$. Since
$\widehat{\mathcal T}_{\gamma_n}^{\widehat\pi^*}$ is a
$\gamma_n$-contraction, its unique fixed point is the robust discounted
value of $\widehat\pi^*$; hence $\widehat\pi^*$ is robust discount-optimal
in the empirical MDP. This is the standard optimality characterization
under SA- and S-rectangularity; see \citet{iyengar2005robust} and
\citet{wiesemann2013robust}, respectively.
Lemma~\ref{lem:empirical-operator-perturbation}
therefore implies, componentwise,
\[
\begin{aligned}
\mathcal T_{\gamma_n}^{\widehat\pi^*}(\widehat v_{\gamma_n}^*)
&\ge
\widehat{\mathcal T}_{\gamma_n}^{\widehat\pi^*}(\widehat v_{\gamma_n}^*)
-3\eta\bigl(1+\operatorname{Span}(\widehat v_{\gamma_n}^*)\bigr)\mathbf 1\\
&=
\widehat v_{\gamma_n}^*
-3\eta\bigl(1+\operatorname{Span}(\widehat v_{\gamma_n}^*)\bigr)\mathbf 1.
\end{aligned}
\]
The bound is simultaneous over policies and value vectors, so it applies
to the sample-dependent $\widehat\pi^*$ and $\widehat v_{\gamma_n}^*$
without an additional probability event.

Fix the samples in $\Omega_{n,\eta}$, any $\mathbf Q\in(\mathcal P)^{\mathbb N}$,
and an initial state $S_0=s$. Since $Q_t\in\mathcal P$ at each time $t$,
the definition of the fixed-policy operator gives
\[
\begin{aligned}
&{\operatorname E}_{\mathbf Q}^{\widehat\pi^*}
\left[r(S_t,A_t,S_{t+1})
+\gamma_n\widehat v_{\gamma_n}^*(S_{t+1})\mid S_t\right]\\
&\qquad\ge
\mathcal T_{\gamma_n}^{\widehat\pi^*}(\widehat v_{\gamma_n}^*)(S_t)\\
&\qquad\ge
\widehat v_{\gamma_n}^*(S_t)
-3\eta\bigl(1+\operatorname{Span}(\widehat v_{\gamma_n}^*)\bigr).
\end{aligned}
\]
Taking expectations and summing over $t=0,\ldots,T-1$, we use the identity
\[
\begin{aligned}
&\sum_{t=0}^{T-1}
\left[\widehat v_{\gamma_n}^*(S_t)
-\gamma_n\widehat v_{\gamma_n}^*(S_{t+1})\right]\\
&\qquad=
\frac1{\sqrt n}\sum_{t=0}^{T-1}\widehat v_{\gamma_n}^*(S_t)
+\gamma_n\left[\widehat v_{\gamma_n}^*(S_0)
-\widehat v_{\gamma_n}^*(S_T)\right].
\end{aligned}
\]
Dividing by $T$, bounding each value by its minimum, and bounding the
terminal difference by minus the span yields
\[
\begin{aligned}
&\frac1T{\operatorname E}_{\mathbf Q}^{\widehat\pi^*}
\left[\sum_{t=0}^{T-1}r(S_t,A_t,S_{t+1})\,\middle|\,S_0=s\right]\\
&\qquad\ge
\frac1{\sqrt n}\min_{x\in\mathcal S}\widehat v_{\gamma_n}^*(x)
-3\eta\bigl(1+\operatorname{Span}(\widehat v_{\gamma_n}^*)\bigr)
-\frac{\gamma_n}{T}\operatorname{Span}(\widehat v_{\gamma_n}^*).
\end{aligned}
\]
For fixed samples, $\widehat v_{\gamma_n}^*$ is finite. Taking the
$\limsup$ as $T\to\infty$ for each $\mathbf Q$, and then taking the infimum
over $\mathbf Q\in(\mathcal P)^{\mathbb N}$, gives
\[
g_\delta^{\widehat\pi^*}(s)
\ge
\frac1{\sqrt n}\min_{x\in\mathcal S}\widehat v_{\gamma_n}^*(x)
-3\eta\bigl(1+\operatorname{Span}(\widehat v_{\gamma_n}^*)\bigr).
\]
This uses the original long-run criterion without interchanging the
infimum and the limit or requiring a constant gain for $\widehat\pi^*$.
By optimality of $g_\delta^*$, adding and subtracting the same estimate gives
\[
\begin{aligned}
0\le g_\delta^*-g_\delta^{\widehat\pi^*}(s)
={}&g_\delta^*-\frac1{\sqrt n}\min_{x\in\mathcal S}\widehat v_{\gamma_n}^*(x)\\
&+\frac1{\sqrt n}\min_{x\in\mathcal S}\widehat v_{\gamma_n}^*(x)
-g_\delta^{\widehat\pi^*}(s)\\
\le{}&
\left\|g_\delta^*\mathbf 1-\frac{\widehat v_{\gamma_n}^*}{\sqrt n}\right\|_\infty
+3\eta\bigl(1+\operatorname{Span}(\widehat v_{\gamma_n}^*)\bigr).
\end{aligned}
\]

By \DiscountedValueLemma, \DiscountedSpanLemma, and
$\operatorname{Span}(u_\delta^*)\ge1$,
\[
\begin{aligned}
\frac1{\sqrt n}
\|\widehat v_{\gamma_n}^*-v_{\gamma_n}^*\|_\infty
&\le3\eta\bigl(1+\operatorname{Span}(v_{\gamma_n}^*)\bigr)\\
&\le3\eta\bigl(1+2\operatorname{Span}(u_\delta^*)\bigr)\\
&\le9\eta\operatorname{Span}(u_\delta^*).
\end{aligned}
\]
Consequently,
\[
\begin{aligned}
\operatorname{Span}(\widehat v_{\gamma_n}^*)
&\le\operatorname{Span}(v_{\gamma_n}^*)
+2\|\widehat v_{\gamma_n}^*-v_{\gamma_n}^*\|_\infty\\
&\le\bigl(2+18\sqrt n\,\eta\bigr)\operatorname{Span}(u_\delta^*).
\end{aligned}
\]
The triangle inequality and \AverageBridgeLemma\ also give
\[
\left\|g_\delta^*\mathbf 1-\frac{\widehat v_{\gamma_n}^*}{\sqrt n}\right\|_\infty
\le
\left(9\eta+\frac2{\sqrt n}\right)\operatorname{Span}(u_\delta^*).
\]
Combining these bounds and using $\operatorname{Span}(u_\delta^*)\ge1$,
we obtain
\[
\begin{aligned}
\|g_\delta^*\mathbf 1-g_\delta^{\widehat\pi^*}\|_\infty
&\le
\left(9\eta+\frac2{\sqrt n}\right)\operatorname{Span}(u_\delta^*)\\
&\quad+
3\eta\left[1+\bigl(2+18\sqrt n\,\eta\bigr)
\operatorname{Span}(u_\delta^*)\right]\\
&\le
\left(\frac2{\sqrt n}+18\eta+54\sqrt n\,\eta^2\right)
\operatorname{Span}(u_\delta^*).
\end{aligned}
\]
For the choice in \GoodEventProposition, the sample-size condition gives
\[
\begin{aligned}
\eta
&=
\frac1{3np_\wedge}
\log\left(\frac{2|\mathcal S|^2|\mathcal A|}{\beta}\right)
+
\sqrt{\frac2{np_\wedge}
\log\left(\frac{2|\mathcal S|^2|\mathcal A|}{\beta}\right)}\\
&\le
\left(\frac1{12}+\sqrt2\right)
\sqrt{\frac1{np_\wedge}
\log\left(\frac{2|\mathcal S|^2|\mathcal A|}{\beta}\right)}\\
&\le
\frac32
\sqrt{\frac1{np_\wedge}
\log\left(\frac{2|\mathcal S|^2|\mathcal A|}{\beta}\right)}
\le\frac38.
\end{aligned}
\]
Substituting this estimate yields
\[
\begin{aligned}
\|g_\delta^*\mathbf 1-g_\delta^{\widehat\pi^*}\|_\infty
&\le
\frac{\operatorname{Span}(u_\delta^*)}{\sqrt n}
\Bigg[
2+27\sqrt{\frac1{p_\wedge}
\log\left(\frac{2|\mathcal S|^2|\mathcal A|}{\beta}\right)}\\
&\hspace{38mm}
+\frac{243}{2p_\wedge}
\log\left(\frac{2|\mathcal S|^2|\mathcal A|}{\beta}\right)
\Bigg].
\end{aligned}
\]
Since $\operatorname{Span}(u_\delta^*)\ge1$ implies $|\mathcal S|\ge2$,
and $p_\wedge\le1$ and $\beta\in(0,1)$,
\[
\frac1{p_\wedge}
\log\left(\frac{2|\mathcal S|^2|\mathcal A|}{\beta}\right)\ge1.
\]
Thus both the constant and square-root terms are bounded by the same
logarithmic term, and $2+27+243/2=301/2\le151$ gives
\[
\|g_\delta^*\mathbf 1-g_\delta^{\widehat\pi^*}\|_\infty
\le
\frac{151\,\operatorname{Span}(u_\delta^*)}{p_\wedge\sqrt n}
\log\left(\frac{2|\mathcal S|^2|\mathcal A|}{\beta}\right).
\]
Both inequalities hold on the same event $\Omega_{n,\eta}$, whose
probability is at least $1-\beta$ by \GoodEventProposition.
This proves the claim.
\end{proof}

Metric-based and divergence-based uncertainty sets require separate analyses
because they have different support properties. A positive-radius TV or
Wasserstein uncertainty set can move probability mass outside the nominal
support. We first prove the metric bounds.

\begin{proof}[Proof of Theorem~\ref{thm:divergence-bias-span-bounds}]
Fix $\gamma\in(0,1)$, write $v=v_\gamma^*$, and let
\[
s_+\in\arg\max_{s\in\mathcal S}v(s),
\qquad
s_-\in\arg\min_{s\in\mathcal S}v(s).
\]
Since $r(s,a,s')\in[0,1]$, the SA-rectangular and $S$-rectangular
operators in~\eqref{eq:discounted-bellman-operator}\ both give
\[
v(s_-)=\mathcal T_\gamma^*(v)(s_-)\ge\gamma v(s_-),
\qquad
(1-\gamma)v(s_-)\ge0.
\]

\textit{Case 1: TV uncertainty sets.}
The uncertainty sets are compact and the Bellman objectives in Definition~\ref{defn:bellman-operators}\ are continuous;
hence the outer maximum and inner infimum are attained. Choose
$a_+\in\mathcal A$ and $\phi_+\in\Delta(\mathcal A)$ attaining the respective
SA-rectangular and $S$-rectangular maxima at $s_+$. Set
\[
q_{s_+,a}
=(1-\delta)p_{s_+,a}+\delta\mathbf 1\{\cdot=s_-\}.
\]
Then, for every $a\in\mathcal A$,
\[
\begin{aligned}
d_{\mathrm{TV}}(q_{s_+,a},p_{s_+,a})
&=\frac{\delta}{2}
\left\|\mathbf 1\{\cdot=s_-\}-p_{s_+,a}\right\|_1\\
&=\delta\bigl(1-p_{s_+,a}(s_-)\bigr)
\le\delta,\\
q_{s_+,a}[v]
&=(1-\delta)p_{s_+,a}[v]+\delta v(s_-)\\
&\le(1-\delta)v(s_+)+\delta v(s_-).
\end{aligned}
\]
Since $q_{s_+,a_+}\in\mathcal P_{s_+,a_+}$, the SA-rectangular reduction in Definition~\ref{defn:bellman-operators}\ gives
\[
\begin{aligned}
v(s_+)
&=\inf_{q_{s_+,a_+}\in\mathcal P_{s_+,a_+}}
q_{s_+,a_+}[r(s_+,a_+,\cdot)+\gamma v]\\
&\le q_{s_+,a_+}[r(s_+,a_+,\cdot)+\gamma v]\\
&\le1+\gamma\bigl((1-\delta)v(s_+)+\delta v(s_-)\bigr).
\end{aligned}
\]
For the $S$-rectangular structure,
\[
\sum_{a\in\mathcal A}
d_{\mathrm{TV}}(q_{s_+,a},p_{s_+,a})
\le|\mathcal A|\delta,
\]
so $(q_{s_+,a})_{a\in\mathcal A}\in\mathcal P_{s_+}$. Therefore, the operator definition~\eqref{eq:discounted-bellman-operator}\ gives
\[
\begin{aligned}
v(s_+)
&=\inf_{q_s\in\mathcal P_{s_+}}
\sum_{a\in\mathcal A}
q_{s,a}\left[\phi_+(a)(r(s_+,a,\cdot)+\gamma v)\right]\\
&\le
\sum_{a\in\mathcal A}
q_{s_+,a}\left[\phi_+(a)(r(s_+,a,\cdot)+\gamma v)\right]\\
&\le1+\gamma\sum_{a\in\mathcal A}\phi_+(a)q_{s_+,a}[v]\\
&\le1+\gamma\bigl((1-\delta)v(s_+)+\delta v(s_-)\bigr).
\end{aligned}
\]
Thus, under either structure,
\[
\begin{aligned}
\operatorname{Span}(v_\gamma^*)
&=v(s_+)-v(s_-)\\
&\le1+\gamma(1-\delta)v(s_+)+(\gamma\delta-1)v(s_-)\\
&=1+\gamma(1-\delta)\operatorname{Span}(v_\gamma^*)
-(1-\gamma)v(s_-)\\
&\le1+\gamma(1-\delta)\operatorname{Span}(v_\gamma^*),
\end{aligned}
\]
and hence
\[
\operatorname{Span}(v_\gamma^*)
\le\frac{1}{1-\gamma(1-\delta)}
\le\frac{1}{\delta}.
\]

\textit{Case 2: Wasserstein uncertainty sets.}
Again, the uncertainty sets are compact and the Bellman objectives in Definition~\ref{defn:bellman-operators}\ are
continuous, so the outer maximum and inner infimum are attained. Choose
$a_+$ and $\phi_+$ attaining the respective maxima at $s_+$. Let
\[
\alpha=\left(\frac{c}{\delta}\right)^{-\ell},
\qquad
q_{s_+,a}
=(1-\alpha)p_{s_+,a}+\alpha\mathbf 1\{\cdot=s_-\}.
\]
Coupling the mass $(1-\alpha)p_{s_+,a}$ with itself and transporting the
remaining mass to $s_-$ gives
\[
W_\ell(q_{s_+,a},p_{s_+,a})^\ell
\le\alpha c^\ell=\delta^\ell,
\qquad
q_{s_+,a}[v]
\le(1-\alpha)v(s_+)+\alpha v(s_-).
\]
Thus $q_{s_+,a_+}\in\mathcal P_{s_+,a_+}$ in the SA-rectangular case and
$(q_{s_+,a})_{a\in\mathcal A}\in\mathcal P_{s_+}$ in the $S$-rectangular
case, since
\[
\sum_{a\in\mathcal A}
W_\ell(q_{s_+,a},p_{s_+,a})^\ell
\le|\mathcal A|\delta^\ell.
\]
The same SA- and $S$-rectangular Bellman arguments as in Case 1, with
$\alpha$ in place of $\delta$, give
\[
v(s_+)\le1+\gamma\bigl((1-\alpha)v(s_+)+\alpha v(s_-)\bigr).
\]
Consequently,
\[
\operatorname{Span}(v_\gamma^*)
\le1+\gamma(1-\alpha)\operatorname{Span}(v_\gamma^*).
\]
Hence
\[
\operatorname{Span}(v_\gamma^*)
\le\frac{1}{1-\gamma(1-\alpha)}
\le\left(\frac{c}{\delta}\right)^\ell.
\]

Applying the normalization and convergent-subsequence argument from the proof
of \MetricConstantGainProposition\ to these uniform bounds, the bias can be
chosen so that
\[
\operatorname{Span}(u_\delta^*)\le\frac{1}{\delta}
\quad\text{for TV},
\qquad
\operatorname{Span}(u_\delta^*)
\le\left(\frac{c}{\delta}\right)^\ell
\quad\text{for Wasserstein}.
\]

\textit{Case 3: KL and $f_k$ uncertainty sets.}
For SA-rectangular uncertainty, the coordinatewise arguments of
\citet[Proposition~B.1]{chen2025sample} for KL divergence and
\citet[Lemma~E.1]{chen2025sample} for $f_k$ divergence
show that the stated radius conditions imply
\[
p_{s,a}(s')>0
\quad\Longrightarrow\quad
q_{s,a}(s')
\ge\frac12p_{s,a}(s')
\ge\frac12p_{\wedge}.
\]
For $S$-rectangular uncertainty, each nonnegative action-wise divergence is
at most $|\mathcal A|\delta$, so the same argument applies under the
corresponding stated radius. Since $q_{s,a}\ll p_{s,a}$, every feasible row
has the same support as its nominal row. Thus every nominal communication
path is retained.

Fix an arbitrary solution $(u_\delta^*,g_\delta^*)$\ of \eqref{eq:constant-gain-bellman}. For SA-rectangular
uncertainty, choose a minimizing row $q_{s,a}$ in the SA-rectangular reduction in Definition~\ref{defn:bellman-operators}\ for every $(s,a)$. For
$S$-rectangular uncertainty, compactness and
Lemma~\ref{lem:inf-sup-to-optimal-bellman}
give, for every $s\in\mathcal S$,
\[
q_s\in\arg\min_{q_s\in\mathcal P_s}
\max_{\phi\in\Delta(\mathcal A)}
\sum_{a\in\mathcal A}
q_{s,a}\!\left[
\phi(a)\bigl(r(s,a,\cdot)+u_\delta^*\bigr)
\right].
\]
Since the objective is linear in $\phi$,
\[
\max_{\phi\in\Delta(\mathcal A)}
\sum_{a\in\mathcal A}
q_{s,a}\!\left[
\phi(a)\bigl(r(s,a,\cdot)+u_\delta^*\bigr)
\right]
=
\max_{a\in\mathcal A}
q_{s,a}\!\left[r(s,a,\cdot)+u_\delta^*\right].
\]
Thus, by the average-reward operator definition~\eqref{eq:average-bellman-operator}, under either structure, the worst-case transition kernel
$Q\in\mathcal P$ satisfies
\[
u_\delta^*(s)+g_\delta^*
=
\max_{a\in\mathcal A}
q_{s,a}\!\left[r(s,a,\cdot)+u_\delta^*\right],
\qquad s\in\mathcal S.
\]
By
Lemma~\ref{prop:adversarial-weak-communication},
$Q$ is weakly communicating; let $C$ denote its communicating set. Every
positive coordinate of $Q$ is at least $p_{\wedge}/2$.

Let
\[
s^+\in\arg\max_{s\in\mathcal S}u_\delta^*(s),
\qquad
s^-\in\arg\min_{s\in\mathcal S}u_\delta^*(s).
\]
The fixed-kernel Bellman equation gives
\[
u_\delta^*(s^-)+g_\delta^*
\ge u_\delta^*(s^-),
\qquad
u_\delta^*(s^+)+g_\delta^*
\le u_\delta^*(s^+)+1,
\]
so $0\le g_\delta^*\le1$.

Let $y\in C$. Weak communication allows us to choose $\pi\in\Pi_{\mathrm{SD}}$ that puts
probability one on one action at each state and reaches $y$ almost surely
from every state: choosing actions along shortest positive-probability paths
to $y$ leaves no closed class outside $y$. Define
\[
\tau_y:=\inf\{t\ge0:S_t=y\},
\qquad
M:=\max_{s\in\mathcal S}\mathbb E_s^{\pi,Q}[\tau_y].
\]
Choose $s_0$ attaining $M$ and a simple positive-probability path under
$\pi$,
\[
s_0,s_1,\ldots,s_m=y,
\qquad m\le|\mathcal S|-1.
\]
For $0\le i<m$, take $a\in\mathcal A$ with $\pi(a\mid s_i)=1$. Then
\[
\begin{aligned}
\mathbb E_{s_i}^{\pi,Q}[\tau_y]
&\le
1+q_{s_i,a}(s_{i+1})
\mathbb E_{s_{i+1}}^{\pi,Q}[\tau_y]\\
&\quad+
\bigl(1-q_{s_i,a}(s_{i+1})\bigr)M,
\end{aligned}
\]
where $q_{s_i,a}(s_{i+1})\ge p_{\wedge}/2$. Rearranging the case
$i=0$ and applying induction along the path gives
\[
M
\le
\sum_{j=1}^{i}
\left(\frac{2}{p_{\wedge}}\right)^j
+\mathbb E_{s_i}^{\pi,Q}[\tau_y],
\qquad 1\le i\le m.
\]
Setting $i=m$ yields
\[
\begin{aligned}
\max_{s\in\mathcal S}\mathbb E_s^{\pi,Q}[\tau_y]
&\le
\sum_{j=1}^{|\mathcal S|-1}
\left(\frac{2}{p_{\wedge}}\right)^j\\
&=
\frac{2}{2-p_{\wedge}}
\left[
\left(\frac{2}{p_{\wedge}}\right)^{|\mathcal S|-1}-1
\right].
\end{aligned}
\]
The same argument bounds the first entrance time of $C$ under any stationary
deterministic policy, since states outside $C$ are transient under every such
policy; replace the terminal state by the target set $C$.

Let
\[
y^+\in\arg\max_{y\in C}u_\delta^*(y),
\qquad
\tau_C:=\inf\{t\ge0:S_t\in C\},
\qquad
\tau_{y^+}:=\inf\{t\ge0:S_t=y^+\}.
\]
Let $\pi^*\in\Pi_{\mathrm{SD}}$ put probability one on a maximizing action in the
fixed-kernel Bellman equation at each state, and choose $\pi\in\Pi_{\mathrm{SD}}$ as
above with target $y^+$. Applying the Bellman
equality under $\pi^*$ up to $\tau_C$ and the Bellman inequality under $\pi$
up to $\tau_{y^+}$ gives
\[
\begin{aligned}
u_\delta^*(s^+)
&=
\mathbb E_{s^+}^{\pi^*,Q}
\left[
\sum_{t=0}^{\tau_C-1}
\bigl(r(S_t,A_t,S_{t+1})-g_\delta^*\bigr)
+u_\delta^*(S_{\tau_C})
\right]\\
&\le
u_\delta^*(y^+)
+(1-g_\delta^*)
\mathbb E_{s^+}^{\pi^*,Q}[\tau_C],\\
u_\delta^*(s^-)
&\ge
\mathbb E_{s^-}^{\pi,Q}
\left[
\sum_{t=0}^{\tau_{y^+}-1}
\bigl(r(S_t,A_t,S_{t+1})-g_\delta^*\bigr)
+u_\delta^*(y^+)
\right]\\
&\ge
u_\delta^*(y^+)
-g_\delta^*
\mathbb E_{s^-}^{\pi,Q}[\tau_{y^+}].
\end{aligned}
\]
The stopped relations follow by truncation, since both hitting times have
finite expectation. Therefore,
\[
\begin{aligned}
\operatorname{Span}(u_\delta^*)
&\le
(1-g_\delta^*)
\mathbb E_{s^+}^{\pi^*,Q}[\tau_C]
+g_\delta^*
\mathbb E_{s^-}^{\pi,Q}[\tau_{y^+}]\\
&\le
\frac{2}{2-p_{\wedge}}
\left[
\left(\frac{2}{p_{\wedge}}\right)^{|\mathcal S|-1}-1
\right].
\end{aligned}
\]
\end{proof}

\begin{remark}
\label{remark:attained}
The TV and Wasserstein span bounds are attained. Let
$\mathcal S=\{s_+,s_-\}$, $\mathcal A=\{a\}$, and
$r(s_+,a,\cdot)=1$, $r(s_-,a,\cdot)=0$. For TV uncertainty, take
\[
p_{s_+,a}=\mathbf 1\{\cdot=s_+\},
\qquad
p_{s_-,a}=\delta\mathbf 1\{\cdot=s_+\}
+(1-\delta)\mathbf 1\{\cdot=s_-\}.
\]
At radius $\delta$, the minimizing kernels are
\[
q_{s_+,a}=(1-\delta)\mathbf 1\{\cdot=s_+\}
+\delta\mathbf 1\{\cdot=s_-\},
\qquad
q_{s_-,a}=\mathbf 1\{\cdot=s_-\},
\]
which yield $g_\delta^*=0$, $u_\delta^*(s_-)=0$, and
$u_\delta^*(s_+)=1/\delta$.

For Wasserstein uncertainty, let
$\rho(s_+,s_-)=c$, $\alpha=(\delta/c)^\ell$, and take
\[
p_{s_+,a}=\mathbf 1\{\cdot=s_+\},
\qquad
p_{s_-,a}=\alpha\mathbf 1\{\cdot=s_+\}
+(1-\alpha)\mathbf 1\{\cdot=s_-\}.
\]
The minimizing kernels are
\[
q_{s_+,a}=(1-\alpha)\mathbf 1\{\cdot=s_+\}
+\alpha\mathbf 1\{\cdot=s_-\},
\qquad
q_{s_-,a}=\mathbf 1\{\cdot=s_-\},
\]
and satisfy $W_\ell(q_{s,a},p_{s,a})^\ell=\alpha c^\ell=\delta^\ell$.
Thus $g_\delta^*=0$, $u_\delta^*(s_-)=0$, and
\[
u_\delta^*(s_+)=\frac1\alpha
=\left(\frac c\delta\right)^\ell.
\]
\end{remark}

\begin{proof}[Proof of Theorem~\ref{thm:divergence-bias-span-lower-bound}]
Fix
$0<p_{\wedge}\le\frac12$ and let
\[
\mathcal S=\{S_1,S_2,\ldots,S_{|\mathcal S|}\},
\qquad
\mathcal A=\{a\},
\]
and define
\[
p_{S_1,a}=\mathbf1\{\cdot=S_1\},
\qquad
p_{S_i,a}
=p_{\wedge}\mathbf1\{\cdot=S_{i-1}\}
+(1-p_{\wedge})\mathbf1\{\cdot=S_{|\mathcal S|}\},
\quad 2\le i\le|\mathcal S|,
\]
with
\[
r(S_1,a,\cdot)=1,
\qquad
r(S_i,a,\cdot)=0,
\quad 2\le i\le|\mathcal S|.
\]
The unique communicating class is $\{S_1\}$, while
$S_2,\ldots,S_{|\mathcal S|}$ are transient under the unique policy.
With one action, the SA- and $S$-rectangular uncertainty sets coincide.

For every sufficiently small positive $\delta$, choose each $q_{S_i,a}$ to
minimize the feasible probability assigned to $S_{i-1}$, for
$2\le i\le|\mathcal S|$. The nominal rows have the same two probabilities,
and KL and $f_k$ divergences are invariant under relabeling, so their minimum
feasible predecessor probabilities agree. The support bound from Case 3 and
feasibility of the nominal rows give
\[
\frac12p_{\wedge}
\le q_{S_2,a}(S_1)
\le p_{\wedge},
\qquad
q_{S_i,a}(S_{i-1})=q_{S_2,a}(S_1),
\quad 2\le i\le|\mathcal S|.
\]

Normalize $u_\delta^*(S_{|\mathcal S|})=0$. The constant-gain equation~\eqref{eq:constant-gain-bellman}\ at $S_1$ gives
$g_\delta^*=1$. Starting from $i=|\mathcal S|$ and proceeding backward,
$u_\delta^*(S_i)\ge0$ implies
\[
1+u_\delta^*(S_i)
=
\inf_{q_{S_i,a}\in\mathcal P_{S_i,a}}
q_{S_i,a}(S_{i-1})u_\delta^*(S_{i-1})>0.
\]
Thus $u_\delta^*(S_{i-1})>0$, and the infimum is attained by the
chosen $q_{S_i,a}$. Hence
\[
u_\delta^*(S_{i-1})
=
\frac{1+u_\delta^*(S_i)}{q_{S_i,a}(S_{i-1})},
\qquad 2\le i\le|\mathcal S|.
\]
Consequently,
\[
u_\delta^*(S_i)
=
\sum_{j=1}^{|\mathcal S|-i}
\frac{1}{\bigl(q_{S_2,a}(S_1)\bigr)^j},
\qquad 1\le i\le|\mathcal S|-1.
\]
The constant-gain equation~\eqref{eq:constant-gain-bellman}\ at $S_1$ uniquely determines $g_\delta^*$, and the displayed
backward recursion uniquely determines the normalized bias. Thus all
solutions differ only by an additive constant and have the same span.
Therefore,
\[
\begin{aligned}
\operatorname{Span}(u_\delta^*)
&=
\sum_{j=1}^{|\mathcal S|-1}
\frac{1}{\bigl(q_{S_2,a}(S_1)\bigr)^j}\\
&\ge
\sum_{j=1}^{|\mathcal S|-1}
\left(\frac{1}{p_{\wedge}}\right)^j\\
&=
\frac{p_{\wedge}^{-(|\mathcal S|-1)}-1}{1-p_{\wedge}}.
\end{aligned}
\]
\end{proof}

\graphicspath{{manuscript/figures/}{figures/}{../figures/}}
\section{Additional Experiments}
\label{sec:additional-experiments}

This appendix presents additional large-scale experiments on a fixed nominal MDP with $20$ states and $30$ actions. We evaluate all eight combinations of SA- and S-rectangularity with KL-divergence, $\chi^2$-divergence, TV, and Wasserstein uncertainty sets.

We construct a random nominal kernel $P$ by partitioning the states into $T=\{0,1,2,3\}$ and $C=C_1\cup C_2$, where $C_1=\{4,\ldots,11\}$ and $C_2=\{12,\ldots,19\}$. At every state in $T$, each action assigns total probability $1/2$ to $T$ and $1/2$ to $C$. At a state in $C_i$, actions $a_1,\ldots,a_{15}$ remain in $C_i$, while actions $a_{16},\ldots,a_{30}$ can reach every state in $C$. Each conditional distribution on an allowed set of $m$ states is drawn independently as $0.8z+0.2\mathbf 1/m$, with $z\sim\mathrm{Dirichlet}(1,\ldots,1)$. All other transition probabilities are zero. We generate rewards as $\sigma_{s,a}Z_{s,a,s'}$, where $\sigma_{s,a}\sim\mathrm{Uniform}(0,1)$ and $Z_{s,a,s'}\sim\mathcal N(0,1)$, and rescale the full reward array to $[0,1]$. The nominal kernel and rewards remain fixed across all trials.

Starting in $T$, the probability of remaining there after $t$ transitions is $2^{-t}$ under every policy, and no action returns from $C$ to $T$. Thus every state in $T$ is transient under every policy. The stationary policy that always chooses $a_{16}$ makes all states in $C$ communicate, so the nominal MDP is weakly communicating in the sense of Definition~\ref{def:weakly-communicating-mdp}. The stationary policy that always chooses $a_1$ has two disjoint recurrent classes, $C_1$ and $C_2$, so the nominal MDP is not unichain. Its induced Markov chain cannot converge to a single stationary distribution independent of the initial state. Since $P\in\mathcal P$ for each of the eight uncertainty sets, all eight configurations violate unichain and uniform ergodicity assumptions over all feasible kernels and stationary policies.

\begin{figure}[!htbp]
\centering
\begin{subfigure}[t]{0.49\linewidth}
\centering
\includegraphics[width=\linewidth]{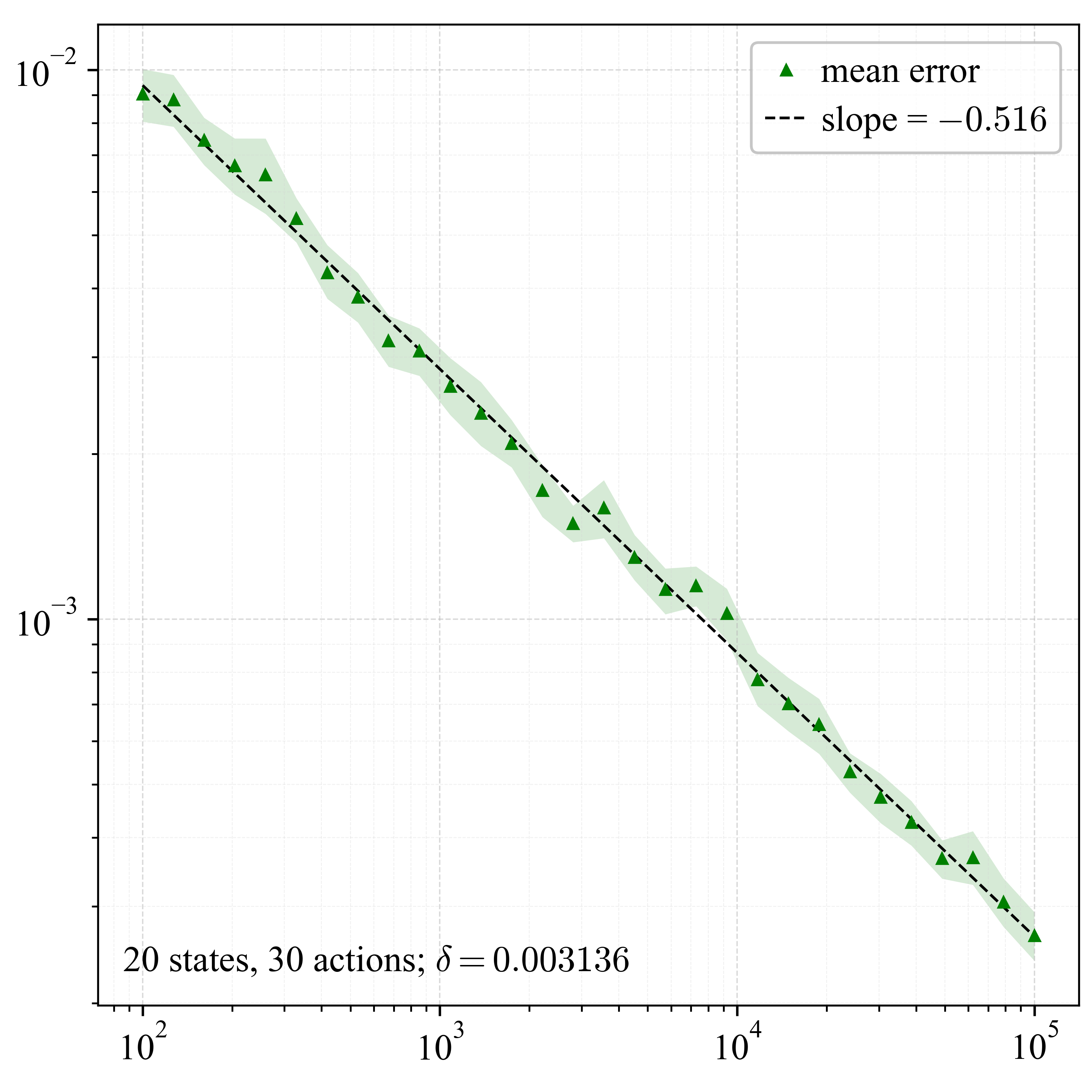}
\caption{SA-rectangular KL-divergence uncertainty.}
\end{subfigure}\hfill
\begin{subfigure}[t]{0.49\linewidth}
\centering
\includegraphics[width=\linewidth]{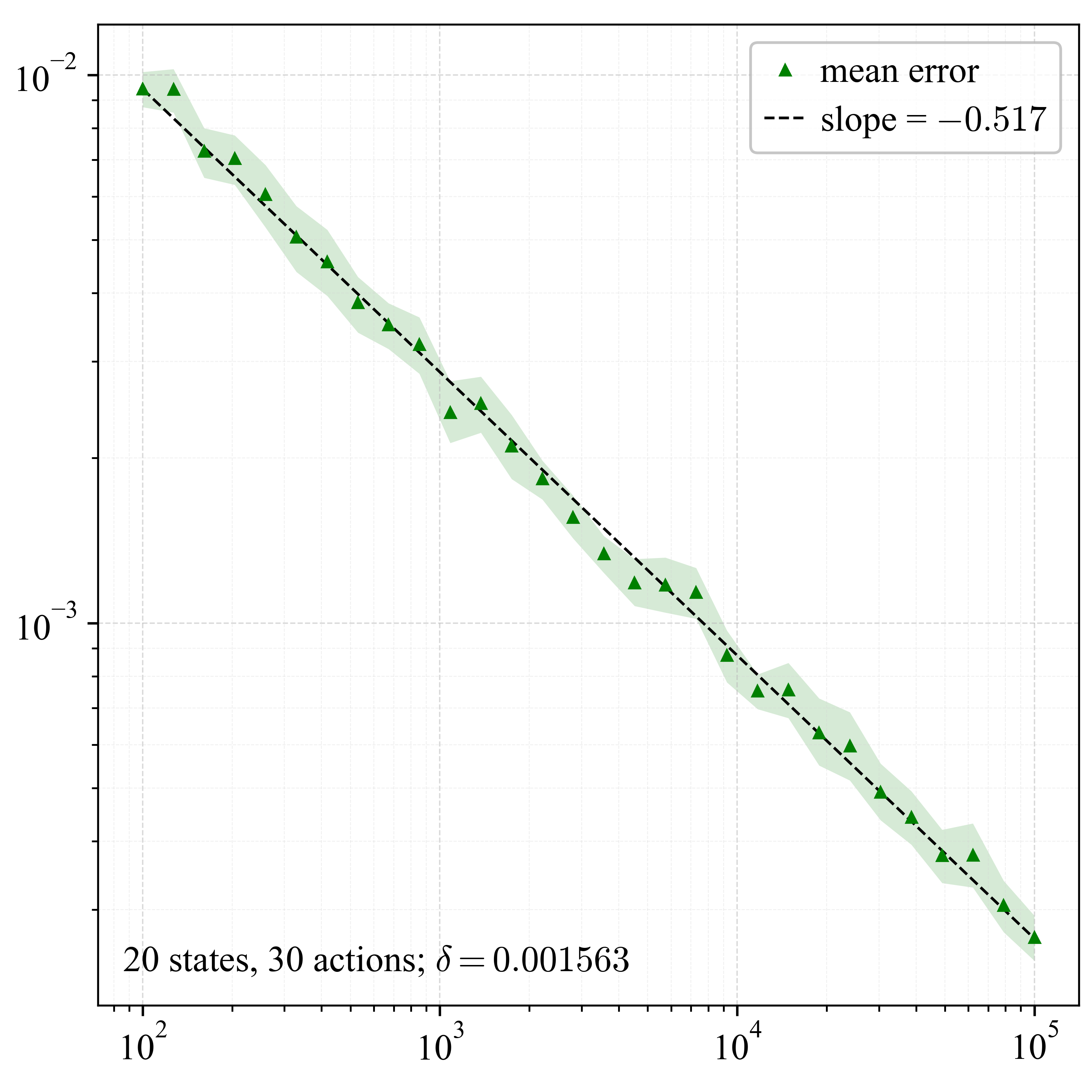}
\caption{SA-rectangular $\chi^2$-divergence uncertainty.}
\end{subfigure}

\begin{subfigure}[t]{0.49\linewidth}
\centering
\includegraphics[width=\linewidth]{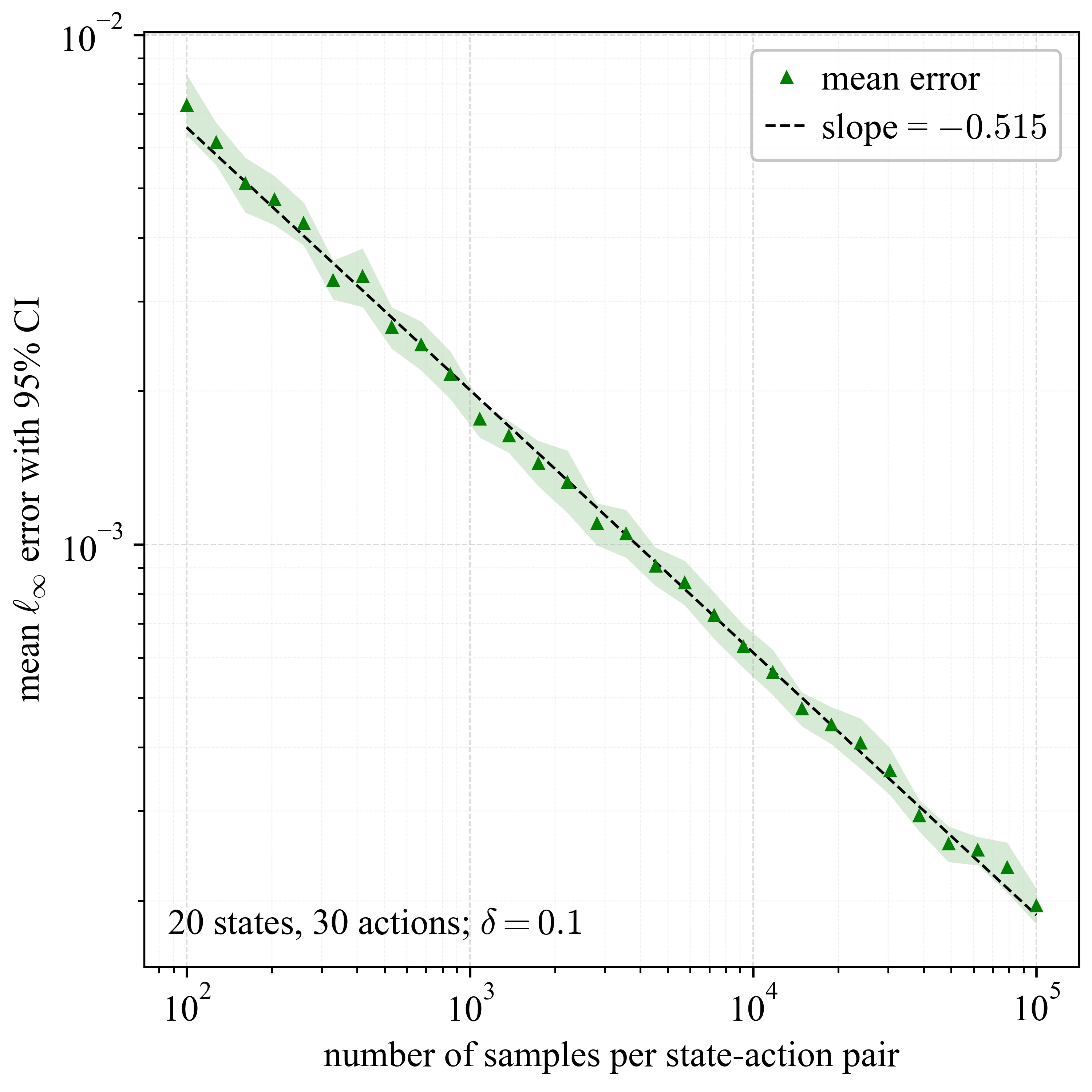}
\caption{SA-rectangular TV uncertainty.}
\end{subfigure}\hfill
\begin{subfigure}[t]{0.49\linewidth}
\centering
\includegraphics[width=\linewidth]{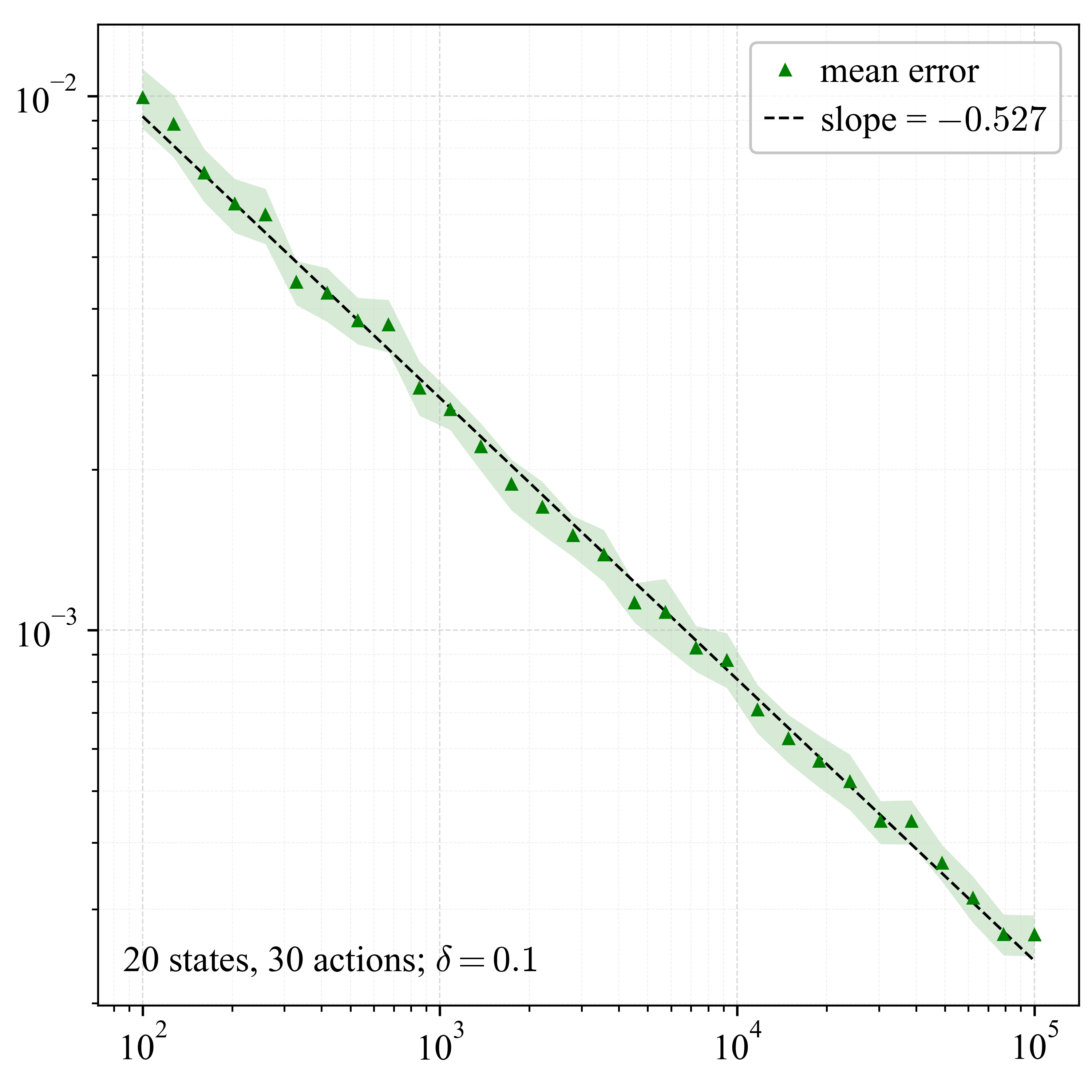}
\caption{SA-rectangular Wasserstein uncertainty.}
\end{subfigure}
\caption{Large-scale experiments under SA-rectangular uncertainty. Each panel plots the mean $\ell_\infty$ gain-estimation error against the number of samples per state-action pair. Shaded bands indicate 95\% confidence intervals, and dashed lines show fitted slopes on logarithmic axes.}
\label{fig:large-scale-sa-rectangular}
\end{figure}

For KL and $\chi^2$ uncertainty, we set $\delta$ to half the corresponding upper bound in Assumption~\ref{ass:divergence-uncertainty}, using the minimum positive transition probability of $P$. These radii preserve the nominal transition supports. For TV and Wasserstein uncertainty, we use $\delta=0.1$ and allow transitions outside the nominal support. The Wasserstein order is $\ell=2$, with ground metric $\rho(s,s')=\mathbbm{1}\{s\ne s'\}$.

For each of $30$ sample sizes $n$ from $10^2$ to $10^5$, we independently draw $n$ next-state samples from each $p_{s,a}$ and construct $\widehat p_{s,a}$ from empirical frequencies, without smoothing. We repeat this procedure over $20$ independent trials. With $\gamma_n=1-n^{-1/2}$, we report the mean of $\|(1-\gamma_n)\widehat v_{\gamma_n}^*-g_\delta^*\mathbf 1\|_\infty$ across trials, with percentile-bootstrap 95\% confidence intervals. Figures~\ref{fig:large-scale-sa-rectangular} and~\ref{fig:large-scale-s-rectangular} show the results. The fitted slopes, computed using all $30$ sample sizes, range from $-0.527$ to $-0.490$, consistent with an error decay of approximately $n^{-1/2}$.

\begin{figure}[!htbp]
\centering
\begin{subfigure}[t]{0.49\linewidth}
\centering
\includegraphics[width=\linewidth]{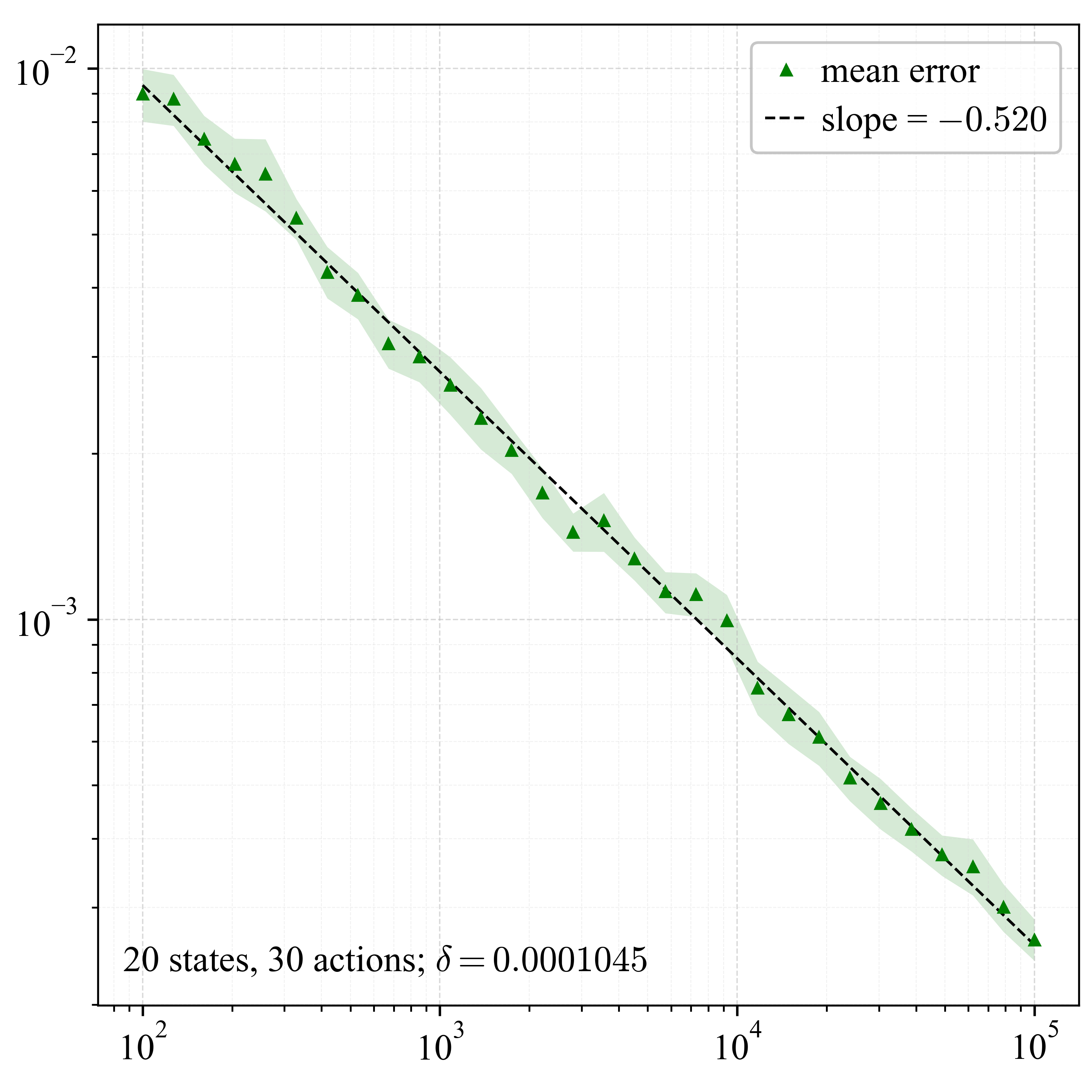}
\caption{S-rectangular KL-divergence uncertainty.}
\end{subfigure}\hfill
\begin{subfigure}[t]{0.49\linewidth}
\centering
\includegraphics[width=\linewidth]{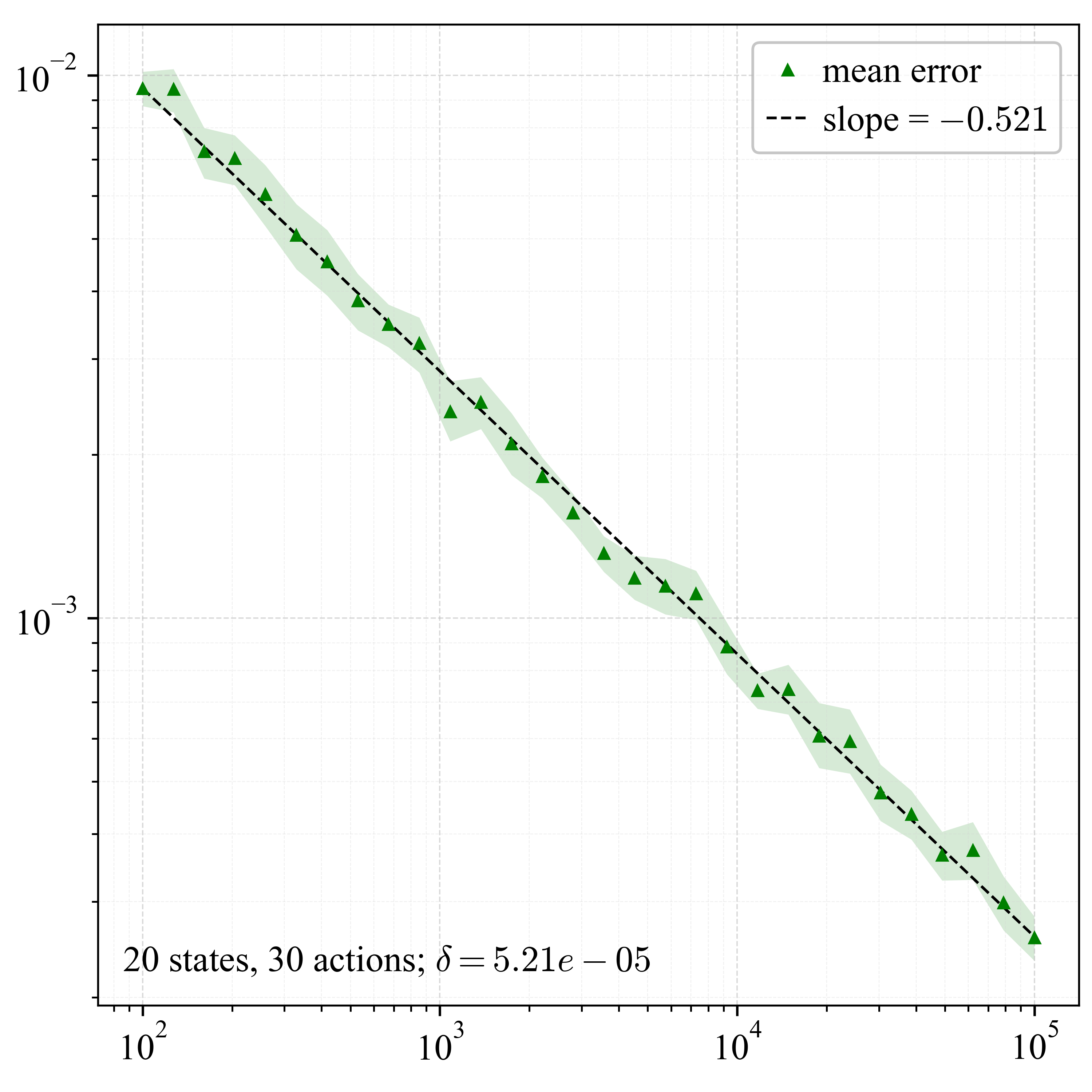}
\caption{S-rectangular $\chi^2$-divergence uncertainty.}
\end{subfigure}

\begin{subfigure}[t]{0.49\linewidth}
\centering
\includegraphics[width=\linewidth]{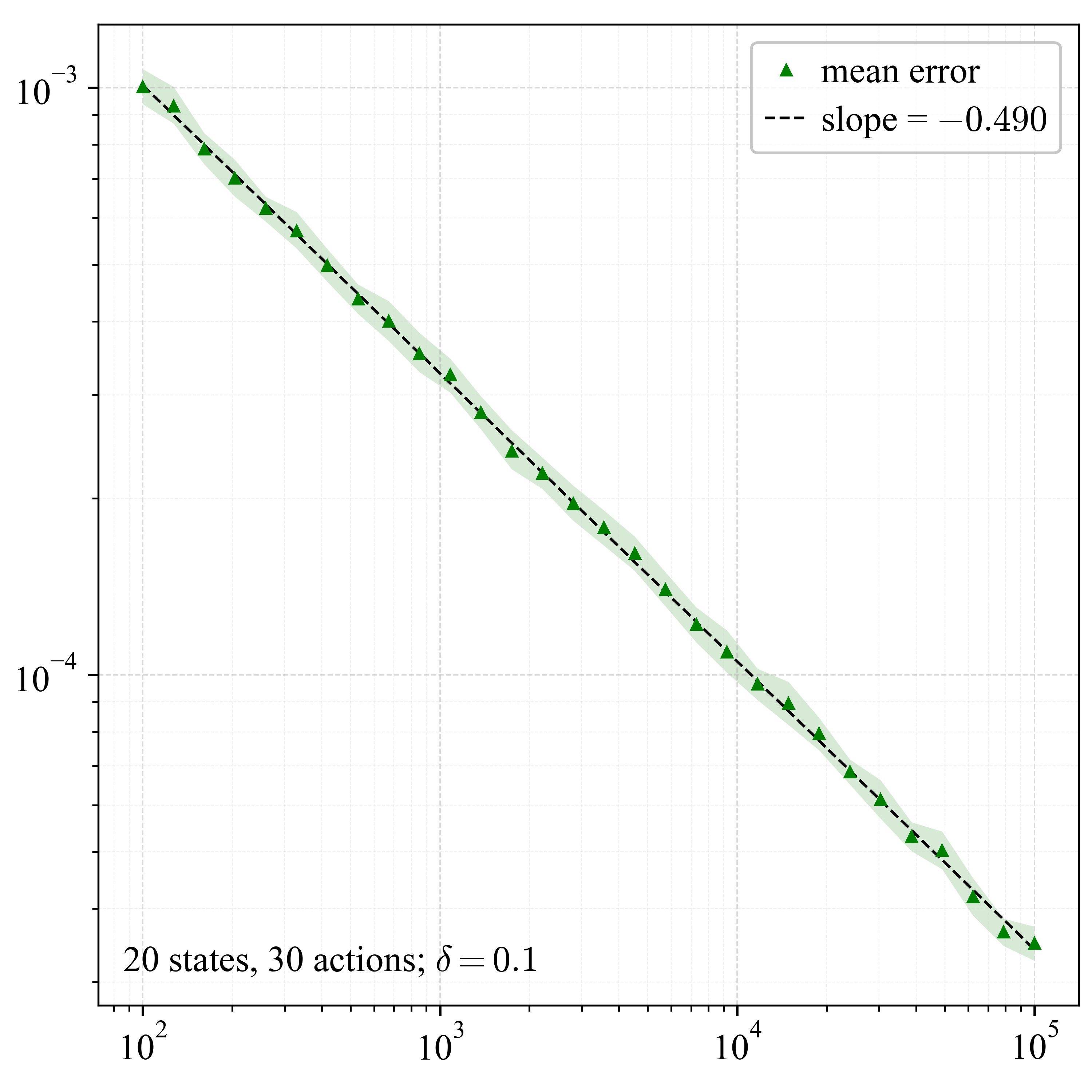}
\caption{S-rectangular TV uncertainty.}
\end{subfigure}\hfill
\begin{subfigure}[t]{0.49\linewidth}
\centering
\includegraphics[width=\linewidth]{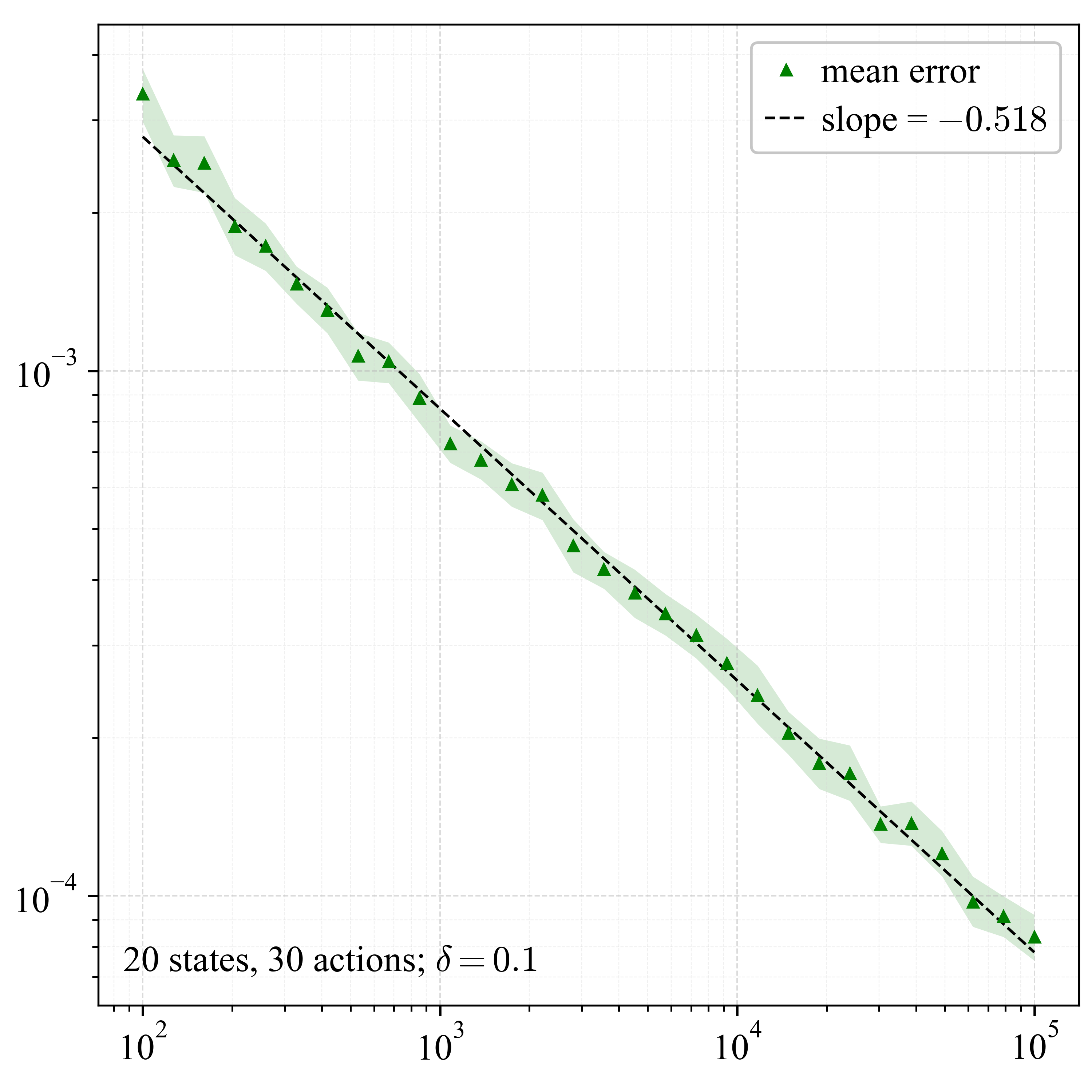}
\caption{S-rectangular Wasserstein uncertainty.}
\end{subfigure}
\caption{Large-scale experiments under S-rectangular uncertainty, with the same error measure and plotting conventions as Figure~\ref{fig:large-scale-sa-rectangular}.}
\label{fig:large-scale-s-rectangular}
\end{figure}

\end{document}